\documentclass{article}
\usepackage{geometry}

\usepackage[utf8]{inputenc}
\usepackage[T1]{fontenc}
\usepackage{booktabs}
\usepackage{amsfonts}
\usepackage{nicefrac}
\usepackage{microtype}
\usepackage{xcolor}

\usepackage{amsmath,amsfonts,bm}

\def\eqref#1{equation~\ref{#1}}
\def\Eqref#1{Equation~\ref{#1}}

\def\1{\bm{1}}

\DeclareMathAlphabet{\mathsfit}{\encodingdefault}{\sfdefault}{m}{sl}
\SetMathAlphabet{\mathsfit}{bold}{\encodingdefault}{\sfdefault}{bx}{n}

\newcommand{\E}{\mathbb{E}}

\newcommand{\R}{\mathbb{R}}

\newcommand{\KL}{D_{\mathrm{KL}}}

\usepackage{mylatexstyle}
\newtheorem{corollary}[theorem]{Corollary}
\usepackage{mathrsfs,bbm}
\usepackage{algorithmic}
\usepackage{hyperref}
\usepackage{url}
\usepackage{tikz}
\usetikzlibrary{arrows.meta,calc}

\title{A Closed-Loop Non-Asymptotic Convergence Analysis of PPO with Learned Critics and Clipping}
\author{
Junwei Su \\
School of Artificial Intelligence and Data Science, \\
University of Science and Technology of China \\
\texttt{junweisu.cs@gmail.com}
\and
Mengfan Liu \\
Department of Computer Science, \\
University of Hong Kong
\and
Yanyong Zhang \\
School of Artificial Intelligence and Data Science, \\
University of Science and Technology of China
\and
Chuan Wu \\
Department of Computer Science, \\
University of Hong Kong
}

\begin{document}
\maketitle
\begin{abstract}
Despite its widespread use, Proximal Policy Optimization with clipping (PPO-Clip) remains difficult to tune, and the interactions among critic learning, clipping, and rollout reuse remain incompletely understood. We develop a \emph{non-asymptotic} analysis of PPO-Clip as a \emph{closed-loop actor--critic} system. It captures actor--critic coupling, nonsmooth probability-ratio clipping, finite-batch reuse, and predictable early stopping under explicit coverage and critic regularity assumptions, using raw GAE and Monte Carlo critic targets. Our synchronous and asynchronous guarantees jointly characterize policy stationarity and the tracking accuracy of the learned critic, with explicit dependence on algorithmic parameters. A sufficient coupling condition gives optimization, critic tracking, clipping, and finite-batch errors a common amplification bound. The asynchronous result also requires a delay-dependent critic stepsize restriction; violating these conditions does not establish divergence. For finite layered MDPs with tabular critics, a uniform bound on the actual clipped-gradient class replaces complete-trajectory counting. A verified growing-horizon family has polynomial sample complexity, and a two-time-scale schedule gives $O(T^{-2/5})$ stationarity and critic-tracking bounds with explicit fresh-rollout accounting. These results together advance our understanding about PPO and provide theoretical guidance in tuning.
\end{abstract}

\section{Introduction}\label{sec:intro}

\emph{Proximal Policy Optimization} (PPO), particularly its clipped variant PPO-Clip, is widely used in reinforcement learning (RL)~\citep{schulman2017proximalpolicyoptimizationalgorithms}. PPO-Clip typically uses an actor--critic framework: the \emph{actor} updates the policy using sampled policy-gradient estimates, while the \emph{critic} estimates values to compute advantage signals. Its practical success is commonly attributed to combining policy-gradient updates with stabilization mechanisms~\citep{yu2022surprising,jin2023stationary,huang2024ppo,dwyer2025s,su2025gppo}. One motivating application is fine-tuning large language models (LLMs), including reinforcement learning from human feedback (RLHF)~\citep{ouyang2022traininglanguagemodelsfollow,stiennon2020learning,kirk2024understanding,havrilla2024teachinglargelanguagemodels}.

PPO-Clip uses a clipped surrogate objective~\citep{markowitz2023clipped,huang2024ppo,dwyer2025s}, which limits incentives for large probability-ratio changes relative to a reference policy. This trust-region--inspired design avoids explicit second-order optimization. Beyond clipping, practical implementations use synchronous or asynchronous \emph{actor--critic updates}, \emph{minibatch reuse} across multiple epochs, \emph{generalized advantage estimation}, and \emph{early stopping} based on KL divergence~\citep{mnih2016asynchronousmethodsdeepreinforcement,dossa2021empirical,hilton2022batch,yu2022surprising, xiao2022asynchronous,sun2022you,sheng2025hybridflow}. These mechanisms improve stability and sample efficiency but couple actor and critic dynamics across multiple time scales. Deploying PPO-Clip on new tasks often requires substantial tuning of learning rates, clipping thresholds, and update schedules to avoid instability or performance collapse~\citep{engstrom2020implementation,agarwal2021deep,zheng2023improving,adkins2024method,moalla2024no}. Our limited understanding of these interactions motivates a joint analysis of PPO's update mechanisms under explicit assumptions.

\paragraph{Existing Gap.}
Existing results include finite-time two-time-scale actor--critic analysis~\citep{wu2020finite}, stationary-point analysis retaining nonsmooth PPO clipping and multiple updates~\citep{jin2023stationary}, and PPO-Clip methods with learned value estimation~\citep{huang2024ppo}. \emph{Establishing a joint guarantee for interacting PPO mechanisms remains a substantive analytical challenge.} The critic changes the actor's advantage estimates, the actor moves the critic's target, and both adapt to reused data; separate guarantees cannot simply be combined without controlling these dependencies. We address this challenge through a \emph{non-asymptotic} analysis of interleaved raw-GAE updates, Monte Carlo critic targets, and finite-batch reuse. Under explicit coverage and critic regularity, the analysis jointly controls actor stationarity and critic tracking, including empirical and trajectory-policy discrepancies. Table~\ref{tab:closest-theory} compares the analyzed processes and statistical premises.

\paragraph{Technical Challenges.}
Three technical challenges arise.
\emph{First}, PPO-Clip exhibits a bidirectionally coupled learning dynamic: approximation errors in the critic can bias bootstrapped policy updates, while policy drift simultaneously induces non-stationarity in the critic's learning objective.
\emph{Second}, actor and critic iterates adapt to the same reused batch. Their conditional gradients are empirical means, while the desired guarantees concern current-policy population quantities; multistep GAE also retains the behavior policy's continuation law.
\emph{Third}, sign-dependent clipping acts on evolving, trajectory-dependent advantage labels and makes the surrogate nonsmooth. We use smoothness of the return objective $J$ and explicitly control the clipped update's bias.

\paragraph{Contributions and Results.}
We develop a non-asymptotic analysis of PPO-Clip as a
\emph{closed-loop actor--critic system}.
The central contribution is to combine and adapt policy-gradient, concentration, and tracking arguments into a unified analysis of these dependent mechanisms. By deriving their coupled errors within one specified learning process, we make their combined effects on actor stationarity and critic tracking explicit.
Our contributions are:

\textbf{1. Unified finite-time analysis of interacting PPO-Clip mechanisms.}
    Theorem~\ref{thm:closed-loop-NK-explicit} jointly characterizes
    (i) the stationarity of the policy iterates with respect to the underlying reinforcement learning objective, and
    (ii) the finite-time tracking accuracy of the critic relative to its moving, policy-dependent optimum. A central outcome is an explicit \emph{coupling strength} that provides a sufficient condition for finite-time control.
    The bound quantifies their feedback through a common amplification factor for stochastic noise, critic tracking, reuse-induced drift, and clipping bias.
    The theory yields sufficient tuning conditions within the stated model (see Sec.~\ref{sec:main-results}). A uniform empirical bound for the actual clipped-gradient class replaces complete-path counting in a tabular specialization. A growing-horizon example verifies every premise along actual reused-batch updates, with polynomial horizon dependence (Corollary~\ref{cor:structured-main} and Proposition~\ref{prop:horizon-polynomial}).

\textbf{2. Finite-time refinement of classical actor--critic theory.}  
    In the asymptotic limit, the analysis recovers classical actor--critic convergence under time-scale separation, growing batch sizes, and vanishing KL budgets (Proposition~\ref{prop:classical-connection-single-index}).
    The finite-time bounds quantify this approach under constant step sizes.
    In particular, the analysis reveals how time-scale separation enters through an explicit coupling parameter and
    how minibatch reuse, clipping, and behavior-policy mismatch contribute finite-time residual terms that a vanishing-error asymptotic conclusion does not quantify.
    Score-baseline cancellation sharpens the coupling coefficient. An explicit schedule yields $O(T^{-2/5})$ stationarity and tracking, and a rollout-budget bound quantifies the competing effects of batch size and reuse (Corollary~\ref{cor:rate-resources} and Proposition~\ref{prop:fixed-budget}). These are sufficient upper bounds, not necessary instability thresholds.

\textbf{3. Extension to asynchronous PPO-Clip.}
    Appendix~\ref{appdx:asyn} extends the analysis to a single-gradient parameter-server model
    (Theorem~\ref{thm:async-closed-loop-explicit}). The guarantee includes staleness penalties and
    requires both a delay-dependent critic stepsize restriction and a sufficient coupling condition,
    under prescribed batch assignment and nonanticipative sampling.

\paragraph{Scope and limitations.}
This is a theoretical study of PPO-Clip under explicit coverage, value realizability, and critic regularity assumptions, using raw recomputed GAE with Monte Carlo critic targets. The guarantees have conservative constants and do not cover unrestricted neural PPO; the tabular experiments provide qualitative illustrations. The analysis informs PPO practice by identifying how actor--critic timescales, KL budgets, clipping ranges, and batch reuse affect finite-time error bounds, motivating coordinated tuning and diagnostics of critic error and policy drift. Its conditions are sufficient and problem-dependent.

\section{Preliminaries and Problem Formulation}
\label{sec:prelim-prob}

In this section, we present our analytical setting.
Further background is provided in Appendix~\ref{appdx:background}.

\subsection{Finite-Horizon Episodic Setting}
\label{subsec:rl-llm}

We consider a finite-horizon episodic MDP
$\big(\mathcal{S},\mathcal{A},P,r,\gamma\big)$ with horizon $H$ and a fixed initial-state distribution.
States include the time index. Under a parameterized policy $\pi_\theta(\cdot\mid s)$, a trajectory is
{\small
\[
\tau := (s_0,a_0,s_1,a_1,\ldots,s_{H-1},a_{H-1},s_H).
\]
}
The reward mechanism is fixed during training.
Since true advantages are unavailable in closed form, PPO estimates them from finite data using a learned
value-function baseline, leading to an actor--critic architecture.

As a motivating example, LLM fine-tuning represents each action as a token from a vocabulary $\mathcal{V}$.
A prompt $x\in\mathcal{X}$ is drawn from a distribution $\mathbb{P}$, and the state records the prompt and generated prefix:
{\small
\[
s_t := (x,a_{0:t-1}),
\qquad
s_0 := (x,\emptyset).
\]
}
Here $\mathcal{A}:=\mathcal{V}$, transitions append the chosen token, $s_{t+1}=s_t\circ a_t$,
and a fixed reward model evaluates the generated trajectory.

\subsection{Actor--Critic Framework}
\label{subsec:actor-critic-framework}

PPO is implemented as an actor--critic method in which the policy (actor) and value function (critic) are parameterized separately and updated jointly.
This separation supports value-based advantage estimation when rewards are delayed.
The critic enables advantage estimation and can reduce variance, but its approximation error can bias bootstrapped actor directions.
Conversely, as the actor evolves, the critic's regression target drifts.
This bidirectional dependency gives rise to a \emph{closed-loop coupling} that is central to our analysis.

\paragraph{Critic parameterization and learning objective.}
The critic is a parameterized value function $V_w(s)$ trained to approximate the policy-dependent value
$V_\theta(s)$ by minimizing a policy-conditioned regression loss
{\small
\begin{equation}
\label{eq:critic-loss}
\cL_c(\theta,w)
~:=~
\mathbb{E}_{\tau\sim\pi_\theta}\!\left[\frac1H\sum\nolimits_{h=0}^{H-1}
(V_w(s_h)-G_h(\tau))^2\right],
\end{equation}
}
\noindent where $G_h(\tau)=\sum_{j=h}^{H-1}\gamma^{j-h}\dot r(s_j,a_j)+\gamma^{H-h}v_H(s_H)$ is the Monte Carlo return, with a prescribed terminal value $v_H$.
The theorem analyzes these stored Monte Carlo targets. Bootstrapped targets require an additional target-error bound.
The explicit dependence on $\theta$ is crucial: as the actor changes, both the state distribution and the target
change, turning critic learning into a tracking problem. Let
$
w^\star(\theta)\in\arg\min_wL_c(\theta,w)
$
denote a population minimizer. We quantify critic suboptimality via the \emph{critic tracking error}
{\small
\begin{equation}
\label{eq:tracking-error}
\Delta(\theta,w)
:= \|w-w^\star(\theta)\|^2,
\end{equation}
}
\noindent which measures the distance to the moving solution $w^\star(\theta)$.
In our analysis, $\Delta(\theta,w)$ controls the bias injected into the actor update through imperfect advantage
estimation.

\paragraph{Advantage estimation via GAE.}
Since the true advantage is unknown, PPO constructs an estimator using the critic.
Let $\hat\delta_t$ be the temporal-difference (TD) residual. Generalized advantage estimation (GAE) defines
{\small
\begin{equation}
\label{eq:gae}
\hat\delta_t
:= \dot r(s_t,a_t)+\gamma V_w(s_{t+1})-V_w(s_t),\qquad
\widehat A_t^{(\lambda)}
:= \sum\nolimits_{\ell=0}^{H-1-t}(\gamma\lambda)^\ell \hat\delta_{t+\ell},
\end{equation}
}
\noindent where $\lambda\in[0,1]$ trades variance against bias.
At the terminal state, the estimator uses the prescribed boundary value from the return convention,
so $V_w(s_H)=V_\theta(s_H)$. This value may be zero when all rewards are assigned to transitions,
or the terminal payoff when that payoff is represented as a boundary value.
A nonmatching terminal bootstrap requires the additional term in Lemma~\ref{lem:gae-bias}.
Bootstrapping changes the bias--variance trade-off. Our bounds quantify the critic-induced bias
through explicit dependence on $\lambda$ and $\Delta(\theta,w)$.

\paragraph{Actor update (ideal form).}
Let $\cJ(\theta)$ denote the canonical RL objective optimized by the actor; the policy-gradient theorem yields
{\small
\[
\cJ(\theta)
:= \frac1H\mathbb{E}_{\tau\sim \pi_\theta}[G_0(\tau)],\qquad
\nabla_\theta \cJ(\theta)
= \mathbb{E}_{\tau\sim\pi_\theta}\!\left[\frac1H\sum\nolimits_{h=0}^{H-1}\gamma^h
\nabla_\theta\log\pi_\theta(a_h\mid s_h)\,A^{\pi_\theta}(s_h,a_h)\right],
\]
}
\noindent where $A^{\pi_\theta}$ is the true advantage for $G_h$. The factor $1/H$ fixes the objective normalization; the actor uses the weights $\gamma^h$ (equal to one when $\gamma=1$).
Replacing the true advantage with GAE produces the idealized update direction
{\small
\begin{equation}
\label{eq:actor-update}
\nabla_\theta \cJ(\theta)
~\approx~
\mathbb{E}_{\tau}\!\left[\frac1H\sum\nolimits_{h=0}^{H-1}\gamma^h
\nabla_\theta\log\pi_\theta(a_h\mid s_h)\,\widehat A_h^{(\lambda)}\right].
\end{equation}
}
\noindent While \eqref{eq:actor-update} targets the gradient of the underlying RL objective, PPO training
typically reuses a finite batch of rollouts for multiple epochs.
Without additional control, reuse can produce policy drift and off-policy bias,
motivating the modern PPO-Clip surrogate.

\subsection{Modern PPO-Clip Surrogate}
\label{subsec:ppo-actor-critic}

In PPO-Clip, data are collected under a behavior policy $\pi_{\theta_{\mathrm{old}}}$ and reused for multiple
actor updates. This improves sample efficiency but introduces distribution shift. PPO-Clip controls this shift
using importance ratios, clipping, and KL-based early stopping.

\paragraph{Importance sampling ratio.}
For any state--action pair $(s,a)$, define the likelihood ratio
{\small
\[
r_{\theta,\theta_{\mathrm{old}}}(s,a)
:= \frac{\pi_\theta(a\mid s)}{\pi_{\theta_{\mathrm{old}}}(a\mid s)}.
\]
}
\noindent This ratio allows objectives evaluated under $\pi_{\theta_{\mathrm{old}}}$ to approximate policy updates. However, large ratios can lead to unstable gradient estimates and brittle optimization,
which motivates the use of clipping.

\paragraph{Clipped surrogate objective.}
PPO-Clip maximizes the surrogate objective
{\small
\begin{equation}
\label{eq:ppo-clip-objective0}
\begin{aligned}
\cL^{\mathrm{clip}}(\theta;\theta_{\mathrm{old}})
&:=\mathbb{E}_{\substack{\tau\sim\pi_{\theta_{\mathrm{old}}},\\ h\sim\mathrm{Unif}\{0,\ldots,H-1\}}}
\Big[\gamma^h\min\big(r_{\theta,\theta_{\mathrm{old}}}(s,a)\,\widehat A(s,a),\\
&\qquad\mathrm{clip}(r_{\theta,\theta_{\mathrm{old}}}(s,a),1-\delta,1+\delta)\,\widehat A(s,a)\big)\Big],
\end{aligned}
\end{equation}
}
\noindent where $(s,a)=(s_h,a_h)$, $\widehat A(s,a)=\widehat A_h^{(\lambda)}(\tau;w)$ remains a random trajectory label, and $\delta\in(0,1)$ is the clipping parameter.
Clipping suppresses selected sample gradients according to the likelihood ratio and advantage sign.
It is intended to limit excessive updates and introduces two analytical challenges.
First, \eqref{eq:ppo-clip-objective0} is \emph{non-smooth} at the boundaries
$r_{\theta,\theta_{\mathrm{old}}}=1\pm\delta$, requiring subgradient-based arguments.
Second, clipping can induce systematic \emph{bias}: active sample directions differ from their unclipped counterparts. The latter still has trajectory-policy and critic error under reused data.
A central goal of our analysis is therefore to control the impact of
clipping-induced distortion.

\paragraph{Actor update and KL trust region.}
At outer iteration $n$, the actor performs $K$ inner updates
{\small
\begin{align}\label{eq:actor-update-main}
\theta_n^{(k+1)}=\theta_n^{(k)}+\eta\, g_n^{(k)},          
\end{align}
}
\noindent where $ g_n^{(k)}$ denotes stochastic (sub)gradients of
$\cL^{\mathrm{clip}}(\cdot;\theta_n)$.
Because updates reuse data generated by $\pi_{\theta_n}$, the analysis assumes a population KL budget
$\kappa>0$ at every inner iterate $\theta_n^{(k)}$:
{\small
\begin{equation}
\label{eq:kl-budget}
\mathbb{E}_{s\sim\pi_{\theta_n}}
\Big[
\mathrm{KL}\big(\pi_{\theta_n}(\cdot\mid s)\,\|\,\pi_{\theta_n^{(k)}}(\cdot\mid s)\big)
\Big]
\le \kappa.
\end{equation}
}
This trust-region constraint plays a dual role in the analysis:
(i) it limits the distribution shift induced by multi-epoch reuse, ensuring that the surrogate remains meaningful; and
(ii) it provides quantitative control over clipping activity, since small KL divergence implies that likelihood
ratios remain close to one for most state--action pairs.

\subsection{Objective of the Analysis and Assumptions}
\label{subsec:goal}
We establish non-asymptotic convergence guarantees for PPO-Clip with a learned actor--critic architecture.
To keep the analysis tractable and interpretable, we impose the following regularity assumptions.

\begin{assumption}[Smoothness of objective]
\label{ass:smooth-J0}
The objective $\cJ(\cdot)$ is bounded by $J_{\max}$ and $L_J$-smooth.
\end{assumption}

\begin{assumption}[Bounded conditional variance]
\label{ass:actor-variance0}
Let $\mathcal{F}_t$ denote the algorithm filtration. The stochastic actor update direction $g_t$ satisfies
$
\mathbb{E}\big[\|g_t-\mathbb{E}[g_t\mid\mathcal{F}_t]\|^2\mid\mathcal{F}_t\big]\le \sigma_a^2
$
for some $\sigma_a^2>0$.
\end{assumption}

\begin{assumption}[Bounded advantages]
\label{ass:adv0}
There exists $A_{\max}>0$ such that the raw GAE estimates, recomputed using the pre-update critic, satisfy
$
|\widehat A_t^{(\lambda)}| \le A_{\max}
$
for every trajectory in the data support and every visited critic. No advantage normalization or clipping is applied in the analyzed update; such transformations require a separate bias term.
\end{assumption}

\begin{assumption}[Critic learnability and stability]
\label{ass:critic0}
For each fixed policy $\theta$, the critic loss $\mathcal{L}_c(\theta,\cdot)$ is locally convex, satisfies PL and
quadratic-growth conditions with constants $\mu_c,\mu_q>0$, and has $L_c$-Lipschitz gradients in $w$.
Quadratic growth is measured from the selected stationary minimizer on a convex region containing that minimizer and the critic iterate.
Moreover, the selected map $w^\star(\theta)$ is measurable and $L_{w^\star}$-Lipschitz in $\theta$.
\end{assumption}

\begin{assumption}[Value realization]
\label{ass:val}
In the visited region, the value parameterization is \(L_V\)-Lipschitz with respect to $w$ and is realizable.
For the critic-to-value step, the selected population regression minimizer is assumed to satisfy
$V_{w^\star(\theta)}=V^{\pi_\theta}$ on the states used in the analysis.
\end{assumption}

\begin{assumption}[KL trust-region control]
\label{ass:kl0}
During each outer iteration, the inner actor updates satisfy the population KL budget with parameter $\kappa>0$, and the
score is bounded by $G_{\pi}$ on the relevant parameter regions. The state law in the budget is the uniform average of the $H$ behavior-policy state marginals.
\end{assumption}

These premises hold on regions containing all iterates used in the proof; KL control alone does not establish critic geometry or realizability.
Appendix~\ref{subsec:discussion-assumptions} explains their roles and limitations, including value identification and the separate certification needed for empirical KL stopping.

\paragraph{Finite-batch protocol and coverage.}\label{par:batch-protocol}
The baseline finite-sample bridge uses a finite set $\Omega$ of complete trajectories (including initial states and rewards), with $S_{\mathrm{traj}}:=|\Omega|$. Corollary~\ref{cor:structured-main} replaces this support premise by finite layered states and actions, a bounded tabular critic, and bounded returns; the reward distribution may then be continuous.
Write $P_\theta$ for the complete trajectory law. At epoch $n$, conditionally on the past, collect $B$ independent trajectories under the behavior policy $\bar\theta_n=\theta_n^{(0)}$, and denote their empirical law by $\widehat P_n$.
Retain their Monte Carlo returns and reuse this batch for $K$ joint updates.
Before each update, compute raw GAE with the current critic $w_n^{(k)}$; sample trajectory--time pairs uniformly with replacement from the stored batch and $\{0,\ldots,H-1\}$, or sample trajectories uniformly with replacement and average all $H$ times. Both conventions have the same conditional mean.
Use the empirical clipped actor gradient (with the advantage detached) and empirical Monte Carlo critic gradient, both evaluated at the pre-update parameters, then apply both updates. At a clipping tie, fix a measurable samplewise derivative rule in advance (for example, select the unclipped branch); it must not depend on the other samples in the minibatch.
The two minibatches may coincide. Thus their noises may be correlated.
The pre-update filtration contains the entire stored batch, all past draws, and the current parameters; conditional means are empirical means, not population gradients.

Assume $\pi_\theta(a\mid s)\ge p_{\min}>0$ for every relevant action and policy, and put $R_{\max}:=1/p_{\min}$ (or any verified smaller ratio bound).
This coverage condition is imposed separately; it does not follow from the average KL constraint.
Assume $|G_h|\le G_{\max}$, $|V_w(s)|\le V_{\max}$, and $\|\nabla_w V_w(s)\|\le G_V$ on the visited region, with matched terminal values.
Consequently every single-sample critic gradient has norm at most
$Q_c:=2(V_{\max}+G_{\max})G_V$.
Appendix~\ref{appdx:batch-bridge} gives the full sampling specification and derives variance bounds and uniform population discrepancies under adaptive reuse.
Appendix~\ref{par:batch-scope-discussion} discusses the finite-support and structured cases and their verified examples.

\section{Main Results}
\label{sec:main-results}

This section presents our main finite-time convergence guarantees for PPO-Clip.
We analyze the specified \emph{synchronous} actor--critic protocol;
Appendix~\ref{appdx:asyn} extends the framework to bounded asynchronous staleness.
All proofs are deferred to the appendix.

\subsection{Convergence of PPO-Clip}
\label{subsec:closing-loop}

We begin with our main closed-loop guarantee in the synchronous setting.
The result jointly controls (i) the \emph{stationarity} of the actor iterates with respect to the underlying RL objective
$J(\theta)$ and (ii) the \emph{tracking error} of the learned critic.
Crucially, the bounds are stated in terms of explicit hyperparameters and intrinsic noise/regularity constants.

\begin{theorem}[Closed-loop non-asymptotic convergence of PPO-Clip]
\label{thm:closed-loop-NK-explicit}
Consider the finite-batch protocol in Section~\ref{sec:prelim-prob}, with fixed initial actor $\theta_0$, $N$ batches of $B$ trajectories, and $K$ actual joint updates per batch; set $T=NK$.
Suppose Assumptions~\ref{ass:smooth-J0}--\ref{ass:kl0} and the stated coverage and boundedness premises hold, with
$0<\eta\le1/(4L_J)$ and $0<\beta_c\le\min\{\mu_q/(2L_c^2),1/\mu_q\}$.
For matched terminal values, define
{\small
\[
 \widetilde C_{\lambda,H}=\gamma(1-\lambda)\sum\nolimits_{j=0}^{H-2}(\gamma\lambda)^j,
 \, a=4G_\pi^2L_V^2 \widetilde C_{\lambda,H}^2,
 \, \epsilon_B=\min\{1,S_{\mathrm{traj}}/(4B)\},
 \, E_c=8Q_c^2\epsilon_B+4Q_c^2H\kappa.
\]
}
An empty sum is zero. The state-only GAE error cancels under the on-policy score
identity (Corollary~\ref{cor:gae-score-cancellation}); thus $0\le\widetilde C_{\lambda,H}\le\gamma$
and $a=0$ when $\lambda=1$ or $H=1$. The empirical and clipping terms in $F$ remain.
The actor bias floor is
{\small
\begin{align}
F:= {}&\underbrace{16R_{\max}^2G_\pi^2A_{\max}^2\epsilon_B}_{\text{finite-batch reuse}}
+\underbrace{8G_\pi^2A_{\max}^2\kappa(1+\sqrt H)^2}_{\text{trajectory/policy drift}}
+\underbrace{8G_\pi^2A_{\max}^2\kappa(1+1/\delta)^2}_{\text{clipping}}.
\label{eq:finite-batch-F}
\end{align}
}
Set
{\small
\[
 A_0=\underbrace{\frac{4(J_{\max}-J(\theta_0))}{\eta T}}_{\text{optimization}}+\underbrace{2L_J\eta\sigma_a^2}_{\text{actor noise}},
 \quad d=\frac{40L_{w^\star}^2\eta^2}{\mu_q^2\beta_c^2},
 \quad U=\underbrace{\frac{4\mathbb E\Delta_0}{\mu_q\beta_c T}}_{\text{critic initialization}}
 +\underbrace{\frac{5\beta_c\sigma_c^2}{\mu_q}}_{\text{critic noise}}
 +\underbrace{\frac{40E_c}{\mu_q^2}}_{\text{batch/target mismatch}}.
\]
}
Define the explicit coupling coefficient $\rho:=10ad$ and suppose $\rho<1$.
Then the actor stationarity and critic tracking bounds are
{\fontsize{8}{10}\selectfont
\begin{equation}
 \frac1T\sum\nolimits_{t<T}\mathbb E\|\nabla J(\theta_t)\|^2
 \le\frac{A_0+4F+4aU+4ad\sigma_a^2}{1-\rho},
 \quad
 \frac1T\sum\nolimits_{t<T}\mathbb E\Delta_t
 \le\frac{U+d(\sigma_a^2+2A_0+10F)}{1-\rho}.
 \label{eq:closed-loop-actor-repaired}
\end{equation}
}
The flattened index is $t=nK+k$, and $\Delta_t=\|w_t-w^\star(\theta_t)\|^2$.
\end{theorem}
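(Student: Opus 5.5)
The proof couples two one-step recursions, an actor descent inequality for $J$ and a critic contraction for $\Delta_t$, and closes the loop with a small-gain argument. This is where the condition $\rho=10ad<1$ enters.

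\emph{Actor step.} By $L_J$-smoothness (Assumption~\ref{ass:smooth-J0}) and $\eta\le 1/(4L_J)$, ascent along $g_t$ gives
\[
J(\theta_{t+1})\ge J(\theta_t)+\tfrac{\eta}{4}\|\nabla J(\theta_t)\|^2-\eta\|b_t\|^2-\tfrac{L_J\eta^2}{2}\sigma_a^2 ,
\]
up to constants. Here $b_t:=\E[g_t\mid\mathcal F_t]-\nabla J(\theta_t)$ is the conditional bias, and Assumption~\ref{ass:actor-variance0} controls the noise. The key step is a decomposition of $b_t$ into four pieces.
\begin{itemize}
\item A finite-batch piece compares the empirical mean under $\widehat P_n$ with the behavior-policy population mean. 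The uniform discrepancy bound from Appendix~\ref{appdx:batch-bridge} controls it by $R_{\max}^2G_\pi^2A_{\max}^2\epsilon_B$ with $\epsilon_B=\min\{1,S_{\mathrm{traj}}/(4B)\}$.
\item A trajectory/policy-drift piece replaces behavior-policy states and continuations by current-policy ones. Pinsker together with the KL budget (Assumption~\ref{ass:kl0}) bounds the total-variation distance by $\sqrt{\kappa}$ per step, and accumulating over $H$ steps gives the factor $(1+\sqrt H)^2$.
\item A clipping piece comes from the active-clip indicator. Its mass is bounded by $\Pr(|r-1|>\delta)\le \E|r-1|/\delta\lesssim\sqrt\kappa/\delta$, which yields $(1+1/\delta)^2\kappa$.
\item A critic piece comes from GAE with $V_w$ instead of $V^{\pi_\theta}$. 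By Lemma~\ref{lem:gae-bias} and Corollary~\ref{cor:gae-score-cancellation}, the state-only error cancels against the score, leaving $G_\pi L_V\widetilde C_{\lambda,H}\sqrt{\Delta_t}$ through Assumption~\ref{ass:val} and $V_{w^\star(\theta)}=V^{\pi_\theta}$.
\end{itemize}
Young's inequality then gives $\|b_t\|^2\le a\Delta_t+F$ after collecting constants. Telescoping over $t<T$ and dividing by $\eta T/4$ yields
\[
\frac1T\sum_t\E\|\nabla J(\theta_t)\|^2\le A_0+4F+4a\,\frac1T\sum_t\E\Delta_t .
\]

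\emph{Critic step.} Write $w_{t+1}-w^\star(\theta_{t+1})=(w_{t+1}-w^\star(\theta_t))+(w^\star(\theta_t)-w^\star(\theta_{t+1}))$. For the first term, PL and quadratic growth with $L_c$-smoothness and the stated bound on $\beta_c$ give contraction by a factor $(1-\mu_q\beta_c/2)$, up to the critic noise $\beta_c^2\sigma_c^2$ and a bias term. That bias comes from the empirical Monte Carlo gradient differing from the population gradient at $\theta_t$: the batch part uses the same uniform bridge, and the drift part $\bar\theta_n\to\theta_t$ uses the KL budget. Both are bounded per sample by $Q_c$, which produces $E_c$. For the second term, Lipschitzness of $w^\star$ gives a bound of $L_{w^\star}\eta\|g_t\|$, and a Young split with weight $\mu_q\beta_c/4$ keeps the contraction. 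Bounding $\E\|g_t\|^2\le \sigma_a^2+2\|\nabla J\|^2+2\|b_t\|^2$ then turns the drift into $d$ times actor quantities. Telescoping and dividing by $\mu_q\beta_c T/4$ gives
\[
\bar\Delta\le U+d\bigl(\sigma_a^2+2\,\overline{\|\nabla J\|^2}+2(a\bar\Delta+F)\bigr).
\]

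\emph{Closing the loop.} Substitute each averaged inequality into the other. The cross terms multiply to a coefficient at most $10ad=\rho$, counting $4a\cdot 2d$ plus $2ad$ and absorbing slack into the constant $10$. Rearranging under $\rho<1$ yields both displayed bounds, with $F$ multiplied by $4$ and by $10d$ respectively.

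The main obstacle is the actor-bias decomposition. The difficulty is that the conditional mean is an empirical mean over a batch reused adaptively, and the inner iterates depend on that batch. A pointwise concentration bound therefore does not apply, and one needs the uniform bound over the clipped-gradient class with the $S_{\mathrm{traj}}/B$ rate from Appendix~\ref{appdx:batch-bridge}. Combining it with the clipping-indicator control under only an average-state KL budget is the delicate part. I would first prove the four pieces as separate lemmas and then assemble them with $(x_1+\dots+x_4)^2\le4\sum x_i^2$, which explains the factors $4$ and $8$ in $F$.
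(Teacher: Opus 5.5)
Your proposal is essentially the paper's proof. The smoothness step gives $\bar G\le A_0+4\bar B$, and the four-term decomposition (empirical batch, clipping, trajectory drift, on-policy GAE with score cancellation) gives $\bar B\le a\bar\Delta+F$, where $\bar B=T^{-1}\sum_t\mathbb{E}\|b_t\|^2$. The moving-target critic recursion gives $\bar\Delta\le U+d\bar S$ with $\bar S=T^{-1}\sum_t\mathbb{E}\|g_t\|^2$, and inserting $\bar S\le\sigma_a^2+2\bar G+2\bar B$ produces a $\bar\Delta$-coefficient of exactly $8ad+2ad=\rho$, so no slack is needed. Both displayed bounds then follow by the same elimination.

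Two small precisions:
\begin{enumerate}
\item The critic contraction factor $1-\mu_q\beta_c/2$ rests on the one-point inequality $\langle\nabla_w\mathcal L_c(\theta_t,w_t),\,w_t-w^\star(\theta_t)\rangle\ge\frac{\mu_q}{2}\Delta_t$. This comes from the local convexity in Assumption~\ref{ass:critic0} combined with quadratic growth; PL does not supply it.
\item To obtain exactly the stated $U$ and $d$, put the bias term $\beta_c e_t^c$, with $e_t^c=\mathbb{E}[h_t\mid\mathcal F_t]-\nabla_w\mathcal L_c(\theta_t,w_t)$, together with the target displacement $w^\star(\theta_t)-w^\star(\theta_{t+1})$ in a single Young split of weight $\mu_q\beta_c/4$, as the paper does. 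A separate split for the bias would eat into the contraction margin.
\end{enumerate}
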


\begin{corollary}[A batch bound from the clipped-gradient class]
\label{cor:structured-main}
Retain the protocol and regularity premises of Theorem~\ref{thm:closed-loop-NK-explicit},
but replace finite complete-trajectory support by finite layered state sets
$\mathcal S_h$, finite actions, bounded rewards/returns, and a tabular critic
$w\in[0,W]^{d_c}$ along every update, where
$d_c=\sum_{h<H}|\mathcal S_h|$ and $M=\sum_{h<H}|\mathcal S_h||\mathcal A|$.
Assume a positive raw-GAE envelope $A_{\max}>0$ holds on this entire critic cube, and select the
unclipped derivative at clipping ties. Put $C=C_{\lambda,H}\le2$ and
{\small
\[
\begin{gathered}
 \ell_B=\log(6M)+d_c\log(2+CWB/A_{\max}),
 \qquad e_c(B)=\frac{8(W^2+G_{\max}^2)}{HB},\\
 e_a(B)=\min\left\{4R_{\max}^2G_\pi^2A_{\max}^2,
 \frac{4R_{\max}^2G_\pi^2A_{\max}^2M^2}{H^2B}(\ell_B+3)\right\}.
\end{gathered}
\]
}
Both conclusions of the theorem hold after replacing the residuals by
{\small
\begin{equation}
 F_{\rm str}=4e_a(B)+8G_\pi^2A_{\max}^2\kappa
       [(1+\sqrt H)^2+(1+\delta^{-1})^2],\quad
 E_{c,\rm str}=2e_c(B)+4Q_c^2H\kappa.
 \label{eq:structured-main-residuals}
\end{equation}
}
Here $Q_c$ can be any uniform envelope of the whole-trajectory critic gradient;
minibatch variances are bounded separately for the chosen sampling convention.
\end{corollary}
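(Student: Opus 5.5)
The plan is to reuse the proof of Theorem~\ref{thm:closed-loop-NK-explicit} verbatim and change only the two places where the finite-support size $S_{\mathrm{traj}}$ enters. In that proof the quantity $\epsilon_B=\min\{1,S_{\mathrm{traj}}/(4B)\}$ appears only through two uniform population-discrepancy bounds:
\begin{itemize}
\item[(i)] the bound $\mathbb E\sup\|\widehat g-g\|^2$ between the empirical and population clipped actor gradients, over all parameters reachable by adaptive reuse of the batch, which produces the term $16R_{\max}^2G_\pi^2A_{\max}^2\epsilon_B$ in $F$;
\item[(ii)] the analogous bound for the Monte Carlo critic gradient, which produces $8Q_c^2\epsilon_B$ in $E_c$.
\end{itemize}
The drift, clipping and KL terms, the descent inequality for $J$, the critic contraction, and the final coupling step with $\rho=10ad$ do not depend on how (i) and (ii) were obtained. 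It therefore suffices to show that, under the layered tabular premises, (i) is at most $4e_a(B)$ times the factor used in the theorem, and (ii) is at most $2e_c(B)$. The rest of the argument then goes through line by line with $F_{\rm str}$ and $E_{c,\rm str}$.

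\textbf{Critic term.} With a tabular critic, the whole-trajectory Monte Carlo gradient is linear in $w$. Its $w$-independent part is an average of bounded indicators times returns, and its $w$-dependent part is $w$ times empirical state-visit frequencies. The supremum over $w\in[0,W]^{d_c}$ is therefore controlled by the coordinatewise empirical-mean errors of visit counts and count-weighted returns. Each layer is visited once per trajectory, so summing the variances over states of layer $h$ gives at most $O((W^2+G_{\max}^2)/B)$. Averaging over $H$ layers with the $1/H$ normalization gives $8(W^2+G_{\max}^2)/(HB)$.

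\textbf{Actor term.} This is the main obstacle, because the gradient depends on $w$ nonlinearly through the clipping indicator $\mathbb 1\{\text{sign}(\widehat A)\ \text{vs ratio}\}$. The approach is as follows.
\begin{enumerate}
\item Write the clipped gradient with a tabular softmax-type score as a sum over the $M$ state--action coordinates. Each coordinate is an empirical mean of terms bounded by $R_{\max}G_\pi A_{\max}/H$, with at most one active visit per layer, which yields the factor $M^2/H^2$.
\item Discretize the critic cube at mesh $A_{\max}/(CB)$. Since raw GAE is $C_{\lambda,H}$-Lipschitz in $w$ in sup norm, this changes advantages by at most $A_{\max}/B$. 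The net has $(2+CWB/A_{\max})^{d_c}$ points.
\item Handle the indicator discontinuity using the fixed unclipped tie rule. For fixed $w$, the clipping pattern as a function of $\theta$ depends only on the per-coordinate ratio thresholds and the sign of $\widehat A$. Combine the $M$ coordinates with the three sign/branch cases into the union factor $6M$.
\item Apply Hoeffding or Bernstein with a union bound over the resulting finite class, which gives the log factor $\ell_B$.
\item Integrate the tail to get the $(\ell_B+3)$ expectation bound.
\end{enumerate}
The minimum with $4R_{\max}^2G_\pi^2A_{\max}^2$ is the trivial bound $\|\widehat g-g\|\le 2R_{\max}G_\pi A_{\max}$. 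If the sign-change set under the net is not handled cleanly, the first fallback is to absorb near-zero advantages into the $A_{\max}/B$ slack.

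Finally, the corollary notes that $Q_c$ may be any whole-trajectory envelope and that minibatch variances are bounded separately. Accordingly, $\sigma_a^2$ and $\sigma_c^2$ are kept as in the theorem, and the KL-dependent parts of $F$ and $E_c$ are inherited unchanged.
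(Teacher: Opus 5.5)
Your plan follows the paper's route. Lemma~\ref{lem:structured-batch-bridge} and Corollary~\ref{cor:structured-batch-replacement} change only the two empirical-law terms. They use a critic grid of mesh $A_{\max}/(CB)$, a union over $3M$ cell--branch classes with two-sided Hoeffding (whence $6MN$), and tail integration. For the critic they use linearity in $w$ together with $\sum_s PI_s=H$. The factors $4$ and $2$ are the four- and two-term squared triangle inequalities, and the rest of the closed-loop proof is reused unchanged. Two steps, however, should be set up differently.

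\textbf{Cell decomposition, not a tabular actor.} The corollary does not assume a tabular softmax actor; Proposition~\ref{prop:horizon-polynomial} uses one shared scalar parameter. So decompose by cells $j=(h,s,a)$, not by parameter coordinates. For fixed $(\theta,\bar\theta)$, the vector $\gamma^h r_j z_j$ is constant on its cell, with norm at most $R_{\max}G_\pi$. This gives
$\|(\widehat P-P)\Phi^{\rm clip}_{\theta,w}\|\le (R_{\max}G_\pi M/H)\max_{j,u}\sup_{w}|(\widehat P-P)f_{j,u,w}|$, where $f_{j,u,w}=I_j\,T_u(\widehat A_h(w,\cdot))$. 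This triangle inequality, not ``one visit per layer,'' is the source of $M^2/H^2$. It also yields uniformity over all actor parameters with no actor cover and no continuity in $\theta$.

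\textbf{The sign dependence is Lipschitz, not a discontinuity.} With the unclipped tie rule, $\alpha_j\widehat A_h$ equals $T_+(\widehat A_h)=\max\{\widehat A_h,0\}$, $\widehat A_h$, or $T_-(\widehat A_h)=\min\{\widehat A_h,0\}$, according as $r_j<1-\delta$, $r_j\in[1-\delta,1+\delta]$, or $r_j>1+\delta$. This choice is fixed by the cell alone. These maps are $1$-Lipschitz, and raw GAE is affine and $C$-Lipschitz in $w$ in $\|\cdot\|_\infty$. Hence grid replacement moves each $f_{j,u,w}$ by at most $C\varepsilon=A_{\max}/B$. Your fallback is exactly this argument and should be the main one.

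\textbf{Two smaller additions.} First, state the actor bound through the batch-measurable majorant $\min\{2R_{\max}G_\pi A_{\max},(R_{\max}G_\pi M/H)(Z+2A_{\max}/B)\}$, where $Z$ is the maximal absolute empirical deviation over the finite grid class. This avoids an $\mathbb E\sup$ over uncountably many actors. Second, note that the population comparisons inherited from Lemma~\ref{lem:finite-batch-bridge} survive without finite trajectory support. These are the policy- and critic-gradient identities and the KL chain rule. They survive because, given each of the finitely many state--action sequences, the reward law is policy independent. So $J$ and $\mathcal L_c$ are finite sums of differentiable path probabilities times bounded reward integrals.
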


Lemma~\ref{lem:structured-batch-bridge} proves these uniform deviations for the
actual clipped vector gradient, including adaptive reuse. Within each time--state--action
cell the advantage multiplier is $\widehat A$, its positive part, or its negative
part. A critic-parameter cover controls these three classes without assuming
continuity of the clipped gradient in actor parameters. The actor bound is
$O(M^2d_c\log B/(H^2B))$ times its squared envelope; it contains no complete-path
cardinality. This is a structured tabular result: a prefix-tree state representation
can itself have exponentially many states.
Proposition~\ref{prop:horizon-polynomial} verifies all premises along actual
updates on a nonconstant-objective family with $2^H$ action paths but $2H-1$ critic
coordinates, and gives a full bound polynomial in $H$.

\begin{corollary}[Update and fresh-rollout complexity]
\label{cor:rate-resources}
Fix the problem constants, clipping range, minibatch size, and $K$.
Consider runs with $T=NK$, stepsizes constant within each run,
$\eta=c_\eta T^{-3/5}$ and $\beta_c=c_\beta T^{-2/5}$, where the positive
constants ensure the stepsize restrictions and $\rho\le\rho_0<1$.
If $\kappa_T=O(T^{-2/5})$, then both average squared stationarity and average
critic parameter error are $O(T^{-2/5})$, using either
$B_T=\lceil S_{\rm traj}T^{2/5}\rceil$ in the finite-support theorem or
$B_T=\lceil c_B T^{2/5}\log(eT)\rceil$, $c_B>0$, in
Corollary~\ref{cor:structured-main}.
The fresh-trajectory count is $Q=NB=TB/K$; thus the respective sufficient counts
for error at most $\varepsilon$ are $O(\varepsilon^{-7/2})$ and
$\widetilde O(\varepsilon^{-7/2})$, with problem-dependent constants.
Environment transitions number $HQ$.
\end{corollary}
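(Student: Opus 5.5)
Corollary~\ref{cor:rate-resources} follows by substituting the schedule into the two bounds in \eqref{eq:closed-loop-actor-repaired} of Theorem~\ref{thm:closed-loop-NK-explicit} (or into the same bounds with residuals \eqref{eq:structured-main-residuals} from Corollary~\ref{cor:structured-main}) and reading off the order of each term. The first step is to check feasibility for all large $T$. Since $\eta=c_\eta T^{-3/5}\le 1/(4L_J)$ and $\beta_c=c_\beta T^{-2/5}\le\min\{\mu_q/(2L_c^2),1/\mu_q\}$ for $T$ beyond a constant threshold, the restrictions hold eventually. Small $T$ can be absorbed into the constant, since both averages are bounded by problem constants. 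The coupling coefficient satisfies
\[
\rho=10ad=\frac{400aL_{w^\star}^2c_\eta^2}{\mu_q^2c_\beta^2}\,T^{-6/5+4/5}=O(T^{-2/5}).
\]
Hence $\rho\le\rho_0<1$ is ensured for all large $T$, or for every $T$ by choosing $c_\eta/c_\beta$ small. In either case $1/(1-\rho)\le1/(1-\rho_0)$ is a constant.

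Next I bound each ingredient. The actor term is
\[
A_0=\frac{4(J_{\max}-J(\theta_0))}{c_\eta T^{2/5}}+2L_Jc_\eta\sigma_a^2T^{-3/5}=O(T^{-2/5}).
\]
The critic terms are $U=O(T^{-1}T^{2/5})+O(T^{-2/5})+O(E_c)$ and $d=O(T^{-2/5})$. For the residuals $F$ and $E_c$, the $\kappa$-terms are $O(\kappa_T)=O(T^{-2/5})$, since $H$, $\delta$, $G_\pi$ and $A_{\max}$ are fixed. The batch terms are handled as follows. In the finite-support case, $\epsilon_B\le S_{\rm traj}/(4B_T)\le T^{-2/5}/4$. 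In the structured case, $e_c(B_T)=O(1/B_T)$, and $e_a(B_T)=O(\ell_{B_T}/B_T)$ with $\ell_{B_T}=O(d_c\log B_T)=O(\log T)$. Choosing $B_T=\lceil c_BT^{2/5}\log(eT)\rceil$ makes $\ell_{B_T}/B_T=O(T^{-2/5})$. Substituting gives $F=O(T^{-2/5})$, $U=O(T^{-2/5})$, $aU=O(T^{-2/5})$ and $ad\sigma_a^2=O(T^{-2/5})$. For the tracking bound, $d(\sigma_a^2+2A_0+10F)=O(T^{-2/5})$. Both averages are therefore $O(T^{-2/5})$.

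The main point needing care is that $\sigma_a^2$ and $\sigma_c^2$ must not grow with $B_T$ or $T$. Since the minibatch size is fixed and samplewise gradients are uniformly bounded (by $R_{\max}G_\pi A_{\max}$ for the actor and $Q_c$ for the critic), the conditional variances are bounded by constants independent of $B$. The batch bridge of Appendix~\ref{appdx:batch-bridge} supplies this, as does the minibatch-variance remark in Corollary~\ref{cor:structured-main}. A second check is that the bounded-advantage envelope and the tabular cube premises are $T$-independent, which holds because they are fixed problem constants.

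Finally, I count resources. The number of fresh trajectories is $Q=NB_T=TB_T/K$, which is $O(T^{7/5})$ in the finite-support case and $O(T^{7/5}\log T)$ in the structured case. Reaching error $\varepsilon$ requires $CT^{-2/5}\le\varepsilon$, i.e.\ $T=O(\varepsilon^{-5/2})$. This gives $Q=O(\varepsilon^{-7/2})$, respectively $\widetilde O(\varepsilon^{-7/2})$. Each trajectory has $H$ transitions, so the environment transition count is $HQ$.
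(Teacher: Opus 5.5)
Your proposal is correct and follows essentially the paper's argument: substitute the schedule into the closed-loop bounds, check that $A_0$, $U$, $d$, $F$, $E_c$ (or the structured residuals) are all $O(T^{-2/5})$ under the prescribed $B_T$ and $\kappa_T$, then invert $CT^{-2/5}\le\varepsilon$ and multiply by $B_T/K$. Two small tidy-ups. First, your fallback of absorbing small $T$ into the constant is unnecessary. It would also need an a priori bound on $\Delta_t$ that you do not have in the finite-support case. Since $T^{-3/5},T^{-2/5}\le1$, constants that satisfy the restrictions and $\rho\le\rho_0$ at $T=1$ satisfy them for every $T\ge1$, which is exactly what the hypothesis provides. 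Second, round $T$ up to a multiple of $K$.
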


Appendix~\ref{appdx:rate-resources} proves the rate, supplies an implementable KL
certificate, and derives a fixed-rollout-budget tradeoff between $B$ and $K$.
The rate balances optimization, critic noise, and target movement; it is a
sufficient rate for this analysis, not a minimax claim or a comparison with global
optimality results for different PPO variants. The explicitly verified
horizon family has polynomial, but conservative, sample complexity.

\paragraph{Reading the guarantee: decomposition and amplification.}
The bounds separate optimization and stochastic noise from finite-batch reuse,
trajectory drift, clipping, and critic-target mismatch. A defining structural
feature is the prefactor $(1-\rho)^{-1}$.
We interpret $(1-\rho)^{-1}$ as a \emph{coupling amplification factor}: it quantifies how actor--critic feedback
amplifies \emph{all} finite-time error sources (optimization error, noise, drift, clipping bias, and critic tracking error).
The condition $\rho<1$ suffices to close the coupled inequalities; numerical informativeness also depends on the numerator's constants and error floors (Appendix~\ref{appdx:further_discussion}). For nonzero coupling, it motivates time-scale separation but does not establish a universal 5--10$\times$ learning-rate ratio or a necessary instability threshold.

\paragraph{Practical implications and tuning guidance.}
The optimization contribution decays as $1/(\eta NK)$ when the remaining constants and error floors are controlled.
The effect of increasing $K$ (more mini-batch reuse) depends on the resulting policy drift.
Proposition~\ref{prop:fixed-budget} makes this trade-off explicit under a certificate
$\kappa\le D K^2\eta^2$ and a fixed fresh-rollout budget.
At fixed $\kappa$, the theorem does not imply that larger $K$ increases drift.
The coupling factor $\rho$ scales as $(\eta/\beta_c)^2$ up to problem constants and $\widetilde C_{\lambda,H}^2$.

Once residual terms dominate the bound, further reducing its optimization term gives little improvement.
The bound motivates coordinating three controls:
\emph{(i)} tightening the trust region (smaller $\kappa$, e.g., more aggressive KL-based early stopping),
\emph{(ii)} balancing reduced critic target lag against increased noise within its admissible stepsize range, and
\emph{(iii)} improving critic quality (reducing tracking error or variance).

Beyond these aggregate trade-offs, the bound surfaces several concrete PPO-Clip--specific insights.
First, the KL budget $\kappa$ controls two distinct failure modes simultaneously: within-epoch distribution shift
($\propto \kappa$) and clipping distortion ($\propto \kappa/\delta^2$), giving distinct reasons to control the behavior-policy mismatch. Certified predictable stopping is treated in Appendix~\ref{appdx:stopping}; an uncalibrated empirical check alone does not establish the population premise.
Second, a smaller clipping range $\delta$ is not automatically more conservative: for fixed $\kappa$, the distortion
term scales as $\kappa/\delta^2$, so this upper bound becomes less informative for overly small clipping ranges; it does not establish a monotone law for the actual bias.
Finally, bootstrapping enters the coupling through $\widetilde C_{\lambda,H}^2$, with
$\widetilde C_{\lambda,H}\le\gamma\le1$. Matched terminal values give $a=\rho=0$ when
$\lambda=1$ or $H=1$. Holding other constants fixed, smaller $\lambda$ can tighten the
sufficient stepsize-ratio bound. Other constants, including critic coverage and return
bounds, can still worsen with $H$.

\paragraph{Connection to classical two-time-scale theory.}
Classical actor--critic analyses typically rely on asymptotic time-scale separation, often expressed as
$\eta_t/\beta_{c,t}\to 0$.
Theorem~\ref{thm:closed-loop-NK-explicit} provides a finite-time counterpart tailored to PPO-Clip,
in which time-scale separation appears explicitly through the dimensionless coupling coefficient $
\rho \;\propto\; (\eta/\beta_c)^2
\quad (\text{up to problem constants and } \widetilde C_{\lambda,H}^2).$
Thus, $\rho\ll 1$ means weak amplification in the bound; critic tracking also depends on $U$ and $d$.
In particular, $\lambda=1$ or $H=1$ gives $\rho=0$ independently of critic speed.
As $\rho\rightarrow1$, the bound increasingly amplifies noise and bias terms.
In the asymptotic limit of diminishing step sizes and vanishing trust-region budgets, the bound recovers the
classical two-time-scale picture. More formally, we have the following result.

\begin{proposition}[Two-time-scale stationarity in a single-index $t$ and horizon $T=NK$]
\label{prop:classical-connection-single-index}
Fix $H,\gamma,\lambda$, $\delta\in(0,1)$, and the initial actor $\theta_0$.
For each $T\in\mathbb{N}$, consider a PPO-Clip run with $N=N(T)$ outer iterations and $K=K(T)$ actual inner actor--critic updates per iteration, where $T=N(T)K(T)$ is deterministic.
Assume the model and regularity premises of Theorem~\ref{thm:closed-loop-NK-explicit}, including the finite-batch sampling and Monte Carlo target protocol, hold with constants uniform in $T$, fixed finite $S_{\mathrm{traj}}$, batch size $B_T\to\infty$, and $\sup_T\mathbb E[\Delta_{0,0}]<\infty$.
Define the flattened time index $
t := nK+k, (n,k)\in\{0,\dots,N-1\}\times\{0,\dots,K-1\}.
$
\noindent Let $\theta_t := \theta_n^{(k)}$ and $\Delta_t := \Delta_{n,k}$ denote the corresponding actor iterate and critic tracking error, respectively. Assume the run uses positive (possibly $T$-dependent) hyperparameters $(\eta_T,\beta_{c,T},\kappa_T)$ satisfying
{\small
\begin{align*}
\eta_T \to 0,\quad \beta_{c,T}\to 0,\quad \kappa_T\to 0, \quad  \eta_T\,T \to \infty, \quad 
\beta_{c,T}\,N(T) \to \infty, \quad
\frac{\eta_T}{\beta_{c,T}} \to 0, \quad
\frac{\eta_T}{\beta_{c,T}^2\,T}\to 0.
\end{align*}
}
\noindent If $\hat t$ is sampled uniformly from $\{0,\dots,T-1\}$ independently of the run, then
{\small
\[
\mathbb{E}\big[\|\nabla J(\theta_{\hat t})\|^2\big]\to 0,
\qquad
\mathbb{E}\big[\Delta_{\hat t}\big]\to 0,
\]
}
\noindent recovering the classical two-time-scale actor--critic stationarity and critic tracking conclusions.
\end{proposition}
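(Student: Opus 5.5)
The plan is to apply Theorem~\ref{thm:closed-loop-NK-explicit} separately to each run $T$, with $(\eta,\beta_c,\kappa,B)=(\eta_T,\beta_{c,T},\kappa_T,B_T)$, and show that both right-hand sides of \eqref{eq:closed-loop-actor-repaired} vanish. Since $\hat t$ is uniform on $\{0,\dots,T-1\}$ and independent of the run, $\mathbb E\|\nabla J(\theta_{\hat t})\|^2=\frac1T\sum_{t<T}\mathbb E\|\nabla J(\theta_t)\|^2$, and the same identity holds for $\Delta_{\hat t}$. The two conclusions are therefore exactly the averaged bounds of the theorem.

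\textbf{Step 1: admissibility for large $T$.} Because $\eta_T,\beta_{c,T}\to0$, the stepsize restrictions $\eta_T\le1/(4L_J)$ and $\beta_{c,T}\le\min\{\mu_q/(2L_c^2),1/\mu_q\}$ hold eventually. The coupling coefficient is $\rho_T=10a\,d_T$ with $d_T=40L_{w^\star}^2\eta_T^2/(\mu_q^2\beta_{c,T}^2)$. Since $\eta_T/\beta_{c,T}\to0$, we get $d_T\to0$ and $\rho_T\to0$, so $\rho_T\le1/2$ eventually and the factor $(1-\rho_T)^{-1}\le2$ stays bounded. The constant $a$ depends only on the fixed $H,\gamma,\lambda$ and the uniform constants.

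\textbf{Step 2: each term vanishes.} Take the terms of $A_0$, $F$, $U$ in turn.
\begin{itemize}
\item $A_0$: the optimization term $4(J_{\max}-J(\theta_0))/(\eta_T T)\to0$ because $\eta_T T\to\infty$, and the actor-noise term $2L_J\eta_T\sigma_a^2\to0$.
\item $F$: with $S_{\mathrm{traj}}$ fixed and $B_T\to\infty$, we have $\epsilon_{B_T}\to0$. Together with $\kappa_T\to0$ and fixed $\delta,H$, this gives $F_T\to0$ and also $E_{c,T}=8Q_c^2\epsilon_{B_T}+4Q_c^2H\kappa_T\to0$.
\item $U$: the noise term $5\beta_{c,T}\sigma_c^2/\mu_q\to0$ and the mismatch term $40E_{c,T}/\mu_q^2\to0$.
\end{itemize}

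\textbf{Step 3: the critic-initialization term (main obstacle).} This term is $4\mathbb E\Delta_0/(\mu_q\beta_{c,T}T)$, and $\sup_T\mathbb E\Delta_{0,0}<\infty$ is assumed. Since $T\ge N(T)$, the hypothesis $\beta_{c,T}N(T)\to\infty$ gives $\beta_{c,T}T\to\infty$, so the term vanishes. The care needed here is matching the hypothesis list to the theorem's exact form. If the proof of the theorem actually produces per-epoch terms, for example a critic error re-incurred at each outer iteration, they would scale like $1/(\beta_{c,T}N)$, and that is what $\beta_{c,T}N\to\infty$ handles. A residual of the form $\eta_T/(\beta_{c,T}^2 T)$ from cross terms (target drift times optimization error) would be handled by the last hypothesis. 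I would first check the theorem's proof to confirm which of these appear.

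The remaining cross terms are simple. In the actor bound, $4aU_T+4ad_T\sigma_a^2\to0$. In the tracking bound, $d_T(\sigma_a^2+2A_0+10F_T)\to0$, since $d_T\to0$ and $A_0,F_T$ are bounded (in fact vanishing). Both numerators tend to zero and both denominators are at least $1/2$ eventually, which gives both limits.
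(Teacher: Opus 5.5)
Your proposal is correct and follows the paper's proof essentially step by step. It identifies the uniform-index expectations with time averages, notes that the stepsize restrictions hold eventually and that $\rho_T=10ad_T\to0$, and checks that every term of $A_{0,T}$, $F_T$, $E_{c,T}$ and $U_T$ vanishes, using $\beta_{c,T}T\ge\beta_{c,T}N(T)\to\infty$ for the initialization term. The one difference is the product $d_TA_{0,T}$: the paper bounds it by $O\bigl(\eta_T/(\beta_{c,T}^2T)+\eta_T^3/\beta_{c,T}^2\bigr)$ and invokes the last hypothesis, whereas your observation that $d_T\to0$ and $A_{0,T}\to0$ already suffices, since $\eta_T/(\beta_{c,T}^2T)=(\eta_T/\beta_{c,T})^2/(\eta_TT)$. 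That hypothesis is therefore redundant, and the hedging in your Step~3 is unnecessary because the theorem's bound is explicit.
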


\paragraph{Takeaway.}
Proposition~\ref{prop:classical-connection-single-index} shows that the finite-time
closed-loop bound is consistent with the classical two-time-scale actor--critic limit. What our analysis adds beyond classical two-time-scale theory is an explicit \emph{finite-time} characterization of how this limit
is approached: time-scale separation enters quantitatively through $(\eta/\beta_c)^2$ (via $\rho$ and $(1-\rho)^{-1}$), while PPO-Clip
introduces additional bias floors controlled by the KL budget $\kappa$ and the clipping threshold $\delta$ through $\kappa/\delta^2$.
These terms describe how clipping, trust-region enforcement, and critic tracking enter the finite-time bound;
the displayed diminishing schedules are sufficient to recover the asymptotic two-time-scale limit.


\paragraph{Asynchronous extension.}
The closed-loop analysis extends to a single-gradient parameter-server model with
bounded actor and critic delays and a nonanticipative sampling schedule.
The resulting stationarity and critic-tracking guarantees include additional staleness
penalties and require a delay-dependent critic stepsize restriction alongside the
sufficient coupling condition $\rho_{\mathrm{async}}<1$.
The complete model, assumptions, Theorem~\ref{thm:async-closed-loop-explicit}, and its
proof appear in Appendix~\ref{appdx:asyn}.

\subsection{Empirical Study}

\begin{figure*}[!t]
\vspace{-6mm}
\centering
    \begin{subfigure}[t]{0.24\textwidth}
        \centering
        \includegraphics[width=\linewidth]{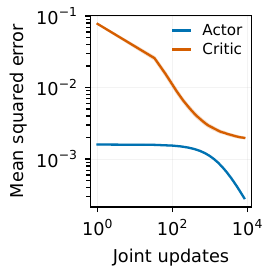}
        \caption{Joint errors}
        \label{fig:joint-errors}
    \end{subfigure}\hfill
    \begin{subfigure}[t]{0.24\textwidth}
        \centering
        \includegraphics[width=\linewidth]{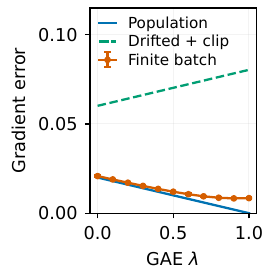}
        \caption{GAE and residuals}
        \label{fig:gae-cancellation}
    \end{subfigure}\hfill
    \begin{subfigure}[t]{0.24\textwidth}
        \centering
        \includegraphics[width=\linewidth]{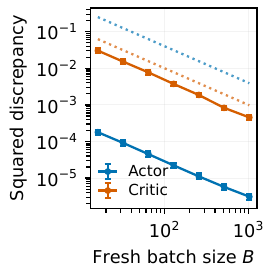}
        \caption{Batch discrepancy}
        \label{fig:adaptive-batch}
    \end{subfigure}\hfill
    \begin{subfigure}[t]{0.24\textwidth}
        \centering
        \includegraphics[width=\linewidth]{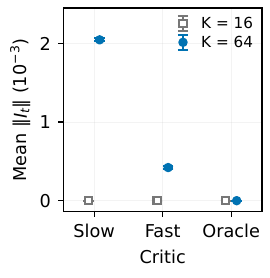}
        \caption{Clipping interaction}
        \label{fig:critic-clipping}
    \end{subfigure}
\caption{Two-step diagnostics: (a) joint average squared errors, (b) GAE bias and finite-batch
RMS error, (c) adaptive batch discrepancy with expectation bounds (dotted).
Eight-step diagnostic: (d) mean critic--clipping interaction with clipping enabled.
Bands/bars show 95\% bootstrap intervals.}
\label{fig:exp}
\vspace{-4mm}
\end{figure*}

Our experiments provide controlled illustrations of the main mechanisms
appearing in the theory. Figures~\ref{fig:joint-errors}--\ref{fig:adaptive-batch} use the
two-step example from Appendix~\ref{appdx:finite-batch-witness}.
Figure~\ref{fig:joint-errors} shows that actor stationarity error and critic
tracking error decrease together under joint updates, illustrating the two
quantities controlled simultaneously by Theorem~\ref{thm:closed-loop-NK-explicit}.
Figure~\ref{fig:gae-cancellation} isolates the actor-bias decomposition: the
critic-induced population GAE bias vanishes at $\lambda=1$, as predicted by
the score--baseline cancellation, while finite-batch and behavior-policy/clipping
residuals remain. Figure~\ref{fig:adaptive-batch} then examines the statistical
effect of reusing the same rollout batch: despite the actor and critic adapting
to that batch, their empirical-to-population discrepancies decrease as the
fresh batch size $B$ grows, consistent with our uniform finite-batch bounds.

Figure~\ref{fig:critic-clipping} isolates the critic--clipping interaction
during training on an eight-step chain. At $K=64$, critic error changes both
clipping decisions and distortion: mean $\|I_t\|$ is $2.04\times10^{-3}$
for the slow critic and $4.21\times10^{-4}$ for the fast critic.
This illustrates the closed-loop mechanism in our theory: critic tracking
affects clipped actor updates, while actor updates move the critic's value
target. At $K=16$, clipping is inactive; the exact-value oracle has
$I_t=0$ by construction.

These studies isolate and validate  three ingredients of the theory:
joint actor--critic tracking, finite-batch error under adaptive reuse, and
the interaction between critic error and clipping. They are mechanism
checks rather than validations of the theorem's rates or constants: the
eight-step setting does not satisfy a separately verified theorem
certificate, and neither bound tightness nor the predicted convergence rate
is tested here. Full protocols and additional results are given in
Appendix~\ref{appdx:exp}.

\section{Related Work}
\label{sec:related-work}

An extended review appears in Appendix~\ref{appdx:related-work}.

\paragraph{Policy gradients and actor--critic methods.}
Classical theory establishes stochastic policy-gradient convergence under regularity and sampling assumptions~\citep{sutton1999policy,williams1992simple}.
Learned critics can reduce gradient variance but can introduce bias when approximate~\citep{schulman2016gae,schulman2017proximalpolicyoptimizationalgorithms,wen2021characterizing,kumar2023sample}.
Asymptotic and non-asymptotic actor--critic analyses use stochastic approximation and non-convex optimization~\citep{liu2025non,zhong2023theoretical,holzleitner2021convergence},
including finite-time critic-tracking guarantees~\citep{wu2020finite}.
We analyze interleaved critic fitting with clipping, raw GAE, and finite-batch reuse under explicit assumptions.

\paragraph{Theory of PPO and clipping.}
Existing PPO analyses vary in policy, value-estimation, and sampling models~\citep{huang2024ppo,liu2023neuralproximaltrustregionpolicy,huang2021neural,jin2023stationary,liu2025non,chen2023finite}.
\citet{jin2023stationary} retain nonsmooth clipping and multiple updates; \citet{huang2024ppo} include learned value estimation.
These features alone do not distinguish our analysis.
Table~\ref{tab:closest-theory} compares the updates and statistical premises.
Our contribution derives advantage and sampling discrepancies for interleaved raw-GAE/Monte Carlo updates
and closes the resulting actor--critic recursions.
For finite layered MDPs with tabular critics, Corollary~\ref{cor:structured-main} controls adaptive reuse
through trajectory-function classes rather than complete-path counting;
Proposition~\ref{prop:horizon-polynomial} verifies a growing-horizon family with polynomial constants.
These conditional stationarity and tracking guarantees cannot be directly ranked against rates
under different objectives, sampling laws, or geometries.

\begin{table}[t]
\vspace{-8mm}
\centering
\small
\setlength{\tabcolsep}{3pt}
\renewcommand{\arraystretch}{1.12}
\caption{Closest theoretical comparisons under each work's assumptions.
Our row describes the structured tabular specialization.}
\label{tab:closest-theory}
\begin{tabular}{@{}>{\raggedright\arraybackslash}p{.13\linewidth}
>{\raggedright\arraybackslash}p{.20\linewidth}
>{\raggedright\arraybackslash}p{.16\linewidth}
>{\raggedright\arraybackslash}p{.24\linewidth}
>{\raggedright\arraybackslash}p{.19\linewidth}@{}}
\toprule
Work & Actor update & Critic treatment & Statistical control & Conclusion \\
\midrule
\citet{jin2023stationary}
& Clipped surrogate gradients
& Estimated advantages; no critic recursion
& Sampling/advantage bias retained as $\phi_n$
& Stationarity up to bias \\
\addlinespace
\citet{huang2024ppo}
& Entropic mirror descent; neural policy regression
& Neural action-value TD
& Evaluation/improvement errors controlled by width and inner iterations
& Global optimality for analyzed scheme \\
\addlinespace
\citet{wu2020finite}
& Online score update with TD error
& Simultaneous linear TD(0)
& Markovian data; uniform mixing; critic tracking
& Stationarity up to approximation error \\
\addlinespace
\citet{doering2026approximate}
& Symmetric clipped-gradient proxy
& Bounded advantage bias assumed
& Finite-buffer reuse with random reshuffling
& Epoch-start stationarity with error floors \\
\addlinespace
This work
& Interleaved clipping; recomputed raw GAE
& Tabular MC regression on stored returns
& Uniform adaptive actor/critic batch bounds; population KL control
& Joint stationarity and critic parameter tracking \\
\bottomrule
\end{tabular}
\vspace{-4mm}
\end{table}

\section{Concluding Discussion}
\label{sec:discussion}

Our non-asymptotic, closed-loop analysis of PPO-Clip jointly characterizes policy stationarity and learned-critic tracking in synchronous and asynchronous settings. Under an explicit sampling protocol, it captures actor--critic coupling, nonsmooth clipping, old-policy trajectories, and finite-sample effects from multi-epoch batch reuse.


\clearpage


\bibliography{reference}
\bibliographystyle{iclr2027_conference}

\clearpage
\appendix
\pdfbookmark[0]{Appendix contents and proof guide}{appendix-guide}
\section*{Appendix Contents and Proof Guide}

The appendices provide background, complete proofs, verified examples, and experimental details.
The contents and result references below are linked to their locations in the manuscript.

\begingroup
\small
\renewcommand{\arraystretch}{1.22}
\newcommand{\appendixentry}[1]{\ref{#1} & \hyperref[#1]{\nameref*{#1}} & \pageref{#1}\\}
\begin{tabular}{@{}p{.07\linewidth}p{.82\linewidth}r@{}}
\toprule
 & \textbf{Appendix} & \textbf{Page}\\
\midrule
\appendixentry{appdx:background}
\appendixentry{appdx:related-work}
\appendixentry{appdx:useful-lemmas}
\appendixentry{appdx:open-loop}
\appendixentry{appdx:closed-loop}
\appendixentry{appdx:rate-resources}
\appendixentry{appdx:classical-limit}
\appendixentry{appdx:stopping}
\appendixentry{appdx:finite-batch-witness}
\appendixentry{appdx:horizon-witness}
\appendixentry{appdx:asyn}
\appendixentry{appdx:exp}
\appendixentry{appdx:further_discussion}
\appendixentry{appdx:critic-extensions}
\bottomrule
\end{tabular}
\endgroup

\paragraph{Locations of the main proofs.}
The results stated in the main text are proved at the following locations. The asynchronous
theorem is stated and proved in Appendix~\ref{appdx:asyn}.

\begingroup
\small
\renewcommand{\arraystretch}{1.22}
\begin{tabular}{@{}p{.21\linewidth}p{.48\linewidth}p{.21\linewidth}@{}}
\toprule
\textbf{Result} & \textbf{Conclusion} & \textbf{Proof location}\\
\midrule
Theorem~\ref{thm:closed-loop-NK-explicit} & Synchronous stationarity and critic tracking & Appendix~\ref{appdx:closed-loop}, p.~\pageref{appdx:closed-loop}\\
Corollary~\ref{cor:structured-main} & Structured finite-batch bound & Appendix~\ref{appdx:rate-resources}, p.~\pageref{appdx:rate-resources}\\
Corollary~\ref{cor:rate-resources} & Update and fresh-rollout complexity & Appendix~\ref{appdx:rate-resources}, p.~\pageref{appdx:rate-resources}\\
Proposition~\ref{prop:classical-connection-single-index} & Classical two-time-scale limit & Appendix~\ref{appdx:classical-limit}, p.~\pageref{appdx:classical-limit}\\
Theorem~\ref{thm:async-closed-loop-explicit} & Asynchronous stationarity and tracking & Section~\ref{appdx:async-closed-loop-proof}, p.~\pageref{appdx:async-closed-loop-proof}\\
\bottomrule
\end{tabular}
\endgroup

\paragraph{Reading the dependency graphs.}
An arrow points from a proof ingredient to the result that uses it.
Solid arrows denote applications of results; dashed arrows denote reuse or adaptation of
their proof arguments. The graphs show the principal dependencies and omit routine
inequalities and some transitive edges. Each result retains its own stated assumptions.
Lemma~\ref{lem:gae-bias} and Corollary~\ref{lem:population-clip-bias} have the same numerical
identifier but are distinct results, so every node includes its result type.

\clearpage
\section*{Synchronous Result Dependencies}

\begingroup
\tikzset{
 result/.style={draw=black!65,rounded corners=1.5pt,fill=black!2,
   align=center,text width=3.8cm,minimum height=.72cm,inner sep=4pt,font=\small},
 keyresult/.style={result,fill=black!9},
 dep/.style={-{Stealth[length=1.8mm]},draw=black!75,thick},
 reuse/.style={dep,dashed}}

\paragraph{From GAE and clipping bounds to the coupled guarantee.}
The finite-batch bound supplies the actor bias and the critic's empirical-to-population
error. The two open-loop estimates are then combined to prove the synchronous result.

\begin{center}
\begin{tikzpicture}[x=1cm,y=1cm]
\node[result] (gae) at (0,0) {Lemma~\ref{lem:gae-bias}\\GAE bias};
\node[result] (clip) at (6,0) {Lemmas~\ref{lem:clip-localization-main}, \ref{lem:kl-bounds-Et-main}\\Clipping and KL control};
\node[result] (cancel) at (0,-1.25) {Corollary~\ref{cor:gae-score-cancellation}\\Score-baseline cancellation};
\node[result] (distort) at (6,-1.25) {Corollary~\ref{lem:population-clip-bias}\\Population clipping bias};
\node[result] (batch) at (3,-2.65) {Lemma~\ref{lem:finite-batch-bridge}\\Adaptive finite-batch bias};
\node[result] (actor) at (0,-4.05) {Theorem~\ref{thm:actor-open-loop-NK}\\Open-loop actor bound};
\node[result] (critic) at (6,-4.05) {Theorem~\ref{thm:critic-open-loop-NK}\\Open-loop critic bound};
\node[keyresult] (main) at (3,-5.45) {Theorem~\ref{thm:closed-loop-NK-explicit}\\Synchronous closed loop};
\draw[dep] (gae) -- (cancel);
\draw[dep] (clip) -- (distort);
\draw[dep] (cancel) -- (batch);
\draw[reuse] (distort) -- (batch);
\draw[dep] (batch) -- (actor);
\draw[dep] (batch) -- (critic);
\draw[dep] (actor) -- (main);
\draw[dep] (critic) -- (main);
\end{tikzpicture}
\end{center}
The clipping argument is reproduced within Lemma~\ref{lem:finite-batch-bridge}.
Theorem~\ref{thm:critic-open-loop-NK} proves a general biased tracking bound;
the arrow into it identifies the finite-batch specialization of that bias.

\paragraph{Structured statistics, rates, and the classical limit.}
The structured concentration lemma replaces complete-trajectory counting. Its residuals
enter the same synchronous proof, after which the rate and budget conclusions follow.

\begin{center}
\begin{tikzpicture}[x=1cm,y=1cm]
\node[result] (finite) at (0,0) {Lemma~\ref{lem:finite-batch-bridge}\\Population comparisons};
\node[result] (uniform) at (6,0) {Lemma~\ref{lem:structured-batch-bridge}\\Uniform tabular gradients};
\node[result] (replacement) at (6,-1.3) {Corollary~\ref{cor:structured-batch-replacement}\\Structured residuals};
\node[keyresult] (main) at (0,-1.3) {Theorem~\ref{thm:closed-loop-NK-explicit}\\Synchronous bound};
\node[keyresult] (structured) at (6,-2.6) {Corollary~\ref{cor:structured-main}\\Structured coupled bound};
\node[result] (other) at (0,-2.9) {Proposition~\ref{prop:classical-connection-single-index}: limit\\Proposition~\ref{prop:fixed-budget}: budget};
\node[result] (rate) at (3,-4.25) {Corollary~\ref{cor:rate-resources}\\Update and rollout rates};
\draw[reuse] (finite.east) -- (replacement.north west);
\draw[dep] (uniform) -- (replacement);
\draw[dep] (replacement) -- (structured);
\draw[reuse] (main.east) -- (structured.north west);
\draw[dep] (main) -- (other);
\draw[dep] (main.south east) -- (3,-2.05) -- (rate.north);
\draw[dep] (structured) -- (rate);
\end{tikzpicture}
\end{center}
The two incoming arrows to Corollary~\ref{cor:rate-resources} represent its finite-support
and structured alternatives. Proposition~\ref{prop:fixed-budget} also admits the structured
residual coefficients described in Appendix~\ref{appdx:rate-resources}.
\endgroup

\clearpage
\section*{Dependencies for Extensions and Examples}

\begingroup
\tikzset{
 result/.style={draw=black!65,rounded corners=1.5pt,fill=black!2,
   align=center,text width=3.8cm,minimum height=.72cm,inner sep=4pt,font=\small},
 keyresult/.style={result,fill=black!9},
 dep/.style={-{Stealth[length=1.8mm]},draw=black!75,thick},
 reuse/.style={dep,dashed}}

\paragraph{Asynchronous convergence.}
The asynchronous theorem combines finite-batch bias at stale parameters with a separately
proved delayed-critic bound. It reuses the actor smoothness argument, rather than applying
the synchronous convergence theorem directly.

\begin{center}
\begin{tikzpicture}[x=1cm,y=1cm]
\node[result] (actor) at (0,0) {Theorem~\ref{thm:actor-open-loop-NK}\\Actor smoothness argument};
\node[result] (batch) at (4.7,0) {Lemma~\ref{lem:finite-batch-bridge}\\Finite-batch bias};
\node[result] (drift) at (9.4,0) {Lemma~\ref{lem:ps-staleness-drift}\\Staleness drift};
\node[result] (critic) at (9.4,-1.5) {Lemma~\ref{lem:async-critic-tracking}\\Delayed critic tracking};
\node[keyresult] (async) at (4.7,-3.1) {Theorem~\ref{thm:async-closed-loop-explicit}\\Asynchronous closed loop};
\draw[dep] (drift) -- (critic);
\draw[dep] (batch) -- (async);
\draw[dep] (drift.south west) -- (7,-.85) -- (7,-2.1) -- (async.north east);
\draw[reuse] (actor) -- (async);
\draw[dep] (critic.south) -- (9.4,-3.1) -- (async.east);
\end{tikzpicture}
\end{center}
Theorem~\ref{thm:async-closed-loop-explicit} additionally requires its nonanticipative
schedule, mixed-parameter regularity, delay-dependent critic stepsize, and coupling condition.
The population clipping specialization is recorded separately:
\begin{center}
\begin{tikzpicture}[x=1cm,y=1cm]
\node[result] (clip) at (0,0) {Lemma~\ref{lem:kl-bounds-Et-main}\\Corollary~\ref{lem:population-clip-bias}};
\node[result] (asyncclip) at (6,0) {Lemma~\ref{lem:ps-async-kl-clip}\\Asynchronous clipping};
\draw[dep] (clip) -- (asyncclip);
\end{tikzpicture}
\end{center}
The final asynchronous proof uses this clipping control through the finite-batch argument;
Lemma~\ref{lem:ps-async-kl-clip} is not a separate prerequisite for that proof.

\paragraph{Predictable stopping and verified examples.}
The stopping result adapts the one-step inequalities to a predictable gate. The growing-horizon
example combines structured statistics with explicit bounds on every remaining problem constant.

\begin{center}
\begin{tikzpicture}[x=1cm,y=1cm]
\node[result] (proofs) at (0,0) {Lemma~\ref{lem:finite-batch-bridge}; Theorems\\\ref{thm:actor-open-loop-NK}, \ref{thm:critic-open-loop-NK}, \ref{thm:closed-loop-NK-explicit} (proofs)};
\node[result] (structured) at (6,0) {Lemma~\ref{lem:structured-batch-bridge}\\Corollary~\ref{cor:structured-batch-replacement}};
\node[result] (stop) at (0,-1.6) {Proposition~\ref{prop:predictable-stopping}\\Predictable stopping};
\node[result] (horizon) at (6,-1.6) {Proposition~\ref{prop:horizon-polynomial}\\Polynomial horizon bound};
\draw[reuse] (proofs) -- (stop);
\draw[dep] (structured) -- (horizon);
\draw[reuse] (proofs.east) -- node[below,sloped,font=\footnotesize] {closure} (horizon.north west);
\end{tikzpicture}
\end{center}
Appendix~\ref{appdx:finite-batch-witness} gives the two-step example, using the synchronous
closure with instance-specific constants and the schedules of Corollary~\ref{cor:rate-resources}.
Appendix~\ref{appdx:horizon-witness} extends that construction, but derives its horizon-dependent
resources explicitly. The loss-gap discussion in Section~\ref{subsec:loss-gap-tracking} is a
conditional route requiring additional drift and calibration estimates; it is not a proved
replacement theorem. Section~\ref{subsec:critic-approximation-error} instead derives the
approximation-error extension using the parameter-distance argument.
\endgroup

\clearpage

\section{Further Background}\label{appdx:background}

This appendix reviews the reinforcement-learning (RL) background for PPO-style fine-tuning of large language models
(LLMs), with emphasis on two features of the analyzed training loop:
\emph{(i) multi-epoch mini-batch reuse} (multiple gradient steps on a fixed rollout batch), and
\emph{(ii) a jointly trained critic} whose approximation error and tracking lag feed back into the actor update.
The \emph{closed-loop} actor--critic coupling arises because the actor update relies on critic-based advantage
estimates, while the
critic is trained to regress to policy-dependent targets that drift as the actor evolves.

\subsection{RL paradigms and where PPO-Clip sits}
\label{subsec:rl-paradigms}

At a high level, RL algorithms can be categorized along two orthogonal axes:
\emph{(i) on-policy vs.\ off-policy learning} (whether updates are computed using data from the current policy), and
\emph{(ii) value-based vs.\ policy-based optimization} (whether one learns a value function and derives a policy, or
directly optimizes a parameterized policy). PPO belongs to the family of \emph{on-policy policy-gradient methods}
augmented with a learned value baseline (actor--critic). However, modern PPO implementations used for LLM fine-tuning
depart from ``pure'' on-policy updates in an important way: they \emph{reuse} a fixed on-policy rollout batch for
multiple epochs. This reuse improves sample efficiency and amortizes the high cost of LLM rollouts, but it creates a
\emph{controlled distribution shift} within an outer iteration because the policy is updated while the data batch is
held fixed. This training structure consists of the following outer and inner loops:
\begin{itemize}[leftmargin=*]
    \item \textbf{Outer loop (on-policy data collection).} At iteration $n$, a fresh batch $\mathcal{D}_n$ is collected
    under $\pi_{\theta_{\rm old}}$.
    \item \textbf{Inner loop (multi-epoch reuse).} The batch $\mathcal{D}_n$ is reused for $K$ inner updates, during which
    the actor parameters drift away from $\theta_{\rm old}$.
\end{itemize}
PPO-Clip controls this drift by combining \emph{importance ratios} (to correct for mismatch),
\emph{clipping} (to prevent unstable corrections), and a \emph{KL trust region} (to keep the mismatch small on average).
Our analysis is tailored to this ``weakly off-policy'' regime: the updates are computed using the behavior policy
$\pi_{\theta_{\rm old}}$ but remain meaningful because the KL budget limits how far $\pi_\theta$ can move during reuse.

\subsection{RL for LLM fine-tuning and the token-level MDP view}
\label{subsec:rl-llm-app}

We model RL fine-tuning of an LLM as a finite-horizon episodic MDP at the token level.
At the beginning of each episode, a prompt $x\in\mathcal{X}$ is sampled from $\mathbb{P}$.
Conditioned on $x$, the LLM generates a length-$H$ sequence $(a_0,\ldots,a_{H-1})\in\mathcal{V}^H$.
The state encodes the prompt and prefix,
\[
s_t := (x,a_{0:t-1}),\qquad s_0 := (x,\emptyset),
\]
and actions correspond to next-token choices, $\mathcal{A}:=\mathcal{V}$.
Transitions are deterministic, $s_{t+1}=s_t\circ a_t$.
A trajectory is
\[
\tau := (s_0,a_0,s_1,a_1,\ldots,s_{H-1},a_{H-1},s_H).
\]
In RLHF-style settings, per-token rewards may be sparse, and a terminal reward $\bar r(\tau)$ (from a learned reward
model) often dominates. Given per-step rewards $\dot r(s_t,a_t)$ and an optional terminal reward $\bar r(\tau)$, the
return from time $t$ is
\begin{equation}
\label{eq:return-app}
G_t(\tau)
:=\sum_{k=t}^{H-1}\gamma^{k-t}\dot r(s_k,a_k)+\gamma^{H-t}\bar r(\tau),
\end{equation}
where $\gamma\in(0,1]$ (often $\gamma=1$ in finite-horizon alignment).
The corresponding value functions under $\pi_\theta$ are
\[
V^{\pi_\theta}(s):=\mathbb{E}[G_t(\tau)\mid s_t=s],\qquad
Q^{\pi_\theta}(s,a):=\mathbb{E}[G_t(\tau)\mid s_t=s,a_t=a],
\]
and the advantage is $A^{\pi_\theta}(s,a):=Q^{\pi_\theta}(s,a)-V^{\pi_\theta}(s)$.
Advantages provide the variance-reduced learning signal for policy gradients, but they are not available in closed
form for LLMs; they must be estimated from finite rollouts, motivating an actor--critic implementation.

\subsection{Actor--critic coupling as a closed-loop learning system}
\label{subsec:ac-coupling-app}

\paragraph{Critic learning as policy-conditioned regression.}
PPO trains a critic $V_w(s)$ to approximate the policy-dependent value $V^{\pi_\theta}(s)$.
At the population level, this can be written as a policy-conditioned regression problem
\begin{equation}
\label{eq:critic-loss-app}
\mathcal{L}_c(\theta,w)
:=\mathbb{E}_{\tau\sim P_\theta}\!\left[\frac1H\sum_{h=0}^{H-1}(V_w(s_h)-G_h(\tau))^2\right],
\end{equation}
where the state weighting is the uniform average over times and $G_h$ is the Monte Carlo target in \eqref{eq:return-app}. The analyzed update reuses stored Monte Carlo targets; bootstrapped critic targets are outside the stated theorem. Let $w^\star(\theta)\in\arg\min_w\mathcal{L}_c(\theta,w)$ denote a population minimizer. We measure
critic suboptimality by the tracking error
\begin{equation}
\label{eq:tracking-error-app}
\Delta(\theta,w):=\|w-w^\star(\theta)\|^2.
\end{equation}
This dependence on $\theta$ is the root of the coupling: as the actor changes, both the critic's sampling
distribution and its regression target drift.

\paragraph{Interleaved critic training.}
In the interleaved training pattern, the critic is updated \emph{inside the same
multi-epoch inner loop} as the actor. Each inner iteration $k$ draws a minibatch from the fixed rollout batch
$\mathcal{D}_n$ and performs:
\begin{enumerate}[leftmargin=*]
    \item compute both the Monte Carlo critic gradient and the clipped actor gradient at the pre-update parameter pair, recomputing raw GAE with that critic; and
    \item apply the critic descent and actor ascent steps, allowing their minibatches to coincide,
\end{enumerate}
with fixed update counts in the main theorem. Appendix~\ref{appdx:stopping} handles a predictable stopping gate with a population-KL certificate.
This joint inner-loop design captures the \emph{interleaving} between critic fitting and actor drift found in
large-scale PPO codebases: the critic tracks a moving target while shaping the actor's update direction through
advantages.

\paragraph{Advantage estimation and the bias--variance trade-off.}
Given a critic $V_w$, PPO forms temporal-difference residuals
\[
\hat\delta_t := \dot r(s_t,a_t)+\gamma V_w(s_{t+1})-V_w(s_t),
\]
and computes generalized advantage estimation (GAE)
\begin{equation}
\label{eq:gae-app}
\widehat A_t^{(\lambda)}:=\sum_{\ell=0}^{H-1-t}(\gamma\lambda)^\ell \hat\delta_{t+\ell},
\end{equation}
where $\lambda\in[0,1]$ trades variance against bias. We use the matched terminal value $V_w(s_H)=\bar r(\tau)$, recompute the raw estimates before each update, and detach them when differentiating the actor surrogate.
Bootstrapping changes the bias--variance trade-off. The bounds quantify the critic-induced population bias
through $\widetilde C_{\lambda,H}$ and $\Delta(\theta,w)$; they do not establish a monotone variance law.

\subsection{From policy gradients to PPO-Clip and KL trust regions}
\label{subsec:ppo-clip-app}

\paragraph{Policy gradients and importance ratios.}
Let $J(\theta):=H^{-1}\mathbb{E}_{\tau\sim\pi_\theta}[G_0(\tau)]$ be the normalized RL objective used throughout the analysis.
The policy-gradient theorem yields
\[
\nabla_\theta J(\theta)
=\mathbb{E}_{\tau\sim P_\theta}\!\left[\frac1H\sum_{h=0}^{H-1}\gamma^h\nabla_\theta\log\pi_\theta(a_h\mid s_h)\,A^{\pi_\theta}(s_h,a_h)\right].
\]
In multi-epoch reuse, updates are computed using data sampled from $\pi_{\theta_{\rm old}}$ with
importance ratios
\[
r_{\theta,\theta_{\rm old}}(s,a):=\frac{\pi_\theta(a\mid s)}{\pi_{\theta_{\rm old}}(a\mid s)}.
\]
Large ratios can yield high-variance updates; the clipped surrogate limits incentives for some large ratio changes.

\paragraph{Clipping as stability control (and as a source of bias).}
The PPO-Clip surrogate suppresses the sample gradient when the ratio exceeds $1+\delta$ for a positive advantage or falls below $1-\delta$ for a negative advantage. The opposite-sign branches remain active.
Clipping is intended to limit excessive surrogate updates but can introduce distortion relative to the
unclipped importance-weighted direction. A single action ratio does not correct the complete trajectory law; Lemma~\ref{lem:finite-batch-bridge} bounds that remaining discrepancy. In our analysis, the impact of clipping is
quantified through explicit ``clip-activation'' terms and their dependence on the KL budget and clipping threshold.

\paragraph{KL trust region and early stopping.}
Modern implementations often combine clipping with an empirical KL constraint (early stopping) to limit distribution
shift during batch reuse. If the empirical KL on a minibatch exceeds $\kappa$, the inner loop
breaks. Such an empirical statistic alone does not imply a population budget. The main theorem assumes that budget, and Appendix~\ref{appdx:stopping} certifies it for a specified predictable stopping rule. Analytically, the population budget plays two roles:
\begin{itemize}[leftmargin=*]
    \item it bounds the policy drift induced by multi-epoch reuse, keeping the surrogate meaningful; and
    \item it controls the prevalence of clipping by keeping likelihood ratios near one for most samples.
\end{itemize}
These mechanisms are central to obtaining interpretable finite-time bounds that reflect practical PPO behavior in
LLM training.

\subsection{The role of actor--critic dependency}
\label{subsec:why-coupled-matters}

A defining feature of PPO-Clip in practical LLM fine-tuning is that the actor and critic are \emph{trained jointly} and
\emph{interleaved}: each inner iteration performs (at least) one critic regression step and
one actor policy update on the same reused rollout batch.
This produces an intrinsically \emph{coupled} learning system rather than two independent optimizers.
The following observations explain how this dependency affects the update directions, critic targets, and
interaction with clipping.

\paragraph{(i) The actor update is computed through the critic.}
In PPO, the update direction is not the true policy gradient $\nabla J(\theta)$ but an estimate that depends on
critic-based advantages, e.g., $\widehat A^{(\lambda)}$ in \eqref{eq:gae-app}.
Consequently, the actor update admits a natural decomposition
\[
g(\theta,w)
~=~
\nabla J(\theta)
~+~
b(\theta,w)
~+~
\zeta,
\]
where $b(\theta,w)$ is the conditional-mean discrepancy from the true policy gradient and
$\zeta$ is centered sampling noise. Exact values remove the population critic-induced component;
empirical-batch, trajectory-policy, and clipping discrepancies can remain. Treating the critic as exact
also removes its dynamic tracking error from the analysis.

\paragraph{(ii) The critic target drifts with the actor (tracking, not static regression).}
Critic learning is not a stationary supervised-learning problem. The regression target is \emph{policy-dependent}:
both the visitation distribution and the conditional return law in \eqref{eq:critic-loss-app} change as $\theta$ changes.
Thus the critic must \emph{track} a moving minimizer $w^\star(\theta)$ rather than converge to a fixed optimum.
Neglecting this drift---for example, by treating $w^\star$ as fixed or assuming the critic ``converges'' between actor
updates---misses an essential mechanism in PPO: actor updates induce parameter drift, which creates critic tracking
error, which can feed back into subsequent actor updates.
This two-way dependency is particularly acute under multi-epoch reuse: within a single outer iteration, $\theta$ moves
while the batch and empirical regression targets remain fixed. The current-policy population minimizer moves, creating a mismatch between the empirical gradient and the gradient of the moving population loss.

\paragraph{(iii) Interleaving matters: critic and actor have comparable update frequencies.}
Many theoretical treatments simplify actor--critic by assuming a strict two-time-scale separation (e.g., the critic is
updated ``to convergence'' for each actor step, or the actor is effectively fixed during critic fitting).
While analytically convenient, such a separation is typically \emph{not} representative of modern PPO practice, where
both actor and critic are updated a small number of times per outer iteration and often with comparable compute
budgets. Interleaved inner updates capture this training pattern; equal update frequencies do not imply equal stepsizes.
If one assumes the critic is fully converged at each step, the analysis may predict negligible critic-induced bias and
may incorrectly attribute observed instability primarily to clipping or stochasticity, whereas in practice critic lag
and advantage mis-estimation can dominate early training and can remain non-negligible throughout.

\paragraph{(iv) PPO-specific mechanisms interact with critic error.}
PPO-Clip introduces additional nonlinearities---clipping and KL-based early stopping---whose effects depend on the
advantage estimates.
For instance, the sign and magnitude of $\widehat A(s,a)$ determine which branch of the clipped objective is active
and how much gradient distortion occurs when clipping activates.
If the critic underestimates or overestimates values, the resulting advantage errors can change:
(i) the empirical clip-activation frequency,
(ii) the effective step size implied by the surrogate, and
(iii) the likelihood of triggering the KL early-stopping condition.
Therefore, simplifying away critic error can also lead to an unfaithful characterization of clipping-induced bias and
trust-region behavior. In other words, clipping/KL control do not operate on an abstract ``true'' gradient signal;
they operate on a critic-dependent, data-dependent surrogate whose geometry is shaped by $w$.

\paragraph{(v) Scope of an open-loop surrogate.}
A common modeling shortcut is to analyze PPO as (stochastic) gradient ascent on a fixed surrogate objective
$\mathcal{L}^{\mathrm{clip}}(\theta;\theta_{\rm old})$ with bounded noise.
This viewpoint is useful for isolating clipping effects, but it becomes incomplete when used as a proxy for PPO
dynamics in practice because:
\begin{itemize}[leftmargin=*]
    \item critic error can bias the population actor direction, although its contribution cancels when $\lambda=1$ or $H=1$ for a fixed critic under on-policy sampling and matched terminal values; empirical-batch, policy-drift, and clipping errors can remain;
    \item the critic update rule influences the distribution of advantages, and hence the surrogate geometry itself;
    \item the critic error evolves endogenously with actor drift (a feedback loop), rather than acting as exogenous noise.
\end{itemize}
As a result, theoretical predictions derived under open-loop assumptions (e.g., monotone improvement, stable steps
under a fixed KL budget, or negligible bias floors) may fail to match observed behavior in LLM PPO runs, where the
critic is learned online, targets are bootstrapped, and updates are interleaved with multi-epoch reuse.

\paragraph{Takeaway.}
The coupled updates describe feedback pathways through which critic tracking and actor drift affect PPO training.
Our analysis treats PPO as a \emph{closed-loop learning system} in which these errors co-evolve, with clipping and
KL control modulating this coupling.

\subsection{Objective of the analysis}
\label{subsec:goal-app}

The analysis establishes \emph{non-asymptotic} convergence guarantees for PPO-Clip with a learned critic under a
restricted training regime that retains interleaved fitting, clipping, and multi-epoch reuse. The actor update uses
critic-dependent advantages, and the critic follows a policy-dependent regression objective whose minimizer moves
with the actor. The bounds quantify this \emph{closed-loop coupling} through the contribution of critic error to
actor stationarity and the effect of actor drift on critic tracking, alongside error terms from clipping and
multi-epoch reuse.

\section{Extended Related Work}
\label{appdx:related-work}

Our work is most closely related to research on (i) policy-gradient and actor--critic theory,
(ii) trust-region and proximal policy optimization,
(iii) theoretical analyses of PPO and clipping,
(iv) two-timescale stochastic approximation, and
(v) RLHF and large language model (LLM) alignment.

\paragraph{Policy gradients and actor--critic methods.}
Classical policy-gradient theory establishes convergence of stochastic gradient ascent on the expected return
under suitable regularity and sampling assumptions~\citep{sutton1999policy,williams1992simple}. Actor--critic methods extend this framework by introducing
a learned value function to reduce variance, which can introduce bias when approximate~\citep{schulman2016gae,schulman2017proximalpolicyoptimizationalgorithms,wen2021characterizing,kumar2023sample}.
A large body of work studies asymptotic and non-asymptotic convergence of actor--critic algorithms using tools
from stochastic approximation and non-convex optimization, often under i.i.d.\ trajectory abstractions or
mixing assumptions for Markovian sampling~\citep{liu2025non,zhong2023theoretical,holzleitner2021convergence}.
In particular, \citet{wu2020finite} analyze simultaneous actor and linear TD(0) updates for an
average-reward objective under Markovian sampling. Their Theorems~4.5 and~4.7 already couple
actor stationarity to critic tracking; Corollary~4.9 retains a function-approximation residual.
These results establish finite-time actor--critic coupling under a different protocol.
Our analysis also accounts for the clipped surrogate, recomputed raw GAE, and adaptive reuse of stored rollout batches.

\paragraph{Trust-region and proximal policy optimization.}
Trust-region methods such as TRPO were introduced to stabilize policy optimization by explicitly constraining
policy updates using KL divergence, yielding monotonic improvement guarantees under idealized assumptions~\citep{schulman2015trust,achiam2017constrained,garg2021proximal}.
PPO was proposed as a first-order alternative that approximates trust-region behavior using surrogate objectives,
most notably probability-ratio clipping and, in some variants, explicit KL penalties~\citep{schulman2017proximalpolicyoptimizationalgorithms,wang2020truly,guo2021cim,wu2021coordinated}. While PPO is now the
dominant algorithm in large-scale RL and RLHF, its theoretical understanding remains less complete than that of
earlier trust-region methods. Our analysis treats PPO as optimizing a clipped surrogate while targeting a smooth
return objective under a population KL constraint, and makes explicit how KL budgets simultaneously control within-epoch distribution
shift and clipping activation.

\paragraph{Theory of PPO and clipping.}
Several works study PPO and PPO-Clip under different policy, critic, and sampling models~\citep{huang2024ppo,liu2023neuralproximaltrustregionpolicy,huang2021neural,jin2023stationary,liu2025non,chen2023finite}.
In their ICLR 2024 paper, \citet{jin2023stationary} analyze repeated clipped-surrogate gradient
updates and retain an advantage/sampling discrepancy $\phi_n$ (their Eq.~(3)) in the
stationarity conclusions of Theorems~3.1--3.3. They do not derive this discrepancy from
an interleaved critic-fitting recursion. Our distinction is to bound the actual conditional
actor and critic discrepancies under stored-batch reuse and then eliminate the tracking term
through the coupled inequalities.
\citet{huang2024ppo} learn neural action values by TD, construct a target policy by entropic
mirror descent, and fit the neural actor to that target. Their Algorithm~1, Theorem~2, and Corollary~1
control policy evaluation and improvement errors through network widths and inner iterations.
This establishes global optimality for a different update and sampling model, including
uniformly sampled actions for policy improvement; it is not the joint raw-GAE/Monte Carlo
gradient process analyzed here.

The function-space approach of \citet{zou2025sampling} controls kernel-TD evaluation using
RKHS complexity and combines it with exponentiated natural-gradient policy improvement
(their Algorithm~1 and Theorem~12). Its statistical complexity bounds concern a different
policy-evaluation and improvement procedure. Our structured specialization instead controls
the actual clipped-gradient trajectory class and tabular critic-gradient class uniformly
over adaptively selected iterates (Corollary~\ref{cor:structured-main}).
Proposition~\ref{prop:horizon-polynomial} verifies a growing-horizon family in which these
bounds and the remaining regularity constants are polynomial despite exponentially many
complete trajectories. The different update and sampling models prevent a direct comparison of rates.

\paragraph{Two-timescale stochastic approximation and tracking.}
The coupled actor--critic recursion fits naturally within the theory of two-timescale stochastic approximation,
where one component tracks a moving target induced by another~\citep{holzleitner2021convergence,wu2020finite,zhang2020provably,khodadadian2022finite}.
Classical results establish convergence under timescale separation~\citep{holzleitner2021convergence,zhang2020provably}, while more recent work provides finite-time
tracking bounds that decompose error into transient, noise-floor, and drift-induced terms~\citep{wu2020finite,khodadadian2022finite}.
Our critic analysis builds on this perspective by treating the critic as a stochastic optimizer tracking the
policy-dependent minimizer of a regression objective, yielding an explicit finite-time bound in which actor
stepsizes directly control critic drift. This characterization is central to closing the loop in our PPO analysis.

\paragraph{RLHF and LLM alignment.}
Reinforcement learning from human feedback has become a standard paradigm for aligning large language models,
typically combining learned reward models, KL regularization against a reference policy, and PPO-style
optimization with minibatch reuse and multiple epochs~\citep{ouyang2022traininglanguagemodelsfollow,schulman2016gae,stiennon2020learning,hilton2022batch,yu2022surprising,sun2022you}.
While extensive empirical work studies stability, reward hacking, and hyperparameter heuristics in RLHF,
comparatively little theory addresses the finite-time behavior of PPO in this regime~\citep{dossa2021empirical,miao2025energy,yu2025dapoopensourcellmreinforcement,fu2025reward,chen2024odin}.
RLHF motivates the present model, but the proved objective is normalized expected return with a fixed reward model and a population KL constraint. The explicit coverage, raw-GAE, Monte Carlo-target, and critic regularity premises delimit its applicability.

\paragraph{Relation to our analysis.}
Building on these lines of work, we derive and close actor and critic bias bounds for the
specified interleaved, reused-batch PPO process. The structured specialization replaces trajectory-count
dependence by bounds on the relevant function classes while retaining the coverage,
realizability, critic geometry, and population-KL premises. The resulting guarantees concern
stationarity and critic tracking; they do not assert global optimality or cover arbitrary
neural PPO or LLM fine-tuning implementations.

\section{Useful Lemmas}
\label{appdx:useful-lemmas}

The lemmas in this appendix quantify how critic error, clipping, and batch reuse affect
the actor update. They provide two components of the convergence proof.

\begin{itemize}[leftmargin=*]
    \item \textbf{Critic-induced bias control.}
    These results bound the effect of value-estimation error on the policy update through
    GAE, with explicit dependence on critic tracking error and the bootstrapping parameter.

    \item \textbf{Clipping- and trust-region--induced distortion control.}
    These results localize the non-smoothness of the PPO-Clip surrogate to clipping events
    and bound the resulting deviation from the policy gradient under KL control.
\end{itemize}

The bounds depend explicitly on critic tracking error, the GAE parameter $\lambda$,
the clipping threshold $\delta$, and the KL budget $\kappa$. They apply at intermediate
iterates under batch reuse, without requiring either component to have converged.

\subsection{Bounds for critic-based advantage estimation}

We begin with lemmas controlling the accuracy of critic-based advantage estimates.
In PPO, the actor update is driven by $\widehat A^{(\lambda)}$, not the true advantage $A^{\pi_\theta}$.
Discrepancies between $V_w$ and $V^{\pi_\theta}$ can induce actor bias through GAE, although some value errors cancel in the population policy gradient.

The following lemma bounds the conditional mean error of GAE in terms of value-estimation error, with a coefficient depending on $\lambda$, the horizon, and the terminal convention. This is a bias bound; the variance of GAE is outside its scope.

\begin{lemma}[Finite-horizon GAE bias bound (critic-induced bias)]
\label{lem:gae-bias}
Fix a policy $\pi_\theta$, a finite horizon $H\ge1$, and $\gamma,\lambda\in[0,1]$.
The state includes the timestep, and rewards are integrable. Let $V_\theta$ denote the true value function under $\pi_\theta$
for the reward and terminal-value convention used by the algorithm, and let $V_w$ be an arbitrary fixed critic.
The value function satisfies the finite-horizon Bellman equation with that terminal convention.
Define $e(s):=V_w(s)-V_\theta(s)$ and assume $\|e\|_\infty<\infty$, including terminal states.
The policy and critic are fixed before sampling the on-policy trajectory; the critic is held fixed within it.
In particular, $e(s_H)$ is not assumed to vanish.

For a trajectory generated under $\pi_\theta$, define the TD residual based on $V_w$:
\[
\hat\delta_t^{(w)} := r_t + \gamma V_w(s_{t+1}) - V_w(s_t),
\]
and the finite-horizon GAE estimator
\[
\widehat A_t^{(\lambda,w)}
~:=~
\sum_{\ell=0}^{H-1-t}(\gamma\lambda)^\ell \hat\delta_{t+\ell}^{(w)}.
\]
Let
\[
\bar A_t^{(\lambda,w)}(s,a)
~:=~
\mathbb{E}\!\left[\widehat A_t^{(\lambda,w)} ~\middle|~ s_t=s,\,a_t=a\right]
\]
denote its conditional expectation. Then for all supported $(s,a)$,
\begin{equation}
\label{eq:gae-bias-bound-lemma}
\big|\bar A_t^{(\lambda,w)}(s,a)-A_\theta(s,a)\big|
~\le~
C_{\lambda,H,t}^{\mathrm{term}}\,\|e\|_\infty
~\le~
C_{\lambda,H}^{\mathrm{term}}\,\|e\|_\infty,
\end{equation}
where one may take
\[
C_{\lambda,H,t}^{\mathrm{term}}
:= 1 + (1-\lambda)\sum_{\ell=1}^{H-1-t}(\gamma\lambda)^{\ell-1}\gamma
+\gamma(\gamma\lambda)^{H-1-t},
\qquad
C_{\lambda,H}^{\mathrm{term}}:=\sup_{t\in\{0,\dots,H-1\}} C_{\lambda,H,t}^{\mathrm{term}}.
\]
In particular, the uniform bound
\begin{equation}
\label{eq:gae-constant}
C_{\lambda,H}^{\mathrm{term}}=1+\gamma
\end{equation}
holds for all $\gamma,\lambda\in[0,1]$, with empty sums equal to zero and zeroth powers equal to one.
If the critic and true value agree at the prescribed terminal boundary,
$V_w(s_H)=V_\theta(s_H)$, the terminal term can be omitted. In this case
$|\bar A_t^{(\lambda,w)}(s,a)-A_\theta(s,a)|\le C_{\lambda,H}\|e\|_\infty$, where
\[
C_{\lambda,H}:=1+\gamma(1-\lambda)\sum_{j=0}^{H-2}(\gamma\lambda)^j.
\]
This matching condition permits a zero terminal value or a known terminal payoff.
\end{lemma}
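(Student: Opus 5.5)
The plan is to split the critic-based TD residual into the true TD residual plus a critic-error correction, and then telescope the correction. Since the policy $\pi_\theta$ and the critic $V_w$ are fixed before the trajectory is sampled, we can write $\hat\delta^{(w)}_{t}=\delta^\theta_{t}+\gamma e(s_{t+1})-e(s_t)$, where $\delta^\theta_t:=r_t+\gamma V_\theta(s_{t+1})-V_\theta(s_t)$. This gives $\widehat A_t^{(\lambda,w)}=\widehat A_t^{(\lambda,\theta)}+\mathcal E_t$. Here $\widehat A_t^{(\lambda,\theta)}$ is GAE built from the true values, and $\mathcal E_t:=\sum_{\ell=0}^{L}(\gamma\lambda)^\ell\bigl(\gamma e(s_{t+\ell+1})-e(s_{t+\ell})\bigr)$ with $L:=H-1-t$.

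\textbf{Step 1 (true-value GAE is unbiased).} We show $\mathbb E[\widehat A_t^{(\lambda,\theta)}\mid s_t=s,a_t=a]=A_\theta(s,a)$.
\begin{itemize}
\item For $\ell=0$: since the state carries the timestep, $\mathbb E[r_t+\gamma V_\theta(s_{t+1})\mid s_t=s,a_t=a]=Q_\theta(s,a)$ by the finite-horizon Bellman equation with the algorithm's terminal convention. Hence $\mathbb E[\delta^\theta_t\mid s,a]=A_\theta(s,a)$.
\item For $\ell\ge1$: by the Markov property and the tower rule, condition on $s_{t+\ell}$. Because $a_{t+\ell}\sim\pi_\theta(\cdot\mid s_{t+\ell})$ on-policy, the Bellman equation gives $\mathbb E[\delta^\theta_{t+\ell}\mid s_{t+\ell}]=0$.
\end{itemize}
Integrability of rewards and boundedness of $e$ (hence of $V_w$ wherever $V_\theta$ is integrable) justify all of these conditional expectations.

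\textbf{Step 2 (telescoping the error).} We regroup $\mathcal E_t$ by state.
\begin{itemize}
\item The state $s_t$ appears only with coefficient $-1$, and it equals $s$ under the conditioning.
\item For $1\le\ell\le L$, $e(s_{t+\ell})$ receives $\gamma(\gamma\lambda)^{\ell-1}$ from index $\ell-1$ and $-(\gamma\lambda)^\ell$ from index $\ell$. Its total coefficient is $\gamma(1-\lambda)(\gamma\lambda)^{\ell-1}$.
\item The terminal state $s_{t+L+1}=s_H$ receives $\gamma(\gamma\lambda)^{L}$.
\end{itemize}
Taking conditional expectations, combining with Step 1, and applying the triangle inequality with $|e|\le\|e\|_\infty$ (including at terminal states) gives exactly the bound by $C^{\mathrm{term}}_{\lambda,H,t}\|e\|_\infty$. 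Taking the supremum over $t$ gives the second inequality in \eqref{eq:gae-bias-bound-lemma}.

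\textbf{Step 3 (constants).} For the uniform bound \eqref{eq:gae-constant}, use $\gamma\le1$ to bound $(\gamma\lambda)^{j}\le\lambda^{j}$ inside the bracket. This gives
\[
\gamma\Bigl[(1-\lambda)\textstyle\sum_{\ell=1}^{L}\lambda^{\ell-1}+\lambda^{L}\Bigr]=\gamma\bigl[(1-\lambda^L)+\lambda^L\bigr]=\gamma,
\]
so $C^{\mathrm{term}}_{\lambda,H}\le1+\gamma$. The edge cases (empty sum when $L=0$, and $\lambda=0$ with $0^0=1$) should be checked directly.

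Under the matched terminal condition $e(s_H)=0$, the terminal term vanishes. The remaining coefficient $1+\gamma(1-\lambda)\sum_{j=0}^{L-1}(\gamma\lambda)^j$ is nondecreasing as $t$ decreases, so its supremum is attained at $t=0$, where $L-1=H-2$. This yields $C_{\lambda,H}$.

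The only delicate point is Step 1. It requires the critic to be genuinely fixed and independent of the trajectory, so that $e(s_{t+\ell})$ is a deterministic function of the state and the on-policy Bellman identity applies at every later step. I expect this conditioning argument, rather than the algebra, to be the main obstacle.
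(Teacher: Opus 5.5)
Your proposal is correct and follows the paper's proof essentially step for step. It uses the same perturbation $\hat\delta^{(w)}_t=\delta^{\theta}_t+\gamma e(s_{t+1})-e(s_t)$, the same unbiasedness of true-value GAE via the Bellman identity and the tower rule, the same telescoped coefficients, and the same comparison $(\gamma\lambda)^j\le\lambda^j$ for the uniform constant.

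Two small points. Your argument gives only $C^{\mathrm{term}}_{\lambda,H}\le1+\gamma$; the stated equality also needs the observation that $t=H-1$ gives $C^{\mathrm{term}}_{\lambda,H,H-1}=1+\gamma$ exactly. Also, Step 1 involves only $V_\theta$, so it does not depend on the critic being fixed, contrary to your closing remark.
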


\begin{proof}
Fix $t$ and condition on $(s_t,a_t)=(s,a)$. We compare the conditional mean of GAE computed with $V_w$
with the true advantage $A_\theta(s,a)$.

\paragraph{Step 1: Express the $V_w$-TD residual as a perturbation of the $V_\theta$-TD residual.}
Define the TD residual based on the true value $V_\theta$:
\[
\delta_t^{(\theta)} := r_t + \gamma V_\theta(s_{t+1}) - V_\theta(s_t).
\]
Using $V_w = V_\theta + e$, we can write
\begin{align}
\hat\delta_t^{(w)}
&= r_t + \gamma (V_\theta(s_{t+1})+e(s_{t+1})) - (V_\theta(s_t)+e(s_t)) \nonumber\\
&= \delta_t^{(\theta)} + \gamma e(s_{t+1}) - e(s_t).
\label{eq:delta-perturb}
\end{align}

\paragraph{Step 2: Decompose $\widehat A_t^{(\lambda,w)}$ into the true GAE term plus an error term.}
By definition,
\begin{align}
\widehat A_t^{(\lambda,w)}
&= \sum_{\ell=0}^{H-1-t}(\gamma\lambda)^\ell \hat\delta_{t+\ell}^{(w)} \nonumber\\
&= \sum_{\ell=0}^{H-1-t}(\gamma\lambda)^\ell \delta_{t+\ell}^{(\theta)}
\;+\;
\sum_{\ell=0}^{H-1-t}(\gamma\lambda)^\ell \big(\gamma e(s_{t+\ell+1})-e(s_{t+\ell})\big).
\label{eq:gae-split}
\end{align}
The first sum is precisely the finite-horizon GAE computed with the true value function:
\[
\widehat A_t^{(\lambda,\theta)}
:=\sum_{\ell=0}^{H-1-t}(\gamma\lambda)^\ell \delta_{t+\ell}^{(\theta)}.
\]

\paragraph{Step 3: The true-value GAE has the correct conditional mean.}
For finite-horizon on-policy GAE/TD($\lambda$),
\begin{equation}
\label{eq:true-gae-unbiased}
\mathbb{E}\!\left[\widehat A_t^{(\lambda,\theta)} ~\middle|~ s_t=s,\,a_t=a\right]
~=~
A_\theta(s,a).
\end{equation}
To see this, the Bellman identity gives
$\mathbb E[\delta_t^{(\theta)}\mid s_t=s,a_t=a]=A_\theta(s,a)$.
For $j>t$, let $\mathcal H_j$ contain the trajectory history through $s_j$, before drawing $a_j$.
On-policy sampling and the Bellman identity imply
$\mathbb E[\delta_j^{(\theta)}\mid\mathcal H_j]=0$.
Since $(s_t,a_t)$ is measurable with respect to $\mathcal H_j$, the tower property gives
$\mathbb E[\delta_j^{(\theta)}\mid s_t,a_t]=0$.
Summing these conditional means with the finite GAE weights proves \eqref{eq:true-gae-unbiased}.

Thus, taking conditional expectation in \eqref{eq:gae-split} and using \eqref{eq:true-gae-unbiased},
\begin{equation}
\label{eq:mean-diff}
\bar A_t^{(\lambda,w)}(s,a) - A_\theta(s,a)
~=~
\mathbb{E}\!\left[
\sum_{\ell=0}^{H-1-t}(\gamma\lambda)^\ell \big(\gamma e(s_{t+\ell+1})-e(s_{t+\ell})\big)
~\middle|~ s_t=s,\,a_t=a
\right].
\end{equation}

\paragraph{Step 4: Telescope the error terms with $\lambda$.}
Let $L:=H-1-t$ for brevity.
Rewrite the right-hand side of \eqref{eq:mean-diff} as
\begin{align}
S
&:= \sum_{\ell=0}^{L}(\gamma\lambda)^\ell \big(\gamma e(s_{t+\ell+1})-e(s_{t+\ell})\big) \nonumber\\
&= -e(s_t) + \sum_{\ell=1}^{L}\gamma^\ell\lambda^{\ell-1}(1-\lambda)\,e(s_{t+\ell})
\;+\;
(\gamma\lambda)^{L}\gamma\,e(s_{t+L+1}).
\label{eq:telescoped}
\end{align}
To verify \eqref{eq:telescoped}, collect coefficients of each $e(s_{t+j})$:
\begin{itemize}
\item For $j=0$, only the $-\ell=0$ term contributes: coefficient $-1$.
\item For $1\le j \le L$, $e(s_{t+j})$ appears twice: from $+\gamma(\gamma\lambda)^{j-1}e(s_{t+j})$
and from $-(\gamma\lambda)^j e(s_{t+j})$. The net coefficient is
\[
\gamma(\gamma\lambda)^{j-1}-(\gamma\lambda)^j
=\gamma^j\lambda^{j-1}(1-\lambda).
\]
\item The last state $e(s_{t+L+1})$ appears only from the final $+\gamma(\gamma\lambda)^L e(s_{t+L+1})$ term.
\end{itemize}

Now take conditional expectation of \eqref{eq:telescoped} given $(s_t,a_t)=(s,a)$ and apply the triangle inequality:
\begin{align}
\big|\bar A_t^{(\lambda,w)}(s,a)-A_\theta(s,a)\big|
&= \big|\mathbb{E}[S\mid s_t=s,a_t=a]\big| \nonumber\\
&\le
\mathbb{E}\big[|e(s_t)|\mid s_t=s,a_t=a\big]
\nonumber\\
&\quad
+ (1-\lambda)\sum_{\ell=1}^{L} \gamma^\ell\lambda^{\ell-1}
\mathbb{E}\big[|e(s_{t+\ell})|\mid s_t=s,a_t=a\big]
\nonumber\\
&\quad
+ (\gamma\lambda)^{L}\gamma\,\mathbb{E}\big[|e(s_{t+L+1})|\mid s_t=s,a_t=a\big].
\label{eq:triangle}
\end{align}

\paragraph{Step 5: Sup-norm bound and the constant $C_{\lambda,H,t}^{\mathrm{term}}$.}
Since $|e(s)|\le \|e\|_\infty$ for all states $s$, \eqref{eq:triangle} yields
\[
\big|\bar A_t^{(\lambda,w)}(s,a)-A_\theta(s,a)\big|
\le
\Big(1 + (1-\lambda)\sum_{\ell=1}^{L}\gamma^\ell\lambda^{\ell-1} + (\gamma\lambda)^{L}\gamma\Big)\|e\|_\infty.
\]
This is exactly \eqref{eq:gae-bias-bound-lemma} with
\[
C_{\lambda,H,t}^{\mathrm{term}}=1 + (1-\lambda)\sum_{\ell=1}^{H-1-t}(\gamma\lambda)^{\ell-1}\gamma + \gamma(\gamma\lambda)^{H-1-t}.
\]
Finally, write $n=H-t$ and $q=\gamma\lambda$. Since $q\le\lambda$,
\[
\begin{aligned}
(1-\lambda)\sum_{j=0}^{n-2}q^j+q^{n-1}
&\le (1-\lambda)\sum_{j=0}^{n-2}\lambda^j+\lambda^{n-1}\\
&=1.
\end{aligned}
\]
Thus $C_{\lambda,H,t}^{\mathrm{term}}\le1+\gamma$, with equality at $t=H-1$.
This proves \eqref{eq:gae-constant}. If $e(s_H)=0$, deleting the terminal term in
\eqref{eq:triangle} gives the stated sharper coefficient.
The argument is on-policy and does not control old-policy sampling, a critic fitted
adaptively to the same trajectory, or further normalization or clipping of GAE.
\end{proof}

\begin{corollary}[Baseline cancellation in the population actor bias]
\label{cor:gae-score-cancellation}
Under Lemma~\ref{lem:gae-bias}, suppose the terminal values match and the policy
has a differentiable finite-action distribution with bounded score
$z_h=\nabla_\theta\log\pi_\theta(a_h\mid s_h)$, $\|z_h\|\le G_\pi$.
For the normalized objective $J=\mathbb E G_0/H$, define
\[
 \widetilde C_{\lambda,H}:=C_{\lambda,H}-1
 =\gamma(1-\lambda)\sum_{j=0}^{H-2}(\gamma\lambda)^j\le\gamma.
\]
Then
\begin{equation}\label{eq:gae-score-cancellation}
 \left\|\frac1H\mathbb E_{P_\theta}\sum_{h=0}^{H-1}
       \gamma^h z_h\widehat A_h^{(\lambda,w)}-\nabla J(\theta)\right\|
 \le G_\pi\widetilde C_{\lambda,H}\|V_w-V_\theta\|_\infty.
\end{equation}
In particular, this population critic-induced bias is zero if $\lambda=1$ or
$H=1$. This conclusion concerns the on-policy expectation with a fixed critic;
it does not remove the empirical, clipping, or old-policy discrepancies.
\end{corollary}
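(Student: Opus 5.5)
We prove Corollary~\ref{cor:gae-score-cancellation} by combining the telescoped error representation from Step~4 of the proof of Lemma~\ref{lem:gae-bias} with the score identity. The score identity says that any function of the pre-action history has zero mean against the score. Write $e=V_w-V_\theta$; by matched terminal values, $e(s_H)=0$. Using \eqref{eq:gae-split} and \eqref{eq:true-gae-unbiased}, together with the policy-gradient theorem for the normalized objective, the quantity inside the norm in \eqref{eq:gae-score-cancellation} equals
\[
\frac1H\,\mathbb E_{P_\theta}\sum_{h=0}^{H-1}\gamma^h z_h\,S_h,
\qquad
S_h=-e(s_h)+\sum_{\ell=1}^{H-1-h}\gamma^\ell\lambda^{\ell-1}(1-\lambda)\,e(s_{h+\ell}),
\]
where the terminal term has been dropped because $e(s_H)=0$. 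One point needs care here: $\mathbb E[z_h\widehat A_h^{(\lambda,\theta)}]=\mathbb E[z_h A_\theta(s_h,a_h)]$. This follows by conditioning on $(s_h,a_h)$, since $z_h$ is measurable with respect to that pair and \eqref{eq:true-gae-unbiased} holds conditionally.

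The key cancellation concerns the $-e(s_h)$ term. Since $s_h$ is determined before $a_h$ is drawn, we have $\mathbb E[z_h e(s_h)\mid s_h]=e(s_h)\sum_a\nabla_\theta\pi_\theta(a\mid s_h)=0$ for a differentiable finite-action distribution. This is the baseline identity, and it uses integrability, which follows from $\|z_h\|\le G_\pi$ and $\|e\|_\infty<\infty$. So the leading coefficient $1$ in $C_{\lambda,H}$ disappears, leaving only the future-state terms with coefficients $\gamma^\ell\lambda^{\ell-1}(1-\lambda)$.

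For the remaining terms, apply the triangle inequality, $\|z_h\|\le G_\pi$, and $|e|\le\|e\|_\infty$. For each $h$, this gives
\[
\gamma^h\,\Bigl\|\mathbb E\bigl[z_h S_h\bigr]\Bigr\|
\le G_\pi\,\|e\|_\infty\,(1-\lambda)\sum_{\ell=1}^{H-1-h}\gamma^\ell\lambda^{\ell-1}.
\]
Here we used $\gamma^h\le1$. The sum is at most $\gamma(1-\lambda)\sum_{j=0}^{H-2}(\gamma\lambda)^j=\widetilde C_{\lambda,H}$, because the index range is largest at $h=0$. Averaging over $h$ with the factor $1/H$ gives \eqref{eq:gae-score-cancellation}.

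The inequality $\widetilde C_{\lambda,H}\le\gamma$ follows from the computation at the end of the proof of Lemma~\ref{lem:gae-bias}. That computation gives $(1-\lambda)\sum_{j\le n-2}q^j\le1-q^{n-1}\le1$, after using $q=\gamma\lambda\le\lambda$. The zero cases are immediate: $\lambda=1$ kills the factor $(1-\lambda)$, and $H=1$ makes the sum empty.

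The main obstacle is the bookkeeping in the first step. We must check that replacing $\widehat A^{(\lambda,\theta)}$ by $A_\theta$ inside the score-weighted expectation is legitimate. This requires the conditional unbiasedness to hold given $(s_h,a_h)$, which is where the fixed critic and on-policy sampling hypotheses are used. We must also check that the $\gamma^h$ weighting matches the policy-gradient theorem for $G_h$ as stated.
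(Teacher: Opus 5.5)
Your proposal is correct and follows essentially the same route as the paper: split raw GAE using the telescoped error representation from Lemma~\ref{lem:gae-bias}, cancel the $-e(s_h)$ term via $\mathbb E[z_h\mid s_h]=0$, drop the matched terminal term, and bound the remaining nonnegative coefficients using $\|z_h\|\le G_\pi$, $\gamma^h\le1$, and the policy-gradient identity. Your extra bookkeeping makes explicit what the paper's short proof leaves implicit, namely conditioning on $(s_h,a_h)$ to replace the true-value GAE by $A_\theta$ inside the score-weighted expectation and checking that the coefficient sum is largest at $h=0$.
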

\begin{proof}
For each fixed state,
$\mathbb E[z_h\mid s_h]=\sum_a\nabla_\theta\pi_\theta(a\mid s_h)=0$.
Consequently the leading term $-e(s_h)$ in \eqref{eq:telescoped} cancels after
multiplication by $z_h$ and expectation. The matched terminal term is zero.
The remaining nonnegative coefficients sum to at most
$\gamma(1-\lambda)\sum_{j=0}^{H-2}(\gamma\lambda)^j$ at every time.
Use $\|z_h\|\le G_\pi$, $\gamma^h\le1$, and the policy-gradient identity.
Finally, $(1-\lambda)\sum_{j=0}^{H-2}(\gamma\lambda)^j\le1$,
including the endpoints by the finite-sum convention.
\end{proof}

\subsection{Handling PPO-Clip non-smoothness via event localization}

A major analytical difficulty in PPO arises from the non-smooth nature of the clipped surrogate objective.
The objective is not differentiable at likelihood ratios $r_{\theta,\theta_{\mathrm{old}}}=1\pm\delta$, which
precludes a direct application of standard smooth optimization arguments.

The key observation underlying our analysis is that this non-smoothness is \emph{localized}:
it only affects samples at or beyond the relevant sign-dependent clipping boundary.
We enclose the sign-dependent clip-active region and its boundary in the ratio event
\[
E_t := \{\, |r_{\theta_t,\theta_{\mathrm{old}}}(s,a)-1|\ge\delta \,\}.
\]
On the complement of this event, the sample derivative coincides with the unclipped importance-weighted surrogate derivative. This does not by itself make the reused-batch update unbiased for the policy gradient.

The next lemma makes this localization explicit by showing that the clipped gradient agrees with the unclipped
gradient whenever $E_t$ does not occur.
This allows us to decompose PPO updates into a smooth component and a clipping-induced distortion term that is
weighted by $\mathbb{P}(E_t)$.

\begin{lemma}[PPO-Clip gradient localization]
\label{lem:clip-localization-main}
Let $L^{\mathrm{clip}}(\theta;\bar\theta_t)$ be the PPO-Clip objective
computed with a fixed behavior policy $\pi_{\bar\theta_t}$.
Policies are probabilities or densities with respect to a common action measure, with a fixed common support and positive values there.
On a parameter ball $U$ of radius $R>0$ centered at $\theta$, assume the current log-policy is continuously differentiable and
$\|\nabla_\vartheta\log\pi_\vartheta(a\mid s)\|\le G_\pi$ for almost every relevant $(s,a)$, uniformly over $\vartheta\in U$.
Let $P$ be a fixed joint law of $(s,a,\widehat A)$ with state--action marginal given by
$s\sim\nu$, $a\sim\pi_{\bar\theta_t}(\cdot\mid s)$, and $|\widehat A|\le A_{\max}$.
The advantage label and its law are held fixed when differentiating, while its sign remains inside the expectations.
Fix $\delta\in(0,1)$ and write $r_\theta=\pi_\theta/\pi_{\bar\theta_t}$.
Define the active boundary
$B_\theta:=\{\widehat A>0,r_\theta=1+\delta\}\cup\{\widehat A<0,r_\theta=1-\delta\}$.
For any $\theta$ with $P(B_\theta)=0$, the population objective is differentiable and
\begin{equation}
\label{eq:grad-localization}
\nabla_\theta L^{\mathrm{clip}}(\theta;\bar\theta_t)
=
\mathbb{E}_P
\Big[\nabla_\theta r_{\theta,\bar\theta_t}(s,a)\,\widehat A\;\mathbf{1}\{\neg E(\theta;s,a,\widehat A)\}\Big].
\end{equation}
On $B_\theta$, a sample Clarke derivative is selected by a measurable coefficient
$\alpha\in[0,1]$ multiplying $\nabla_\theta r_\theta\widehat A$.
Off $B_\theta$, take $\alpha=0$ on the strict clip-active event $E$ and $\alpha=1$ otherwise.
The expected selected sample direction is then
$\mathbb E_P[\alpha\nabla_\theta r_\theta\widehat A]$; the indicator formula need not hold
when $P(B_\theta)>0$.
\end{lemma}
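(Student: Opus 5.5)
The plan is to differentiate the population objective under the expectation, sample by sample, and then justify the interchange by dominated convergence. Write the per-sample integrand as
\[
f(\theta;s,a,\widehat A)=\min\big(r_\theta\widehat A,\ \mathrm{clip}(r_\theta,1-\delta,1+\delta)\widehat A\big).
\]

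\textbf{Step 1: case analysis on the sign of $\widehat A$.} Fix a sample $(s,a,\widehat A)$.
\begin{itemize}
\item If $\widehat A>0$, then $f=\widehat A\min(r_\theta,1+\delta)$.
\item If $\widehat A<0$, then $f=\widehat A\max(r_\theta,1-\delta)$.
\item If $\widehat A=0$, then $f\equiv0$.
\end{itemize}
In each case $f$ is a composition of the $C^1$ map $\vartheta\mapsto r_\vartheta$ with a piecewise-linear function of one kink. So $f$ is differentiable at $\theta$ whenever the sample lies off $B_\theta$. There the derivative is $\nabla r_\theta\widehat A$ on the unclipped side and $0$ on the strictly clipped side.

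The strictly clipped side is the set $\{\widehat A>0,\ r_\theta>1+\delta\}\cup\{\widehat A<0,\ r_\theta<1-\delta\}$, which is contained in $E$. The complementary opposite-sign regions, for example $\widehat A>0$ with $r_\theta<1-\delta$, lie inside $E$ but are still unclipped. I would therefore read the indicator $\mathbf 1\{\neg E(\theta;s,a,\widehat A)\}$ in \eqref{eq:grad-localization} as the sign-dependent clip-inactive event, consistent with the statement's phrase ``strict clip-active event $E$''. I would state this convention explicitly at the start of the proof.

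\textbf{Step 2: a uniform Lipschitz envelope.} Use $\nabla_\vartheta r_\vartheta=r_\vartheta\nabla_\vartheta\log\pi_\vartheta$. The score is bounded by $G_\pi$ on $U$, so $\log r_\vartheta$ is $G_\pi$-Lipschitz along segments in $U$. Hence $r_\vartheta\le r_\theta e^{G_\pi R}$ on $U$. Each sample $f(\cdot;s,a,\widehat A)$ is then Lipschitz on $U$ with constant
\[
A_{\max}G_\pi e^{G_\pi R}\,r_\theta(s,a).
\]
This envelope is integrable, since $\mathbb E_P[r_\theta]=\int\nu(ds)\int\pi_\theta(da\mid s)=1$ by the common-support assumption. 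The difference quotients $(f(\theta+hv)-f(\theta))/h$ are therefore dominated by this envelope.

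\textbf{Step 3: interchange and identify the gradient.} On the full-measure set $B_\theta^c$, the difference quotients converge pointwise to $\langle\nabla f,v\rangle$. Dominated convergence then gives directional derivatives equal to $\mathbb E_P[\nabla_\theta r_\theta\widehat A\,\mathbf 1\{\neg E\}]\cdot v$. This expression is linear in $v$, so the objective is Gâteaux differentiable.

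To upgrade to Fréchet differentiability, I would apply the mean value theorem per sample. The integrand $\mathbf 1\{\text{unclipped at }\vartheta\}$ converges to $\mathbf 1\{\neg E\}$ almost surely as $\vartheta\to\theta$, because the sample is off $B_\theta$. Dominated convergence of $\sup_{\vartheta\in U,\,|\vartheta-\theta|\le\epsilon}\|\nabla f(\vartheta)-\nabla f(\theta)\|$ then gives the required uniformity.

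\textbf{Step 4: the Clarke-selection statement.} This part is essentially definitional. On $B_\theta$ the one-dimensional kink has Clarke subdifferential equal to the convex hull $\{\alpha\nabla r_\theta\widehat A:\alpha\in[0,1]\}$, by the chain rule for $C^1$ maps composed with a convex or concave kink. Fix a measurable selection $\alpha$. Its expectation is well defined by the same envelope, which gives $\mathbb E_P[\alpha\nabla r_\theta\widehat A]$. If $P(B_\theta)>0$, the directional derivatives from different sides weight $B_\theta$ with $\alpha\in\{0,1\}$ differently. A one-atom example, namely a point mass at the boundary, then shows that the indicator formula can fail.

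\textbf{Main obstacle.} The delicate step is the Fréchet upgrade in Step 3. The sets where the clipping branch changes near $\theta$ must be shown to have vanishing probability, which is exactly where $P(B_\theta)=0$ and the continuity of $r_\vartheta$ are needed. If this proves awkward, I would settle for Gâteaux differentiability together with continuity of the gradient map at such $\theta$, which suffices for the subsequent uses.
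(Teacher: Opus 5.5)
Your proposal is correct and takes essentially the same route as the paper's proof: the sign-based reduction to $\widehat A\min\{r_\theta,1+\delta\}$ and $\widehat A\max\{r_\theta,1-\delta\}$ (the paper's proof also defines $E$ as the sign-dependent strict clip-active event, as you read it), the integrable envelope $A_{\max}G_\pi e^{G_\pi R}r_\theta$ via $\mathbb E_P[r_\theta]=1$, dominated convergence, and the Clarke chain rule for the boundary selector. Your Fr\'echet ``obstacle'' dissolves because the envelope makes the population objective Lipschitz on $U$, and a Lipschitz function on a finite-dimensional space that is G\^ateaux differentiable at a point is Fr\'echet differentiable there (the gradient-continuity fallback would not be available anyway, since nearby $\vartheta$ can have $P(B_\vartheta)>0$); also, your counterexample needs at least two actions and a boundary cell of positive mass, since a point-mass law with common positive action support forces $r_\theta\equiv 1$.
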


\begin{proof}
Fix an update index $t$ and suppress the dependence on $t$ when it is clear.
Let the behavior policy be $\pi_{\bar\theta_t}$ and define the likelihood ratio
\[
r_\theta(s,a) ~:=~ r_{\theta,\bar\theta_t}(s,a)
~=~ \frac{\pi_\theta(a\mid s)}{\pi_{\bar\theta_t}(a\mid s)}.
\]
Condition on a sample $(s,a,\widehat A)$ and hold its advantage label fixed for differentiation.
The label remains random under the joint expectation $P$.
The PPO-Clip objective can be written as
\begin{equation}
\label{eq:Lclip-def-proof}
\begin{aligned}
L^{\mathrm{clip}}(\theta;\bar\theta_t)
&~=~
\mathbb{E}_P
\big[\ell^{\mathrm{clip}}(\theta;s,a)\big],\\
\ell^{\mathrm{clip}}(\theta;s,a)
&~:=~
\min\Big\{ r_\theta(s,a)\widehat A,\; \mathrm{clip}(r_\theta(s,a),1-\delta,1+\delta)\widehat A\Big\}.
\end{aligned}
\end{equation}

\paragraph{Step 1: Express the surrogate as a scalar piecewise function.}
For fixed $(s,a)$, $\ell^{\mathrm{clip}}(\theta;s,a)$ depends on $\theta$ only through the scalar $r_\theta(s,a)$.
Define the scalar function (parameterized by $\widehat A$)
\[
\psi_{\widehat A}(u) ~:=~ \min\Big\{u\,\widehat A,\; \mathrm{clip}(u,1-\delta,1+\delta)\,\widehat A\Big\},
\qquad u\ge 0.
\]
Then for each sample $(s,a)$,
\begin{equation}
\label{eq:psi-comp}
\ell^{\mathrm{clip}}(\theta;s,a) ~=~ \psi_{\widehat A}\!\big(r_\theta(s,a)\big).
\end{equation}

\paragraph{Step 2: Compute $\psi_{\widehat A}$ explicitly and identify the clip-active region.}
Recall
\[
\mathrm{clip}(u,1-\delta,1+\delta)
=
\begin{cases}
1-\delta, & u< 1-\delta,\\
u, & 1-\delta\le u\le 1+\delta,\\
1+\delta, & u> 1+\delta.
\end{cases}
\]

For $\widehat A=0$, the sample function and all its derivatives are zero.

\smallskip
\noindent\textbf{Case 1: $\widehat A>0$.}
Since $\widehat A>0$, the map $u\mapsto u\widehat A$ is increasing in $u$ and
\[
\psi_{\widehat A}(u) = \widehat A\cdot \min\Big\{u,\;\mathrm{clip}(u,1-\delta,1+\delta)\Big\}
= \widehat A\cdot \min\{u,\,1+\delta\}.
\]
Indeed, if $u>1+\delta$, then $\mathrm{clip}(u,\cdot)=1+\delta<u$ so the minimum is the clipped value;
otherwise the minimum is $u$.
Thus,
\begin{equation}
\label{eq:psi-Apos}
\psi_{\widehat A}(u)
=
\begin{cases}
u\widehat A, & u<1+\delta,\\
(1+\delta)\widehat A, & u>1+\delta,
\end{cases}
\quad\text{and it is non-differentiable at }u=1+\delta.
\end{equation}
Hence, for $\widehat A>0$, clipping is active exactly on $\{u>1+\delta\}$.

\smallskip
\noindent\textbf{Case 2: $\widehat A<0$.}
Since $\widehat A<0$, multiplying by $\widehat A$ reverses inequalities, and
\[
\psi_{\widehat A}(u)
= \widehat A\cdot \max\Big\{u,\;\mathrm{clip}(u,1-\delta,1+\delta)\Big\}
= \widehat A\cdot \max\{u,\,1-\delta\}.
\]
Indeed, if $u<1-\delta$, then $\mathrm{clip}(u,\cdot)=1-\delta>u$ so the maximum is the clipped value;
otherwise the maximum is $u$.
Thus,
\begin{equation}
\label{eq:psi-Aneg}
\psi_{\widehat A}(u)
=
\begin{cases}
(1-\delta)\widehat A, & u<1-\delta,\\
u\widehat A, & u>1-\delta,
\end{cases}
\quad\text{and it is non-differentiable at }u=1-\delta.
\end{equation}
Hence, for $\widehat A<0$, clipping is active exactly on $\{u<1-\delta\}$.

\smallskip
Combining the two cases gives the clip-active event
\begin{equation}
\label{eq:Et-proof}
E(\theta;s,a,\widehat A)
:=
\Big\{\widehat A>0,\; r_\theta(s,a) > 1+\delta\Big\}
\;\cup\;
\Big\{\widehat A<0,\; r_\theta(s,a) < 1-\delta\Big\}.
\end{equation}
On $E(\theta;s,a,\widehat A)$, $\psi_{\widehat A}(r_\theta)$ equals a \emph{constant} (either $(1+\delta)\widehat A$
or $(1-\delta)\widehat A$), while on its complement it equals $r_\theta\widehat A$.

\paragraph{Step 3: Pointwise gradient for differentiable points.}
Assume $\theta$ is such that $\ell^{\mathrm{clip}}(\theta;s,a)$ is differentiable for $P$-almost every sample $(s,a,\widehat A)$.
From \eqref{eq:psi-Apos}--\eqref{eq:psi-Aneg}, whenever $\psi_{\widehat A}$ is differentiable at $u=r_\theta(s,a)$ we have
\[
\psi'_{\widehat A}(u)
=
\begin{cases}
\widehat A, & \neg E(\theta;s,a,\widehat A),\\
0, & E(\theta;s,a,\widehat A),
\end{cases}
\qquad\text{(ignoring the boundary points }u\in\{1\pm\delta\}\text{)}.
\]
By the chain rule and \eqref{eq:psi-comp},
\begin{equation}
\label{eq:pointwise-grad}
\nabla_\theta \ell^{\mathrm{clip}}(\theta;s,a)
=
\psi'_{\widehat A}\!\big(r_\theta(s,a)\big)\,\nabla_\theta r_\theta(s,a)
=
\nabla_\theta r_\theta(s,a)\,\widehat A\,\mathbf{1}\{\neg E(\theta;s,a,\widehat A)\}.
\end{equation}

\paragraph{Step 4: Interchange gradient and expectation.}
The joint law $P$ does not depend on $\theta$. Integrating the uniformly bounded
score along the segment from $\theta$ to $\vartheta\in U$ gives
\[
r_\vartheta(s,a)\le e^{G_\pi R}r_\theta(s,a),
\qquad
\|\nabla_\vartheta r_\vartheta(s,a)\|
\le G_\pi e^{G_\pi R}r_\theta(s,a).
\]
Since $u\mapsto\psi_{\widehat A}(u)$ is $|\widehat A|$-Lipschitz, sample difference
quotients and selected derivative norms are bounded by
$M(s,a):=A_{\max}G_\pi e^{G_\pi R}r_\theta(s,a)$.
Common support and normalization imply $\mathbb E_P[r_\theta]=1$, so
$\mathbb E_P[M]=A_{\max}G_\pi e^{G_\pi R}<\infty$.
This provides an integrable local envelope without finite state/action sets or a
pointwise likelihood-ratio cap.
Hence, by dominated convergence (or standard differentiation-under-the-integral rules),
\[
\nabla_\theta L^{\mathrm{clip}}(\theta;\bar\theta_t)
=
\mathbb{E}_P
\big[\nabla_\theta \ell^{\mathrm{clip}}(\theta;s,a)\big].
\]
Substituting the pointwise identity \eqref{eq:pointwise-grad} yields
\[
\nabla_\theta L^{\mathrm{clip}}(\theta;\bar\theta_t)
=
\mathbb{E}_P
\Big[
\nabla_\theta r_{\theta,\bar\theta_t}(s,a)\,\widehat A\,\mathbf{1}\{\neg E(\theta;s,a,\widehat A)\}
\Big],
\]
which is exactly \eqref{eq:grad-localization}.

\paragraph{Step 5: Non-differentiable boundary and Clarke subgradients.}
The only non-differentiable points of $\psi_{\widehat A}$ occur at $u=1+\delta$ when $\widehat A>0$ and at
$u=1-\delta$ when $\widehat A<0$.
At those points the (scalar) Clarke subdifferential is
\[
\partial^{\circ}\psi_{\widehat A}(u)
=
\begin{cases}
\mathrm{conv}\{0,\widehat A\}, & \widehat A>0,\; u=1+\delta,\\
\mathrm{conv}\{0,\widehat A\}, & \widehat A<0,\; u=1-\delta,\\
\{\widehat A\}, & \neg E(\theta;s,a,\widehat A)\text{ and not on boundary},\\
\{0\}, & E(\theta;s,a,\widehat A)\text{ and not on boundary}.
\end{cases}
\]
By the chain rule for Clarke subgradients of compositions with smooth maps,
\[
\partial^{\circ}_\theta \ell^{\mathrm{clip}}(\theta;s,a)
=
\partial^{\circ}\psi_{\widehat A}\!\big(r_\theta(s,a)\big)\cdot \nabla_\theta r_\theta(s,a),
\]
so any measurable selection $\tilde g(\theta;s,a)\in \partial^{\circ}_\theta \ell^{\mathrm{clip}}(\theta;s,a)$ can be written as
\[
\begin{aligned}
\tilde g(\theta;s,a)
&=
\alpha(\theta;s,a,\widehat A)\,\nabla_\theta r_\theta(s,a)\,\widehat A,\\
\text{with }\alpha(\theta;s,a,\widehat A)&=
\begin{cases}
1, & \neg E(\theta;s,a,\widehat A)\text{ (away from boundary)},\\
0, & E(\theta;s,a,\widehat A)\text{ (away from boundary)},\\
\in[0,1], & \text{on the boundary}.
\end{cases}
\end{aligned}
\]
At a zero advantage label one may take $\alpha=1$ without changing the derivative.
If $P(B_\theta)=0$, the choice of boundary selector does not affect the expectation,
and \eqref{eq:grad-localization} follows from Step 4.
When the boundary has positive mass, the joint expectation includes the chosen selector:
\[
g_{\mathrm{sel}}(\theta)
=\mathbb E_P[\alpha(\theta;s,a,\widehat A)
\nabla_\theta r_\theta(s,a)\,\widehat A].
\]
The clipping difference is supported on $E\cup B_\theta\subseteq\{|r_\theta-1|\ge\delta\}$.
No equality between this expression and an arbitrary population Clarke subgradient is assumed.

This completes the proof.
\end{proof}

\subsection{Bounding clipping-induced bias}

Clipping changes active sample directions and can introduce population bias. The following results
control this distortion through the clipping event and the advantage and score envelopes.
They contribute a KL-dependent term to the main theorem; an active clipping event need not
produce nonzero aggregate bias.

\begin{lemma}[KL control bounds clip activation]
\label{lem:kl-bounds-Et-main}
For any $\delta\in(0,1)$ and any state $s$, let $r(a)=\pi_{\theta_t}(a\mid s)/\pi_{\bar\theta_t}(a\mid s)$.
Both policies have probability mass functions or densities with respect to a common action measure,
share a fixed support, and are positive there. The state law may be arbitrary.
For densities, the sums below are interpreted as integrals.
Then
\begin{equation}
\label{eq:ratio-tail-main}
\Pr_{a\sim\pi_{\bar\theta_t}(\cdot\mid s)}\big(|r(a)-1|\ge\delta\big)
~\le~
\frac{\sqrt{2\,\mathrm{KL}(\pi_{\bar\theta_t}(\cdot\mid s)\,\|\,\pi_{\theta_t}(\cdot\mid s))}}{\delta}.
\end{equation}
Consequently, for any fixed state law $\nu$ satisfying the population condition
$\mathbb E_{s\sim\nu}[\mathrm{KL}(\pi_{\bar\theta_t}(\cdot\mid s)\|\pi_{\theta_t}(\cdot\mid s))]\le\kappa$,
the boundary-inclusive clip-active event $E_t$ defined below satisfies
\begin{equation}
\label{eq:Et-prob-main}
\Pr(E_t) \le \frac{\sqrt{2\kappa}}{\delta}.
\end{equation}
\end{lemma}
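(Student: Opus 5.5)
The plan is to bound the ratio-tail probability by the total variation distance between the two conditional action laws, and then pass to KL through Pinsker's inequality. Fix a state $s$ and write $p=\pi_{\bar\theta_t}(\cdot\mid s)$ and $q=\pi_{\theta_t}(\cdot\mid s)$, both positive on the common support. Then $r=q/p$ is well defined there.

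\textbf{Step 1 (Markov under $p$).} Markov's inequality gives
\[
\Pr_{a\sim p}(|r(a)-1|\ge\delta)\le \frac{1}{\delta}\,\mathbb E_{a\sim p}|r(a)-1|
=\frac{1}{\delta}\sum_a |q(a)-p(a)|=\frac{2\,\mathrm{TV}(p,q)}{\delta}.
\]
For densities the sum is the integral against the common action measure.

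\textbf{Step 2 (Pinsker).} Pinsker's inequality gives $\mathrm{TV}(p,q)\le\sqrt{\mathrm{KL}(p\|q)/2}$, so $2\,\mathrm{TV}\le\sqrt{2\,\mathrm{KL}(p\|q)}$. Combined with Step 1, this is exactly \eqref{eq:ratio-tail-main}. The orientation of the KL does not matter for Pinsker, since TV is symmetric. It is nonetheless consistent with the budget \eqref{eq:kl-budget}, which places the behavior policy first.

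\textbf{Step 3 (averaging over states).} Define $E_t$ as the boundary-inclusive event $\{|r_{\theta_t,\bar\theta_t}(s,a)-1|\ge\delta\}$ under $s\sim\nu$ and $a\sim\pi_{\bar\theta_t}(\cdot\mid s)$. Conditioning on $s$ and applying Step 2 pointwise gives
\[
\Pr(E_t)\le\frac{1}{\delta}\,\mathbb E_{s\sim\nu}\sqrt{2\,\mathrm{KL}_s}.
\]
Since the square root is concave, Jensen's inequality bounds this by $\delta^{-1}\sqrt{2\,\mathbb E_\nu \mathrm{KL}_s}\le\sqrt{2\kappa}/\delta$, which is \eqref{eq:Et-prob-main}. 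This step also needs measurability of $s\mapsto\mathrm{KL}_s$, which holds because the KL is a nonnegative measurable integral.

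\textbf{Main obstacle.} The only subtlety is Step 3. The state law $\nu$ and the advantage label must not depend on the action beyond the sampling of $a$ from $\pi_{\bar\theta_t}$. If $E_t$ also involves the sign of $\widehat A$, as the strict clip-active event does, it is a subset of the ratio event, so the same bound holds. The statement is trivial whenever the right-hand side exceeds one.
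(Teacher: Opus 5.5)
Your proposal is correct and follows the paper's proof essentially step for step: Markov's inequality combined with the identity $\mathbb{E}_{a\sim p}|r(a)-1|=2\,\mathrm{TV}(p,q)$, then Pinsker, then Jensen over $s\sim\nu$, and finally the containment of the sign-dependent, boundary-inclusive event $E_t$ in $\{|r-1|\ge\delta\}$. One small correction to your caveat about the advantage label: that containment holds pointwise, so, as the paper notes, the joint law of $\widehat A$ with $(s,a)$ plays no role, and only the state--action marginal $\nu(s)\pi_{\bar\theta_t}(a\mid s)$ has to be as specified.
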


\begin{proof}
Fix an arbitrary time index $t$ and a state $s$.  For brevity, define the conditional action distributions
\[
p(a) := \pi_{\bar\theta_t}(a\mid s),
\qquad
q(a) := \pi_{\theta_t}(a\mid s),
\qquad
r(a) := \frac{q(a)}{p(a)}.
\]
Actions with $p(a)=q(a)=0$ may be omitted; a current-policy action outside the behavior support is excluded.

\paragraph{Step 1: A tail bound via Markov's inequality and total variation.}
Let
\[
A_\delta := \{a:\ |r(a)-1|\ge\delta\}.
\]
By Markov's inequality applied to the nonnegative random variable $|r(a)-1|$ under $a\sim p$,
\begin{equation}
\label{eq:markov}
\Pr_{a\sim p}(A_\delta)
~=~
\Pr_{a\sim p}\big(|r(a)-1|\ge\delta\big)
~\le~
\frac{1}{\delta}\,\E_{a\sim p}\big[|r(a)-1|\big].
\end{equation}
We now compute the expectation explicitly:
\begin{align}
\E_{a\sim p}\big[|r(a)-1|\big]
&=
\sum_{a} p(a)\left|\frac{q(a)}{p(a)}-1\right|
=
\sum_{a} \big|q(a)-p(a)\big|
~=~ 2\,\mathrm{TV}(p,q).
\label{eq:ratio-tv}
\end{align}
Substituting \eqref{eq:ratio-tv} into \eqref{eq:markov} yields
\begin{equation}
\label{eq:tail-tv}
\Pr_{a\sim p}\big(|r(a)-1|\ge\delta\big)
~\le~
\frac{2\,\mathrm{TV}(p,q)}{\delta}.
\end{equation}

\paragraph{Step 2: Convert the total-variation bound to a KL bound.}
By Pinsker's inequality,
\begin{equation}
\label{eq:pinsker}
\mathrm{TV}(p,q) \le \sqrt{\frac{1}{2}\KL(p\|q)}.
\end{equation}
Combining \eqref{eq:tail-tv} and \eqref{eq:pinsker} gives
\[
\Pr_{a\sim p}\big(|r(a)-1|\ge\delta\big)
~\le~
\frac{2}{\delta}\sqrt{\frac{1}{2}\KL(p\|q)}
~=~
\frac{\sqrt{2\,\KL(p\|q)}}{\delta},
\]
which is exactly \eqref{eq:ratio-tail-main} after substituting back
$p(\cdot)=\pi_{\bar\theta_t}(\cdot\mid s)$ and $q(\cdot)=\pi_{\theta_t}(\cdot\mid s)$.

\paragraph{Step 3: Average over the state distribution.}
Let $d_{\bar\theta_t}:=\nu$ denote the specified state distribution under which the population KL budget is enforced, and write
\[
K(s) := \KL\big(\pi_{\bar\theta_t}(\cdot\mid s)\,\|\,\pi_{\theta_t}(\cdot\mid s)\big).
\]
Taking expectation of \eqref{eq:ratio-tail-main} over $s\sim d_{\bar\theta_t}$ yields
\begin{align}
\label{eq:avg-over-s}
\Pr_{s\sim d_{\bar\theta_t},\,a\sim\pi_{\bar\theta_t}(\cdot\mid s)}\big(|r(a)-1|\ge\delta\big)
&=
\E_{s\sim d_{\bar\theta_t}}
\Big[
\Pr_{a\sim\pi_{\bar\theta_t}(\cdot\mid s)}\big(|r(a)-1|\ge\delta\big)
\Big]\nonumber\\
&\le
\frac{\sqrt{2}}{\delta}\,
\E_{s\sim d_{\bar\theta_t}}\big[\sqrt{K(s)}\big].
\end{align}
Since $\sqrt{\cdot}$ is concave on $\R_+$, Jensen's inequality gives
\begin{equation}
\label{eq:jensen-sqrt}
\E_{s\sim d_{\bar\theta_t}}\big[\sqrt{K(s)}\big]
~\le~
\sqrt{\E_{s\sim d_{\bar\theta_t}}[K(s)]}.
\end{equation}
Using the stated population KL budget,
\[
\E_{s\sim d_{\bar\theta_t}}[K(s)]
=
\E_{s\sim d_{\bar\theta_t}}
\KL\big(\pi_{\bar\theta_t}(\cdot\mid s)\,\|\,\pi_{\theta_t}(\cdot\mid s)\big)
~\le~ \kappa,
\]
and combining with \eqref{eq:avg-over-s}--\eqref{eq:jensen-sqrt} yields
\begin{equation}
\label{eq:ratio-tail-global}
\Pr_{s\sim d_{\bar\theta_t},\,a\sim\pi_{\bar\theta_t}(\cdot\mid s)}\big(|r(a)-1|\ge\delta\big)
~\le~
\frac{\sqrt{2\kappa}}{\delta}.
\end{equation}

\paragraph{Step 4: Relating clip activation to likelihood-ratio deviation.}
Include the active boundaries in the clip-active event (for a sample $(s,a,\widehat A)$):
\[
E_t
~:=~
\Big\{\widehat A>0,\; r(a) \ge 1+\delta\Big\}
\;\cup\;
\Big\{\widehat A<0,\; r(a) \le 1-\delta\Big\}.
\]
In either case, $E_t$ implies $|r(a)-1|\ge\delta$.
Hence, regardless of the distribution of $\widehat A$,
\[
\Pr(E_t)
~\le~
\Pr\big(|r(a)-1|\ge\delta\big).
\]
Combining this with \eqref{eq:ratio-tail-global} proves \eqref{eq:Et-prob-main}:
\[
\Pr(E_t) \le \frac{\sqrt{2\kappa}}{\delta}.
\]
\end{proof}

\begin{coro}[Population clipping distortion]
\label{lem:population-clip-bias}
Under the preceding lemma, let $P$ be a fixed joint law of $(s,a,\widehat A)$
with state--action marginal $\nu(s)\pi_{\bar\theta_t}(a\mid s)$.
Suppose $|\widehat A|\le A_{\max}$ and $\|\nabla\log\pi_{\theta_t}(a\mid s)\|\le G_\pi$.
For a sample selector $\alpha$ as in Lemma~\ref{lem:clip-localization-main}, the
clipping difference from the unclipped direction under the same law is
\[
c_t:=\mathbb E_P[(1-\alpha)r\nabla\log\pi_{\theta_t}(a\mid s)\widehat A].
\]
Then, without a pointwise likelihood-ratio cap,
\begin{equation}\label{eq:population-clip-bound}
\|c_t\|^2\le 2G_\pi^2A_{\max}^2\kappa(1+\delta^{-1})^2
\le 8G_\pi^2A_{\max}^2\kappa/\delta^2.
\end{equation}
\end{coro}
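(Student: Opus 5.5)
The plan is to bound $\|c_t\|$ directly by a triangle inequality, reducing it to a scalar expectation of the likelihood ratio over the clipping region. Then I will control that scalar by the same total-variation and Pinsker argument as in Lemma~\ref{lem:kl-bounds-Et-main}, keeping the factor $r$ instead of discarding it.

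\textbf{Step 1 (support and pointwise bound).} By Lemma~\ref{lem:clip-localization-main}, the selector satisfies $\alpha=1$ off $\{|r-1|\ge\delta\}$ and $\alpha\in[0,1]$ everywhere. Hence $0\le 1-\alpha\le \mathbf 1\{|r-1|\ge\delta\}$. Using $r>0$, $\|\nabla\log\pi_{\theta_t}\|\le G_\pi$ and $|\widehat A|\le A_{\max}$, Jensen for the norm gives
\[
\|c_t\|\le G_\pi A_{\max}\,\mathbb E_{s\sim\nu}\,\mathbb E_{a\sim p_s}\big[r(a)\mathbf 1\{|r(a)-1|\ge\delta\}\big],
\]
where $p_s=\pi_{\bar\theta_t}(\cdot\mid s)$. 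Only the state--action marginal of $P$ enters, so the dependence of $\widehat A$ on the trajectory is harmless.

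\textbf{Step 2 (per-state ratio mass).} Fix $s$ and set $A_\delta=\{a:|r(a)-1|\ge\delta\}$. I will split $r=(r-1)+1$, so that
\[
\mathbb E_{p_s}[r\mathbf 1_{A_\delta}]\le \mathbb E_{p_s}|r-1|+\Pr_{p_s}(A_\delta).
\]
As in \eqref{eq:ratio-tv}, $\mathbb E_{p_s}|r-1|=2\,\mathrm{TV}(p_s,q_s)$. By Markov's inequality, $\Pr_{p_s}(A_\delta)\le 2\,\mathrm{TV}/\delta$. Together these give $2\,\mathrm{TV}(1+\delta^{-1})$. Pinsker then yields the bound $\sqrt{2K(s)}\,(1+\delta^{-1})$ with $K(s)=\mathrm{KL}(p_s\|q_s)$. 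This step needs no cap on $r$: $r$ is integrable under $p_s$ by common support, and the large-ratio part is absorbed by $\mathbb E|r-1|$. I expect this to be the only delicate point, namely avoiding $R_{\max}$ by not bounding $r$ pointwise on the clipping event.

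\textbf{Step 3 (average and square).} Averaging over $s\sim\nu$ and applying Jensen to the concave square root as in \eqref{eq:jensen-sqrt}, the population budget $\mathbb E_\nu K(s)\le\kappa$ gives
\[
\|c_t\|\le G_\pi A_{\max}\sqrt{2\kappa}\,(1+\delta^{-1}).
\]
Squaring yields the first inequality in \eqref{eq:population-clip-bound}. The second follows from $1+\delta^{-1}\le 2\delta^{-1}$ for $\delta\in(0,1)$, so $2(1+\delta^{-1})^2\le 8/\delta^2$.
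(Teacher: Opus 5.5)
Your proposal is correct and follows essentially the same route as the paper. Both arguments bound $1-\alpha$ by the indicator of $\{|r-1|\ge\delta\}$, split $r\mathbf 1_{E_t}\le |r-1|+\mathbf 1_{E_t}$, and control both pieces through the TV identity, Markov, Pinsker, and Jensen to reach $G_\pi A_{\max}\sqrt{2\kappa}(1+\delta^{-1})$ without a ratio cap; the only cosmetic difference is that you combine the two terms per state before averaging over $\nu$, whereas the paper averages them separately (using Lemma~\ref{lem:kl-bounds-Et-main} for $P(E_t)\le\sqrt{2\kappa}/\delta$).
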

\begin{proof}
\Eqref{eq:ratio-tv}, Pinsker's inequality, and Jensen's inequality give
$\mathbb E_P|r-1|\le\sqrt{2\kappa}$.
Since $0\le1-\alpha\le\mathbf1_{E_t}$, we have
\[
\begin{aligned}
\|c_t\|&\le G_\pi A_{\max}\mathbb E_P[r\mathbf1_{E_t}]\\
&\le G_\pi A_{\max}\big(P(E_t)+\mathbb E_P|r-1|\big)\\
&\le G_\pi A_{\max}\sqrt{2\kappa}(1+\delta^{-1}).
\end{aligned}
\]
Squaring and using $\delta<1$ proves the claim. The random advantage and its sign
remain inside the joint expectation throughout.
\end{proof}
The corollary compares two population surrogate directions. Relating the unclipped
direction to the current-policy return gradient, or relating empirical KL and
adaptive batch averages to population quantities, requires additional arguments.

\paragraph{Role in the convergence proof.}
Smoothness of the return objective $J$ relates each actor update to progress in the objective.
The localization and KL lemmas bound clipping distortion, while the GAE lemma controls the
current-policy advantage error. The finite-batch bridge below adds the empirical-law and
complete-trajectory discrepancies and combines all four terms. These bounds then enter the
coupled analysis of actor stationarity and critic tracking.

\subsection{Finite-batch bias under adaptive rollout reuse}
\label{appdx:batch-bridge}

The following lemma bounds the discrepancy between empirical actor and critic
directions and their current-policy population counterparts. Resampling a stored
batch leaves a statistical error because its empirical gradients can differ from
the population gradients.

\begin{lemma}[Adaptive finite-batch actor and critic bias]
\label{lem:finite-batch-bridge}
\label{lem:adaptive-batch-bridge}
Let $P_\theta$ be the law of a length-$H$ trajectory, including its prompt and rewards,
on a common finite support $\Omega$ of cardinality $S_{\rm traj}$.
The environment, reward, and prompt kernels are policy independent, and policies have
common positive action support. At outer iteration $n$, conditional on the history
$\mathcal G_n$ before collection, draw $B$ independent trajectories under the
$\mathcal G_n$-measurable behavior parameter $\bar\theta=\theta_n$.
Write $\widehat P_n$ for their empirical law and
\[
 \epsilon_B:=\min\left\{1,\frac{S_{\rm traj}}{4B}\right\}.
\]
For each deterministic inner index $t=nK+k$, the pre-update history $\mathcal F_t$
contains this batch and $(\theta_t,w_t)$, which may depend on all preceding uses of
the batch. Uniform minibatch sampling with replacement is the only additional
randomness in the two update directions.

Define the return, normalized objective, and population critic loss by
\begin{align}
 G_h(\tau)&:=\sum_{j=h}^{H-1}\gamma^{j-h}r_j+
                  \gamma^{H-h}v_{\rm term}(s_H),\notag\\
 J(\theta)&:=\frac1H\mathbb E_{P_\theta}G_0,\qquad
 \mathcal L_c(\theta,w):=\mathbb E_{P_\theta}
       \frac1H\sum_{h=0}^{H-1}(V_w(s_h)-G_h(\tau))^2.
 \label{eq:batch-population-objectives}
\end{align}
Here $\gamma\in(0,1]$, the state includes time, and the critic matches the prescribed
terminal boundary. At every update, raw GAE is recomputed on the stored paths using
the current pre-update critic, held fixed along each trajectory:
\[
 \widehat A_h(w,\tau)=\sum_{\ell=0}^{H-1-h}(\gamma\lambda)^\ell
 [r_{h+\ell}+\gamma V_w(s_{h+\ell+1})-V_w(s_{h+\ell})],
 \qquad\lambda\in[0,1].
\]
The actor holds these labels fixed when differentiating the surrogate and weights
the token-$h$ direction by $\gamma^h$. The critic uses the stored Monte Carlo targets
$G_h$, which do not depend on $w$. Both directions use pre-update parameters;
the actor and critic minibatches may be shared. Fix a measurable samplewise clipping-tie derivative rule in advance, independently of the other minibatch samples.

Assume uniformly over supported trajectories and visited parameters that
\begin{gather*}
 \|\nabla_\theta\log\pi_\theta(a\mid s)\|\le G_\pi,\qquad
 |\widehat A_h(w,\tau)|\le A_{\max},\qquad
 \frac{\pi_\theta(a\mid s)}{\pi_{\bar\theta}(a\mid s)}\le R_{\max},\\
 |V_w(s)|\le V_{\max},\qquad |G_h(\tau)|\le G_{\max},\qquad
 \|\nabla_wV_w(s)\|\le G_V.
\end{gather*}
The ratio envelope is a separate coverage premise: a uniform behavior-action
probability floor $p_{\min}>0$ permits $R_{\max}=1/p_{\min}$.
For the selected critic minimizer assume
\[
 V_{w^\star(\theta)}=V^{\pi_\theta},\qquad
 \|V_w-V_{w^\star(\theta)}\|_\infty
       \le L_V\|w-w^\star(\theta)\|.
\]
At each such inner iterate, suppose the population KL budget holds almost surely:
\begin{equation}\label{eq:batch-averaged-kl}
 \frac1H\sum_{h=0}^{H-1}\mathbb E_{s_h\sim P_{\bar\theta}}
 \mathrm{KL}\big(\pi_{\bar\theta}(\cdot\mid s_h)\,\Vert\,
                 \pi_{\theta_t}(\cdot\mid s_h)\big)\le\kappa.
\end{equation}
Let $\Delta_t=\|w_t-w^\star(\theta_t)\|^2$, and define
\begin{align}
 a&:=4G_\pi^2L_V^2\widetilde C_{\lambda,H}^2,\qquad
 Q_c:=2(V_{\max}+G_{\max})G_V,\notag\\
 F_{B,\kappa,\delta}
 &:=16R_{\max}^2G_\pi^2A_{\max}^2\epsilon_B
 +8G_\pi^2A_{\max}^2\kappa
       \big[(1+\sqrt H)^2+(1+\delta^{-1})^2\big],\notag\\
 E_c&:=8Q_c^2\epsilon_B+4Q_c^2H\kappa,
 \qquad
 \widetilde C_{\lambda,H}:=\gamma(1-\lambda)\sum_{j=0}^{H-2}(\gamma\lambda)^j.
 \label{eq:batch-bias-constants}
\end{align}
The sum is empty for $H=1$. For the actor and critic conditional biases
\[
 b_t:=\mathbb E[g_t\mid\mathcal F_t]-\nabla J(\theta_t),\qquad
 u_t:=\mathbb E[h_t\mid\mathcal F_t]
                   -\nabla_w\mathcal L_c(\theta_t,w_t),
\]
one has
\begin{align}
 \mathbb E\|b_t\|^2&\le a\,\mathbb E[\Delta_t]+F_{B,\kappa,\delta},
       \label{eq:actor-batch-bias}\\
 \mathbb E\|u_t\|^2&\le E_c.
       \label{eq:critic-batch-bias}
\end{align}
If each direction averages $m_a$ or $m_c$ independent uniform-with-replacement
samples, respectively, valid conditional variance bounds are
\begin{align}
 \mathbb E[\|g_t-\mathbb E[g_t\mid\mathcal F_t]\|^2\mid\mathcal F_t]
 &\le\frac{R_{\max}^2G_\pi^2A_{\max}^2}{m_a},\notag\\
 \mathbb E[\|h_t-\mathbb E[h_t\mid\mathcal F_t]\|^2\mid\mathcal F_t]
 &\le\frac{Q_c^2}{m_c}.
 \label{eq:batch-variance-bounds}
\end{align}
A sample may be a whole trajectory with its token-averaged contribution, or a
uniformly chosen trajectory--token pair; both give the same conditional means.
\end{lemma}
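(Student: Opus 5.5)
The plan is to decompose each conditional bias into a few terms, bound each by an earlier result or by a data-independent concentration argument, and recombine with $(x_1+\dots+x_4)^2\le 4\sum_i x_i^2$.

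\textbf{Empirical form of the conditional means.} Since $\mathcal F_t$ contains the stored batch and $(\theta_t,w_t)$, and minibatch sampling is uniform with replacement, both conditional means are empirical means. Define
\[
\phi_t(\tau):=\frac1H\sum_{h<H}\gamma^h\alpha_h\, r_{\theta_t,\bar\theta}(s_h,a_h)\, z_h\, \widehat A_h(w_t,\tau),
\]
with $z_h=\nabla\log\pi_{\theta_t}(a_h\mid s_h)$ and $\alpha_h$ the fixed samplewise tie rule. Then $\mathbb E[g_t\mid\mathcal F_t]=\mathbb E_{\widehat P_n}\phi_t$, and similarly $\mathbb E[h_t\mid\mathcal F_t]=\mathbb E_{\widehat P_n}\psi_t$ for the Monte Carlo critic integrand $\psi_t$. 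Both conventions (whole trajectory, or uniform trajectory--token pair) give these same means. The variance bounds \eqref{eq:batch-variance-bounds} follow at once: each single-sample direction has norm at most $R_{\max}G_\pi A_{\max}$ (actor) or $Q_c$ (critic), and the average of $m$ i.i.d. draws has conditional variance at most the squared envelope divided by $m$.

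\textbf{Actor bias split.} I write
\[
b_t=(\mathbb E_{\widehat P_n}-\mathbb E_{P_{\bar\theta}})\phi_t+c_t+d_t+e_t,
\]
and bound the four pieces in turn.
\begin{itemize}
\item \emph{Empirical term.} This is handled by a bound uniform over all integrands, so adaptivity of $(\theta_t,w_t)$ to the batch is irrelevant. Every $\phi_t$ satisfies $\|\phi_t\|\le R_{\max}G_\pi A_{\max}$ on $\Omega$, so the norm is at most $2R_{\max}G_\pi A_{\max}\,\mathrm{TV}(\widehat P_n,P_{\bar\theta})$. On a finite support, $\mathbb E[\mathrm{TV}^2]\le\frac14\mathbb E(\sum_\omega|\widehat p_\omega-p_\omega|)^2\le\frac{S_{\rm traj}}{4B}$ by Cauchy--Schwarz and $\mathrm{Var}\,\widehat p_\omega=p_\omega(1-p_\omega)/B$. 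Together with $\mathrm{TV}\le1$, this gives $\min\{1,S_{\rm traj}/(4B)\}=\epsilon_B$.
\item \emph{Clipping term $c_t$.} Under the behavior law, $c_t$ is the clipping difference and is bounded by Corollary~\ref{lem:population-clip-bias}, applied with $\nu$ the time-averaged behavior state law of \eqref{eq:batch-averaged-kl}. This yields the $(1+\delta^{-1})^2$ term.
\item \emph{Drift term $d_t$.} This is the difference between the unclipped single-ratio direction under $P_{\bar\theta}$ and the on-policy direction $\frac1H\mathbb E_{P_{\theta_t}}\sum_h\gamma^h z_h\widehat A_h(w_t)$. It splits into a part controlled by $\mathbb E|r-1|\le\sqrt{2\kappa}$, as in the proof of that corollary, and a trajectory-law part. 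For the latter, the KL chain rule gives $\mathrm{KL}(P_{\bar\theta}\|P_{\theta_t})=\sum_h\mathbb E\,\mathrm{KL}_h\le H\kappa$, since the environment kernels are policy independent. Pinsker then gives a total-variation bound of order $\sqrt{H\kappa}$, and both parts together give the $(1+\sqrt H)^2$ factor.
\item \emph{Critic term $e_t$.} This is the on-policy critic-induced bias. Corollary~\ref{cor:gae-score-cancellation} bounds it by $G_\pi\widetilde C_{\lambda,H}\|V_{w_t}-V^{\pi_{\theta_t}}\|_\infty\le G_\pi\widetilde C_{\lambda,H}L_V\sqrt{\Delta_t}$, which produces $a\,\Delta_t$ after the factor $4$.
\end{itemize}
Collecting the four squared bounds with the factor $4$ gives $a\,\mathbb E\Delta_t+F_{B,\kappa,\delta}$.

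\textbf{Critic bias.} The critic bias has only two pieces. The empirical-versus-$P_{\bar\theta}$ piece has envelope $Q_c$ and gives $2\cdot 4Q_c^2\epsilon_B$. The $P_{\bar\theta}$-versus-$P_{\theta_t}$ piece is bounded by $2Q_c\,\mathrm{TV}$, again using $\mathrm{KL}\le H\kappa$, and gives $4Q_c^2H\kappa$ after squaring and combining with $(x+y)^2\le 2x^2+2y^2$.

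\textbf{Main obstacle.} I expect the hardest step to be $d_t$. The behavior policy and the current policy differ both in the per-action ratio and in the law of the whole prefix, and GAE labels depend on the future trajectory. The integrand must therefore be viewed as a function of the complete trajectory before the trajectory-level total-variation bound can be applied. Keeping the per-step ratio inside while exchanging the trajectory laws is where the constant bookkeeping is delicate. My first attempt would be to add and subtract $\mathbb E_{P_{\bar\theta}}\frac1H\sum_h\gamma^h z_h\widehat A_h$: bound the ratio part using $\|z\widehat A\|\le G_\pi A_{\max}$, and bound the law-swap part by $2G_\pi A_{\max}\mathrm{TV}(P_{\bar\theta},P_{\theta_t})$.
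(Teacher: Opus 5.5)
Your proposal is correct and follows the paper's proof essentially step for step. Both use the same four-term split (empirical, clipping, trajectory drift, on-policy GAE), with a pathwise total-variation bound that is uniform over bounded integrands, so adaptivity is harmless. Both handle the drift term by adding and subtracting the behavior-law unscaled direction, and control it with $\mathbb E|r-1|\le\sqrt{2\kappa}$ plus the KL chain rule and Pinsker. The advantage term uses the score-baseline corollary, and the critic bias uses the same two-term split; your constants reproduce $F_{B,\kappa,\delta}$ and $E_c$ exactly.
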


\begin{proof}
We first bound the discrepancy between the empirical and population trajectory laws, then apply this bound to the actor and critic integrands.

\paragraph{Step 1: a uniform empirical-law comparison.}
Condition on $\mathcal G_n$, and let $p_i$ and $\widehat p_i$ be the population and
empirical probabilities of the $S_{\rm traj}$ paths. With
$D_n:=\mathrm{TV}(\widehat P_n,P_{\bar\theta})$, Cauchy--Schwarz and the variance
of each empirical frequency imply
\[
 \mathbb E[D_n^2\mid\mathcal G_n]
 \le\frac{S_{\rm traj}}4\sum_i
          \mathbb E[(\widehat p_i-p_i)^2\mid\mathcal G_n]
 =\frac{S_{\rm traj}}{4B}\left(1-\sum_i p_i^2\right).
\]
Since $D_n\le1$, this proves $\mathbb E[D_n^2]\le\epsilon_B$.
For every vector-valued integrand $f$ satisfying $\sup_\tau\|f(\tau)\|\le L$,
\begin{equation}\label{eq:adaptive-tv-comparison}
 \|(\widehat P_n-P_{\bar\theta})f\|\le2L D_n
 \quad\text{pathwise}.
\end{equation}
This inequality holds simultaneously for all bounded integrands. In particular,
$f$ may depend on the realized batch through the current actor and critic. We use
the expectation bound on $D_n^2$ only after this pathwise comparison; conditioning
on the entire batch does not make its empirical error vanish.

\paragraph{Step 2: compare whole trajectory laws.}
The shared prompt and environment kernels cancel in the likelihood ratio. The KL
chain rule and \eqref{eq:batch-averaged-kl} give
\begin{equation}\label{eq:batch-path-tv}
 \mathrm{KL}(P_{\bar\theta}\Vert P_{\theta_t})
 =\sum_h\mathbb E_{P_{\bar\theta}}
       \mathrm{KL}(\pi_{\bar\theta}(\cdot\mid s_h)\Vert
                    \pi_{\theta_t}(\cdot\mid s_h))
 \le H\kappa,\qquad
 \mathrm{TV}(P_{\bar\theta},P_{\theta_t})^2\le H\kappa/2.
\end{equation}
These are pointwise comparisons at the realized parameters, so their validity is
unaffected by how those parameters were selected from the batch.

\paragraph{Step 3: identify the empirical actor direction.}
Suppress $t$ and write
$z_h:=\nabla_\theta\log\pi_\theta(a_h\mid s_h)$ and
$r_h:=\pi_\theta(a_h\mid s_h)/\pi_{\bar\theta}(a_h\mid s_h)$.
Let $\alpha_h$ be the selected sample derivative multiplier from
Lemma~\ref{lem:clip-localization-main}; in particular,
$0\le1-\alpha_h\le\mathbf1_{\{|r_h-1|\ge\delta\}}$.
Define
\begin{align*}
 \Phi^{\rm clip}(\tau)&:=\frac1H\sum_h\gamma^h\alpha_hr_hz_h\widehat A_h(w,\tau),\\
 \Phi(\tau)&:=\frac1H\sum_h\gamma^hr_hz_h\widehat A_h(w,\tau),\qquad
 \Psi(\tau):=\frac1H\sum_h\gamma^hz_h\widehat A_h(w,\tau).
\end{align*}
The random advantage and its sign stay inside $\alpha_h$. Uniform sampling from
the stored batch gives exactly $\mathbb E[g_t\mid\mathcal F_t]=\widehat P_n\Phi^{\rm clip}$.
Introduce four differences:
\begin{align}
 b_t={}&\underbrace{(\widehat P_n-P_{\bar\theta})\Phi^{\rm clip}}_{b^{\rm batch}}
 +\underbrace{P_{\bar\theta}(\Phi^{\rm clip}-\Phi)}_{b^{\rm clip}}\notag\\
 &+\underbrace{(P_{\bar\theta}\Phi-P_\theta\Psi)}_{b^{\rm traj}}
 +\underbrace{(P_\theta\Psi-\nabla J(\theta))}_{b^{\rm adv}}.
 \label{eq:batch-four-biases}
\end{align}
Since $\|\Phi^{\rm clip}\|\le R_{\max}G_\pi A_{\max}$, Step 1 gives
\begin{equation}\label{eq:batch-actor-empirical}
 \|b^{\rm batch}\|^2\le4R_{\max}^2G_\pi^2A_{\max}^2D_n^2.
\end{equation}

\paragraph{Step 4: population clipping distortion.}
For any fixed state and timestep, put $p=\pi_{\bar\theta}(\cdot\mid s)$ and
$q=\pi_\theta(\cdot\mid s)$. The boundary-inclusive clip-active event is contained
in $\{|r-1|\ge\delta\}$. Thus, even when the label is random,
\[
 \mathbb E_p[r(1-\alpha)]
 \le\mathbb E_p[r\mathbf1_{\{|r-1|\ge\delta\}}]
 \le\Pr_p(|r-1|\ge\delta)+\mathbb E_p|r-1|
 \le\sqrt{2\mathrm{KL}(p\Vert q)}(1+\delta^{-1}),
\]
where the last inequality follows from Markov and Pinsker as in
Lemma~\ref{lem:kl-bounds-Et-main}. Averaging over states and timesteps, applying
Jensen, and using $\gamma^h\le1$ yields
\begin{equation}\label{eq:batch-actor-clipping}
 \|b^{\rm clip}\|^2
 \le2G_\pi^2A_{\max}^2\kappa(1+\delta^{-1})^2.
\end{equation}
No likelihood-ratio envelope is needed for this population comparison.

\paragraph{Step 5: trajectory and action-ratio drift.}
Decompose
\[
 b^{\rm traj}=P_{\bar\theta}(\Phi-\Psi)
                   +(P_{\bar\theta}-P_\theta)\Psi.
\]
The first term is bounded by
\[
 \|P_{\bar\theta}(\Phi-\Psi)\|
 \le\frac{G_\pi A_{\max}}H\sum_h\mathbb E_{P_{\bar\theta}}|r_h-1|
 \le G_\pi A_{\max}\sqrt{2\kappa}.
\]
Here $\mathbb E_{a\sim p}|q/p-1|=2\mathrm{TV}(p,q)$ and Jensen give the final
inequality. Since $\|\Psi\|\le G_\pi A_{\max}$, Step 2 bounds the second term by
$G_\pi A_{\max}\sqrt{2H\kappa}$. Therefore
\begin{equation}\label{eq:batch-actor-trajectory}
 \|b^{\rm traj}\|^2
 \le2G_\pi^2A_{\max}^2\kappa(1+\sqrt H)^2.
\end{equation}
The full-trajectory comparison changes both the state visitation and the future
continuation law used in GAE. It does not identify action-only importance weighting
with on-policy trajectories.

\paragraph{Step 6: current-policy GAE error and the combined actor bound.}
Under $P_\theta$, the realized current critic is fixed before this population
expectation is evaluated. Corollary~\ref{cor:gae-score-cancellation} therefore applies. The
policy-gradient identity with the stated return normalization is
\[
 \nabla J(\theta)=\frac1H\mathbb E_{P_\theta}
       \sum_h\gamma^h z_h A^{\pi_\theta}(s_h,a_h).
\]
The score identity cancels the state-only baseline error before applying the
norm bound. With value realization, this gives
\begin{equation}\label{eq:batch-actor-advantage}
 \|b^{\rm adv}\|^2\le G_\pi^2L_V^2\widetilde C_{\lambda,H}^2\Delta_t.
\end{equation}
Apply $\|\sum_{i=1}^4v_i\|^2\le4\sum_{i=1}^4\|v_i\|^2$ to
\eqref{eq:batch-four-biases}, use
\eqref{eq:batch-actor-empirical}--\eqref{eq:batch-actor-advantage}, and take total
expectation. The resulting constants are exactly
\eqref{eq:batch-bias-constants}, proving \eqref{eq:actor-batch-bias}.

\paragraph{Step 7: the stored-target critic bias.}
The trajectory gradient integrand is
\[
 \Xi_w(\tau):=\frac1H\sum_h2(V_w(s_h)-G_h(\tau))\nabla_wV_w(s_h),
 \qquad \|\Xi_w(\tau)\|\le Q_c.
\]
Differentiation of the finite expectation in
\eqref{eq:batch-population-objectives} gives
$P_\theta\Xi_w=\nabla_w\mathcal L_c(\theta,w)$, while uniform minibatch sampling
gives $\mathbb E[h_t\mid\mathcal F_t]=\widehat P_n\Xi_{w_t}$.
Consequently
\[
 u_t=(\widehat P_n-P_{\bar\theta})\Xi_{w_t}
                   +(P_{\bar\theta}-P_{\theta_t})\Xi_{w_t}.
\]
Steps 1--2 and the two-term squared triangle inequality imply
\[
 \|u_t\|^2
 \le8Q_c^2D_n^2+8Q_c^2\mathrm{TV}(P_{\bar\theta},P_{\theta_t})^2
 \le8Q_c^2D_n^2+4Q_c^2H\kappa.
\]
Taking total expectation proves \eqref{eq:critic-batch-bias}. This argument permits
adaptive fitting of $w_t$ to the same stored trajectories because the comparison
in \eqref{eq:adaptive-tv-comparison} is uniform over bounded integrands.

\paragraph{Step 8: minibatch variances.}
Conditional on $\mathcal F_t$, a single actor contribution has norm at most
$R_{\max}G_\pi A_{\max}$, and a single critic contribution has norm at most $Q_c$.
Independence of the draws within each with-replacement minibatch and the variance
identity give \eqref{eq:batch-variance-bounds}. These are bounds on centered
conditional variances; total second moments also contain the squared conditional
means. The same bounds hold when the actor and critic share samples.
\end{proof}

\paragraph{Scope of the comparison.}
The finite-support constant may be large, and increasing the number of inner
updates does not eliminate the $S_{\rm traj}/B$ statistical floor. The lemma covers
raw GAE recomputed with the current pre-update critic and stored Monte Carlo critic
targets. Further advantage normalization, critic-dependent bootstrapped targets,
or an empirical KL acceptance rule require additional arguments. The expectation
bounds above hold at deterministic outer--inner indices; selecting among batches
based on their realized data requires control of that selection as well.

\subsection{A structured bridge without trajectory counting}
\label{appdx:structured-batch}

The finite-support comparison above controls every bounded function of a complete
trajectory. For a tabular critic, the actual clipped actor gradient has a smaller
class of trajectory-dependent scalar factors. Controlling that class gives the
following alternative, including when rewards have continuous distributions.

\begin{lemma}[Uniform empirical gradients for a bounded tabular critic]
\label{lem:structured-batch-bridge}
Consider a finite layered Markov decision process with disjoint nonterminal state
sets $\mathcal S_0,\ldots,\mathcal S_{H-1}$ and a finite action set $\mathcal A$.
Write
\[
 d_V:=\sum_{h=0}^{H-1}|\mathcal S_h|,
 \qquad M:=\sum_{h=0}^{H-1}|\mathcal S_h||\mathcal A|.
\]
The critic is tabular: $V_w(s)=w_s$ at nonterminal states, with
$w\in\mathcal W:=[0,W]^{d_V}$, and it has a fixed prescribed terminal boundary.
Conditionally on the history $\mathcal G_n$ before collection, collect $B\ge1$
independent complete trajectories under a $\mathcal G_n$-measurable behavior
policy $\pi_{\bar\theta}$. Let $P=P_{\bar\theta}$ and $\widehat P=\widehat P_n$.
The state, reward, and initial-state kernels are policy independent; the reward
distribution need not have finite support.
Policies have common positive action support and are continuously differentiable
in a finite-dimensional parameter. All trajectory functions and parameter
selections are measurable.

Use the stored targets $G_h$, raw GAE $\widehat A_h(w,\tau)$, and $\gamma^h/H$
actor normalization from Lemma~\ref{lem:finite-batch-bridge}. Suppose
$|G_h|\le G_{\max}$ and $|\widehat A_h(w,\tau)|\le A_{\max}$ for every
$w\in\mathcal W$ and supported trajectory, where $A_{\max}>0$.
All admissible actor parameters satisfy the score and likelihood-ratio envelopes
$G_\pi$ and $R_{\max}$ of that lemma. Fix $\delta\in(0,1)$ and select the
\emph{unclipped derivative at both clipping boundaries}. Define
\[
 C:=C_{\lambda,H}
 =1+\gamma(1-\lambda)\sum_{j=0}^{H-2}(\gamma\lambda)^j\le2,
 \qquad
 \ell_B:=\log(6M)+d_V\log\!\left(2+\frac{CWB}{A_{\max}}\right),
\]
where the sum is empty for $H=1$. For the actual clipped trajectory gradient
$\Phi^{\rm clip}_{\theta,w}$ from the preceding proof and the stored-target
critic gradient
\[
 \Xi_w(\tau):=\frac2H\sum_{h=0}^{H-1}
                    (w_{s_h}-G_h(\tau))e_{s_h},
\]
where $e_s$ is the tabular coordinate vector, there exist nonnegative,
random variables $D_{a,n},D_{c,n}$ measurable with respect to $\mathcal G_n$
and the collected batch such that, simultaneously for
every admissible actor parameter $\theta$ and $w\in\mathcal W$,
\[
 \| (\widehat P-P)\Phi^{\rm clip}_{\theta,w}\|\le D_{a,n},
 \qquad
 \| (\widehat P-P)\Xi_w\|\le D_{c,n}.
\]
Their conditional second moments obey
\begin{align}
 \mathbb E[D_{a,n}^2\mid\mathcal G_n]
 &\le e_a(B),\label{eq:structured-actor-empirical}\\
 \mathbb E[D_{c,n}^2\mid\mathcal G_n]
 &\le e_c(B),\label{eq:structured-critic-empirical}
\end{align}
where the following explicit constants are valid:
\begin{align}
 e_a(B)&:=\min\!\left\{
  4R_{\max}^2G_\pi^2A_{\max}^2,
  \frac{4R_{\max}^2G_\pi^2A_{\max}^2M^2}{H^2B}(\ell_B+3)
                         \right\},\notag\\
 e_c(B)&:=\frac{8(W^2+G_{\max}^2)}{HB}.
 \label{eq:structured-empirical-constants}
\end{align}
In particular these bounds hold when $(\theta,w)$ is selected adaptively after
arbitrarily many preceding uses of the batch, at each deterministic outer--inner
index. No covering number for the actor parameters is required.
\end{lemma}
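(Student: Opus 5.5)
The plan is to define $D_{a,n}$ and $D_{c,n}$ as suprema over \emph{parameter-free} trajectory function classes. Such suprema depend only on $\mathcal G_n$ and the batch, so they automatically dominate the deviation at any adaptively selected $(\theta,w)$. The conditional moment bounds then reduce to ordinary concentration for i.i.d.\ trajectories under $P$.

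\textbf{Critic.} Layering means each trajectory visits each nonterminal state at most once, at the layer $h(s)$ containing $s$. Hence coordinate $s$ of $\Xi_w$ is
\[
\tfrac{2}{H}\big(w_s\mathbf 1_s(\tau)-G_{h(s)}(\tau)\mathbf 1_s(\tau)\big),
\qquad \mathbf 1_s(\tau):=\mathbf 1\{s_{h(s)}=s\}.
\]
Using $(x-y)^2\le 2x^2+2y^2$ and $0\le w_s\le W$, I get, uniformly over $w\in\mathcal W$,
\[
\|(\widehat P-P)\Xi_w\|^2\le \tfrac{8}{H^2}\sum_s\Big(W^2[(\widehat P-P)\mathbf 1_s]^2+[(\widehat P-P)(G_{h(s)}\mathbf 1_s)]^2\Big)=:D_{c,n}^2 .
\]
Each summand is the square of a centered empirical mean of $B$ i.i.d.\ terms. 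Its conditional expectation is at most $W^2p_s/B$, respectively $G_{\max}^2p_s/B$, where $p_s=P(\mathbf 1_s=1)$. Since $\sum_{s\in\mathcal S_h}p_s=1$ for each layer, summing over states gives $H/B$ in total. Therefore $\mathbb E[D_{c,n}^2\mid\mathcal G_n]\le 8(W^2+G_{\max}^2)/(HB)=e_c(B)$.

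\textbf{Actor: cell decomposition.} For fixed $\theta$, write $\Phi^{\rm clip}_{\theta,w}$ as a sum over cells $(h,s,a)$ with $s\in\mathcal S_h$. The vector factor $\gamma^h r_\theta(s,a)\nabla\log\pi_\theta(a\mid s)/H$ depends only on the cell and has norm at most $R_{\max}G_\pi/H$. Under the unclipped-at-tie rule, the scalar factor is determined by the cell and by $\theta$ only through which case holds:
\begin{itemize}
\item if $r_\theta(s,a)>1+\delta$, the factor is $-(\widehat A_h)^-\,\mathbf 1_{(h,s,a)}$;
\item if $r_\theta(s,a)<1-\delta$, the factor is $(\widehat A_h)^+\,\mathbf 1_{(h,s,a)}$;
\item otherwise, the factor is $\widehat A_h\,\mathbf 1_{(h,s,a)}$.
\end{itemize}
The cell-dependence of the vector factor is exactly why no actor cover is needed. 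The triangle inequality then gives
\[
\|(\widehat P-P)\Phi^{\rm clip}_{\theta,w}\|\le \tfrac{R_{\max}G_\pi M}{H}\,Z_n,
\qquad Z_n:=\max_{\text{cell, type}}\ \sup_{w\in\mathcal W}|(\widehat P-P)f_{\text{cell,type},w}|,
\]
where the maximum is over the $3M$ classes of scalar functions, each bounded by $A_{\max}$. I also have the trivial bound $2R_{\max}G_\pi A_{\max}$, so I set $D_{a,n}$ equal to the minimum of the two.

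\textbf{Actor: concentration over $w$ (main obstacle).} Raw GAE $\widehat A_h(w,\tau)$ is affine in the tabular $w$. Its coefficients have absolute sum at most $C$, by the telescoping computation behind Lemma~\ref{lem:gae-bias} with a matched terminal boundary. Hence $w\mapsto\widehat A_h$ is $C$-Lipschitz from $\ell_\infty$. The maps $x\mapsto x^\pm$ are $1$-Lipschitz, so all $3M$ classes are $C$-Lipschitz in $w$. I take an $\ell_\infty$ grid on $[0,W]^{d_V}$ of mesh $A_{\max}/(CB)$, which has at most $(2+CWB/A_{\max})^{d_V}$ points. The discretization error of each $(\widehat P-P)f$ is then at most $2A_{\max}/B$. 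On the finite set of $3M$ times grid-size functions, bounded by $A_{\max}$, Hoeffding sub-Gaussianity and the standard bound for the second moment of a maximum give roughly $A_{\max}^2(\log(2\cdot 3M\cdot N_{\rm grid})+O(1))/B$. Adding the discretization term with $(x+y)^2\le 2x^2+2y^2$, I expect $\mathbb E[Z_n^2\mid\mathcal G_n]\le 4A_{\max}^2(\ell_B+3)/B$ once the constants are tracked, which yields $e_a(B)$ after multiplying by $R_{\max}^2G_\pi^2M^2/H^2$. The delicate parts are this constant bookkeeping (the ``$+3$'' and the factor $6M=2\cdot3M$), making the Lipschitz constant of the GAE coefficients exactly $C$, and checking measurability of the suprema, which follows because each is a supremum of a continuous function of $w$ and so reduces to a countable dense set.
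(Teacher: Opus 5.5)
Your proposal is correct and follows the paper's proof essentially step for step. It uses the same three-type cell decomposition with cell-constant vector coefficients and the same $\ell_\infty$ grid of mesh $A_{\max}/(CB)$; the concentration step is Hoeffding, a union bound over $6MN$ events and a tail integration, which gives $\frac{2A_{\max}^2}{B}(\log(6MN)+1)$ (the leading factor is $2$, not $1$, because the range is $2A_{\max}$, and then $\frac{4A_{\max}^2}{B}(\ell_B+1)+\frac{8A_{\max}^2}{B^2}\le\frac{4A_{\max}^2}{B}(\ell_B+3)$ closes exactly). The critic part is the same layer-wise variance computation, and the only cosmetic difference is that the paper defines $D_{a,n}$ from the finite grid maximum plus the slack $2A_{\max}/B$, which avoids the measurability of $\sup_{w}$ that you handle via continuity in $w$.
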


\begin{proof}
All probabilities in this proof are conditional on $\mathcal G_n$, so the
behavior law is fixed before the independent trajectories are collected.

\paragraph{Step 1: the critic dependence of raw GAE.}
For two critics $w,v\in\mathcal W$, the reward and fixed terminal terms cancel.
Writing $q=\gamma\lambda$, direct collection of the coefficients gives
\begin{equation}\label{eq:structured-gae-lipschitz}
 \widehat A_h(w,\tau)-\widehat A_h(v,\tau)
 =-(w_{s_h}-v_{s_h})
  +\gamma(1-\lambda)\sum_{j=1}^{H-1-h}
                q^{j-1}(w_{s_{h+j}}-v_{s_{h+j}}).
\end{equation}
Thus raw GAE is affine in $w$ and
$|\widehat A_h(w,\tau)-\widehat A_h(v,\tau)|
 \le C\|w-v\|_\infty$ uniformly over trajectories and timesteps.
The maps
\[
 T_0(x)=x,\qquad T_+(x)=\max\{x,0\},\qquad
 T_-(x)=\min\{x,0\}
\]
are all $1$-Lipschitz and have magnitude at most $A_{\max}$ on
$[-A_{\max},A_{\max}]$.

\paragraph{Step 2: the exact clipped gradient in each cell.}
For a cell $j=(h,s,a)$, let
$I_j(\tau)=\mathbf1_{\{s_h=s,a_h=a\}}$,
$r_j=\pi_\theta(a\mid s)/\pi_{\bar\theta}(a\mid s)$, and
$z_j=\nabla_\theta\log\pi_\theta(a\mid s)$.
With the stated boundary rule, differentiating the PPO surrogate while holding
the advantage label fixed gives the scalar multiplier
\begin{equation}\label{eq:structured-clipping-parts}
 \alpha_j\widehat A_h(w,\tau)=
 \begin{cases}
 T_+(\widehat A_h(w,\tau)),&r_j<1-\delta,\\
 T_0(\widehat A_h(w,\tau)),&1-\delta\le r_j\le1+\delta,\\
 T_-(\widehat A_h(w,\tau)),&r_j>1+\delta.
 \end{cases}
\end{equation}
Here $T_-$ is the signed negative part, so no additional minus sign is needed.
For fixed $(\theta,\bar\theta)$, the choice $u_j\in\{0,+,-\}$ depends only
on the cell. In particular, although the label sign depends on the rest of the
trajectory and on $w$, all that dependence stays inside $T_{u_j}$.
Consequently
\[
 \Phi^{\rm clip}_{\theta,w}(\tau)
 =\frac1H\sum_{j=(h,s,a)}\gamma^h r_jz_j
                  I_j(\tau)T_{u_j}(\widehat A_h(w,\tau)).
\]
Let $f_{j,u,w}(\tau):=I_j(\tau)T_u(\widehat A_h(w,\tau))$.
The vector coefficient $\gamma^hr_jz_j$ is constant on its cell, with norm at
most $R_{\max}G_\pi$. Therefore, pathwise and simultaneously for all
admissible actor and critic parameters,
\begin{equation}\label{eq:structured-cell-reduction}
 \|(\widehat P-P)\Phi^{\rm clip}_{\theta,w}\|
 \le\frac{R_{\max}G_\pi M}{H}
          \max_{j,u}\sup_{w\in\mathcal W}
                         |(\widehat P-P)f_{j,u,w}|.
\end{equation}
The bound requires no continuity of the clipped gradient in $\theta$.
Because it holds uniformly, it also covers parameters selected using the collected data.

\paragraph{Step 3: a finite critic grid and its expected squared maximum.}
Set $\varepsilon=A_{\max}/(CB)$. A product grid in $[0,W]^{d_V}$ with mesh
at most $\varepsilon$ is an $\ell_\infty$ cover with cardinality
\[
 N\le(2+W/\varepsilon)^{d_V}
       =\left(2+\frac{CWB}{A_{\max}}\right)^{d_V}.
\]
Indeed one may use the multiples of $\varepsilon$ in each coordinate and add
the endpoint $W$; if $W=0$, the grid is the singleton cube.
For each of the at most $3MN$ grid functions, $f\in[-A_{\max},A_{\max}]$.
Hoeffding's inequality for the $B$ independent trajectories and a union bound
give, for their maximal absolute empirical deviation $Z$,
\[
 \Pr(Z\ge x\mid\mathcal G_n)
 \le\min\!\left\{1,6MN\exp\!\left(-\frac{Bx^2}{2A_{\max}^2}\right)\right\}.
\]
Integrating the tail of $Z^2$, split at
$2A_{\max}^2\log(6MN)/B$, yields
\begin{equation}\label{eq:structured-hoeffding-integral}
 \mathbb E[Z^2\mid\mathcal G_n]
 \le\frac{2A_{\max}^2}{B}\big(\log(6MN)+1\big)
 \le\frac{2A_{\max}^2}{B}(\ell_B+1).
\end{equation}
Step 1 implies that replacing any $w$ by a grid point within $\varepsilon$
changes $f_{j,u,w}$ by at most $C\varepsilon$ on each trajectory. Hence it
changes its empirical-minus-population mean by at most
$2C\varepsilon=2A_{\max}/B$. It follows that
\begin{align*}
 \mathbb E\!\left[
  \left(\max_{j,u}\sup_{w\in\mathcal W}
                 |(\widehat P-P)f_{j,u,w}|\right)^2
          \,\middle|\,\mathcal G_n\right]
 &\le 2\mathbb E[Z^2\mid\mathcal G_n]+
                                  \frac{8A_{\max}^2}{B^2}\\
 &\le\frac{4A_{\max}^2}{B}(\ell_B+3),
\end{align*}
where $B\ge1$ is used in the last inequality.
Together with \eqref{eq:structured-cell-reduction} this proves the second entry
in the minimum defining $e_a$. The first entry follows from
$\|\Phi^{\rm clip}_{\theta,w}\|\le R_{\max}G_\pi A_{\max}$ and the
triangle inequality. Explicitly, a batch-measurable upper bound is
\[
 D_{a,n}:=\min\!\left\{2R_{\max}G_\pi A_{\max},
       \frac{R_{\max}G_\pi M}{H}
                         \left(Z+\frac{2A_{\max}}B\right)\right\}.
\]
Thus no measurability assumption on an uncountable actor supremum is needed.

\paragraph{Step 4: the sparse tabular critic gradient.}
Index nonterminal states by $s\in\bigcup_h\mathcal S_h$, and let $h(s)$ be
the unique layer of $s$. Define
\[
 I_s(\tau)=\mathbf1_{\{s_{h(s)}=s\}},\quad
 X_s=(\widehat P-P)I_s,\quad
 Y_s=(\widehat P-P)(I_sG_{h(s)}).
\]
Every path visits exactly one state per layer, so
$\sum_s PI_s=H$. The $s$th coordinate of $(\widehat P-P)\Xi_w$ is
$2(w_sX_s-Y_s)/H$. Thus
\[
 \sup_{w\in\mathcal W}\|(\widehat P-P)\Xi_w\|^2
 \le\frac8{H^2}\sum_s(W^2X_s^2+Y_s^2).
\]
Define $D_{c,n}$ as the square root of this last right-hand side.
The trajectories are independent, and hence
\begin{align*}
 \sum_s\mathbb E[X_s^2\mid\mathcal G_n]
 &=\frac1B\sum_s\operatorname{Var}_P(I_s)\le\frac HB,\\
 \sum_s\mathbb E[Y_s^2\mid\mathcal G_n]
 &=\frac1B\sum_s\operatorname{Var}_P(I_sG_{h(s)})
 \le\frac1B\sum_sP(I_sG_{h(s)}^2)
 \le\frac{HG_{\max}^2}{B}.
\end{align*}
These inequalities prove \eqref{eq:structured-critic-empirical}. Dependence
between states or rewards along a trajectory is allowed; only independence
between the $B$ complete trajectories was used.
\end{proof}

\begin{corollary}[Structured replacement in the coupled bounds]
\label{cor:structured-batch-replacement}
Use the structured setting and clipping-boundary rule of
Lemma~\ref{lem:structured-batch-bridge}, and retain the population-KL,
realizability, and other population comparison assumptions of
Lemma~\ref{lem:finite-batch-bridge}. A finite trajectory support is not required.
Then the actor and critic bias conclusions of that lemma hold with the same
coefficient $a$, and with
\begin{align}
 F^{\rm str}_{B,\kappa,\delta}
 &:=4e_a(B)+8G_\pi^2A_{\max}^2\kappa
          \big[(1+\sqrt H)^2+(1+\delta^{-1})^2\big],\notag\\
 E_c^{\rm str}&:=2e_c(B)+4Q_c^2H\kappa,
 \label{eq:structured-bias-replacement}
\end{align}
where $Q_c$ may be any uniform bound on $\|\Xi_w(\tau)\|$.
For example, layered tabular structure permits
\[
 Q_c=\frac{2(W+G_{\max})}{\sqrt H};
 \qquad
 Q_c=\frac{2\max\{W,G_{\max}\}}{\sqrt H}
       \quad\text{if }0\le G_h\le G_{\max}.
\]
The synchronous coupled theorem therefore applies with $F=F^{\rm str}$ and
$E_c=E_c^{\rm str}$, whenever its remaining smoothness, critic contraction,
and stepsize premises hold. For complete-trajectory minibatches the same $Q_c$
also gives the critic variance bound $Q_c^2/m_c$; for token minibatches that
variance instead uses a bound on the individual token gradient.
\end{corollary}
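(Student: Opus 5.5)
We rerun the proof of Lemma~\ref{lem:finite-batch-bridge} and change only its empirical-law step. In that proof, $\mathrm{TV}(\widehat P_n,P_{\bar\theta})$ entered only through the two terms $b^{\rm batch}=(\widehat P_n-P_{\bar\theta})\Phi^{\rm clip}$ and $(\widehat P_n-P_{\bar\theta})\Xi_{w_t}$. We replace those bounds with the uniform deviations of Lemma~\ref{lem:structured-batch-bridge}.

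The remaining pieces need no changes, because none of them uses finite trajectory support. The clipping term $b^{\rm clip}$ uses only Markov's inequality, Pinsker's inequality and Jensen's inequality under the population KL budget. The trajectory-drift term $b^{\rm traj}$ uses the KL chain rule, which gives $\mathrm{TV}(P_{\bar\theta},P_{\theta_t})^2\le H\kappa/2$ for general measurable path spaces because the reward and transition kernels are policy independent. The advantage term $b^{\rm adv}$ uses Corollary~\ref{cor:gae-score-cancellation} with value realization. Together these give the contributions $8G_\pi^2A_{\max}^2\kappa[(1+\sqrt H)^2+(1+\delta^{-1})^2]$ and $a\,\mathbb E\Delta_t$.

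\textbf{Actor term.} The realized pair $(\theta_t,w_t)$ is admissible and $w_t$ lies in the critic cube. The unclipped tie rule is the one required by Lemma~\ref{lem:structured-batch-bridge}. Hence, pathwise, $\|b^{\rm batch}\|\le D_{a,n}$ even though the parameters were chosen adaptively. This gives $\mathbb E\|b^{\rm batch}\|^2\le\mathbb E[\mathbb E[D_{a,n}^2\mid\mathcal G_n]]\le e_a(B)$. The four-term inequality $\|\sum v_i\|^2\le4\sum\|v_i\|^2$ then yields $F^{\rm str}$ with leading term $4e_a(B)$.

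\textbf{Critic term.} We write $u_t=(\widehat P-P_{\bar\theta})\Xi_{w_t}+(P_{\bar\theta}-P_{\theta_t})\Xi_{w_t}$. The first summand is bounded by $D_{c,n}$. For the second, $\|(P_{\bar\theta}-P_{\theta_t})\Xi\|\le2Q_c\,\mathrm{TV}$ holds for any uniform envelope $Q_c$ of $\|\Xi_w(\tau)\|$. The two-term squared triangle inequality gives $2e_c(B)+8Q_c^2\cdot H\kappa/2=E_c^{\rm str}$.

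The stated explicit envelopes follow from the tabular structure. Each layer contributes one coordinate, so
\[
\|\Xi_w\|^2=\frac{4}{H^2}\sum_h (w_{s_h}-G_h)^2\le \frac{4(W+G_{\max})^2}{H}.
\]
When $G_h\ge0$, we have $|w-G|\le\max\{W,G_{\max}\}$, which gives the sharper form.

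\textbf{Plugging in and variances.} We substitute these residuals into the synchronous closure, exactly as in the proof of Theorem~\ref{thm:closed-loop-NK-explicit}. That proof uses $F$ and $E_c$ only through the two bias inequalities, so Corollary~\ref{cor:structured-main} follows. The variance claims are Step~8 of Lemma~\ref{lem:finite-batch-bridge} with the new envelope. For whole-trajectory samples this gives $Q_c^2/m_c$. A single token gradient $2(w_{s_h}-G_h)e_{s_h}$ lacks the $1/H$ averaging, so it needs its own bound.

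\textbf{Main obstacle.} The delicate point is measurability and adaptivity: $\theta_t$ and $w_t$ depend on the batch. This is handled because $D_{a,n}$ and $D_{c,n}$ are batch-measurable bounds valid simultaneously for all parameters, so we never take expectations at a data-dependent parameter.

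A second check is that the population comparisons in Lemma~\ref{lem:finite-batch-bridge} never used the ratio envelope $R_{\max}$ or the finite support. The clip and drift steps were explicitly stated without them. Finally, $A_{\max}$ must hold on the whole cube, not just at the visited critics, which is exactly what Lemma~\ref{lem:structured-batch-bridge} assumes.
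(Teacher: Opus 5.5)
Your proposal is correct and takes essentially the same route as the paper. You replace only the two empirical-law terms by the batch-measurable uniform deviations $D_{a,n}$ and $D_{c,n}$ of Lemma~\ref{lem:structured-batch-bridge}, keep the three population comparisons, and then use the two-term critic split and the layered-tabular envelope for $Q_c$.

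The paper supplies one detail that you omit. Steps 6--7 of Lemma~\ref{lem:finite-batch-bridge} obtained the policy-gradient identity and $P_\theta\Xi_w=\nabla_w\mathcal L_c(\theta,w)$ by differentiating a finite expectation, so your claim that nothing there uses finite support is slightly too strong. Once finite path support is dropped, these identities still hold for the following reason: given each finite state--action sequence, the reward law is policy independent, so $J$ and $\mathcal L_c$ are finite sums of differentiable path probabilities times bounded reward integrals.
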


\begin{proof}
In \eqref{eq:batch-four-biases}, replace only the expected squared empirical
actor term by $e_a(B)$ from \eqref{eq:structured-actor-empirical}. The other
three population comparisons are unchanged and use bounded integrands and the
KL chain rule, not the cardinality of the trajectory support. Their four-term
squared triangle inequality contributes $4e_a(B)$.
Continuous rewards cause no differentiation problem: conditional on each
finite state--action sequence, their law is policy independent. The expectations
defining $J$ and $\mathcal L_c$ are therefore finite sums of differentiable policy
path probabilities times bounded reward integrals. Their derivatives are the
same policy-gradient and critic-gradient expressions used above.
For the critic, the same decomposition gives
\[
 u_t=(\widehat P-P_{\bar\theta})\Xi_{w_t}
                      +(P_{\bar\theta}-P_{\theta_t})\Xi_{w_t}.
\]
The expected square of the first term is at most $e_c(B)$, and the square of the
second is at most $4Q_c^2\mathrm{TV}(P_{\bar\theta},P_{\theta_t})^2
\le2Q_c^2H\kappa$. The two-term squared triangle inequality proves the stated
$E_c^{\rm str}$. Each path visits $H$ distinct tabular coordinates, so
\[
 \|\Xi_w(\tau)\|^2
 =\frac4{H^2}\sum_{h=0}^{H-1}(w_{s_h}-G_h)^2.
\]
Bounding each residual by $W+G_{\max}$, or by $\max\{W,G_{\max}\}$ for
nonnegative targets, proves the displayed choices of $Q_c$ and the corresponding
complete-trajectory minibatch variance. This proof applies at deterministic
outer--inner indices, as does the batch-collection conditioning above.
\end{proof}

For example, if $M,d_V=O(H)$, $W,G_{\max},A_{\max}=O(H)$, and the score
and ratio envelopes are bounded independently of $H$, then the uncapped actor
bound is $e_a(B)=O(H^3\log(B+H)/B)$ and $e_c(B)=O(H/B)$.
These bounds depend on the dimensions of the specified gradient classes rather
than the number of complete action sequences, with all complexity terms explicit.

\newpage
\section{Open-Loop Convergence Results}
\label{appdx:open-loop}

This appendix establishes \emph{open-loop} finite-time guarantees for the actor updates in PPO-Clip.
An explicit \emph{bias decomposition} separates four effects that enter the stationarity analysis:
(i) critic-induced advantage error (controlled by critic tracking),
(ii) within-epoch trajectory drift (controlled by the population KL budget),
(iii) clipping-induced distortion, and
(iv) the finite-batch error that persists under adaptive reuse.
These bounds are ``open-loop'' because they treat critic tracking error as an input.
The closed-loop analysis combines the actor and critic inequalities to control this error.

\subsection{Open-Loop Convergence Results for the Actor}
\label{subsec:conv-results-actor}

We first present a stationarity bound for the underlying smooth RL objective $J(\theta)$ that makes the
$N$--$K$ update structure explicit. The theorem is stated at the level of inner iterates
$\{\theta_n^{(k)}\}$ and decomposes the deviation between the expected PPO update direction and the true policy
gradient into critic, drift, clipping, and finite-batch terms.

\begin{theorem}[Non-asymptotic stationarity of the actor (open-loop, explicit $N$--$K$ structure)]
\label{thm:actor-open-loop-NK}
Consider PPO-Clip with a fixed initial actor $\theta_0$, deterministic outer iterations $n=0,\ldots,N-1$, and $K$ actual inner actor updates per outer iteration, with $\theta_{n+1}=\theta_n^{(K)}$ and $\theta_n^{(0)}=\theta_n$.
Let the inner iterates satisfy
\[
\theta_n^{(k+1)}=\theta_n^{(k)}+\eta\,g_n^{(k)},\qquad k=0,\ldots,K-1,
\]
with stepsize $0<\eta\le 1/(4L_J)$.
Assume Assumptions~\ref{ass:smooth-J0} and~\ref{ass:actor-variance0} and the finite-batch sampling, coverage, raw-GAE, and population-KL premises of Lemma~\ref{lem:finite-batch-bridge}. The objective is $J(\theta)=H^{-1}\mathbb E_{P_\theta}G_0$, and the actor uses the $\gamma^h$-weighted, token-averaged sample direction defined in that lemma.
Define the (conditional) actor bias at inner step $(n,k)$ by
\[
b_{n,k}
:=\mathbb{E}[g_n^{(k)}\mid \mathcal{F}_{n,k}] - \nabla J(\theta_n^{(k)}).
\]
Then
\begin{align}
\label{eq:actor-open-loop-NK}
\frac{1}{NK}\sum_{n=0}^{N-1}\sum_{k=0}^{K-1}
\mathbb{E}\big[\|\nabla J(\theta_n^{(k)})\|^2\big]
~\le~
&\underbrace{\frac{4\big(J_{\max}-J(\theta_0)\big)}{\eta\, NK}}_{\textnormal{optimization}}
~+~
\underbrace{2L_J\,\eta\,\sigma_a^2}_{\textnormal{stochastic noise}} \notag\\
&\quad
+~
\underbrace{\frac{4}{NK}\sum_{n=0}^{N-1}\sum_{k=0}^{K-1}\mathbb{E}\|b_{n,k}\|^2}_{\textnormal{systematic bias}} .
\end{align}
Moreover, for each inner step $(n,k)$, the bias satisfies
\begin{equation}
\label{eq:actor-bias-open-loop-NK}
\mathbb{E}\|b_{n,k}\|^2
\le a\,\mathbb E[\Delta_{n,k}]+F_{B,\kappa,\delta},
\end{equation}
where $\Delta_{n,k}:=\|w_n^{(k)}-w^\star(\theta_n^{(k)})\|^2$ and
$a,F_{B,\kappa,\delta}$ are the explicit constants in
\eqref{eq:batch-bias-constants}. Advantages are recomputed from the stored trajectories
using the current pre-update critic $w_n^{(k)}$, as required by that lemma.
\end{theorem}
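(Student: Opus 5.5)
The plan is the standard biased-SGD descent argument for the $L_J$-smooth objective $J$ along the flattened sequence $t=nK+k$, followed by a direct application of Lemma~\ref{lem:finite-batch-bridge}. Because $\theta_{n+1}=\theta_n^{(K)}=\theta_{n+1}^{(0)}$, the iterates form one continuous sequence $\theta_0,\theta_1,\dots,\theta_T$. Nothing in the argument needs to distinguish inner from outer steps, apart from tracking which batch defines the filtration.

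\textbf{Step 1 (one-step ascent inequality).} By Assumption~\ref{ass:smooth-J0}, $J(\theta_{t+1})\ge J(\theta_t)+\eta\langle\nabla J(\theta_t),g_t\rangle-\frac{L_J\eta^2}{2}\|g_t\|^2$. Write $g_t=\nabla J(\theta_t)+b_t+\zeta_t$, where $\zeta_t:=g_t-\mathbb E[g_t\mid\mathcal F_t]$. Taking the conditional expectation given $\mathcal F_t$ makes the cross term with $\zeta_t$ vanish. By Assumption~\ref{ass:actor-variance0}, $\mathbb E[\|g_t\|^2\mid\mathcal F_t]\le\|\nabla J+b_t\|^2+\sigma_a^2\le 2\|\nabla J\|^2+2\|b_t\|^2+\sigma_a^2$. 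Young's inequality gives $\langle\nabla J,b_t\rangle\ge-\frac14\|\nabla J\|^2-\|b_t\|^2$. With $\eta\le1/(4L_J)$ we have $L_J\eta^2\le\eta/4$. Collecting terms yields
\[
\mathbb E[J(\theta_{t+1})\mid\mathcal F_t]\ge J(\theta_t)+\tfrac{\eta}{4}\|\nabla J(\theta_t)\|^2-c\,\eta\|b_t\|^2-\tfrac{L_J\eta^2}{2}\sigma_a^2 .
\]
The constants must be tuned so that, after multiplying by $4/\eta$, the bias coefficient is at most $4$ and the noise coefficient is $2L_J\eta$. One way to get this is to keep the exact term $\frac{L_J\eta^2}{2}\|\nabla J+b_t\|^2\le\frac{\eta}{8}\|\nabla J+b_t\|^2$ and use $\langle x,b\rangle-\frac18\|x+b\|^2\ge\frac14\|x\|^2-\|b\|^2$. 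If the constant comes out slightly off, I would adjust the Young weight instead.

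\textbf{Step 2 (telescoping).} Take total expectations and sum over $t<T$. Use $J(\theta_T)\le J_{\max}$, rearrange, and divide by $\eta T/4$. This gives \eqref{eq:actor-open-loop-NK} with optimization term $4(J_{\max}-J(\theta_0))/(\eta NK)$, noise term $2L_J\eta\sigma_a^2$, and bias term $\frac4T\sum\mathbb E\|b_t\|^2$. Every quantity is integrable because the envelopes $R_{\max}G_\pi A_{\max}$ are bounded.

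\textbf{Step 3 (bias bound).} At each deterministic index $(n,k)$, inequality \eqref{eq:actor-bias-open-loop-NK} is exactly \eqref{eq:actor-batch-bias} of Lemma~\ref{lem:finite-batch-bridge}. That lemma applies because its premises (recomputed raw GAE with the pre-update critic $w_n^{(k)}$, the population KL budget at every inner iterate, coverage, and value realization) are assumed here, and because its pathwise comparison \eqref{eq:adaptive-tv-comparison} is valid for adaptively chosen $(\theta_t,w_t)$.

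\textbf{Main obstacle.} The most delicate point is the filtration bookkeeping in Step 1. The conditional mean $\mathbb E[g_t\mid\mathcal F_t]$ is the empirical mean over the stored batch, not a population gradient. The descent argument therefore has to use $b_t$ exactly as defined relative to $\mathcal F_t$, which contains the whole batch. The centered noise then has zero conditional mean and variance at most $\sigma_a^2$, even though successive inner steps reuse the same data. All dependence on the reused batch is pushed into $b_t$, and that dependence is handled uniformly by Lemma~\ref{lem:finite-batch-bridge}.
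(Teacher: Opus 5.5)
Your proposal is correct and is essentially the paper's proof. You flatten to $t=nK+k$, use the $L_J$-smoothness lower bound with the bias/variance split relative to the batch-containing filtration $\mathcal F_t$, telescope with $J\le J_{\max}$, and read \eqref{eq:actor-bias-open-loop-NK} off Lemma~\ref{lem:finite-batch-bridge}. On the constants: the paper uses the symmetric split $\langle\nabla J,b_t\rangle\ge-\frac12\|\nabla J\|^2-\frac12\|b_t\|^2$, which gives $\frac{\eta}{2}-L_J\eta^2\ge\frac{\eta}{4}$ and $\frac{\eta}{2}+L_J\eta^2\le\eta$ directly; your $\frac14$/$1$ split gives $\frac{\eta}{2}$ and $\frac{5\eta}{4}$, which still works if you multiply by $2/\eta$ (weakening to $\frac{\eta}{4}$ would produce a factor $5$), while your fallback inequality $\langle x,b\rangle-\frac18\|x+b\|^2\ge\frac14\|x\|^2-\|b\|^2$ fails at $b=-x$ and is valid only with the leading $\|x\|^2$ from the ascent term added to its left side.
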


\paragraph{Interpretation.}
Theorem~\ref{thm:actor-open-loop-NK} averages stationarity over all \emph{inner} iterates $\{\theta_n^{(k)}\}$, i.e.,
over the total number of actor gradient steps $NK$. The bound has the standard nonconvex SGD form: an
$O(1/(\eta NK))$ optimization term, an $O(\eta)$ noise term, and an additive bias term.
The PPO-specific effects appear in \eqref{eq:actor-bias-open-loop-NK}:
critic-induced advantage error enters through the expected tracking error and the GAE
coefficient $\widetilde C_{\lambda,H}$; trajectory drift and clipping are controlled by $\kappa$
and $\delta$; and adaptive reuse contributes an explicit finite-batch term through
$\epsilon_B=\min\{1,S_{\rm traj}/(4B)\}$.
The bound is \emph{open-loop} because critic tracking remains to be controlled by its
own update recursion. The finite-batch term is not removed by taking more inner updates.

\begin{proof}[Proof of Theorem~\ref{thm:actor-open-loop-NK}]
We prove \eqref{eq:actor-open-loop-NK} first (smoothness $\Rightarrow$ stationarity), and then
\eqref{eq:actor-bias-open-loop-NK} (PPO-specific bias decomposition).

\paragraph{Part I: from smoothness to the open-loop stationarity bound.}
Let $T := NK$ and flatten the double index $(n,k)$ into a single time index
\[
t := nK + k \in \{0,1,\dots,T-1\}.
\]
Define the single-index iterates and quantities
\[
\theta_t := \theta_n^{(k)},\qquad g_t := g_n^{(k)},\qquad b_t := b_{n,k},\qquad \mathcal{F}_t := \mathcal{F}_{n,k}.
\]
Then the actor recursion becomes
\[
\theta_{t+1} = \theta_t + \eta g_t,\qquad t=0,\dots,T-1,
\]
and the averaged stationarity measure satisfies
\[
\frac{1}{T}\sum_{t=0}^{T-1}\mathbb{E}\big[\|\nabla J(\theta_t)\|^2\big]
=
\frac{1}{NK}\sum_{n=0}^{N-1}\sum_{k=0}^{K-1}
\mathbb{E}\big[\|\nabla J(\theta_n^{(k)})\|^2\big].
\]

By $L_J$-smoothness of $J$ (Assumption~\ref{ass:smooth-J0}), for any $x,y$,
\begin{equation}
\label{eq:smooth-lower-actor-open}
J(y)\;\ge\;J(x) + \langle \nabla J(x), y-x\rangle - \frac{L_J}{2}\|y-x\|^2.
\end{equation}
Apply \eqref{eq:smooth-lower-actor-open} with $x=\theta_t$ and $y=\theta_{t+1}=\theta_t+\eta g_t$:
\begin{equation}
\label{eq:one-step}
J(\theta_{t+1})
\;\ge\;
J(\theta_t)
+\eta\langle \nabla J(\theta_t), g_t\rangle
-\frac{L_J\eta^2}{2}\|g_t\|^2.
\end{equation}
Taking conditional expectation given $\mathcal{F}_t$ and using that $\theta_t$ is $\mathcal{F}_t$-measurable yields
\begin{equation}
\label{eq:cond-exp-actor-open}
\mathbb{E}\!\left[J(\theta_{t+1})\mid \mathcal{F}_t\right]
\;\ge\;
J(\theta_t)
+\eta\left\langle \nabla J(\theta_t), \mathbb{E}[g_t\mid \mathcal{F}_t]\right\rangle
-\frac{L_J\eta^2}{2}\,\mathbb{E}\!\left[\|g_t\|^2\mid \mathcal{F}_t\right].
\end{equation}
By definition of the (conditional) bias,
\[
b_t := \mathbb{E}[g_t\mid \mathcal{F}_t] - \nabla J(\theta_t)
\qquad\Longleftrightarrow\qquad
\mathbb{E}[g_t\mid \mathcal{F}_t] = \nabla J(\theta_t) + b_t,
\]
so
\[
\left\langle \nabla J(\theta_t), \mathbb{E}[g_t\mid \mathcal{F}_t]\right\rangle
=
\|\nabla J(\theta_t)\|^2 + \langle \nabla J(\theta_t), b_t\rangle.
\]
Using $2\langle u,v\rangle \ge -\|u\|^2-\|v\|^2$ with $u=\nabla J(\theta_t)$ and $v=b_t$, we get
\[
\langle \nabla J(\theta_t), b_t\rangle
\ge
-\frac12\|\nabla J(\theta_t)\|^2 - \frac12\|b_t\|^2.
\]
The conditional variance assumption gives
\[
\mathbb{E}\!\left[\|g_t\|^2\mid \mathcal{F}_t\right]
=
\mathbb{E}\!\left[\|g_t-\mathbb{E}[g_t\mid\mathcal F_t]\|^2\mid \mathcal{F}_t\right]
+\|\nabla J(\theta_t)+b_t\|^2
\le
\sigma_a^2+2\|\nabla J(\theta_t)\|^2+2\|b_t\|^2.
\]
Substituting this bound into \eqref{eq:cond-exp-actor-open} yields
\begin{equation}
\label{eq:progress}
\mathbb{E}\!\left[J(\theta_{t+1})\mid \mathcal{F}_t\right]
\;\ge\;
J(\theta_t)
+
\left(\frac{\eta}{2}-L_J\eta^2\right)\|\nabla J(\theta_t)\|^2
-\left(\frac{\eta}{2}+L_J\eta^2\right)\|b_t\|^2
-\frac{L_J\eta^2}{2}\sigma_a^2.
\end{equation}
Since $\eta\le 1/(4L_J)$, we have
$\frac{\eta}{2}-L_J\eta^2\ge \frac{\eta}{4}$ and
$\frac{\eta}{2}+L_J\eta^2\le \eta$. Rearranging \eqref{eq:progress} gives
\begin{equation}
\label{eq:per-step-grad}
\frac{\eta}{4}\|\nabla J(\theta_t)\|^2
\;\le\;
\mathbb{E}\!\left[J(\theta_{t+1})\mid \mathcal{F}_t\right] - J(\theta_t)
+\eta\|b_t\|^2
+\frac{L_J\eta^2}{2}\sigma_a^2.
\end{equation}
Taking total expectation in \eqref{eq:per-step-grad} and summing over $t=0,\dots,T-1$ yields the telescoping bound
\begin{align}
\frac{\eta}{4}\sum_{t=0}^{T-1}\mathbb{E}\|\nabla J(\theta_t)\|^2
~\le~
\mathbb{E}[J(\theta_T)] - J(\theta_0)
+\eta\sum_{t=0}^{T-1}\mathbb{E}\|b_t\|^2
+\frac{L_J\eta^2}{2}T\sigma_a^2.
\label{eq:telescope}
\end{align}
By the bounded-above assumption (Assumption~\ref{ass:smooth-J0}), $\mathbb{E}[J(\theta_T)]\le J_{\max}$.
Plugging this into \eqref{eq:telescope} gives
\[
\frac{\eta}{4}\sum_{t=0}^{T-1}\mathbb{E}\|\nabla J(\theta_t)\|^2
\le
\big(J_{\max}-J(\theta_0)\big)
+\eta\sum_{t=0}^{T-1}\mathbb{E}\|b_t\|^2
+\frac{L_J\eta^2}{2}\,T\sigma_a^2.
\]
Divide both sides by $\eta T/4$:
\[
\frac{1}{T}\sum_{t=0}^{T-1}\mathbb{E}\|\nabla J(\theta_t)\|^2
\le
\frac{4(J_{\max}-J(\theta_0))}{\eta T}
+2L_J\eta\sigma_a^2
+\frac{4}{T}\sum_{t=0}^{T-1}\mathbb{E}\|b_t\|^2.
\]
Finally, substitute $T=NK$ and restore the double index $t\leftrightarrow(n,k)$ to obtain \eqref{eq:actor-open-loop-NK}.

\paragraph{Part II: decomposition and control of the actor bias.}
Fix an inner step $(n,k)$ and condition on its pre-update history $\mathcal F_{n,k}$,
which contains the stored rollout batch and the current actor and critic.
Lemma~\ref{lem:finite-batch-bridge} identifies the conditional minibatch mean with
the empirical clipped direction, not with a population gradient. It decomposes its
bias into four terms: empirical-to-population error, population clipping distortion,
whole-trajectory policy drift, and current-policy GAE error.
The empirical total-variation bound holds uniformly over bounded integrands, so it
continues to hold when both parameters have been fitted adaptively to the same batch.
The trajectory comparison controls old-policy continuation errors in GAE in addition
to state-visitation drift. The random advantage label and its sign remain inside the
clipping selector.
Applying \eqref{eq:actor-batch-bias} at this deterministic index gives
\[
\mathbb E\|b_{n,k}\|^2
\le 4G_\pi^2L_V^2\widetilde C_{\lambda,H}^2\,\mathbb E[\Delta_{n,k}]
   +F_{B,\kappa,\delta},
\]
which gives \eqref{eq:actor-bias-open-loop-NK}.
This completes the proof.
\end{proof}

\subsection{Open-Loop Convergence for the Critic}

\begin{theorem}[Non-asymptotic tracking of the critic (open-loop, explicit $N$--$K$ structure)]
\label{thm:critic-open-loop-NK}
Consider $T=NK$ actual joint actor--critic updates, with
$\theta_{t+1}=\theta_t+\eta g_t$ and $w_{t+1}=w_t-\beta_c h_t$.
For $f_t(w)=\mathcal L_c(\theta_t,w)$, assume the convexity, $L_c$-smoothness and selected-minimizer quadratic-growth premises of Assumption~\ref{ass:critic0}, with modulus $\mu_q>0$.
Let $w_t^\star=w^\star(\theta_t)$ be measurable and $L_{w^\star}$-Lipschitz along the actor iterates.
The pre-update history $\mathcal F_t$ includes the stored batch and current parameters. Define
\[
 e_t^c:=\mathbb E[h_t\mid\mathcal F_t]-\nabla f_t(w_t),\qquad
 \xi_t:=h_t-\mathbb E[h_t\mid\mathcal F_t],
 \quad \mathbb E[\|\xi_t\|^2\mid\mathcal F_t]\le\sigma_c^2.
\]
Thus $e_t^c$ is a predictable bias, while $\xi_t$ is conditionally centered. No independence from the actor direction is required.
Set $\Delta_t=\|w_t-w_t^\star\|^2$, $\mathcal E_c=T^{-1}\sum_{t<T}\mathbb E\|e_t^c\|^2$, and suppose
\[
 0<\beta_c\le\min\left\{\frac{\mu_q}{2L_c^2},\frac1{\mu_q}\right\}.
\]
Then
\begin{align}
\frac1T\sum_{t<T}\mathbb E\Delta_t
\le{}&\underbrace{\frac{4\mathbb E\Delta_0}{\mu_q\beta_c T}}_{\text{initialization}}
+\underbrace{\frac{5\beta_c\sigma_c^2}{\mu_q}}_{\text{noise}}
+\underbrace{\frac{40\mathcal E_c}{\mu_q^2}}_{\text{sampling/target bias}}\notag\\
&+\underbrace{\frac{40L_{w^\star}^2\eta^2}{\mu_q^2\beta_c^2}
\frac1T\sum_{t<T}\mathbb E\|g_t\|^2}_{\text{actor-induced target drift}}.
\label{eq:critic-open-loop-NK}
\end{align}
For the finite-batch protocol, Lemma~\ref{lem:finite-batch-bridge} proves $\mathcal E_c\le E_c$.
\end{theorem}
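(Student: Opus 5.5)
We prove Theorem~\ref{thm:critic-open-loop-NK} with a one-step Lyapunov recursion for $\Delta_t=\|w_t-w_t^\star\|^2$, followed by telescoping and averaging. The first step splits the moving target. Write
\[
w_{t+1}-w_{t+1}^\star=(w_{t+1}-w_t^\star)-(w_{t+1}^\star-w_t^\star).
\]
Apply Young's inequality with parameter $\epsilon=\mu_q\beta_c/2$:
\[
\Delta_{t+1}\le(1+\epsilon)\|w_{t+1}-w_t^\star\|^2+(1+\epsilon^{-1})\|w_{t+1}^\star-w_t^\star\|^2.
\]
The Lipschitz property of $w^\star$ gives $\|w_{t+1}^\star-w_t^\star\|^2\le L_{w^\star}^2\eta^2\|g_t\|^2$. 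Since $\beta_c\le1/\mu_q$, we have $\epsilon\le 1/2$, so the prefactor satisfies $1+\epsilon^{-1}\le 3/\epsilon=6/(\mu_q\beta_c)$. This is the source of the drift term. The constant $40/(\mu_q^2\beta_c^2)$ will appear after dividing by the contraction rate $\mu_q\beta_c/4$ at the end.

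Next we handle the frozen-target step $\|w_t-\beta_c h_t-w_t^\star\|^2$. Write $h_t=\nabla f_t(w_t)+e_t^c+\xi_t$. Conditioning on $\mathcal F_t$, the cross term with $\xi_t$ vanishes, leaving
\[
\|w_t-w_t^\star\|^2-2\beta_c\langle\nabla f_t(w_t)+e_t^c,w_t-w_t^\star\rangle+\beta_c^2\|\nabla f_t(w_t)+e_t^c\|^2+\beta_c^2\sigma_c^2 .
\]
For the gradient inner product we use local convexity together with quadratic growth from the selected stationary minimizer. Convexity gives $\langle\nabla f_t(w_t),w_t-w_t^\star\rangle\ge f_t(w_t)-f_t(w_t^\star)$, and quadratic growth gives $f_t(w_t)-f_t(w_t^\star)\ge \tfrac{\mu_q}{2}\Delta_t$. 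Adding $L_c$-smoothness, which yields $\|\nabla f_t(w_t)\|^2\le L_c^2\Delta_t$ since $\nabla f_t(w_t^\star)=0$, and the restriction $\beta_c\le\mu_q/(2L_c^2)$, the gradient terms contract $\Delta_t$ by $1-\Theta(\mu_q\beta_c)$. The bias inner product is bounded by $2\beta_c|\langle e_t^c,w_t-w_t^\star\rangle|\le \tfrac{\mu_q\beta_c}{4}\Delta_t+\tfrac{4\beta_c}{\mu_q}\|e_t^c\|^2$, and $\beta_c^2\|e_t^c\|^2$ is absorbed into this term.

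The main obstacle is bookkeeping the constants so that the contraction survives both Young splittings and still yields the stated numbers (4, 5, 40, 40). I will aim for a per-step inequality of the form
\[
\mathbb E\Delta_{t+1}\le(1-\tfrac{\mu_q\beta_c}{4})\mathbb E\Delta_t+\tfrac54\beta_c^2\sigma_c^2+\tfrac{10\beta_c}{\mu_q}\mathbb E\|e_t^c\|^2+\tfrac{10L_{w^\star}^2\eta^2}{\mu_q\beta_c}\mathbb E\|g_t\|^2 .
\]
If the constants come out slightly looser, I will adjust the choice of $\epsilon$, using $(1+\epsilon)(1-\tfrac{\mu_q\beta_c}{2})\le 1-\tfrac{\mu_q\beta_c}{4}$ together with $(1+\epsilon)\le 5/4$ for the noise term.

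Finally, sum the per-step inequality over $t<T$ and drop $\mathbb E\Delta_T\ge0$, which gives
\[
\tfrac{\mu_q\beta_c}{4}\sum_{t<T}\mathbb E\Delta_t\le\mathbb E\Delta_0+\sum_{t<T}(\cdots).
\]
Dividing by $\mu_q\beta_cT/4$ produces the four terms of \eqref{eq:critic-open-loop-NK}. The finite-batch remark follows directly from \eqref{eq:critic-batch-bias} in Lemma~\ref{lem:finite-batch-bridge}, which gives $\mathbb E\|e_t^c\|^2=\mathbb E\|u_t\|^2\le E_c$ at each index.
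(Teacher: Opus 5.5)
Your architecture is sound: a pathwise Young split of the target displacement, conditional centering of $\xi_t$ inside the frozen step, then telescoping. The per-step recursion you aim for is exactly the one the paper proves. The gap is that your stated choices overspend the available contraction, so the recursion as you set it up does not contract at all.

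\textbf{Where the budget fails.} Write $u_t:=w_t-w_t^\star-\beta_c\nabla f_t(w_t)$. The restriction $\beta_c\le\mu_q/(2L_c^2)$ only guarantees $\|u_t\|^2\le(1-\mu_q\beta_c/2)\Delta_t$. You spend $\mu_q\beta_c/4$ of this on the bias cross term, so the frozen step contracts at best by $1-\mu_q\beta_c/4$. You then multiply by $1+\epsilon=1+\mu_q\beta_c/2$ from the drift split. Since $\mu_q\beta_c\le1$, this gives $(1+\mu_q\beta_c/2)(1-\mu_q\beta_c/4)\ge1+\mu_q\beta_c/8>1$. Your fallback $(1+\epsilon)(1-\mu_q\beta_c/2)\le1-\mu_q\beta_c/4$ silently assumes the frozen step still contracts at rate $\mu_q\beta_c/2$ after the bias has been absorbed, which is false in your arrangement.

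\textbf{A second trap in your expansion.} If you bound $\beta_c^2\|\nabla f_t(w_t)+e_t^c\|^2\le2\beta_c^2\|\nabla f_t(w_t)\|^2+2\beta_c^2\|e_t^c\|^2$, the gradient coefficient becomes $1-\mu_q\beta_c+2L_c^2\beta_c^2$. The step restriction only bounds this by $1$, so no contraction is guaranteed even before the bias. The bias must be split against $u_t$, treating $\|u_t-\beta_c e_t^c\|^2$ as one object, not against $w_t-w_t^\star$.

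\textbf{How the paper avoids this.} It uses a single Young inequality. It writes $w_{t+1}-w_{t+1}^\star=(u_t-\beta_c\xi_t)+(d_t-\beta_c e_t^c)$ with $d_t=w_t^\star-w_{t+1}^\star$, and applies Young once with $\alpha=\mu_q\beta_c/4$. It uses conditional centering on the first group and $\|d_t-\beta_c e_t^c\|^2\le2\|d_t\|^2+2\beta_c^2\|e_t^c\|^2$ on the second. Then $(1+\alpha)(1-\mu_q\beta_c/2)\le1-\mu_q\beta_c/4$, $1+\alpha\le5/4$ and $2(1+1/\alpha)\le10/(\mu_q\beta_c)$ give your displayed recursion verbatim. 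Grouping the bias with the drift means only one slack is paid out of the $\mu_q\beta_c/2$ contraction.

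\textbf{Rescuing your two-split scheme.} It also works, provided the two slacks together total at most about $\mu_q\beta_c/4$. For example, use $\|u_t-\beta_c e_t^c\|^2\le(1+s)\|u_t\|^2+(1+1/s)\beta_c^2\|e_t^c\|^2$ with $s=5\mu_q\beta_c/36$, and an outer $\epsilon=\mu_q\beta_c/9$. This gives:
\begin{itemize}
\item contraction $1-\mu_q\beta_c/4$;
\item noise factor $10/9$;
\item bias coefficient at most $(82/9)\beta_c/\mu_q$;
\item drift coefficient at most $10/(\mu_q\beta_c)$.
\end{itemize}
After telescoping, these yield constants no larger than the stated $4,5,40,40$.
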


\paragraph{Interpretation.}
The critic updates at each actual inner actor step and tracks the current-policy population minimizer.
A minibatch from the stored rollout batch is centered on its empirical Monte Carlo loss.
The discrepancy from the current-policy loss appears as the explicit bias floor $40\mathcal E_c/\mu_q^2$; it is not treated as martingale noise.
The target-drift term scales as $\eta^2/\beta_c^2$.

\begin{proof}[Proof of Theorem~\ref{thm:critic-open-loop-NK}]
Flatten $t=nK+k$ and write $e_t=w_t-w_t^\star$, $D_t=\mathbb E\|e_t\|^2$.
All subsequent expectations are taken with respect to the stated empirical-data filtration.

\paragraph{Step 1: one-step recursion for the tracking error.}
Define the history-measurable frozen-policy step $u_t=e_t-\beta_c\nabla f_t(w_t)$ and the target displacement $d_t=w_t^\star-w_{t+1}^\star$.
Then
\begin{equation}
 e_{t+1}=u_t-\beta_c\xi_t+(d_t-\beta_c e_t^c).
 \label{eq:e-rec}
\end{equation}
The displacement $d_t$ may depend on the current actor noise and be correlated with $\xi_t$.

\paragraph{Step 2: separate contraction, noise, and displacement.}
For $\alpha>0$, Young's inequality and conditional centering give
\begin{align}
\mathbb E[\|e_{t+1}\|^2\mid\mathcal F_t]
\le{}&(1+\alpha)(\|u_t\|^2+\beta_c^2\sigma_c^2)\notag\\
&+2(1+1/\alpha)\left(\mathbb E[\|d_t\|^2\mid\mathcal F_t]
+\beta_c^2\|e_t^c\|^2\right).
\label{eq:split}
\end{align}
Conditional centering cancels the cross term between the measurable $u_t$ and $\xi_t$, without requiring independence between the noise and the target displacement.

\paragraph{Step 3: contract the frozen-policy gradient step.}
Convexity, quadratic growth around the selected stationary minimizer, and smoothness imply
\[
 \langle\nabla f_t(w_t),e_t\rangle\ge\frac{\mu_q}{2}\|e_t\|^2,
 \qquad \|\nabla f_t(w_t)\|\le L_c\|e_t\|.
\]
Consequently
\begin{equation}
 \|u_t\|^2\le(1-\mu_q\beta_c+L_c^2\beta_c^2)\|e_t\|^2
 \le(1-\mu_q\beta_c/2)\|e_t\|^2.
 \label{eq:gd-contraction}
\end{equation}
The convex region must contain the current iterate and its selected stationary minimizer, as stated in the assumptions.

\paragraph{Step 4: control movement of the critic target.}
The minimizer Lipschitz property and the actor recursion give
\begin{equation}
 \|d_t\|^2\le L_{w^\star}^2\eta^2\|g_t\|^2.
 \label{eq:drift-bound}
\end{equation}
This holds pathwise even when the two update directions share a minibatch.

\paragraph{Step 5: choose the contraction slack.}
Take $\alpha=\mu_q\beta_c/4$. Since $0<\mu_q\beta_c\le1$,
\[
 (1+\alpha)(1-\mu_q\beta_c/2)\le1-\mu_q\beta_c/4,
 \quad 1+\alpha\le5/4,\quad 1+1/\alpha\le5/(\mu_q\beta_c).
\]
Substituting into \eqref{eq:split} and averaging yields
\begin{align}
D_{t+1}\le{}&(1-\mu_q\beta_c/4)D_t+\tfrac54\beta_c^2\sigma_c^2
+\frac{10\beta_c}{\mu_q}\mathbb E\|e_t^c\|^2\notag\\
&+\frac{10L_{w^\star}^2\eta^2}{\mu_q\beta_c}\mathbb E\|g_t\|^2.
\label{eq:delta-rec}
\end{align}

\paragraph{Step 6: telescope over actual updates.}
Sum \eqref{eq:delta-rec}, discard the nonnegative $D_T$, and divide by $\mu_q\beta_c T/4$.
This gives \eqref{eq:critic-open-loop-NK}. For reused Monte Carlo data, Lemma~\ref{lem:finite-batch-bridge} bounds the population discrepancy by $E_c$ directly from the sampling protocol.
\end{proof}

\section{Closed-Loop Convergence Results}
\label{appdx:closed-loop}

\begin{proof}[Proof of Theorem~\ref{thm:closed-loop-NK-explicit}]
We combine the actor and critic recursions, accounting explicitly for finite-batch error and trajectory mismatch.

\paragraph{Step 0: count actual updates and define averages.}
Let $T=NK$ and $t=nK+k$; both actor and critic execute every update counted here.
Define
\begin{align}
 \bar G&=T^{-1}\sum_{t<T}\mathbb E\|\nabla J(\theta_t)\|^2,
 &\bar\Delta&=T^{-1}\sum_{t<T}\mathbb E\Delta_t,\notag\\
 \bar B&=T^{-1}\sum_{t<T}\mathbb E\|b_t\|^2,
 &\bar S&=T^{-1}\sum_{t<T}\mathbb E\|g_t\|^2,
 \label{eq:closed-averages}
\end{align}
where $b_t=\mathbb E[g_t\mid\mathcal F_t]-\nabla J(\theta_t)$ and the history includes the stored batch.
The averages count only executed updates. Appendix~\ref{appdx:stopping} treats predictable stopping.

\paragraph{Step 1: open-loop actor progress.}
The smoothness argument in Theorem~\ref{thm:actor-open-loop-NK} gives
\begin{equation}
 \bar G\le A_0+4\bar B,
 \qquad A_0=\frac{4(J_{\max}-J(\theta_0))}{\eta T}+2L_J\eta\sigma_a^2.
 \label{eq:actor-open-loop-bar}
\end{equation}
Here the conditional bias uses the mean under resampling from the stored batch, which can differ from the population direction.

\paragraph{Step 2: control empirical, clipping, trajectory, and advantage errors.}
Lemma~\ref{lem:finite-batch-bridge} first compares the empirical clipped direction with its population counterpart. It then bounds clipping distortion with the random advantage sign inside the expectation, compares complete trajectory laws, and applies the on-policy GAE lemma.
It proves
\begin{equation}
 \bar B\le a\bar\Delta+F,
 \qquad a=4G_\pi^2L_V^2\widetilde C_{\lambda,H}^2.
 \label{eq:Bbar-closed}
\end{equation}
Here $F$ is exactly \eqref{eq:finite-batch-F}. The factor four comes from the four-term squared-norm bound.
In particular, the finite-batch contribution remains even with an exact critic, identical behavior/current policies, and inactive clipping.
The trajectory comparison accounts for the behavior-policy continuation law, since a single action ratio does not make multistep GAE on-policy.

\paragraph{Step 3: combine actor bias and critic tracking.}
Combining the preceding bounds yields
\begin{equation}
 \bar G\le A_0+4a\bar\Delta+4F.
 \label{eq:Gbar-in-Deltabar}
\end{equation}

\paragraph{Step 4: open-loop critic tracking with the derived target bias.}
Theorem~\ref{thm:critic-open-loop-NK} and the critic part of Lemma~\ref{lem:finite-batch-bridge} give
\begin{equation}
 \bar\Delta\le U+d\bar S,
 \quad U=\frac{4\mathbb E\Delta_0}{\mu_q\beta_c T}+\frac{5\beta_c\sigma_c^2}{\mu_q}+\frac{40E_c}{\mu_q^2},
 \quad d=\frac{40L_{w^\star}^2\eta^2}{\mu_q^2\beta_c^2}.
 \label{eq:critic-open-loop-bar}
\end{equation}
The critic uses stored Monte Carlo targets, whose batch and behavior-policy discrepancies are bounded by $E_c$.

\paragraph{Step 5: bound the update second moment.}
Conditional variance and $\|u+v\|^2\le2\|u\|^2+2\|v\|^2$ imply
\[
 \bar S\le\sigma_a^2+2\bar G+2\bar B
 \le\sigma_a^2+2A_0+10a\bar\Delta+10F.
\]
Neither population-unbiased updates nor independent actor and critic minibatches are required.

\paragraph{Step 6: solve the critic inequality.}
Substitute into \eqref{eq:critic-open-loop-bar}. With $\rho=10ad<1$,
\begin{equation}
 \bar\Delta\le\frac{U+d(\sigma_a^2+2A_0+10F)}{1-\rho}.
 \label{eq:closed-critic-solved}
\end{equation}
This is the critic bound in the theorem.

\paragraph{Step 7: conclude actor stationarity.}
Substitute \eqref{eq:closed-critic-solved} into \eqref{eq:Gbar-in-Deltabar} and collect terms:
\[
 \bar G\le\frac{(1-2ad)A_0+4F+4aU+4ad\sigma_a^2}{1-10ad}
 \le\frac{A_0+4F+4aU+4ad\sigma_a^2}{1-\rho}.
\]
The second inequality discards a nonpositive term because $A_0\ge0$.
This proves both bounds with the stated constants.
\end{proof}

\section{Rates and Rollout-Budget Consequences}
\label{appdx:rate-resources}

\begin{proof}[Proof of Corollary~\ref{cor:structured-main}]
Apply Corollary~\ref{cor:structured-batch-replacement} to the actual conditional
means. The resulting bounds are $\bar B\le a\bar\Delta+F_{\rm str}$ and
$T^{-1}\sum_{t<T}\mathbb E\|u_t\|^2\le E_{c,\rm str}$. The proof in Appendix~\ref{appdx:closed-loop} otherwise uses only these
two inequalities, bounded conditional variances, and the stated smoothness and
critic geometry. Substituting the structured residuals gives both claims.
Finite complete-path support is needed only for the empirical-law comparison
in the finite-support bridge. Bounded rewards, finite actions, and the tabular
critic suffice for the population identities and KL comparison in the structured bridge.
\end{proof}

\begin{proof}[Proof of Corollary~\ref{cor:rate-resources}]
For fixed problem constants and $1-\rho\ge1-\rho_0>0$, the two bounds are at most
constant multiples of
\begin{equation}\label{eq:rate-terms}
 \frac1{\eta T}+\eta+\frac1{\beta_c T}+\beta_c
 +\left(\frac\eta{\beta_c}\right)^2+v(B)+\kappa,
\end{equation}
where $v(B)=S_{\rm traj}/B$ for the finite-support bridge and
$v(B)=\log(eB)/B$ for a fixed structured problem.
Indeed $A_0=O((\eta T)^{-1}+\eta)$,
$U=O((\beta_c T)^{-1}+\beta_c+v(B)+\kappa)$, and
$d=O((\eta/\beta_c)^2)$; the remaining products are bounded by the same
expression when these quantities are bounded. Under the stated schedule their
orders are respectively
$T^{-2/5},T^{-3/5},T^{-3/5},T^{-2/5},T^{-2/5}$.
The prescribed batches give $v(B_T)=O(T^{-2/5})$ in either case.
Choosing $c_\beta$ below the critic restriction and then choosing $c_\eta$
sufficiently small enforces all step restrictions and $\rho\le\rho_0$ for every
$T\ge1$; if $a=0$ the coupling restriction is vacuous.
This establishes the two rates. A sufficiently large
$T=O(\varepsilon^{-5/2})$, rounded up to a multiple of fixed $K$, suffices.
Substitution in $Q=TB_T/K$ gives the claimed counts and the extra logarithm
in the structured case. These are averages of squared errors, not bounds on
$\mathbb E\|\nabla J\|$ at the same numerical tolerance.
With $m$ trajectories per minibatch, the actor and critic each process $mT$
trajectory samples, or $HmT$ time-indexed sample contributions; batch reuse
reduces fresh sampling without removing this optimization cost.
\end{proof}

\paragraph{A sufficient population-KL certificate.}
The rate corollary requires a vanishing population budget; a small empirical KL
alone does not supply it. One sufficient deterministic rule is available when
$\|\nabla\log\pi_\theta(a\mid s)\|\le G_\pi$ on every line segment joining
within-epoch actor parameters and $\|g_t\|\le G_g$ along every update.
Then $\|\theta_{n}^{(k)}-\bar\theta_n\|\le K\eta G_g$ and the fundamental theorem
of calculus gives, pointwise in state,
\[
 \mathrm{KL}(\pi_{\bar\theta_n}\Vert\pi_{\theta_n^{(k)}})
 \le G_\pi\|\theta_n^{(k)}-\bar\theta_n\|
 \le G_\pi G_g K\eta.
\]
Thus fixed $K$ and the schedule above give the sufficient budget
$\kappa_T=G_\pi G_gK\eta_T=O(T^{-3/5})$.
The score condition on these segments is an additional sufficient certificate,
not a consequence of a bound only at the iterates. The examples in
Appendices~\ref{appdx:finite-batch-witness} and~\ref{appdx:horizon-witness}
verify the stronger quadratic certificate $\kappa=D K^2\eta^2$ globally.

\begin{proposition}[A fixed-rollout-budget bound]
\label{prop:fixed-budget}
Suppose the synchronous proof supplies
$F\le f_B/B+f_\kappa\kappa$ and
$E_c\le e_B/B+e_\kappa\kappa$, with nonnegative constants, and an
actual-iterate certificate $\kappa\le D K^2\eta^2$.
Fix $\eta,\beta_c,m$ satisfying the theorem, with $\rho=10ad<1$,
and use common envelope, variance, and initialization constants over the candidate
batch sizes and epoch lengths.
For an exact fresh-rollout budget $Q=NB$ and $T=NK=KQ/B$, define
\begin{align*}
 P&=\frac{4(J_{\max}-J(\theta_0))}{\eta}
       +\frac{16a\mathbb E\Delta_0}{\mu_q\beta_c},\\
 V&=2L_J\eta\sigma_a^2+\frac{20a\beta_c\sigma_c^2}{\mu_q}
        +4ad\sigma_a^2,\\
 C_B&=4f_B+\frac{160a e_B}{\mu_q^2},\qquad
 C_\kappa=4f_\kappa+\frac{160a e_\kappa}{\mu_q^2}.
\end{align*}
Then
\begin{equation}\label{eq:fixed-budget-bound}
 \bar G\le\frac{P B/(KQ)+C_B/B+V+C_\kappa D K^2\eta^2}{1-\rho}.
\end{equation}
For positive coefficients, the continuous minimizers of the variable terms,
holding the other quantities fixed, are
\begin{equation}\label{eq:budget-relaxations}
 B_{\rm cont}(K)=\sqrt{\frac{C_BKQ}{P}},\qquad
 K_{\rm cont}(B)=\left(\frac{PB}{2C_\kappa D\eta^2Q}\right)^{1/3}.
\end{equation}
These optimize this upper bound only. Actual choices must satisfy the integer,
coverage, sampling, and stepsize requirements of the specified protocol.
\end{proposition}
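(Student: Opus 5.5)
The plan is to re-enter the closed-loop proof of Theorem~\ref{thm:closed-loop-NK-explicit} (Appendix~\ref{appdx:closed-loop}) and track every residual explicitly in $B$, $K$, $\kappa$. By hypothesis, $F\le f_B/B+f_\kappa\kappa$ and $E_c\le e_B/B+e_\kappa\kappa$; these enter the closed-loop argument only through $F$ and $E_c$. So the first step is to start from the actor bound
\[
\bar G\le\frac{A_0+4F+4aU+4ad\sigma_a^2}{1-\rho},
\]
which holds under the stated stepsize premises and $\rho<1$. We do not use the more refined numerator $(1-2ad)A_0$; the cruder form matches the target.

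Next we expand the numerator term by term.
\begin{itemize}
\item $A_0=4(J_{\max}-J(\theta_0))/(\eta T)+2L_J\eta\sigma_a^2$.
\item $4aU=16a\mathbb E\Delta_0/(\mu_q\beta_c T)+20a\beta_c\sigma_c^2/\mu_q+160aE_c/\mu_q^2$.
\end{itemize}
Grouping the two $1/T$ terms gives exactly $P/T$. The $T$-independent noise terms, together with $4ad\sigma_a^2$, give $V$. For the residuals, we bound $4F+160aE_c/\mu_q^2$ by $C_B/B+C_\kappa\kappa$ using the two hypotheses, with $a\ge0$ so the inequalities are preserved.

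Two substitutions then finish \eqref{eq:fixed-budget-bound}:
\begin{itemize}
\item Insert the certificate $\kappa\le DK^2\eta^2$. It must hold at every actual inner iterate, so the population-KL premise of Lemma~\ref{lem:finite-batch-bridge} is met with this $\kappa$.
\item Use the exact budget identity $T=NK=KQ/B$, so $P/T=PB/(KQ)$.
\end{itemize}

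The one point requiring care is uniformity. $P$, $V$, $\rho$, and the envelope constants must not depend on the candidate $(B,K)$ pair; this is assumed in the statement. Each candidate run is a separate application of the theorem with its own $N=Q/B$, which is taken integral by the exactness of the budget.

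For \eqref{eq:budget-relaxations} we proceed as follows.
\begin{itemize}
\item With $K$ fixed, the $B$-dependent part of the numerator is $PB/(KQ)+C_B/B$. This is convex on $B>0$, and setting its derivative to zero gives $B^2=C_BKQ/P$.
\item With $B$ fixed, the $K$-dependent part is $PB/(KQ)+C_\kappa DK^2\eta^2$. Differentiating gives $PB/(K^2Q)=2C_\kappa D\eta^2K$, i.e.\ $K^3=PB/(2C_\kappa D\eta^2Q)$. This is the unique stationary point of a strictly convex function on $K>0$, hence the minimizer.
\end{itemize}
Positivity of the coefficients is needed for these expressions to be well defined. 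Since the factor $(1-\rho)^{-1}$ is constant in $(B,K)$, it does not affect the minimizers.

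There is no real obstacle. The main risk is bookkeeping: checking that $4\cdot40/\mu_q^2\cdot a=160a/\mu_q^2$, and that no $\kappa$- or $B$-dependence hides in $V$ or $P$. It does not, because $\sigma_a^2$ and $\sigma_c^2$ depend only on the fixed minibatch size $m$ via \eqref{eq:batch-variance-bounds}.
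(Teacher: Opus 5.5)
Your proposal is correct and follows the paper's proof. You substitute the two residual bounds into the closed-loop actor bound \eqref{eq:closed-loop-actor-repaired}, group the $1/T$, constant, $1/B$ and $\kappa$ terms into $P$, $V$, $C_B$, $C_\kappa$ (your bookkeeping, including $4a\cdot 40/\mu_q^2=160a/\mu_q^2$, checks out), and then apply the certificate and $T=KQ/B$. As in the paper, the minimizers come from differentiating the two strictly convex one-variable relaxations; the paper adds only remarks on truncation to feasible sets and on zero-coefficient cases, which go beyond what the statement claims.
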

\begin{proof}
Substitute the two residual bounds in
\eqref{eq:closed-loop-actor-repaired}, collect the $1/T$, constant, $1/B$, and
$\kappa$ terms, and use $T=KQ/B$ and the deterministic KL certificate.
This gives \eqref{eq:fixed-budget-bound} with the displayed coefficients.
For fixed $K$, differentiating $PB/(KQ)+C_B/B$ gives the first minimizer.
For fixed $B$, differentiating $PB/(KQ)+C_\kappa D\eta^2K^2$ gives the second.
Both functions have positive second derivatives on the positive axis.
On bounded continuous feasible intervals, truncate the minimizer to the interval;
on a discrete feasible set, evaluate the neighboring feasible values using
\eqref{eq:fixed-budget-bound}. Zero coefficients give the corresponding
monotone or constant boundary cases.
\end{proof}

For the finite-support bridge one may use
$f_B=4R_{\max}^2G_\pi^2A_{\max}^2S_{\rm traj}$ and
$e_B=2Q_c^2S_{\rm traj}$, with the drift coefficients read from
\eqref{eq:finite-batch-F}. For the structured bridge and $1\le B\le Q$,
replace $\ell_B$ by $\ell_Q$ in Corollary~\ref{cor:structured-main}. Then
\[
 f_B(Q)=\frac{16R_{\max}^2G_\pi^2A_{\max}^2M^2}{H^2}(\ell_Q+3),\qquad
 e_B=\frac{16(W^2+G_{\max}^2)}H
\]
give a valid budget-dependent majorant to which the same proposition applies.
The bound expresses a tradeoff: larger batches reduce uniform statistical
error but permit fewer updates at fixed $Q,K$; greater reuse supplies updates
but increases the certified drift. The stationary-point guarantee does not
assert that these relaxed choices maximize return or that the upper bound is tight.
If only the linear certificate $\kappa\le D_1K\eta$ is verified, replace the last
term by $C_\kappa D_1K\eta$; its fixed-$B$ continuous minimizer is
$\sqrt{PB/(C_\kappa D_1\eta Q)}$.

\section{The Classical Two-Time-Scale Limit}
\label{appdx:classical-limit}
We derive Proposition~\ref{prop:classical-connection-single-index} from the finite-batch bounds.

\begin{proof}[Proof of Proposition~\ref{prop:classical-connection-single-index}]
Fix the finite trajectory support, horizon, discount, GAE and clipping parameters, initial actor, and all regularity, coverage, and noise constants across the family of runs. The initial expected critic error is uniformly bounded, and each run executes $T=N(T)K(T)$ joint updates using batches of size $B_T\to\infty$.

\paragraph{Step 1: random-iterate expectations equal time averages.}
The output index $\hat t$ is uniform on $\{0,\ldots,T-1\}$ and independent of the run. Hence
\begin{align}
 \mathbb E\|\nabla J(\theta_{\hat t})\|^2
 &=T^{-1}\sum_{t<T}\mathbb E\|\nabla J(\theta_t)\|^2,\label{eq:rand-avg-grad}\\
 \mathbb E\Delta_{\hat t}&=T^{-1}\sum_{t<T}\mathbb E\Delta_t.\label{eq:rand-avg-delta}
\end{align}

\paragraph{Step 2: apply the finite-time bounds.}
Use $A_{0,T},U_T,d_T,F_T,E_{c,T}$ from Theorem~\ref{thm:closed-loop-NK-explicit}. For sufficiently large $T$, the diminishing positive stepsizes meet its actor and critic restrictions. The bounds are
\begin{align}
 \bar G_T&\le\frac{A_{0,T}+4F_T+4aU_T+4ad_T\sigma_a^2}{1-10ad_T},\label{eq:avg-grad-bound}\\
 \bar\Delta_T&\le\frac{U_T+d_T(\sigma_a^2+2A_{0,T}+10F_T)}{1-10ad_T}.\label{eq:avg-delta-bound}
\end{align}
Since $S_{\mathrm{traj}}$ is fixed and $B_T\to\infty$, $\epsilon_{B_T}\to0$.
Together with $\kappa_T\to0$, this proves $F_T\to0$ and $E_{c,T}\to0$. Thus both the finite-batch error and the error from stored behavior-policy targets vanish in the limit.

\paragraph{Step 3: control the coupling denominator.}
The ratio $\eta_T/\beta_{c,T}\to0$ gives $d_T\to0$ and $\rho_T=10ad_T\to0$.
Therefore $(1-\rho_T)^{-1}\le2$ eventually.

\paragraph{Step 4: every term in the critic bound vanishes.}
The initialization term tends to zero because $\beta_{c,T}T\ge\beta_{c,T}N(T)\to\infty$ and $\mathbb E\Delta_0$ is uniformly bounded.
The critic noise term vanishes with $\beta_{c,T}$; the target-bias term vanishes with $E_{c,T}$.
Thus $U_T\to0$. Also $d_T\sigma_a^2\to0$ and $d_TF_T\to0$.
For the remaining product,
\[
 d_TA_{0,T}=O\!\left(\frac{\eta_T}{\beta_{c,T}^2T}
 +\frac{\eta_T^3}{\beta_{c,T}^2}\right)\longrightarrow0
\]
under the stated schedules. Hence \eqref{eq:avg-delta-bound} tends to zero.

\paragraph{Step 5: every term in the actor bound vanishes.}
The conditions $\eta_T T\to\infty$ and $\eta_T\to0$ imply $A_{0,T}\to0$.
The preceding steps give $F_T,U_T,d_T\to0$, so \eqref{eq:avg-grad-bound} tends to zero as well.
Apply \eqref{eq:rand-avg-grad}--\eqref{eq:rand-avg-delta} to obtain the two claimed limits.
\paragraph{Conclusion.}
The result establishes stationarity and critic tracking in expectation; it does not imply policy optimality or monotonic improvement in return. Appendix~\ref{appdx:finite-batch-witness} gives a concrete nonempty family of admissible schedules.
\end{proof}

\section{Predictable Early Stopping and Population KL Control}
\label{appdx:stopping}

We extend the analysis to predictable early stopping using an implementable certificate for the population KL budget.
An empirical KL statistic alone does not provide this certificate.
There are $N$ epochs and at most $K$ update slots per epoch, so $T_{\max}=NK$.
Let $I_{n,k}\in\{0,1\}$ be measurable before the current minibatch draw, be zero after an epoch has stopped, and satisfy $I_{n,0}=1$.
Execute both updates when $I_{n,k}=1$ and leave both parameters unchanged otherwise.
Write $M=\sum_{n,k}I_{n,k}$, $R=\mathbb EM$, and $\chi=T_{\max}/R$, so $N\le R\le NK$.
At each epoch, the $B$ trajectories are conditionally i.i.d. given the pre-collection history. Subsequent minibatches are drawn uniformly with replacement from that stored batch, which is reused at most $K$ times.

\paragraph{A sufficient population certificate.}
Suppose the score bound holds on the segments from the behavior actor to the current actor and to every possible next actor produced by a minibatch update.
For every state, the fundamental theorem of calculus gives
\[
 \mathrm{KL}(\pi_{\bar\theta}(\cdot\mid s)\|\pi_\theta(\cdot\mid s))
 \le G_\pi\|\theta-\bar\theta\|.
\]
Every empirical actor direction has norm at most $G_g=R_{\max}G_\pi A_{\max}$.
Consequently the predictable gate
\begin{equation}
 G_\pi\big(\|\theta_n^{(k)}-\bar\theta_n\|+\eta G_g\big)\le\kappa
 \label{eq:kl-safe-gate}
\end{equation}
ensures that both the current actor and the actor after the next update satisfy the population budget, for every possible current minibatch.
Assume $\eta G_\pi G_g\le\kappa$, so the first update is allowed.
One may also require that the empirical KL of the current actor is below an empirical stopping threshold, using the stored batch, before executing the next step.
This additional gate is predictable and can only stop the epoch sooner.
If an executed step raises the empirical KL above the threshold, the epoch stops at the next slot; the population certificate still covers the executed step.
Rejecting a proposal after inspecting its fresh stochastic gradient changes the conditional update law and is not covered by this rule.

\begin{proposition}[Finite-time control under predictable stopping]
\label{prop:predictable-stopping}
Under the synchronous theorem's premises and the preceding stopping protocol, define the update-weighted quantities
\[
 \bar G_R=\frac{\mathbb E\sum_tI_t\|\nabla J(\theta_t)\|^2}{R},\qquad
 \bar\Delta_R=\frac{\mathbb E\sum_tI_t\Delta_t}{R}.
\]
Use the synchronous theorem's definitions with $T$ replaced by $R$ and $\epsilon_B$ replaced by $\chi\epsilon_B$ in $F$ and $E_c$ only.
The resulting $A_{0,R},F_R,E_{c,R},U_R$ satisfy
\begin{align}
 \bar G_R&\le\frac{A_{0,R}+4F_R+4aU_R+4ad\sigma_a^2}{1-10ad},\label{eq:stopped-actor}\\
 \bar\Delta_R&\le\frac{U_R+d(\sigma_a^2+2A_{0,R}+10F_R)}{1-10ad}.
 \label{eq:stopped-critic}
\end{align}
No extra factor is needed on the population drift/clipping or martingale-noise terms.
\end{proposition}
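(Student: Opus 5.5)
The plan is to rerun the synchronous closed-loop argument with every time sum weighted by the predictable gate $I_t$ and normalized by $R=\mathbb EM$ rather than $T$. Since $I_t$ is $\mathcal F_t$-measurable, when $I_t=0$ the parameters stay fixed and the step contributes nothing. When $I_t=1$ the executed update has exactly the conditional law analyzed before. The KL gate \eqref{eq:kl-safe-gate} certifies the population budget at every executed iterate, so Lemmas~\ref{lem:kl-bounds-Et-main} and~\ref{lem:finite-batch-bridge} apply on $\{I_t=1\}$.

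\textbf{Actor recursion.} Multiply the one-step inequality \eqref{eq:per-step-grad} by $I_t$. On $\{I_t=0\}$ the identity $J(\theta_{t+1})=J(\theta_t)$ holds, so the inequality $\frac{\eta}{4}I_t\|\nabla J(\theta_t)\|^2\le \mathbb E[J(\theta_{t+1})\mid\mathcal F_t]-J(\theta_t)+I_t(\eta\|b_t\|^2+\frac{L_J\eta^2}{2}\sigma_a^2)$ holds for every slot. Telescoping over all $T_{\max}$ slots and dividing by $\eta R/4$ gives $\bar G_R\le A_{0,R}+4\bar B_R$, where $\bar B_R$ is the gated average of $\mathbb E\|b_t\|^2$.

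\textbf{Critic recursion.} Gating \eqref{eq:delta-rec} in the same way, $D_{t+1}\le D_t$ on unexecuted slots, and the contraction $-\mu_q\beta_c I_t\Delta_t/4$ appears only on executed ones. Telescoping therefore yields $\bar\Delta_R\le U_R+d\bar S_R$. The initialization term becomes $4\mathbb E\Delta_0/(\mu_q\beta_c R)$, and the noise and target-drift terms carry the same gate weights.

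\textbf{Main obstacle: the empirical-law terms.} Pathwise, the batch deviation satisfies $\|b_t^{\rm batch}\|^2\le4R_{\max}^2G_\pi^2A_{\max}^2D_n^2$. However, $\mathbb E[I_tD_n^2]$ need not be bounded by $\epsilon_B\,\mathbb E I_t$, because the gate may depend on the batch. I would bound it crudely instead: $\sum_k I_{n,k}D_n^2\le KD_n^2$. This gives $\mathbb E\sum_tI_tD_n^2\le NK\epsilon_B=T_{\max}\epsilon_B$, and dividing by $R$ produces exactly the factor $\chi\epsilon_B$. The same bound applies to the critic term $8Q_c^2D_n^2$.

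The drift and clipping terms of Lemma~\ref{lem:finite-batch-bridge} hold pointwise at each executed iterate. The advantage term is pointwise $a\Delta_t$, and the martingale variances are conditional. These therefore need no $\chi$ factor, which yields $\bar B_R\le a\bar\Delta_R+F_R$ and a gated critic bias average of at most $E_{c,R}$. The variance step of the closed-loop proof then gives $\bar S_R\le\sigma_a^2+2\bar G_R+2\bar B_R$. Solving the two linear inequalities exactly as in Steps 5–7 of the proof of Theorem~\ref{thm:closed-loop-NK-explicit}, with $\rho=10ad<1$, gives \eqref{eq:stopped-actor} and \eqref{eq:stopped-critic}.
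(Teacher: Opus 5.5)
Your proposal is correct and follows the paper's proof essentially step for step. You gate the actor progress inequality and the critic recursion by the predictable $I_t$, with inactive slots leaving $\theta_t$, $w_t$, and $w^\star(\theta_t)$ unchanged, and you obtain the $\chi\epsilon_B$ factor from the crude bound $\sum_k I_{n,k}D_n^2\le KD_n^2$ even though the gate depends on the batch. You then keep the KL-certified drift, clipping, advantage, and variance terms at their per-step sizes and close with the same scalar elimination; the only slip is notational, since on $\{I_t=0\}$ the right statement is the pathwise identity $\Delta_{t+1}=\Delta_t$ inside the gated conditional recursion, not $D_{t+1}\le D_t$ for the unconditional expectation $D_t$.
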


\begin{proof}
Let $Z_n=\mathrm{TV}(\widehat P_n,P_{\bar\theta_n})$.
The empirical-distribution bound holds conditional on the history before batch $n$ is drawn, so $\mathbb E Z_n^2\le\epsilon_B$.
Although stopping depends on that batch,
\[
 \mathbb E\sum_{n,k}I_{n,k}Z_n^2
 \le K\sum_n\mathbb E Z_n^2\le NK\epsilon_B.
\]
After dividing by $R$, the finite-batch term is at most $\chi\epsilon_B$.
All trajectory and clipping population bounds hold at each executed step under \eqref{eq:kl-safe-gate}; their weighted sums therefore divide by $R$ without a $\chi$ factor.
This proves the active-update versions of the batch bridge.

Since $I_t$ is predictable, multiplying the actor conditional progress inequality and critic conditional recursion by $I_t$ preserves every conditional noise cancellation.
On inactive slots, both parameters and the current-policy critic minimizer remain unchanged, so there is no progress or contraction to count.
Summing over all slots telescopes the actual endpoints and yields
\[
 \bar G_R\le A_{0,R}+4\bar B_R,\quad
 \bar B_R\le a\bar\Delta_R+F_R,\quad
 \bar S_R\le\sigma_a^2+2\bar G_R+2\bar B_R,\quad
 \bar\Delta_R\le U_R+d\bar S_R.
\]
The same scalar elimination as in Appendix~\ref{appdx:closed-loop} proves the result.
Predictability avoids conditioning the martingale law on a future acceptance event, while unchanged parameters on inactive slots contribute no critic contraction.
\end{proof}

\paragraph{Meaning of the output measure.}
The left sides are ratios of expectations, not expectations of a random ratio.
They describe the occupation distribution of executed updates across runs, which weights a run proportionally to its number of executed updates.
If instead a single run returns an index uniformly among its own $M$ executed steps, then $M\ge N$ gives the explicit guarantees
\[
 \mathbb E\!\left[\frac1M\sum_tI_t\|\nabla J(\theta_t)\|^2\right]
 \le\frac{R}{N}\bar G_R,\qquad
 \mathbb E\!\left[\frac1M\sum_tI_t\Delta_t\right]\le\frac{R}{N}\bar\Delta_R.
\]
Since $R/N\le K$, these also imply vanishing error under the classical schedules when $K$ is fixed.
If all $NK$ slots execute, then $M=T=NK$ deterministically, the occupation and ordinary uniform-output measures coincide, and $\chi=1$.

\section{An Explicit Finite-Batch Actor--Critic Example}
\label{appdx:finite-batch-witness}

We construct an analytic example with two-step trajectories, a learned tabular critic,
and repeated PPO-Clip updates on a finite rollout batch.
Population coverage and critic curvature hold along every realized iterate.
Explicit bounds account for both empirical error and old-policy sampling.
No projection, empirical KL acceptance test, or population-gradient oracle is used.

\paragraph{MDP, objective, and policy.}
Set $H=2$ and $\gamma=1$.
There is one initial state $s_0$ and two states $s_+,s_-$ at time one.
Both actions belong to $\{-1,+1\}$, the first action $a_0$ determines
$s_1=s_{a_0}$, and the second action terminates the episode.
The rewards are $r_0=0$ and
\[
 R(a_0,a_1)=r_1=\frac{1+a_0a_1}{2}\in\{0,1\}.
\]
The terminal value is zero. For a scalar parameter $\theta\in\mathbb R$, use the same policy at all
nonterminal states:
\begin{equation}\label{eq:witness-policy}
 q(\theta)=\frac12\tanh\theta,\qquad
 \pi_\theta(a\mid s)=\frac{1+a q(\theta)}2.
\end{equation}
This is a globally exploratory smooth policy: each action has probability in $[1/4,3/4]$.
Equivalently, it mixes the uniform policy and the binary softmax policy with equal weights.
The four possible trajectories form a support $\Omega$ with $|\Omega|=4$.
Under the normalized objective used in the analysis,
\begin{equation}\label{eq:witness-objective}
 J(\theta)=\frac12\mathbb E_\theta R=\frac{1+q(\theta)^2}{4},
 \qquad J'(\theta)=\frac{q(\theta)q'(\theta)}2.
\end{equation}
The objective is nonconstant. We take the fixed initial actor $\theta_0=1$.
In particular, $J'(\theta_0)>0$.

\paragraph{Critic, Monte Carlo targets, and updates.}
Use three parameters $w=(w_0,w_+,w_-)$, with $V_w(s_i)=w_i$, and initialize all three at $1/2$.
Both Monte Carlo returns in a trajectory equal $R$.
The current-policy population reference loss is
\begin{equation}\label{eq:witness-critic-loss}
 L_c(\theta,w)=\frac12\mathbb E_\theta
       \big[(w_0-R)^2+(w_{a_0}-R)^2\big].
\end{equation}
This is the time-average squared loss. Its unique minimizer is the true, unnormalized value function:
\begin{equation}\label{eq:witness-minimizer}
 w^\star(\theta)=
 \left(\frac{1+q(\theta)^2}{2},
       \frac{1+q(\theta)}2,\frac{1-q(\theta)}2\right).
\end{equation}
The actor objective is normalized by the horizon, whereas the value targets are unnormalized returns.

At the start of epoch $n$, let $\bar\theta=\theta_n^{(0)}$ and draw $B$ independent trajectories
from $\pi_{\bar\theta}$.
Keep their observed rewards and Monte Carlo targets throughout the epoch.
Execute exactly $K\ge2$ joint updates, with $N,K$ deterministic.
At each update, sample $m\ge1$ trajectory indices independently with replacement from this batch.
Use the current pre-update critic $w$ to recompute raw GAE, with $\lambda=1/2$:
\begin{equation}\label{eq:witness-gae}
 \widehat A_0=-w_0+(1-\lambda)w_{a_0}+\lambda R,
 \qquad
 \widehat A_1=R-w_{a_0}.
\end{equation}
The labels are neither normalized nor truncated and are detached when differentiating the actor surrogate.
The clipping threshold is $\delta=1/5$.
For one sampled trajectory, the actor direction is
\[
 g(\theta,w;\tau)=\frac12\sum_{h=0}^1
 \alpha_h\,\frac{\pi_\theta(a_h\mid s_h)}{\pi_{\bar\theta}(a_h\mid s_h)}
       \partial_\theta\log\pi_\theta(a_h\mid s_h)\,\widehat A_h,
\]
where $\alpha_h$ is the derivative selector of the clipped sample objective:
it is one on the active unclipped branch, zero on the inactive branch, and belongs to $[0,1]$
at the corresponding boundary, using a fixed samplewise rule (we select the unclipped branch at ties).
The actor update is $\theta^+=\theta+\eta g_t$, where $g_t$ is the average of the $m$ sampled directions.
The critic uses the average of the Monte Carlo gradients
\begin{equation}\label{eq:witness-mc-gradient}
 \psi(w;\tau)=(w_0-R)e_0+(w_{a_0}-R)e_{a_0},
 \qquad
 w^+=w-\beta_c\,\frac1m\sum_{i=1}^m\psi(w;\tau_i).
\end{equation}
Here $e_i$ denotes a coordinate vector.
Both directions use the same pre-update pair $(\theta,w)$.
Their minibatch indices may be shared, so their noises need not be independent.
After $K$ actual updates, collect the next batch under the new actor.

\paragraph{Boundedness and regularity along all iterates.}
For $0<\beta_c\le1$, each updated critic coordinate is a convex combination of its old value and
sampled targets in $[0,1]$.
For example, if $f_a$ is the fraction of minibatch trajectories with first action $a$, then
\[
 w_a^+=(1-\beta_c f_a)w_a+
          \frac{\beta_c}{m}\sum_{i:a_{0,i}=a}R_i.
\]
The same formula with $f_0=1$ applies to $w_0$.
Consequently $w_t\in[0,1]^3$ for every realized batch and minibatch sequence.
\Eqref{eq:witness-gae} then implies $|\widehat A_h|\le1$:
the target $(1-\lambda)w_{a_0}+\lambda R$ is also in $[0,1]$.

The policy bounds hold for every $\theta\in\mathbb R$:
\[
 \frac{\pi_\theta(a\mid s)}{\pi_{\bar\theta}(a\mid s)}\le3,
 \qquad
 |\partial_\theta\log\pi_\theta(a\mid s)|
 =\left|\frac{a q'(\theta)}{1+a q(\theta)}\right|\le1,
 \qquad |q'(\theta)|\le\frac12,\quad |q''(\theta)|\le1.
\]
The product admits a tighter bound than the separate ratio and score envelopes:
\[
 \left|\frac{\pi_\theta(a\mid s)}{\pi_{\bar\theta}(a\mid s)}
             \partial_\theta\log\pi_\theta(a\mid s)\right|
 =\frac{|q'(\theta)|}{1+a q(\bar\theta)}\le1.
\]
Thus $|g_t|\le1$ and $\|\psi(w_t;\tau)\|^2\le2$.
Conditioning on the full reused batch and the past, independent sampling with replacement gives
the valid conditional variance bounds
\begin{equation}\label{eq:witness-variances}
 \sigma_a^2=\frac1m,\qquad \sigma_c^2=\frac2m.
\end{equation}
These are variances about the empirical conditional means, not assertions of population unbiasedness.
Also $J\le J_{\max}=5/16$ and
$|J''|\le(|q'|^2+|q q''|)/2\le3/8$, so $L_J=1$ is valid.

Writing $p_\pm=(1\pm q)/2$, the critic Hessian is
\[
 \nabla_w^2L_c(\theta,w)=\operatorname{diag}(1,p_+,p_-).
\]
Therefore the loss is globally convex and $L_c=1$ smooth, with
quadratic-growth modulus $\mu_q=1/4$ around the unique stationary minimizer
\eqref{eq:witness-minimizer}.
The PL condition also holds with modulus $1/4$.
The time-averaged state probabilities are $(1/2,p_+/2,p_-/2)$, whose minimum is at least $1/8$.
Population coverage therefore holds uniformly over all actor iterates,
although a finite empirical batch may omit some states.
Since
\[
 \partial_\theta w^\star=(q q',q'/2,-q'/2),\qquad
 \|\partial_\theta w^\star\|^2\le\frac3{16},
\]
we may take $L_{w^\star}=\sqrt3/4$.
The value parameterization has $L_V=G_V=1$.
With the stated initialization, $\Delta_0\le9/64$.

\paragraph{A deterministic implementable KL budget.}
Let $p=\pi_\theta(+1\mid s)$ and $\bar p=\pi_{\bar\theta}(+1\mid s)$.
Using $\mathrm{KL}\le\chi^2$, $p(1-p)\ge3/16$, and $|p-\bar p|\le|\theta-\bar\theta|/4$ gives
\begin{equation}\label{eq:witness-kl}
 \mathrm{KL}(\pi_{\bar\theta}(\cdot\mid s)\|\pi_\theta(\cdot\mid s))
 \le\frac{(p-\bar p)^2}{p(1-p)}
 \le\frac{|\theta-\bar\theta|^2}{3}.
\end{equation}
The policy is state independent, so this is also the required state-averaged KL bound.
Because $|g_t|\le1$, every pre-update and post-update actor within an epoch satisfies
$|\theta-\bar\theta|\le K\eta$.
It is therefore sufficient to choose
\begin{equation}\label{eq:witness-kl-stepsize}
 K^2\eta^2/3\le\kappa.
\end{equation}
This rule uses known constants and the prescribed epoch length, and certifies population KL
for every realized run.
There is no empirical accept/reject conditioning or random stopping in this example.

\paragraph{Finite-batch and old-policy residuals.}
Let $\widehat P_n$ be the empirical distribution on $\Omega$ and let $P_{\bar\theta}$ be its
rollout law. Conditionally on the history before collecting the batch,
\[
 \mathbb E\big[\mathrm{TV}(\widehat P_n,P_{\bar\theta})^2\big]
 \le \min\left\{1,\frac{|\Omega|}{4B}\right\}
 =:\epsilon_B=\min\{1,1/B\}.
\]
Indeed, Cauchy--Schwarz bounds squared total variation by
$|\Omega|\|\widehat P_n-P_{\bar\theta}\|_2^2/4$, whose expectation is at most $|\Omega|/(4B)$.
The resulting finite-support bound is uniform over the data-dependent actor and critic iterates:
for every bounded trajectory function $f$, the inequality
$\|(\widehat P_n-P_{\bar\theta})f\|
\le2\|f\|_\infty\mathrm{TV}(\widehat P_n,P_{\bar\theta})$
holds pathwise.
Thus adaptive batch reuse is covered while retaining the empirical-to-population discrepancy.

The generic bridge permits $R_{\max}=3$, $G_\pi=L_V=G_V=A_{\max}=1$,
$V_{\max}=G_{\max}=1$ and $Q_c=4$. Direct calculation gives tighter bounds on
three components for this example.

First, the actor integrand has envelope one, so its uniform squared empirical
error is at most $4\epsilon_B=4/B$. The four-term decomposition of the actor bias
therefore contributes $16/B$. Its population drift and clipping contributions
remain $8(39+2\sqrt2)\kappa$.
Second, \eqref{eq:witness-feedback} below gives the exact current-policy GAE bias.
Since $|q'|\le1/2$ and $(e_+-e_-)^2\le2\|e\|^2$, its square is at most
$(1-\lambda)^2\Delta_t/32$. Multiplying by the four-term factor yields
$a=(1-\lambda)^2/8=1/32$ at $\lambda=1/2$.

Third, the empirical critic discrepancy admits a direct covariance bound. Write
$\delta p_{ab}=\widehat P(a_0=a,a_1=b)-P_{\bar\theta}(a_0=a,a_1=b)$ and
$\delta r=\delta p_{++}+\delta p_{--}$. Its initial-state component is $-\delta r$;
its component for first action $a$ is
$w_a\delta p_{a,-a}-(1-w_a)\delta p_{a,a}$. Uniformly over $w\in[0,1]^3$,
\[
 \|(\widehat P-P_{\bar\theta})\psi_w\|^2
 \le (\delta r)^2+\sum_{a,b}(\delta p_{ab})^2.
\]
Conditionally on the prebatch history, the expectation of this upper bound is
\[
 \frac{\operatorname{Var}(R)+1-\sum_{a,b}P_{\bar\theta}(a,b)^2}{B}
 \le\frac{1/4+3/4}{B}=\frac1B.
\]
This pathwise envelope also controls a critic selected using the batch.
For the old-policy component, regarding the population gradient as a function
of $q$, its coordinates are
\[
 (w_0-(1+q^2)/2,\ p_+w_+-p_+^2,\ p_-w_--p_-^2).
\]
Their derivatives in $q$ have absolute bounds $1/2,3/4,3/4$, respectively,
for all $w\in[0,1]^3$. Thus the squared population-gradient difference is at
most $(11/8)|q-\bar q|^2$. Pinsker's inequality for the common state-independent
Bernoulli policy gives $|q-\bar q|^2\le2\kappa$.
Combining the two critic components using a factor two yields
$E_c=2/B+11\kappa/2$.

Consequently the actual conditional biases obey
\begin{align}
 \frac1T\sum_{t<T}\mathbb E|b_t|^2
 &\le a\,\frac1T\sum_{t<T}\mathbb E\Delta_t+F,
 & a&=\frac1{32},\label{eq:witness-actor-bias}\\
 F&=\frac{16}{B}+8(39+2\sqrt2)\kappa,\notag\\
 \frac1T\sum_{t<T}\mathbb E\|e_t^c\|^2
 &\le E_c,
 &E_c&=\frac2B+\frac{11}2\kappa.\label{eq:witness-critic-bias}
\end{align}
These residuals retain the empirical and old-policy target discrepancies.
Substituting these constants into the argument in Appendix~\ref{appdx:closed-loop}
gives the stated closed-loop bounds without assuming population unbiasedness.

\paragraph{Nontrivial feedback.}
The actor objective is nonconstant, and the critic is inexact at initialization.
At an epoch start with $\bar\theta=\theta$, clipping is inactive and a population expectation of
the raw-GAE direction gives
\begin{equation}\label{eq:witness-feedback}
 \mathbb E_\theta g(\theta,w;\tau)-J'(\theta)
 =\frac{(1-\lambda)q'(\theta)}4
       \big[(w_+-w_+^\star)-(w_--w_-^\star)\big].
\end{equation}
At the stated initialization this equals $-(1-\lambda)J'(\theta_0)/2\ne0$.
Thus the learned critic changes the mean actor direction.
Conversely, the nonconstant map \eqref{eq:witness-minimizer} moves when the actor changes.
The formula is a population diagnostic of the coupling; the finite-batch discrepancies are separately
retained in \eqref{eq:witness-actor-bias}--\eqref{eq:witness-critic-bias}.
The example uses the clipped update but does not require clipping to activate at every admissible stepsize.

\paragraph{A nonempty stepsize regime and a vanishing-error family.}
The critic and actor restrictions are satisfied by
\[
 0<\beta_c\le\frac18,\qquad
 0<\eta\le\min\left\{\frac14,\frac{\beta_c}{10},
                              \frac{\sqrt{3\kappa}}K\right\}.
\]
The instance-specific bias coefficient gives
\[
 d=120\frac{\eta^2}{\beta_c^2},\qquad
 \rho=10ad=\frac{75}{2}\frac{\eta^2}{\beta_c^2}\le\frac38<1.
\]
All constants hold along actual iterates without constraining $\theta$ to a compact set.
For fixed $K\ge2$, $0<b\le1/8$, and $m\ge1$, take $T$ through multiples of $K$ and set
\begin{equation}\label{eq:witness-schedule}
 \begin{gathered}
 \beta_{c,T}=bT^{-2/5},\qquad
 \eta_T=\frac b{10}T^{-3/5},\qquad
 \kappa_T=K^2\eta_T^2/3,\\
 B_T=\lceil T^{2/5}\rceil,\qquad N(T)=T/K.
 \end{gathered}
\end{equation}
Then $\rho_T=(3/8)T^{-2/5}$ and both $F,E_c=O(T^{-2/5})$.
Corollary~\ref{cor:rate-resources} gives
\[
 \frac1T\sum_{t<T}\mathbb E|J'(\theta_t)|^2=O(T^{-2/5}),\qquad
 \frac1T\sum_{t<T}\mathbb E\Delta_t=O(T^{-2/5}),\qquad
 Q=NB=O(T^{7/5}/K).
\]
The constants are uniform in $T$ because the exploration floor is global and
the critic cube is invariant. This is a synchronous example; it does not by
itself establish the delayed or randomly stopped variants.

\paragraph{Numerical calibration of the complete bound.}
The elementary envelope for this objective is
\[
 \sup_\theta |J'(\theta)|^2
 =\max_{x\in[-1,1]}\frac{x^2(1-x^2)^2}{64}=\frac1{432}.
\]
Use $T=10^6$, $K=2$, $B=10^6$, $m=128$, $\eta=10^{-4}$, and
$\beta_c=0.02$. The actual-iterate certificate is
$\kappa=4\cdot10^{-8}/3$, and $\rho=0.0009375$.
Writing $q_0=\tfrac12\tanh 1$, the exact initial quantities are
$J_{\max}-J(1)=1/16-q_0^2/4$ and
$\Delta_0=q_0^4/4+q_0^2/2$.
Substitution in \eqref{eq:closed-loop-actor-repaired} with
\eqref{eq:witness-actor-bias}--\eqref{eq:witness-critic-bias} gives
\[
 \bar G<0.002094<1/432\approx0.002315.
\]
This analytic certificate improves on the elementary stationarity envelope; it is not a training result. It requires
$Q=5\cdot10^{11}$ fresh trajectories and $HQ=10^{12}$ transitions; the
bound is therefore not a practical sample requirement and does not establish a tight quantitative prediction.

\section{A Growing-Horizon Example with Polynomial Resources}
\label{appdx:horizon-witness}

The two-step example extends to every horizon $H\ge2$ without
introducing a separate state for each action prefix. The construction satisfies the
assumptions along every realized update sequence. Applying the structured finite-batch bound
gives polynomial resource bounds without an exponential trajectory-support factor.
These upper bounds are conservative and do not provide practical tuning recommendations.

\paragraph{An $O(H)$-state chain and a shared policy.}
There is one initial state $s_0$ and two states $s_{h,+},s_{h,-}$ at each
time $h\in\{1,\ldots,H-1\}$. Both actions lie in $\{-1,+1\}$; taking action
$a_h$ moves to $s_{h+1,a_h}$ when $h<H-1$, and the final action terminates.
Thus the state records only time and the immediately preceding action.
Set $\gamma=1$, terminal value zero, and
\[
 r_0=0,\qquad r_h=\mathbf1\{a_h=a_{h-1}\}\quad(1\le h<H).
\]
Use the same scalar policy at every state,
\[
 q(\theta)=\tfrac12\tanh\theta,\qquad
 \pi_\theta(a\mid s)=\frac{1+a q(\theta)}2,
 \qquad \theta_0=1.
\]
Under a fixed policy, the actions are independent with mean $q$.
Writing $c(\theta)=(1+q(\theta)^2)/2$, the normalized objective is
\begin{equation}
 J_H(\theta)=\frac{H-1}{H}c(\theta),\qquad
 J_H'(\theta)=\frac{H-1}{H}q(\theta)q'(\theta).
 \label{eq:horizon-objective}
\end{equation}
This objective is nonconstant uniformly in the horizon:
$\sup_\theta J_H(\theta)-J_H(0)=(H-1)/(8H)\ge1/16$ and
$J_H'(1)\ge q(1)q'(1)/2>0$.

\paragraph{A jointly learned tabular critic.}
There are $d_H=2H-1$ critic coordinates, with $V_w(s)=w_s$ and fixed
terminal value zero. Initialize
\[
 w_{0,0}=\frac{H-1}{2},\qquad
 w_{0,h,+}=w_{0,h,-}=\frac{H-h}{2}\quad(1\le h<H).
\]
For the observed Monte Carlo return $G_h=\sum_{j=h}^{H-1}r_j$, the reference
loss and its unique minimizer are
\begin{align}
 \mathcal L_c(\theta,w)
 &=\frac1H\mathbb E_\theta\sum_{h=0}^{H-1}(w_{s_h}-G_h)^2,
 \label{eq:horizon-critic-loss}\\
 w_0^\star(\theta)&=(H-1)c(\theta),\notag\\
 w_{h,x}^\star(\theta)&=\frac{1+xq(\theta)}2
                  +(H-1-h)c(\theta),\qquad x\in\{-1,+1\}.
 \label{eq:horizon-minimizer}
\end{align}
In particular this critic realizes the unnormalized true values.

At each of $N$ deterministic epochs, collect $B$ independent trajectories under
the epoch-start actor $\bar\theta$, retain their rewards and returns, and execute
exactly $K\ge2$ joint updates. Each update samples $m\ge1$ complete trajectories
independently with replacement from that batch. Recompute raw GAE with
$\lambda=1/2$ and the current pre-update critic, detach its labels in the actor
derivative, and use PPO-Clip with $\delta=1/5$. For one sampled path the directions
are
\begin{align}
 g(\theta,w;\tau)
 &=\frac1H\sum_{h=0}^{H-1}\alpha_h
   \frac{\pi_\theta(a_h\mid s_h)}{\pi_{\bar\theta}(a_h\mid s_h)}
   \partial_\theta\log\pi_\theta(a_h\mid s_h)\,
   \widehat A_h(w,\tau),\notag\\
 \psi(w;\tau)&=\frac2H\sum_{h=0}^{H-1}(w_{s_h}-G_h)e_{s_h}.
 \label{eq:horizon-directions}
\end{align}
Here $\alpha_h$ is the clipped sample derivative selector, choosing the
unclipped branch at ties. The actor and critic update from the same pre-update
pair by $\theta^+=\theta+\eta g_t$ and $w^+=w-\beta_c h_t$, where $g_t,h_t$
are the minibatch averages. Their sampled indices may be shared.

\begin{proposition}[Polynomial resources for a growing-horizon chain]
\label{prop:horizon-polynomial}
For every integer $H\ge2$, fixed integers $K\ge2,m\ge1$, and $0<b\le1/8$,
let $T=NK$ and choose
\begin{align}
 \beta_{c,T}&=bT^{-2/5},&
 \eta_T&=\frac{b}{40H^3}T^{-3/5},&
 \kappa_T&=12H^2K^2\eta_T^2,\notag\\
 B_T&=\left\lceil H^3T^{2/5}\log(eHT)\right\rceil.
 \label{eq:horizon-schedule}
\end{align}
The process above satisfies all synchronous actor--critic premises along every
realized sequence of batches and minibatches. In particular the population KL
budget is certified by the deterministic step sizes. Define
\begin{align*}
 A_H&=20H^3/b+2b/m,\\
 F_H&=101376+6b^2K^2,\\
 U_H&=H^4/(4b)+40bH^2/m+5120+20b^2K^2,\\
 P_H&=2\big[A_H+4F_H+16U_H+8H/m\big].
\end{align*}
Then the actual $T$ updates obey
\begin{equation}
 \frac1T\sum_{t<T}\mathbb E|J_H'(\theta_t)|^2\le P_HT^{-2/5},
 \qquad
 \frac1T\sum_{t<T}\mathbb E\|w_t-w^\star(\theta_t)\|^2
       \le P_HT^{-2/5}.
 \label{eq:horizon-rates}
\end{equation}
For $0<\varepsilon\le1$, both errors are at most $\varepsilon$ when
$T=K\lceil(P_H/\varepsilon)^{5/2}/K\rceil$.
Counting newly collected trajectories as $Q=NB_T$, the resource upper bounds are
\begin{equation}
 Q=O\!\left(\frac{H^3}{K}(P_H/\varepsilon)^{7/2}
                       \log(eHP_H/\varepsilon)\right),\qquad
 HQ=O\!\left(\frac{H^4}{K}(P_H/\varepsilon)^{7/2}
                       \log(eHP_H/\varepsilon)\right).
 \label{eq:horizon-resources}
\end{equation}
For fixed $b,m,K$, $P_H=O(H^4)$; hence these are respectively
$\widetilde O(H^{17}\varepsilon^{-7/2})$ collected trajectories and
$\widetilde O(H^{18}\varepsilon^{-7/2})$ environment steps.
\end{proposition}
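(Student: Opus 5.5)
The proof will mirror the two-step example in Appendix~\ref{appdx:finite-batch-witness}: verify every premise of Theorem~\ref{thm:closed-loop-NK-explicit} along all realized iterates with explicit $H$-dependent constants. Then use Corollary~\ref{cor:structured-batch-replacement} in place of finite-support counting, substitute the schedule \eqref{eq:horizon-schedule}, and bound each term of the closed-loop inequalities by a multiple of $T^{-2/5}$.

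\textbf{Step 1: invariant boxes and envelopes.} The critic update is coordinatewise a convex combination of the old value and sampled returns. The return $G_h$ from a layer-$h$ state lies in $[0,H-1-h]\subseteq[0,H]$. Hence, for $\beta_c\le1$, one shows $w_t\in[0,H]^{2H-1}$ (so $W=H$) with $G_{\max}=H$. Raw GAE at $\lambda=1/2$ is affine in $w$ with coefficients summing to at most $C\le2$, which gives $A_{\max}=O(H)$; I would verify directly that $|\widehat A_h|\le 2H$.

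\textbf{Step 2: policy, objective and critic constants.} As in the two-step example, the envelopes $R_{\max}=3$, $G_\pi=1$, $|r\,\partial\log\pi|\le1$, $L_J\le1$ and $J_{\max}\le1/2$ hold uniformly in $\theta$. The critic Hessian is diagonal. Its entries are the time-averaged visitation probabilities times two, i.e.\ $2/H$ for $s_0$ and $2p_\pm/H\ge 1/(2H)$ for the layer states. This gives $\mu_q=\Theta(1/H)$ and $L_c=2/H$, and the critic stepsize restriction is met with $b\le1/8$ for large $T$. From \eqref{eq:horizon-minimizer}, $\|\partial_\theta w^\star\|^2=O(H^3)$, so $L_{w^\star}=O(H^{3/2})$. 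Also $L_V=G_V=1$, $\Delta_0=O(H^3)$, and the single-trajectory variances are $\sigma_a^2=O(H^2/m)$ and $\sigma_c^2=O(H/m)$ (by Corollary~\ref{cor:structured-batch-replacement}, $Q_c=O(\sqrt H)$). For the KL certificate, \eqref{eq:witness-kl} per state gives $\mathrm{KL}\le|\theta-\bar\theta|^2/3$. Since $|g_t|\le A_{\max}$, within-epoch drift is at most $K\eta A_{\max}$, and $\kappa_T=12H^2K^2\eta_T^2$ suffices.

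\textbf{Step 3: residuals, coupling, and assembly.} $\widetilde C_{\lambda,H}\le1$ gives $a\le4$. Then $d=O(L_{w^\star}^2\eta^2/(\mu_q^2\beta_c^2))=O(H^5\eta^2/\beta_c^2)$. The factor $H^3$ in $\eta_T$ makes $\rho\le1/2$, which explains the factor $2$ in $P_H$. Lemma~\ref{lem:structured-batch-bridge} with $M=2(2H-1)$ and $d_c=2H-1$ gives $e_a(B)=O(H^2\cdot H\log(HB)/B)$, and $B_T$ makes this $O(T^{-2/5})$ with a horizon-free constant. The same holds for $e_c$ and for the $\kappa$-terms (which are $O(H^2\cdot\kappa)$ with $\kappa\propto H^{-4}T^{-6/5}$). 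Substituting into the two bounds of Theorem~\ref{thm:closed-loop-NK-explicit} gives $A_0\le A_HT^{-2/5}$, $F\le F_H T^{-2/5}$ and $U\le U_HT^{-2/5}$, where the initialization term is $4\Delta_0/(\mu_q\beta_cT)=O(H^4/b)T^{-3/5}$. This yields \eqref{eq:horizon-rates}. The resource bounds follow by solving $P_HT^{-2/5}\le\varepsilon$ and computing $Q=TB_T/K$.

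\textbf{Main obstacle.} The main obstacle is bookkeeping the exact numerical constants (e.g.\ $101376$, $5120$). In particular, showing that the $H$-powers in $a U$ and $d(\cdot)$ cancel against $\mu_q^{-2}$ and the $H^{-3}$ in $\eta$ is where I expect slack to be needed. I would first derive order-in-$H$ bounds, then tighten constants generously, possibly using the tighter product envelope $|r\partial\log\pi|\le1$ to avoid the factor $R_{\max}^2=9$.
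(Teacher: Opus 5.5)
Your plan follows the paper's proof step for step. It uses the invariant cube $[0,H]^{2H-1}$, $A_{\max}=2H$, and the explicit $\mu_q,L_c,L_{w^\star},\Delta_0$ and variances. It then uses the pathwise Bernoulli KL certificate, Lemma~\ref{lem:structured-batch-bridge} with $M=4H-2$ and $d_c=2H-1$, and $a\le 4$, and substitutes the schedule into the synchronous closure. Your product envelope $|r\,\partial_\theta\log\pi|\le 1$, giving $|g_t|\le 2H$, is a valid tightening of the paper's cruder $|g_t|\le R_{\max}G_\pi A_{\max}=6H$. With it the stated $\kappa_T$ is certified a fortiori.

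One claimed constant is wrong and matters for the stated $A_H$. The bound $J_{\max}\le 1/2$ fails for $H\ge 6$, since $\sup_\theta J_H=5(H-1)/(8H)$. What $A_H=20H^3/b$ actually needs is the gap bound
\[
J_{\max}-J_H(1)\le\frac{H-1}{2H}\bigl(\tfrac14-q(1)^2\bigr)\le\tfrac18 .
\]
Bounding $J_{\max}$ alone does not give this.

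Two smaller points. First, the critic restriction holds for every $T\ge 1$, not only for large $T$, because $\beta_{c,T}\le b\le 1/8\le H/16$; the rates are claimed for all $T$, so you need this. Second, the drift part of $F$ is $O(H^3\kappa_T)$ rather than $O(H^2\kappa_T)$, because of the factor $(1+\sqrt H)^2$. It is still $O(T^{-2/5})$ uniformly in $H$.
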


\begin{proof}
We establish the premises along the update sequence and substitute the resulting constants into the
synchronous bound.

\paragraph{Invariant critic region and bounded directions.}
All Monte Carlo targets lie in $[0,H]$. For a coordinate $i$, let $f_i$ be
the fraction of minibatch trajectories visiting its state. Its update is
\[
 w_i^+=\left(1-\frac{2\beta_c f_i}{H}\right)w_i+
           \frac{2\beta_c}{Hm}\sum_{j:s_i\in\tau_j}G_i(\tau_j).
\]
The nonnegative weights sum to one when $0<\beta_c\le H/2$. Therefore
$[0,H]^{d_H}$ is invariant for every batch and minibatch realization, including
batches missing some states. It contains the initialization and every
$w^\star(\theta)$.
Telescoping the matched-terminal raw GAE gives
\begin{equation}
 \widehat A_h=-w_{s_h}
 +\sum_{\ell=0}^{H-1-h}2^{-\ell}r_{h+\ell}
 +\frac12\sum_{\ell=1}^{H-1-h}2^{-(\ell-1)}w_{s_{h+\ell}}.
 \label{eq:horizon-gae-telescope}
\end{equation}
The two positive sums are at most $2$ and $H$, respectively. Hence
$-H\le\widehat A_h\le H+2$, so $A_{\max}=2H$ is valid for $H\ge2$.

For every real $\theta$ the action probabilities lie in $[1/4,3/4]$,
the likelihood ratios are at most three, and
\[
 |\partial_\theta\log\pi_\theta(a\mid s)|
 =\left|\frac{a q'(\theta)}{1+a q(\theta)}\right|\le1,
 \qquad |q'|\le\tfrac12,\qquad |q''|\le1.
\]
Thus $|g_t|\le6H$. Since a trajectory visits $H$ distinct time-state
coordinates and $|w_{s_h}-G_h|\le H$, the squared norm of its critic direction
is at most $4H$. The conditional variances about the empirical means therefore
satisfy
\begin{equation}
 \sigma_a^2=36H^2/m,\qquad \sigma_c^2=4H/m.
 \label{eq:horizon-variances}
\end{equation}
These conditional means retain finite-batch bias.

\paragraph{Population coverage, critic geometry, and target motion.}
Writing $p_x=(1+xq)/2$, the time-averaged state masses are $1/H$ at $s_0$
and $p_x/H$ at $s_{h,x}$. Their minimum is at least $1/(4H)$, uniformly
over all real actor parameters. The critic Hessian is diagonal with entries
$2/H$ at the initial state and $2p_x/H$ at the other states. Consequently
\begin{equation}
 \mu_q=\frac1{2H},\qquad L_c=\frac2H,\qquad L_V=G_V=1
 \label{eq:horizon-geometry}
\end{equation}
are valid global quadratic-growth, smoothness, and value regularity constants.
The minimizer in \eqref{eq:horizon-minimizer} is stationary and unique.
Every coordinate of its derivative has absolute value at most $H/4$, since
\[
 (w_0^\star)'=(H-1)qq',\qquad
 (w_{h,x}^\star)'=xq'/2+(H-1-h)qq'.
\]
We may therefore take $L_{w^\star}^2=H^3/8$.
At initialization the coordinate errors are
\[
 w_{0,0}-w_0^\star=-(H-1)q(1)^2/2,\qquad
 w_{0,h,x}-w_{h,x}^\star=-xq(1)/2-(H-1-h)q(1)^2/2.
\]
Their absolute values are at most $H/8$, giving
$\Delta_0\le H^3/32$. Finally,
$|J_H''|\le |q'|^2+|q q''|\le3/4$, so $L_J=1$ is valid, and
$J_{\max}-J_H(1)\le1/8$.

\paragraph{A pathwise certificate for the population KL budget.}
For $p=\pi_\theta(+1\mid s)$ and $\bar p=\pi_{\bar\theta}(+1\mid s)$,
the elementary Bernoulli comparison gives
\[
 \mathrm{KL}(\pi_{\bar\theta}\Vert\pi_\theta)
 \le\frac{(p-\bar p)^2}{p(1-p)}
 \le\frac{|\theta-\bar\theta|^2}{3}.
\]
The last inequality uses $p(1-p)\ge3/16$ and
$|p-\bar p|\le|\theta-\bar\theta|/4$.
Since $|g_t|\le6H$, every pre-update and post-update actor in the epoch
satisfies $|\theta-\bar\theta|\le6HK\eta_T$. This proves the budget
$\kappa_T=12H^2K^2\eta_T^2$ in \eqref{eq:horizon-schedule} along every
realized run. The policy is state independent, so the same inequality is the
required state-averaged population KL certificate.

\paragraph{Structured finite-batch errors.}
There are $M_H=2d_H=4H-2$ time-state-action cells. Raw GAE is
$C_{1/2,H}$-Lipschitz in the critic infinity norm, with
$C_{1/2,H}=2-2^{-(H-1)}\le2$.
Apply Lemma~\ref{lem:structured-batch-bridge} with
$W=G_{\max}=H$, $A_{\max}=2H$, $R_{\max}=3$, and $G_\pi=1$.
Its empirical actor and critic errors satisfy
\begin{align}
 e_a(B)&\le\frac{2304H^2}{B}
       \big[\log(24H)+2H\log(B+2)+3\big],\notag\\
 e_c(B)&\le\frac{16H}{B}.
 \label{eq:horizon-empirical-errors}
\end{align}
These are uniform bounds over the data-dependent iterates in the critic cube;
they do not enumerate the $2^H$ possible action sequences.
For the trajectory-level critic integrand, the bound above permits
$Q_c=2\sqrt H$. The score-baseline cancellation gives
$\widetilde C_{1/2,H}=1-2^{-(H-1)}\le1$, so $a\le4$.
Corollary~\ref{cor:structured-batch-replacement} then gives
\begin{align}
 F&\le4e_a(B)+32H^2\kappa\big[(1+\sqrt H)^2+36\big]
       \le4e_a(B)+672H^3\kappa,\notag\\
 E_c&\le2e_c(B)+16H^2\kappa.
 \label{eq:horizon-residuals}
\end{align}
The old-policy continuation and stored-target discrepancies are included in
the $\kappa$ terms.

Let $L=\log(eHT)$. The chosen batch size satisfies
$B_T\ge H^3T^{2/5}L$, $\log(B_T+2)\le4L$, and
$\log(24H)\le3L$ for $H\ge2,T\ge1$. Hence the bracket in
\eqref{eq:horizon-empirical-errors} is at most $11HL$, and
\[
 e_a(B_T)\le25344T^{-2/5},\qquad
 F\le F_HT^{-2/5}.
\]
The structured concentration bound removes the complete-trajectory support factor from this step.

\paragraph{Admissible steps and substitution.}
The critic restriction is
$\beta_c\le\min\{\mu_q/(2L_c^2),1/\mu_q\}=H/16$; it and cube
invariance follow from $b\le1/8$, $H\ge2$.
The actor step also satisfies $\eta_T\le1/(4L_J)$.
With the constants above,
\begin{equation}
 d=\frac{40L_{w^\star}^2\eta_T^2}{\mu_q^2\beta_{c,T}^2}
       \le\frac1{80H}T^{-2/5},\qquad
 \rho=10ad\le\frac1{2H}T^{-2/5}\le\frac14.
 \label{eq:horizon-coupling}
\end{equation}
The open-loop terms satisfy
\begin{align*}
 A_0&=\frac{4(J_{\max}-J_H(1))}{\eta_TT}
                 +2L_J\eta_T\sigma_a^2
       \le A_HT^{-2/5},\\
 U&=\frac{4\Delta_0}{\mu_q\beta_{c,T}T}
       +\frac{5\beta_{c,T}\sigma_c^2}{\mu_q}
       +\frac{40E_c}{\mu_q^2}
       \le U_HT^{-2/5}.
\end{align*}
For the last bound, the three terms are at most
$H^4T^{-3/5}/(4b)$, $40bH^2T^{-2/5}/m$, and
$(5120+20b^2K^2)T^{-2/5}$, respectively.
The synchronous closure of Theorem~\ref{thm:closed-loop-NK-explicit}, with
$a\le4$ and $(1-\rho)^{-1}\le4/3$, gives
\begin{align*}
 \bar G_T
 &\le\tfrac43\big[A_H+4F_H+16U_H+36H/(5m)\big]T^{-2/5},\\
 \bar\Delta_T
 &\le\tfrac43\big[U_H+9H/(20m)+(2A_H+10F_H)/(80H)\big]T^{-2/5}.
\end{align*}
Both right sides are at most $P_HT^{-2/5}$.
Finally $Q=(T/K)B_T$ and every collected path has $H$ steps.
Substituting the stated $T$ proves \eqref{eq:horizon-resources}; the ceiling
changes only numerical constants because $P_H\ge K^{2/5}$.
\end{proof}

\paragraph{Nonzero actor--critic feedback.}
The critic is neither exact initially nor independent of the actor.
For a fixed on-policy pair $(\theta,w)$ at an epoch start, put
$e_h(s)=V_w(s)-V^{\pi_\theta}(s)$ and $z_h=\partial_\theta\log\pi_\theta(a_h)$.
The GAE error relative to the true critic is
\[
 -e_h(s_h)+(1-\lambda)\sum_{j=h+1}^{H-1}
                         \lambda^{j-h-1}e_j(s_j).
\]
The score has conditional mean zero, so its product with $e_h(s_h)$ averages
to zero. For $j\ge h+2$, $s_j=s_{j,a_{j-1}}$ is independent of $a_h$
under this shared state-independent policy, so those terms also average to
zero. Only $j=h+1$ survives, and
\[
 \mathbb E_\theta[z_he_{h+1}(s_{h+1})]
       =\frac{q'(\theta)}2\big[e_{h+1}(s_{h+1,+})-e_{h+1}(s_{h+1,-})\big].
\]
At an epoch start clipping is inactive. Consequently
\begin{equation}
 \mathbb E_\theta g(\theta,w;\tau)-J_H'(\theta)
 =\frac{(1-\lambda)q'(\theta)}{2H}
       \sum_{j=1}^{H-1}\big[e_j(s_{j,+})-e_j(s_{j,-})\big].
 \label{eq:horizon-feedback}
\end{equation}
At the stated initialization each bracket equals $-q(1)$, so this population
actor bias is $-J_H'(1)/4\ne0$. Conversely the selected critic target
\eqref{eq:horizon-minimizer} changes with the actor. This diagnostic concerns
the population mean; the stored-batch residuals remain in
\eqref{eq:horizon-residuals}.

The guarantee concerns expected stationarity and critic tracking for this analytic
family and does not assert global policy optimality. In particular, fixing $K$ and $m$
still gives a polynomial sample bound while $H$ grows; uniform vanishing along
a joint sequence follows, for example, when $H=o(T^{1/10})$.

\newpage
\section{Asynchronous PPO-Clip}\label{appdx:asyn}

\subsection{Asynchronous PPO-Clip: Parameter-Server vs.\ Decoupled Rollout--Training}
\label{subsec:async-ppo}

Large-model PPO-Clip systems can use \emph{asynchronous} execution to improve throughput, mitigate stragglers,
and accommodate heterogeneous hardware, such as many rollout workers paired with fewer training GPUs.
Asynchrony introduces \emph{staleness}: rollouts or gradients may be computed using parameters that lag behind
the current model. We distinguish two asynchronous paradigms and explain their relation to the analysis.

\paragraph{Baseline: synchronous PPO-Clip.}
In the synchronous setting, outer iteration $n$ collects a fresh batch under
$\pi_{\theta_n}$ (and corresponding critic $V_{w_n}$), then performs $K$ inner updates, producing
$(\theta_{n+1},w_{n+1})$. In asynchronous settings, this ``iteration barrier'' is relaxed: the data used at some
update step may have been generated by an older policy $\pi_{\theta_{n-\tau}}$ and/or evaluated by an older critic
$V_{w_{n-\tau'}}$, where $\tau,\tau'\ge 0$ denote staleness delays.

\subsubsection{Model I: Parameter-Server Asynchronous PPO-Clip}
\label{subsubsec:async-ps}

\paragraph{System architecture.}
In the \emph{parameter-server (PS)} paradigm, a central server maintains the latest actor--critic parameters
$(\theta,w)$. A set of workers interacts with environments (or prompts) to generate trajectories and compute
per-minibatch gradients, while the server applies incoming updates asynchronously.

\paragraph{Asynchronous update rule.}
At (global) update index $t$, a worker pulls a \emph{stale} snapshot
$(\theta_{t-\tau_t},w_{t-\tau'_t})$ from the server, computes a stochastic (sub)gradient of the PPO-Clip objective
and a critic gradient using its assigned retained behavior-policy batch, and then pushes the gradients back to the
server. The server applies these updates immediately:
\begin{align}
\theta_{t+1} &= \theta_t + \eta\, g_t(\theta_{t-\tau_t}, w_{t-\tau'_t}; \xi_t), \label{eq:async-ps-actor}\\
w_{t+1} &= w_t - \beta_c\, h_t(\theta_{t-\tau_t}, w_{t-\tau'_t}; \xi_t), \label{eq:async-ps-critic}
\end{align}
where $\xi_t$ denotes the random minibatch/trajectory data used by the worker and the delays
$\tau_t,\tau'_t$ capture staleness in the actor and critic snapshots, respectively.
In PPO-Clip, the worker also maintains a behavior reference $\theta_{\mathrm{old}}$ for importance ratios
$r_{\theta,\theta_{\mathrm{old}}}$ and clipping, which is the snapshot that generated the retained batch. It can precede the actor snapshot at which the current job is evaluated.

\paragraph{Effects of delayed evaluation.}
Evaluating updates at stale parameters and applying them to current parameters can introduce
systematic \emph{staleness bias} in addition to the other PPO error terms:
\begin{itemize}[leftmargin=*]
    \item \textbf{Policy-gradient mismatch:} the conditional update direction can differ from $\nabla J(\theta_t)$,
    even without clipping, because it is evaluated at $\theta_{t-\tau_t}$ using a potentially older behavior-policy batch.
    \item \textbf{Critic target drift under delay:} critic updates track $w^\star(\theta_{t-\tau_t})$ rather than
    $w^\star(\theta_t)$, adding delay-dependent terms to the tracking bound.
    \item \textbf{PPO-specific distortion under delay:} clipping and KL control compare the evaluation policy with the retained batch's behavior reference, which can differ from the current server policy.
\end{itemize}
The analysis bounds these effects through the \emph{parameter drift}
over the delay window, e.g., $\|\theta_t-\theta_{t-\tau_t}\|$ and $\|w_t-w_{t-\tau'_t}\|$.

\subsubsection{Model II: Decoupled Rollout--Training Asynchronous PPO-Clip}
\label{subsubsec:async-decoupled}

\paragraph{System architecture.}
A second paradigm decouples \emph{trajectory generation} (rollout) from \emph{training} (optimization).
A large pool of rollout workers continuously generates trajectories using periodically refreshed policy snapshots,
and writes them into a replay-like buffer or queue. Separately, one or more trainers repeatedly sample batches from
this buffer and perform PPO-Clip updates. Unlike off-policy RL, the buffer is \emph{shallow} and is refreshed
frequently; nonetheless, the trainer typically operates on \emph{stale} data generated by older snapshots.

\paragraph{Data and parameter staleness.}
Let $\theta^{\mathrm{roll}}_t$ denote the actor snapshot used by a rollout worker to generate a trajectory that later
appears in the training batch at step $t$. The trainer updates the current parameters $(\theta_t,w_t)$ using samples
generated under $\pi_{\theta^{\mathrm{roll}}_t}$, where $\theta^{\mathrm{roll}}_t$ may lag behind $\theta_t$ by a delay
$\tau_t$ induced by buffering and queueing. The trainer update can be written as
\begin{align}
\theta_{t+1} &= \theta_t + \eta\, g_t(\theta_t; \theta^{\mathrm{roll}}_t, w_t, \xi_t), \label{eq:async-dec-actor}\\
w_{t+1} &= w_t - \beta_c\, h_t(\theta_t; \theta^{\mathrm{roll}}_t, w_t, \xi_t), \label{eq:async-dec-critic}
\end{align}
where the PPO surrogate uses importance ratios
$r_{\theta_t,\theta^{\mathrm{roll}}_t}$ and clipping relative to the behavior policy $\theta^{\mathrm{roll}}_t$.
In this paradigm, \emph{the computation uses current parameters but stale data}, whereas in the PS paradigm
the computation itself may use stale parameters.

\paragraph{Practical variants.}
Two implementation choices are common:
\begin{itemize}[leftmargin=*]
    \item \textbf{Snapshot-per-epoch rollouts:} rollouts are generated using a fixed snapshot for a window of time,
    and trainers perform multiple PPO epochs on that window (multi-epoch reuse on buffered data).
    \item \textbf{Streaming rollouts:} rollouts arrive continuously with mixed staleness; trainers enforce a small KL
    constraint and/or discard overly stale samples to approximate on-policy behavior.
\end{itemize}
Both variants can be modeled through an upper bound on the delay $\tau_t$ and a bound on the induced drift
$\|\theta_t-\theta^{\mathrm{roll}}_t\|$.

\subsubsection{Comparison and modeling implications}
\label{subsubsec:async-compare}

The PS and decoupled paradigms induce different mathematical effects:
\begin{itemize}[leftmargin=*]
    \item \textbf{PS (stale compute):} gradients are evaluated at $(\theta_{t-\tau},w_{t-\tau'})$ and applied at
    $(\theta_t,w_t)$, which can create a mismatch between the update direction and the current iterate.
    \item \textbf{Decoupled (stale data):} gradients are evaluated at current parameters but on data from an older
    behavior policy $\theta^{\mathrm{roll}}$, creating potential trajectory-policy mismatch and clipping distortion.
\end{itemize}
Recent parameter drift enters the staleness bounds and interacts with actor--critic coupling.
The sufficient asynchronous conditions jointly constrain actor and critic stepsizes according to the coupling
and delay bounds; the permitted critic stepsize decreases with its delay bound.
The behavior-policy KL premise separately controls data mismatch.

\paragraph{Scope of the theorem.}
The theorem analyzes the single-gradient parameter-server model under the stated scheduling
and sampling premises. Parameter drift is bounded through stepsizes and staleness caps, while the KL budget
separately controls policy-distribution mismatch. Applying the result to a decoupled rollout--training
system requires verification of the same premises.

\begin{algorithm}[!t]
\caption{Single-gradient parameter-server PPO-Clip with reused batches}
\label{alg:ps-async-ppo}
\begin{algorithmic}[1]
\STATE \textbf{Input:} $B$ trajectories per batch, $K$ jobs per batch, $N$ batches, $T=NK$.
\STATE \textbf{Input:} $H,\gamma,\lambda,\delta$, stepsizes $\eta,\beta_c$, delay limits $\tau_a,\tau_w$.
\STATE Fix the batch-to-job assignment and server application schedule independently of current job minibatch draws.
\STATE Initialize server parameters $(\theta_0,w_0)$.
\STATE At each scheduled batch collection, draw $B$ independent trajectories under its behavior snapshot $\bar\theta_n$ and store their Monte Carlo returns.
\FOR{server jobs $t=0,\ldots,T-1$ (workers may compute concurrently)}
  \STATE Use causal actor/critic snapshots $(\theta_{t-\tau_t},w_{t-\nu_t})$ with delays within the prescribed limits.
  \STATE Use the assigned retained batch $\mathcal D_n$ and its behavior reference $\bar\theta_n$.
  \STATE Maintain the stated population KL budget between $\bar\theta_n$ and $\theta_{t-\tau_t}$.
  \STATE Draw a fresh empirical minibatch for this job; actor/critic draws may coincide.
  \STATE Compute raw GAE using the pre-job critic $w_{t-\nu_t}$, with no normalization or advantage clipping.
  \STATE Compute one PPO-Clip actor gradient $\widetilde g_t$ at $\theta_{t-\tau_t}$, using weights $\gamma^h$ and reference $\bar\theta_n$.
  \STATE Compute one Monte Carlo critic gradient $h_t$ at $w_{t-\nu_t}$ from the stored returns.
  \STATE Apply the gradient pair at the server:
  \STATE \hspace{1em}$\theta_{t+1}\gets\theta_t+\eta\widetilde g_t$, $w_{t+1}\gets w_t-\beta_c h_t$.
\ENDFOR
\STATE \textbf{Return:} server iterates (or an independently uniform server update index).
\end{algorithmic}
\end{algorithm}

\paragraph{Remark (PS staleness and sampling).}
Algorithm~\ref{alg:ps-async-ppo} describes the single-gradient read-delay model used in the theorem.
A worker job may be computed before application, provided the application schedule and delays do not inspect that job's fresh minibatch noise; equivalently, its empirical mean remains the conditional mean given the pre-application history.
Each job uses a critic snapshot taken before its gradient pair is computed. The theorem does not cover multi-step local fitting followed by accumulated parameter updates.
The population KL premise can, for example, be guaranteed by a deterministic bound on behavior-snapshot age: if that age in actor updates is at most $D_b$, then $G_\pi\eta G_gD_b\le\kappa$ suffices, with $G_g=R_{\max}G_\pi A_{\max}$ and the score bound along connecting segments.

\subsection{Asynchronous convergence guarantee}\label{subsec:async}
\label{ssec:async-ppo}

We extend the closed-loop finite-time guarantee in
Theorem~\ref{thm:closed-loop-NK-explicit} to a specified \emph{asynchronous} PPO-Clip model.
Two forms of asynchrony arise in RL training:
(i) a \emph{training--generation decoupled} setting~\citep{fu2025areal}, and
(ii) a \emph{parameter-server} setting~\citep{nair2015massivelyparallelmethodsdeep,mnih2016asynchronousmethodsdeepreinforcement}. In the training--generation decoupled setting, old-data effects can be bounded by the finite-batch trajectory comparison when its behavior-policy KL and data-assignment premises hold.
The theorem concerns the single-gradient parameter-server model distinguished from decoupled rollout--training systems in Section~\ref{subsec:async-ppo}. Its additional challenge is \emph{staleness}: rollouts, advantage estimates, and policy gradients may be computed using delayed actor and critic parameters and applied at a later server iterate.

\paragraph{Asynchronous update model (parameter server).}
Let $t=0,1,\ldots,T-1$ index \emph{server-side} actor updates. This index $t$ corresponds to the flattened index
$t=nK+k$ in the synchronous outer--inner structure.
At server update $t$, the parameter server maintains $(\theta_t,w_t)$.
A worker evaluates a job at a possibly stale snapshot $(\theta_{t-\tau_t},w_{t-\nu_t})$ using its assigned retained batch. The batch was generated by a behavior policy $\bar\theta_n$, which can precede this evaluation snapshot. Advantages are computed using the stale critic $V_{w_{t-\nu_t}}$.
The server applies the delayed actor direction $\tilde{g}_t$,
where $\tilde g_t$ is the empirical clipped-gradient estimate at $(\theta_{t-\tau_t},w_{t-\nu_t})$ using the assigned retained batch. The critic gradient is evaluated at the same pair using stored Monte Carlo targets. Both gradients use the pre-job critic, with no intervening local optimization steps. The population KL bound and fixed job count are premises of the theorem; an empirical post-update stopping check alone does not establish them.

\begin{assumption}[Bounded delay]
\label{ass:bounded-delay}
Assume the staleness is uniformly bounded: there exist finite integers $(\tau_a,\tau_w)$ such that
$0 \le \tau_t \le \min(t,\tau_a)$ and $0 \le \nu_t \le \min(t,\tau_w)$ for all $t$.
Moreover, asynchronous actor update direction satisfies
$
\mathbb{E}\big[\|\tilde{g}_t\|^2\mid \mathcal F_t\big]\le \bar\sigma_a^2, \forall t
$
for some finite constant $\bar\sigma_a^2>0$.
\end{assumption}


\begin{theorem}[Closed-loop non-asymptotic convergence of asynchronous PPO-Clip]
\label{thm:async-closed-loop-explicit}
Consider the single-gradient parameter-server model described above, under the finite-batch protocol and Assumptions~\ref{ass:smooth-J0}--\ref{ass:kl0} and Assumption~\ref{ass:bounded-delay}.
Every batch contains $B$ conditionally independent behavior-policy trajectories and is used for exactly $K$ server jobs, with a deterministic assignment of batches to the $T=NK$ jobs. The application order and delays do not inspect a job's fresh minibatch noise; its empirical mean is the conditional mean given the pre-application history.
The evaluation indices $s_t=t-\tau_t$, $r_t=t-\nu_t$ are chosen before the current minibatch draw, with $0\le\tau_t\le\min(t,\tau_a)$ and $0\le\nu_t\le\min(t,\tau_w)$.
The behavior-policy KL budget is imposed relative to the actor $\theta_{s_t}$ at which the job is evaluated.
Suppose also that $\nabla_w\mathcal L_c(\theta,w)$ is $L_\theta$-Lipschitz in $\theta$. The $L_c$ and $L_\theta$ bounds hold on all mixed current/stale parameter pairs and connecting segments used in the delay comparison; a common Cartesian region containing the iterates suffices.
Set $m=\min\{\mu_q/2,L_c\}$ and require
{\small
\[
 0<\eta\le\frac1{4L_J},\qquad
 0<\beta_c\le\frac{m}{24L_c^2(\tau_w+1)}.
\]
}
Retain $a,F,E_c,A_0$ from Theorem~\ref{thm:closed-loop-NK-explicit}, and define
{\small
\[
 U_{\mathrm{async}}=\frac{4\mathbb E\Delta_0}{m\beta_cT}
 +\frac{(6+8\tau_w)\beta_c\sigma_c^2}{m}+\frac{49E_c}{m^2},
 \qquad d_{\mathrm{async}}=\frac{12L_{w^\star}^2\eta^2}{m^2\beta_c^2}
 +\frac{98L_\theta^2\eta^2\tau_a^2}{m^2}.
\]
}
Let $c_a=1+\mathbf1\{\tau_a>0\}$ and $c_w=1+\mathbf1\{\tau_a+\tau_w>0\}$, and put
{\small
\[
\begin{gathered}
 A_{\mathrm{async}}=c_ac_w(\tau_w+1)a,
 \quad F_{\mathrm{async}}=c_aF,
 \quad q_{\mathrm{async}}=2c_a aL_{w^\star}^2\eta^2(\tau_a+\tau_w)^2
                  +2L_J^2\eta^2\tau_a^2,\\
 h_{\mathrm{async}}=A_{\mathrm{async}}d_{\mathrm{async}}+q_{\mathrm{async}},
 \qquad \rho_{\mathrm{async}}=10h_{\mathrm{async}}.
\end{gathered}
\]
}
If $\rho_{\mathrm{async}}<1$, then
{\small
\begin{align}
 \frac1T\sum\nolimits_{t<T}\mathbb E\|\nabla J(\theta_t)\|^2
 &\le\frac{A_0+4F_{\mathrm{async}}+4A_{\mathrm{async}}U_{\mathrm{async}}
                +4h_{\mathrm{async}}\sigma_a^2}{1-\rho_{\mathrm{async}}},
 \label{eq:async-actor-bound-explicit}\\
 \frac1T\sum\nolimits_{t<T}\mathbb E\Delta_t
 &\le\frac{U_{\mathrm{async}}+d_{\mathrm{async}}(\sigma_a^2+2A_0+10F_{\mathrm{async}})}
              {1-\rho_{\mathrm{async}}}.
 \label{eq:async-critic-bound-explicit}
\end{align}
}
\end{theorem}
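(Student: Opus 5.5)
The plan mirrors the synchronous closure in Appendix~\ref{appdx:closed-loop}: derive an open-loop actor inequality and an open-loop critic inequality, each carrying delay penalties, and then solve the resulting scalar system. Flatten the jobs to $t<T$. Write $\tilde g_t$ for the delayed actor direction. Its conditional mean given the pre-application history $\mathcal F_t$ is the empirical clipped mean at $(\theta_{s_t},w_{r_t})$, because the schedule does not inspect the job's minibatch noise. Decompose
\[
\mathbb E[\tilde g_t\mid\mathcal F_t]-\nabla J(\theta_t)=b_t^{\rm stale}+\big(\nabla J(\theta_{s_t})-\nabla J(\theta_t)\big),
\]
where $b_t^{\rm stale}$ is the bias of Lemma~\ref{lem:finite-batch-bridge} evaluated at $(\theta_{s_t},w_{r_t})$ with behavior reference $\bar\theta_n$. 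That lemma applies because the KL budget is imposed relative to $\theta_{s_t}$ and its empirical comparison is uniform over bounded integrands. It gives $\mathbb E\|b^{\rm stale}_t\|^2\le a\,\mathbb E\|w_{r_t}-w^\star(\theta_{s_t})\|^2+F$.

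Next, convert the stale tracking error into current quantities:
\[
\|w_{r_t}-w^\star(\theta_{s_t})\|^2\le c_w\Big(\Delta_{r_t}+L_{w^\star}^2\|\theta_{r_t}-\theta_{s_t}\|^2\Big),
\]
with the second term present only when some delay is positive. Bound the drift by $\|\theta_t-\theta_{t-j}\|^2\le j\eta^2\sum_{i}\|\tilde g_i\|^2$ over a window of length at most $\tau_a+\tau_w$. The smoothness term satisfies $\|\nabla J(\theta_{s_t})-\nabla J(\theta_t)\|^2\le L_J^2\eta^2\tau_a\sum_{i\in[s_t,t)}\|\tilde g_i\|^2$. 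When averaged over $t$, each index $\Delta_{r}$ is counted at most $\tau_w+1$ times, and each $\|\tilde g_i\|^2$ at most $\tau_a+\tau_w$ times. This produces $A_{\rm async}\bar\Delta$ and $q_{\rm async}\bar S$ contributions, where $c_a$ comes from the two-term Young split of the actor bias. Inserting these into the smoothness argument of Theorem~\ref{thm:actor-open-loop-NK} unchanged yields
\[
\bar G\le A_0+4\big(A_{\rm async}\bar\Delta+F_{\rm async}+q_{\rm async}\bar S\big),
\]
after absorbing constants. Finally, $\bar S\le\sigma_a^2+2\bar G+2\bar B$ as in Step~5 of Appendix~\ref{appdx:closed-loop}.

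The critic needs a delayed version of Theorem~\ref{thm:critic-open-loop-NK}. Write
\[
h_t=\nabla_w\mathcal L_c(\theta_t,w_t)+e_t^c+\big[\nabla_w\mathcal L_c(\theta_{s_t},w_{r_t})-\nabla_w\mathcal L_c(\theta_t,w_t)\big]+\xi_t .
\]
The bracketed stale gradient term is at most $L_c\|w_{r_t}-w_t\|+L_\theta\|\theta_{s_t}-\theta_t\|$. The actor part gives the $98L_\theta^2\eta^2\tau_a^2/m^2$ piece of $d_{\rm async}$. The critic part is the usual delayed-SGD term: expand $w_t-w_{r_t}=-\beta_c\sum_{i=r_t}^{t-1}h_i$ and bound its square by $\beta_c^2\tau_w\sum\|h_i\|^2$. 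In turn, $\|h_i\|^2$ is controlled by $L_c^2\Delta$-type terms plus noise and bias. The stepsize restriction $\beta_c\le m/(24L_c^2(\tau_w+1))$ is exactly what lets these reinserted $\Delta$ terms be absorbed into the contraction $1-m\beta_c/4$, where $m=\min\{\mu_q/2,L_c\}$ replaces $\mu_q$ so the Young slack constants stay valid. The noise of past jobs is not conditionally centered relative to $\mathcal F_t$. It therefore enters as $\beta_c\tau_w\sigma_c^2$, which gives the $(6+8\tau_w)$ coefficient. Telescoping then yields $\bar\Delta\le U_{\rm async}+d_{\rm async}\bar S$.

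Closure follows Steps~5--7 of the synchronous proof. Substitute $\bar S$ into both inequalities. This gives a linear system in $(\bar G,\bar\Delta)$ whose coupling coefficient is $10(A_{\rm async}d_{\rm async}+q_{\rm async})=\rho_{\rm async}$. Solving under $\rho_{\rm async}<1$ and discarding the nonpositive $A_0$ terms gives the two displayed bounds. The main obstacle is the delayed critic recursion. The reinserted past gradients $h_i$ contain both past tracking errors and past actor-driven target motion, and the noise correlations across the delay window break the conditional-centering cancellation used in Step~2 of Theorem~\ref{thm:critic-open-loop-NK}. My first attempt is a perturbed-iterate argument: carry the window sum $\sum_{i\in[t-\tau_w,t)}\mathbb E\Delta_i$ as a potential term, bound its contribution by $\beta_c^2L_c^2(\tau_w+1)$ times itself, and rely on the stepsize restriction to make the coefficient at most $m\beta_c/8$.
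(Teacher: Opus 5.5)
Your plan is essentially the paper's proof. It uses the same split $b_t=e_t+\nabla J(\theta_{s_t})-\nabla J(\theta_t)$ with Lemma~\ref{lem:finite-batch-bridge} applied at the stale pair $(\theta_{s_t},w_{r_t})$, the same window counting that yields $A_{\mathrm{async}}$ and $q_{\mathrm{async}}$, the same delayed-critic ingredients (the $L_c$/$L_\theta$ split of the stale critic gradient and $\bar W\le\beta_c^2\tau_w^2\bar H_c$), and the same scalar closure, which the paper carries out in $\bar S$.

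The critic step you flag as the main obstacle is easier than you expect. Because $\theta_{s_t}$ and $w_{r_t}$ are $\mathcal F_t$-measurable, the whole stale discrepancy, past noise included, is a predictable bias, so only the current $\xi_t$ needs centering and no perturbed-iterate potential is required. The paper sums the one-step bound, which has contraction $1-m\beta_c/2$ after the target shift, and then absorbs twice at the averaged level: first the $\bar W$ that reappears inside $\bar H_c$, then $\bar\Delta$. Each absorption costs a factor two via $\beta_c\tau_wL_c^2/m\le 1/24$ and $m\le L_c$, and this is exactly what produces the stated constants $4$, $49$, $6+8\tau_w$, and $98$; a per-step scheme with net contraction $1-m\beta_c/8$ would instead give, for example, $8\mathbb E\Delta_0/(m\beta_cT)$.
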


\paragraph{Relation to the synchronous bound.}
The asynchronous bound contains the same optimization, actor and critic noise, finite-batch, trajectory-shift, and clipping terms,
along with terms for delayed critic updates and actor evaluation.
At zero delay, $A_{\mathrm{async}}=a$, $F_{\mathrm{async}}=F$, and $q_{\mathrm{async}}=0$, yielding the synchronous form.
Theorem~\ref{thm:closed-loop-NK-explicit} gives tighter critic constants in this zero-delay case.

\paragraph{Asynchronous stability conditions.}
Two distinct sufficient restrictions are required: the delay-dependent critic stepsize bound and $\rho_{\mathrm{async}}<1$.
Even with a fixed actor, delayed critic gradients can oscillate if the critic stepsize is too large.
Thus increasing the critic stepsize is useful only within its delay-dependent admissible range.
The theorem concerns one gradient pair per server job and a nonanticipative sampling schedule; accumulated local parameter deltas and data-dependent arrival selection require additional analysis.

\subsection{Auxiliary bounds for asynchronous convergence}
\label{subsec:async-auxiliary}

\subsubsection{A key auxiliary bound: server drift under bounded staleness}
\label{subsec:ps-drift-bound}

We first bound the distance between a current iterate and a stale snapshot.
The averaged bound permits snapshot indices that depend on the observed history.

\begin{lemma}[Parameter-server staleness drift]
\label{lem:ps-staleness-drift}
Suppose the server actor recursion is
\begin{equation}\label{eq:ps-server-update}
\theta_{t+1}=\theta_t+\Delta\theta_t,\qquad \Delta\theta_t=\eta\tilde g_t.
\end{equation}
For deterministic $0\le s\le t$,
\begin{equation}\label{eq:ps-theta-drift}
\mathbb E\|\theta_t-\theta_s\|^2
\le(t-s)\sum_{j=s}^{t-1}\mathbb E\|\Delta\theta_j\|^2.
\end{equation}
If $s_t\in\{\max(0,t-D),\ldots,t\}$ is a possibly random snapshot index, then
\begin{equation}\label{eq:ps-drift-average}
\frac1T\sum_{t<T}\mathbb E\|\theta_t-\theta_{s_t}\|^2
\le\eta^2D^2\bar S,\qquad
\bar S:=\frac1T\sum_{t<T}\mathbb E\|\tilde g_t\|^2.
\end{equation}
The same averaged bound holds between any two indices in the preceding window of length $D$.
If $\mathbb E\|\Delta\theta_t\|^2\le\eta^2\bar\sigma_a^2$, each such snapshot distance has expectation
at most $\eta^2D^2\bar\sigma_a^2$.
\end{lemma}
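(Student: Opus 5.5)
The deterministic bound \eqref{eq:ps-theta-drift} comes from telescoping the server recursion \eqref{eq:ps-server-update} and applying Cauchy--Schwarz. For $s\le t$ write
\[
\theta_t-\theta_s=\sum_{j=s}^{t-1}\Delta\theta_j .
\]
The elementary inequality $\|\sum_{j=1}^n v_j\|^2\le n\sum_{j=1}^n\|v_j\|^2$ with $n=t-s$ gives $\|\theta_t-\theta_s\|^2\le(t-s)\sum_{j=s}^{t-1}\|\Delta\theta_j\|^2$ pathwise. Taking expectations yields \eqref{eq:ps-theta-drift}. No martingale structure is used, so correlated or biased $\tilde g_j$ are allowed.

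For a random snapshot index $s_t$ in the window $\{\max(0,t-D),\ldots,t\}$, the pathwise form avoids conditioning on the law of $s_t$. Since $t-s_t\le D$, the same inequality gives
\[
\|\theta_t-\theta_{s_t}\|^2\le(t-s_t)\sum_{j=s_t}^{t-1}\|\Delta\theta_j\|^2\le D\sum_{j=\max(0,t-D)}^{t-1}\|\Delta\theta_j\|^2 ,
\]
because enlarging the range of nonnegative summands only increases the bound. The right side involves only deterministic indices, so the expectation is harmless. Averaging over $t<T$, each index $j$ appears in at most $D$ windows (those with $j<t\le j+D$), so
\[
\frac1T\sum_{t<T}\mathbb E\|\theta_t-\theta_{s_t}\|^2\le\frac{D^2}{T}\sum_{j<T}\mathbb E\|\Delta\theta_j\|^2=\eta^2D^2\bar S .
\]
This uses only $j\le t-1\le T-2$, so all indices lie in the averaged range.

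For any two indices $s_t,s'_t$ in the same window, apply the same argument to $\theta_{\max}-\theta_{\min}$ over the subinterval between them. It is contained in the window, so the identical bound holds. Under the uniform bound $\mathbb E\|\Delta\theta_t\|^2\le\eta^2\bar\sigma_a^2$, the pathwise window bound gives each snapshot distance expectation at most $D\cdot D\eta^2\bar\sigma_a^2$ directly, with no averaging.

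The main obstacle is the random-index case: naive conditioning on $s_t$ could break the expectation. The pathwise domination by the full deterministic window removes this dependence.
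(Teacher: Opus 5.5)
Your proposal is correct and follows the paper's proof essentially step for step: telescoping with Cauchy--Schwarz, pathwise domination of the random-index distance by the full deterministic window, the count showing each increment lies in at most $D$ windows, and the same subinterval argument for two indices in a window. Your explicit bookkeeping (the windows containing a given $j$ are those with $j<t\le j+D$, and $j\le T-2$) fills in details the paper leaves implicit.
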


\begin{proof}
Telescoping and Cauchy--Schwarz give
$\|\theta_t-\theta_s\|^2\le(t-s)\sum_{j=s}^{t-1}\|\Delta\theta_j\|^2$.
For a random snapshot, bound this pathwise by
$D\sum_{j=\max(0,t-D)}^{t-1}\|\Delta\theta_j\|^2$.
On summing over $t<T$, each increment occurs in at most $D$ windows.
The same argument applies to the subinterval joining two indices in a window.
Taking expectations proves the assertions without an independence assumption on the indices.
\end{proof}

\subsubsection{Asynchronous control of clip activation in the parameter-server model}
\label{subsec:ps-async-clip}

The population clipping bound compares the evaluated stale actor with the worker's data-generating policy.
The stored batch may have been collected before the actor snapshot was taken.

\begin{lemma}[PS-Async KL control of the clipping region]
\label{lem:ps-async-kl-clip}
Fix behavior policy $p$, evaluated policy $q$, and $\delta\in(0,1)$, with common support as in
Lemma~\ref{lem:kl-bounds-Et-main}. If
\begin{equation}\label{eq:ps-kl-budget}
\mathbb E_{s\sim d^p}\mathrm{KL}(p(\cdot\mid s)\|q(\cdot\mid s))\le\kappa,
\end{equation}
then, for $(s,a)\sim d^p(s)p(a\mid s)$ and $r=q(a\mid s)/p(a\mid s)$,
\begin{equation}\label{eq:ps-PrEt}
\Pr(|r-1|\ge\delta)\le\frac{\sqrt{2\kappa}}{\delta}.
\end{equation}
For bounded random advantage labels, the corresponding population squared gradient distortion is at most
$2G_\pi^2A_{\max}^2\kappa(1+1/\delta)^2$.
\end{lemma}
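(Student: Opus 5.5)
The plan is to repeat the argument of Lemma~\ref{lem:kl-bounds-Et-main} and Corollary~\ref{lem:population-clip-bias} with the roles fixed as behavior policy $p$ and evaluated stale policy $q$. Nothing in those arguments uses that $q$ is a synchronous inner iterate. They need only common positive support and a population KL bound under the behavior state law. In the asynchronous setting that bound is exactly \eqref{eq:ps-kl-budget}, imposed relative to the evaluated snapshot $\theta_{s_t}$.

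\textbf{Tail bound.} Fix a state $s$. Markov's inequality gives
\[
\Pr_{a\sim p}(|r-1|\ge\delta)\le\delta^{-1}\mathbb E_p|r-1|.
\]
The identity $\mathbb E_p|q/p-1|=\sum_a|q-p|=2\mathrm{TV}(p,q)$ and Pinsker's inequality then bound the right side by $\delta^{-1}\sqrt{2\mathrm{KL}(p\|q)}$. Next average over $s\sim d^p$. Jensen's inequality for the concave square root moves the expectation inside, and \eqref{eq:ps-kl-budget} bounds the state-averaged KL by $\kappa$. This yields \eqref{eq:ps-PrEt}. The same chain also records $\mathbb E|r-1|\le\sqrt{2\kappa}$, which is needed below.

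\textbf{Distortion bound.} For a joint law of $(s,a,\widehat A)$ with the above state--action marginal, let $\alpha$ be the sample selector of Lemma~\ref{lem:clip-localization-main}. Consider the distortion
\[
c=\mathbb E[(1-\alpha)\,r\,\nabla\log q\,\widehat A].
\]
The boundary-inclusive clip event lies inside $\{|r-1|\ge\delta\}$, so $0\le1-\alpha\le\mathbf 1\{|r-1|\ge\delta\}$, and this holds whatever the sign of the random label. Hence
\[
\|c\|\le G_\pi A_{\max}\,\mathbb E[r\mathbf 1_{\{|r-1|\ge\delta\}}].
\]
Split $r=1+(r-1)$ inside the expectation. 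This bounds $\mathbb E[r\mathbf 1_{\{|r-1|\ge\delta\}}]$ by $\Pr(|r-1|\ge\delta)+\mathbb E|r-1|$, which is at most $\sqrt{2\kappa}(1+1/\delta)$. Squaring gives the stated $2G_\pi^2A_{\max}^2\kappa(1+1/\delta)^2$.

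\textbf{Main obstacle.} The only delicate point is bookkeeping, not analysis. The budget must be measured under the behavior state law $d^p$ of the retained batch and not under the current server policy. The score bound must also hold at the stale evaluation point $\theta_{s_t}$. Both are premises here, and no likelihood-ratio cap is required because the factor $r$ is absorbed through $\mathbb E|r-1|$.
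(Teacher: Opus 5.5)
Your proposal is correct and matches the paper's argument: Markov's inequality, the identity $\mathbb E_p|r-1|=2\mathrm{TV}(p,q)$, Pinsker and Jensen give the tail bound together with $\mathbb E|r-1|\le\sqrt{2\kappa}$, and the distortion bound follows from $0\le1-\alpha\le\mathbf 1\{|r-1|\ge\delta\}$ and the split $r=1+(r-1)$. The only difference is that the paper obtains the distortion claim by citing Corollary~\ref{lem:population-clip-bias}, whereas you re-derive that corollary's short computation directly.
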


\begin{proof}
Markov's inequality gives $\Pr(|r-1|\ge\delta)\le\mathbb E|r-1|/\delta$.
The identity $\mathbb E_{a\sim p}|r-1|=2\mathrm{TV}(p,q)$, Pinsker's inequality, and Jensen's inequality
give $\mathbb E|r-1|\le\sqrt{2\kappa}$. The distortion claim follows from Corollary~\ref{lem:population-clip-bias}, \eqref{eq:population-clip-bound}.
The clipped derivative is suppressed only on the appropriate advantage-sign branch, which is contained in the displayed
ratio event. These population bounds are combined with a separate bound for empirical batch reuse below.
\end{proof}

\subsubsection{Open-loop critic tracking with delayed gradients}

\begin{lemma}[Delayed critic tracking with predictable bias]
\label{lem:async-critic-tracking}
Let $\theta_{t+1}=\theta_t+\eta\tilde g_t$ and
\[
w_{t+1}=w_t-\beta_c h_t,\qquad
h_t=\nabla_w\mathcal L_c(\theta_{t-\tau_t},w_{t-\nu_t})+u_t+\xi_t,
\]
where $0\le\tau_t\le\min(t,\tau_a)$ and $0\le\nu_t\le\min(t,\tau_w)$.
The iterates, delays, and $u_t$ are $\mathcal F_t$-measurable, and
\[
\mathbb E[\xi_t\mid\mathcal F_t]=0,\quad
\mathbb E[\|\xi_t\|^2\mid\mathcal F_t]\le\sigma_c^2,\quad
\frac1T\sum_{t<T}\mathbb E\|u_t\|^2\le E_c.
\]
Assume the convexity, quadratic-growth, stationary selected-minimizer, and $L_c$-smoothness conditions of
Assumption~\ref{ass:critic0} hold on the relevant visited region. In addition, suppose
\[
\|\nabla_w\mathcal L_c(\theta,w)-\nabla_w\mathcal L_c(\theta',w)\|
\le L_\theta\|\theta-\theta'\|
\]
for the mixed parameter pairs used in the proof. Both the $L_c$ and $L_\theta$ bounds hold on all such mixed current/stale pairs and connecting segments; a common Cartesian parameter region suffices. Put $m:=\min\{\mu_q/2,L_c\}$ and assume
$0<\beta_c\le m/[24L_c^2(\tau_w+1)]$. Then
\begin{equation}\label{eq:critic-openloop-async}
\bar\Delta\le U_{\mathrm{async}}+d_{\mathrm{async}}\bar S,
\end{equation}
where
\begin{align}
U_{\mathrm{async}}&:=
\frac{4\mathbb E\Delta_0}{m\beta_cT}
+\frac{(6+8\tau_w)\beta_c\sigma_c^2}{m}
+\frac{49E_c}{m^2},\label{eq:async-critic-U}\\
d_{\mathrm{async}}&:=
\frac{12L_{w^\star}^2\eta^2}{m^2\beta_c^2}
+\frac{98L_\theta^2\eta^2\tau_a^2}{m^2},\label{eq:async-critic-d}
\end{align}
and $\bar\Delta=T^{-1}\sum_{t<T}\mathbb E\|w_t-w^\star(\theta_t)\|^2$.
Actor and critic update noises may be correlated.
\end{lemma}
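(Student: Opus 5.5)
The plan is to copy the synchronous proof of Theorem~\ref{thm:critic-open-loop-NK}, applied to $e_t=w_t-w_t^\star$ with $w_t^\star=w^\star(\theta_t)$. The one new ingredient is that the gradient is evaluated at the stale pair $(\theta_{t-\tau_t},w_{t-\nu_t})$. Split it as
\[
 h_t=\nabla f_t(w_t)+\underbrace{\big[\nabla_w\mathcal L_c(\theta_t,w_{t-\nu_t})-\nabla_w\mathcal L_c(\theta_t,w_t)\big]}_{p_t}
 +\underbrace{\big[\nabla_w\mathcal L_c(\theta_{t-\tau_t},w_{t-\nu_t})-\nabla_w\mathcal L_c(\theta_t,w_{t-\nu_t})\big]}_{q_t}+u_t+\xi_t ,
\]
where $f_t=\mathcal L_c(\theta_t,\cdot)$. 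The two bracketed terms are bounded using mixed-pair regularity:
\[
 \|p_t\|\le L_c\|w_t-w_{t-\nu_t}\|,\qquad \|q_t\|\le L_\theta\|\theta_t-\theta_{t-\tau_t}\| .
\]
This gives the recursion
\[
 e_{t+1}=\big(e_t-\beta_c\nabla f_t(w_t)\big)-\beta_c\xi_t-\beta_c(p_t+q_t+u_t)+d_t,
\]
with $d_t=w_t^\star-w_{t+1}^\star$. The frozen-policy contraction from Step~3 of Theorem~\ref{thm:critic-open-loop-NK} still applies. The factor $m=\min\{\mu_q/2,L_c\}$ lets us use the form $\|e_t-\beta_c\nabla f_t\|^2\le(1-m\beta_c)\|e_t\|^2$, since $\beta_cL_c^2\le m/24$. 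Conditional centering of $\xi_t$ removes its cross term. I would use Young's inequality with slack of order $m\beta_c$ to absorb the predictable perturbations. Each perturbation then costs a factor of order $1/(m\beta_c)$ on its squared norm.

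\textbf{Terms other than critic staleness.} The actor-delay term reduces via Lemma~\ref{lem:ps-staleness-drift}:
\[
 \frac{\beta_c}{m}L_\theta^2\cdot\frac1T\sum_t\mathbb E\|\theta_t-\theta_{s_t}\|^2\le\frac{\beta_c}{m}L_\theta^2\eta^2\tau_a^2\bar S .
\]
After dividing by $m\beta_c$, this produces the $L_\theta^2\eta^2\tau_a^2/m^2$ part of $d_{\mathrm{async}}$. The target-drift term gives $\|d_t\|^2\le L_{w^\star}^2\eta^2\|\tilde g_t\|^2$ pathwise, with coefficient $1/(m\beta_c)$, hence the $L_{w^\star}^2\eta^2/(m^2\beta_c^2)$ term. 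The bias term $u_t$ gives $E_c/m^2$ after the same division.

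\textbf{Main obstacle: critic staleness $p_t$.} I would bound $w_t-w_{t-\nu_t}=-\beta_c\sum_{j=t-\nu_t}^{t-1}h_j$ and apply Lemma~\ref{lem:ps-staleness-drift}'s averaging argument to the critic. This gives
\[
 \frac1T\sum_t\mathbb E\|w_t-w_{t-\nu_t}\|^2\le\beta_c^2\tau_w^2\cdot\frac1T\sum_t\mathbb E\|h_t\|^2 .
\]
The difficulty is that $\|h_j\|^2$ itself contains $\|\nabla f_j(w_j)\|^2\le L_c^2\|e_j\|^2$, together with $\sigma_c^2$, $\|u_j\|^2$, $\|p_j\|^2$ and $\|q_j\|^2$. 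The recursion is therefore self-referential.

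I would handle this in two moves. First, for the noise inside the window, use martingale orthogonality: $\mathbb E\|\sum_j\xi_j\|^2\le\tau_w\sigma_c^2$ rather than the cruder $\tau_w^2\sigma_c^2$. This is what gives the linear $8\tau_w\beta_c\sigma_c^2/m$ term. Second, for the deterministic part, use Cauchy--Schwarz with a factor $(\tau_w+1)$. The resulting contribution has the form
\[
 \frac{L_c^2}{m\beta_c}\cdot\beta_c^2(\tau_w+1)\sum_{j}\big(L_c^2\|e_j\|^2+\dots\big),
\]
which after averaging becomes $C\beta_c^2L_c^4(\tau_w+1)^2\bar\Delta/m$ per unit of $\beta_c$. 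The stepsize restriction $\beta_c\le m/[24L_c^2(\tau_w+1)]$ makes this at most a small fraction of $m\beta_c\bar\Delta$, so it can be moved to the left-hand side. The $\|p_j\|^2$ terms feed back into themselves, and the same smallness handles them.

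\textbf{Closing.} Telescoping over $t<T$, discarding $\mathbb E\Delta_T\ge0$, and dividing by roughly $m\beta_cT/4$ then yields $\bar\Delta\le U_{\mathrm{async}}+d_{\mathrm{async}}\bar S$. The numeric constants $4,6+8\tau_w,49,12,98$ come from tracking the Young weights, and I would not try to optimize them. One point needs care throughout: the delays are chosen before the minibatch draw. This keeps $p_t,q_t,u_t$ predictable, so no cross term with $\xi_t$ survives. It also means the stale-window noise sums are martingale sums. No independence between the actor and critic noise is needed.
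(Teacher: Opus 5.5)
Your architecture matches the paper's proof. The paper uses the same intermediate-point split $\|q_t^c\|^2\le 2L_c^2\|w_t-w_{t-\nu_t}\|^2+2L_\theta^2\|\theta_t-\theta_{t-\tau_t}\|^2$ and a Young step with slack of order $m\beta_c$. It then applies the window bounds $\bar W\le\beta_c^2\tau_w^2\bar H_c$ and $\bar\Theta\le\eta^2\tau_a^2\bar S$, followed by two absorptions under the stepsize restriction. One ordering detail: the paper first bounds $\mathbb E[\|e_t-\beta_c h_t\|^2\mid\mathcal F_t]$ and only then adds $d_t$ by a pathwise Young inequality. That order is needed because $d_t$ depends on $\tilde g_t$ and may be correlated with $\xi_t$, so $d_t$ is not among the predictable terms whose cross term with $\xi_t$ vanishes.

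The step that fails as written is your treatment of the noise inside the critic window. The lemma only requires $\nu_t$ to be $\mathcal F_t$-measurable. So the window start $t-\nu_t$ may be chosen after $\xi_{t-\tau_w},\dots,\xi_{t-1}$ have been realized. Whether index $j$ is included is therefore not decided at time $j$, and $\sum_{j=t-\nu_t}^{t-1}\xi_j$ is not a martingale sum. Orthogonality genuinely fails. Take scalar i.i.d. $\xi_j=\pm\sigma_c$, $\tau_w=2$, and choose $\nu_t\in\{1,2\}$ to maximize $|\sum_{j=t-\nu_t}^{t-1}\xi_j|$. Then $\mathbb E|\sum_{j=t-\nu_t}^{t-1}\xi_j|^2=\tfrac52\sigma_c^2>\tau_w\sigma_c^2$. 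Doob's maximal inequality would recover an $O(\tau_w\sigma_c^2)$ bound, but only with a worse constant than the stated $8\tau_w$.

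You do not need orthogonality, and it is not the source of the linear $\tau_w$. Keep $\sigma_c^2$ inside $\bar H_c\le 3L_c^2\bar\Delta+6L_c^2\bar W+6L_\theta^2\bar\Theta+3E_c+\sigma_c^2$, as the paper does. Then apply the crude pathwise $\tau_w^2$ window bound to the whole increment; being pathwise, it is valid for adaptive delays. The two absorptions need $6L_c^2\beta_c^2\tau_w^2\le\tfrac12$ and $288\beta_c^2\tau_w^2L_c^4/m^2\le\tfrac12$. After them, the noise coefficient is $6\beta_c/m+192\beta_c^2\tau_w^2L_c^2/m^2$. The restriction in the form $\beta_c\tau_wL_c^2/m\le 1/24$ turns the second summand into $8\tau_w\beta_c/m$. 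The same inequality, together with $m\le L_c$, yields the $49/m^2$ and $98L_\theta^2/m^2$ coefficients. With this replacement your argument gives exactly the stated constants.
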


\begin{proof}
Write $e_t=w_t-w^\star(\theta_t)$, $g_t^c=\nabla_w\mathcal L_c(\theta_t,w_t)$, and
\[
q_t^c:=\nabla_w\mathcal L_c(\theta_{t-\tau_t},w_{t-\nu_t})-g_t^c.
\]
Convexity and quadratic growth imply $\langle e_t,g_t^c\rangle\ge m\Delta_t$;
stationarity and gradient Lipschitzness imply $\|g_t^c\|\le L_c\|e_t\|$. Moreover,
\begin{equation}\label{eq:async-critic-gradient-delay}
\|q_t^c\|^2\le
2L_c^2\|w_t-w_{t-\nu_t}\|^2+
2L_\theta^2\|\theta_t-\theta_{t-\tau_t}\|^2.
\end{equation}
Put $z_t=q_t^c+u_t$. Expanding around the current target, using conditional zero mean of $\xi_t$,
and applying Young's inequality gives
\begin{align*}
\mathbb E[\|e_t-\beta_ch_t\|^2\mid\mathcal F_t]
\le {}&\left(1-\tfrac32m\beta_c+2\beta_c^2L_c^2\right)\Delta_t\\
&+\left(\tfrac{2\beta_c}{m}+2\beta_c^2\right)\|z_t\|^2+\beta_c^2\sigma_c^2\\
\le {}&(1-m\beta_c)\Delta_t+\frac{4\beta_c}{m}\|z_t\|^2+\beta_c^2\sigma_c^2.
\end{align*}
The last line uses $\beta_c\le m/(4L_c^2)$ and $m\beta_c\le1$.
To move the target to $w^\star(\theta_{t+1})$, apply the pathwise inequality
\[
\|x-y\|^2\le(1+m\beta_c/2)\|x\|^2+(1+2/(m\beta_c))\|y\|^2.
\]
Since $\|w^\star(\theta_{t+1})-w^\star(\theta_t)\|^2
\le L_{w^\star}^2\eta^2\|\tilde g_t\|^2$, this yields
\begin{align}
\mathbb E\Delta_{t+1}\le {}&
(1-m\beta_c/2)\mathbb E\Delta_t
+\frac{12\beta_c}{m}\mathbb E\|q_t^c\|^2
+\frac{12\beta_c}{m}\mathbb E\|u_t\|^2\notag\\
&+\tfrac32\beta_c^2\sigma_c^2
+\frac{3L_{w^\star}^2\eta^2}{m\beta_c}\mathbb E\|\tilde g_t\|^2.
\label{eq:async-critic-one-step}
\end{align}
Here $\|q_t^c+u_t\|^2\le2\|q_t^c\|^2+2\|u_t\|^2$ was used.
The target-shift argument does not require independent actor and critic noise.

Let $\bar W:=T^{-1}\sum_{t<T}\mathbb E\|w_t-w_{t-\nu_t}\|^2$ and
$\bar\Theta:=T^{-1}\sum_{t<T}\mathbb E\|\theta_t-\theta_{t-\tau_t}\|^2$.
Summing \eqref{eq:async-critic-one-step} and using \eqref{eq:async-critic-gradient-delay} gives
\begin{align}
\bar\Delta\le {}&\frac{2\mathbb E\Delta_0}{m\beta_cT}
+\frac{48L_c^2}{m^2}\bar W+\frac{48L_\theta^2}{m^2}\bar\Theta
+\frac{24E_c}{m^2}\notag\\
&+\frac{3\beta_c\sigma_c^2}{m}
+\frac{6L_{w^\star}^2\eta^2}{m^2\beta_c^2}\bar S.
\label{eq:async-critic-pre-absorption}
\end{align}
Set $\bar H_c:=T^{-1}\sum_{t<T}\mathbb E\|h_t\|^2$.
The window argument in Lemma~\ref{lem:ps-staleness-drift}, applied to both recursions, gives
\[
\bar W\le\beta_c^2\tau_w^2\bar H_c,
\qquad \bar\Theta\le\eta^2\tau_a^2\bar S.
\]
Conditional variance decomposition and a three-term norm bound give
\[
\bar H_c\le3L_c^2\bar\Delta+6L_c^2\bar W+
6L_\theta^2\bar\Theta+3E_c+\sigma_c^2.
\]
Since $6L_c^2\beta_c^2\tau_w^2\le1/2$, absorption implies
\[
\bar W\le\beta_c^2\tau_w^2
\left(6L_c^2\bar\Delta+12L_\theta^2\bar\Theta+6E_c+2\sigma_c^2\right).
\]
Substituting into \eqref{eq:async-critic-pre-absorption} leaves the coefficient
$\chi_c:=288\beta_c^2\tau_w^2L_c^4/m^2\le1/2$ multiplying $\bar\Delta$ on its right.
Absorbing again gives
\begin{align*}
\bar\Delta\le {}&\frac{4\mathbb E\Delta_0}{m\beta_cT}
+\left(\frac{48}{m^2}+\frac{576\beta_c^2\tau_w^2L_c^2}{m^2}\right)E_c\\
&+\left(\frac{6\beta_c}{m}+\frac{192\beta_c^2\tau_w^2L_c^2}{m^2}\right)\sigma_c^2\\
&+\left(\frac{96L_\theta^2}{m^2}
+\frac{1152\beta_c^2\tau_w^2L_c^2L_\theta^2}{m^2}\right)\bar\Theta
+\frac{12L_{w^\star}^2\eta^2}{m^2\beta_c^2}\bar S.
\end{align*}
Finally, $\beta_c\tau_wL_c^2/m\le1/24$ and $m\le L_c$ bound the respective coefficients by
$49/m^2$, $(6+8\tau_w)\beta_c/m$, and $98L_\theta^2/m^2$.
Using $\bar\Theta\le\eta^2\tau_a^2\bar S$ proves the result.
\end{proof}

\subsection{Closed-Loop Convergence of Asynchronous PPO-Clip (Parameter-Server Model)}
\label{appdx:async-closed-loop-proof}

We combine the finite-batch bias bound, delayed-gradient critic tracking, and actor stationarity.
Each server job applies one actor gradient and one critic gradient. The result does not cover
accumulated parameter changes from multiple worker updates.

\begin{proof}[Proof of Theorem~\ref{thm:async-closed-loop-explicit}]
At job $t$, write $s_t=t-\tau_t$, $r_t=t-\nu_t$,
$\bar\theta_t=\theta_{s_t}$, and $\bar w_t=w_{r_t}$.
The job evaluates the actor and critic at these stale parameters; its stored batch has the behavior policy
specified in the theorem. The $N$ batches each receive exactly $K$ jobs under the deterministic
batch-to-job mapping, so $T=NK$. The filtration contains the batches already drawn, current and stale
parameters, and chosen job indices, before the fresh minibatch draw.

\paragraph{Step 0: averaged quantities.}
Define
\begin{align}
\bar G&:=T^{-1}\sum_{t<T}\mathbb E\|\nabla J(\theta_t)\|^2,\label{eq:async-Gbar}\\
\bar\Delta&:=T^{-1}\sum_{t<T}\mathbb E\Delta_t,\quad
\Delta_t:=\|w_t-w^\star(\theta_t)\|^2,\label{eq:async-Deltabar}\\
\bar S&:=T^{-1}\sum_{t<T}\mathbb E\|\tilde g_t\|^2.\label{eq:async-Sbar}
\end{align}
Let $b_t:=\mathbb E[\tilde g_t\mid\mathcal F_t]-\nabla J(\theta_t)$ and
$e_t:=\mathbb E[\tilde g_t\mid\mathcal F_t]-\nabla J(\theta_{s_t})$. Then
\begin{equation}\label{eq:bt-decomp}
b_t=e_t+\nabla J(\theta_{s_t})-\nabla J(\theta_t),
\qquad\bar B:=T^{-1}\sum_{t<T}\mathbb E\|b_t\|^2.
\end{equation}

\paragraph{Step 1: open-loop actor stationarity.}
Part I of the proof of Theorem~\ref{thm:actor-open-loop-NK} uses only the current conditional mean bias,
smoothness, and conditional variance, so it applies here. For $\eta\le1/(4L_J)$,
\begin{equation}\label{eq:async-openloop-actor}
\bar G\le A_0+4\bar B,\qquad
A_0:=\frac{4(J_{\max}-J(\theta_0))}{\eta T}+2L_J\eta\sigma_a^2.
\end{equation}

\paragraph{Step 2: clipping and finite-batch reuse at the stale actor.}
Apply Lemma~\ref{lem:finite-batch-bridge} at $(\theta_{s_t},w_{r_t})$, relative to the job's behavior policy.
Its path-distribution and clipping bounds use the population KL premise between that behavior policy and
$\theta_{s_t}$; they do not require the current server policy to equal either one.
The empirical total-variation bound is uniform in the parameters, so it permits adaptive choices of stale snapshots.
Exactly $K$ jobs per batch ensure that the averaged squared empirical distribution error is bounded by
the same $\varepsilon_B$ as in the synchronous bridge.
Thus, with $a,F,E_c$ defined in the theorem,
\begin{equation}\label{eq:ebar}
\frac1T\sum_{t<T}\mathbb E\|e_t\|^2
\le a\bar R+F,\qquad
\bar R:=\frac1T\sum_{t<T}\mathbb E\|w_{r_t}-w^\star(\theta_{s_t})\|^2.
\end{equation}
The critic conditional-mean bias relative to
$\nabla_w\mathcal L_c(\theta_{s_t},w_{r_t})$ has average squared norm at most $E_c$.
The argument retains the reused batch's empirical error after conditioning on the full history.

\paragraph{Step 3: relate stale critic errors to current tracking errors.}
Put $c_a:=1+\mathbf1\{\tau_a>0\}$ and $c_w:=1+\mathbf1\{\tau_a+\tau_w>0\}$.
Lipschitzness of the selected critic minimizer gives
\[
\|w_{r_t}-w^\star(\theta_{s_t})\|^2
\le c_w\Delta_{r_t}+2L_{w^\star}^2\|\theta_{r_t}-\theta_{s_t}\|^2.
\]
For each realized delay sequence, any index $j$ can occur as $r_t$ at most $\tau_w+1$ times.
Both $r_t$ and $s_t$ lie in the preceding window of length $\tau_a+\tau_w$.
Lemma~\ref{lem:ps-staleness-drift} therefore gives
\begin{equation}\label{eq:async-adv}
\bar R\le c_w(\tau_w+1)\bar\Delta
+2L_{w^\star}^2\eta^2(\tau_a+\tau_w)^2\bar S.
\end{equation}
Likewise, \eqref{eq:bt-decomp}, $L_J$-smoothness, and the window bound imply
\[
\bar B\le c_a\frac1T\sum_{t<T}\mathbb E\|e_t\|^2
+2L_J^2\eta^2\tau_a^2\bar S.
\]
The indicator factors equal one when the corresponding delays vanish.

\paragraph{Step 4: the asynchronous actor bias inequality.}
Combining the preceding bounds gives
\begin{equation}\label{eq:Bbar-async}
\bar B\le A_{\mathrm{async}}\bar\Delta+q_{\mathrm{async}}\bar S+F_{\mathrm{async}},
\end{equation}
where
\begin{align*}
A_{\mathrm{async}}&=c_ac_w(\tau_w+1)a,\qquad F_{\mathrm{async}}=c_aF,\\
q_{\mathrm{async}}&=2c_aaL_{w^\star}^2\eta^2(\tau_a+\tau_w)^2
+2L_J^2\eta^2\tau_a^2.
\end{align*}
In particular, \eqref{eq:async-openloop-actor} yields
\begin{equation}\label{eq:actor-in-Delta-async}
\bar G\le A_0+4A_{\mathrm{async}}\bar\Delta+4q_{\mathrm{async}}\bar S+4F_{\mathrm{async}}.
\end{equation}

\paragraph{Step 5: open-loop critic tracking with delayed gradients.}
The empirical critic bridge supplies $E_c$ for the predictable bias $u_t$ in
Lemma~\ref{lem:async-critic-tracking}. Its centered minibatch noise has conditional mean zero and variance
at most $\sigma_c^2$ by the theorem's model. Applying that lemma gives
\begin{equation}\label{eq:critic-openloop-async2}
\bar\Delta\le U_{\mathrm{async}}+d_{\mathrm{async}}\bar S.
\end{equation}
The additional restriction on $\beta_c$ controls stale critic gradients even when the actor is fixed.
The predictable critic bias contributes the explicit $49E_c/m^2$ floor.

\paragraph{Step 6: close the actor update second moment.}
Conditional variance decomposition gives
\begin{equation}\label{eq:S-bound-async}
\bar S\le\sigma_a^2+2\bar G+2\bar B
\le\sigma_a^2+2A_0+10\bar B.
\end{equation}
Let $\chi:=A_{\mathrm{async}}d_{\mathrm{async}}+q_{\mathrm{async}}$.
Equations~\eqref{eq:Bbar-async} and \eqref{eq:critic-openloop-async2} imply
\[
\bar B\le A_{\mathrm{async}}U_{\mathrm{async}}+F_{\mathrm{async}}+\chi\bar S.
\]
Consequently, with $\rho_{\mathrm{async}}=10\chi<1$,
\begin{equation}\label{eq:S-closed-async}
\bar S\le
\frac{\sigma_a^2+2A_0+10A_{\mathrm{async}}U_{\mathrm{async}}+10F_{\mathrm{async}}}
{1-\rho_{\mathrm{async}}}.
\end{equation}

\paragraph{Step 7: conclude critic tracking.}
Substitute \eqref{eq:S-closed-async} into \eqref{eq:critic-openloop-async2}:
\[
\bar\Delta\le
\frac{(1-10q_{\mathrm{async}})U_{\mathrm{async}}
+d_{\mathrm{async}}(\sigma_a^2+2A_0+10F_{\mathrm{async}})}
{1-\rho_{\mathrm{async}}}.
\]
Dropping the nonpositive term $-10q_{\mathrm{async}}U_{\mathrm{async}}$ proves
\eqref{eq:async-critic-bound-explicit}.

\paragraph{Step 8: conclude actor stationarity.}
Use $\bar G\le A_0+4A_{\mathrm{async}}U_{\mathrm{async}}+4F_{\mathrm{async}}+4\chi\bar S$
and substitute \eqref{eq:S-closed-async}. After cancellation,
\[
\bar G\le
\frac{(1-2\chi)A_0+4A_{\mathrm{async}}U_{\mathrm{async}}
+4F_{\mathrm{async}}+4\chi\sigma_a^2}{1-\rho_{\mathrm{async}}}.
\]
Dropping $-2\chi A_0$ proves \eqref{eq:async-actor-bound-explicit}.
At zero delay, $c_a=c_w=1$, $q_{\mathrm{async}}=0$, and $A_{\mathrm{async}}=a$.
The bound then has the synchronous form, with the constants supplied by the delayed-critic lemma.
\end{proof}

\newpage
\section{Experimental Details}\label{appdx:exp}

\subsection{Purpose, environment, and update protocol}
Figures~\ref{fig:joint-errors}--\ref{fig:adaptive-batch} examine three aspects of the synchronous analysis: joint actor
stationarity and critic parameter tracking, population GAE bias cancellation, and empirical
gradient discrepancy under adaptive batch reuse. The two-step runs use the $H=2$, $\gamma=1$
example of Appendix~\ref{appdx:finite-batch-witness}, with actions $a_0,a_1\in\{-1,+1\}$,
terminal reward $R=(1+a_0a_1)/2$, and zero terminal value.
The shared policy has $q(\theta)=\tfrac12\tanh\theta$ and
$\pi_\theta(a\mid s)=(1+aq(\theta))/2$.
The normalized actor objective and the unnormalized critic target are, respectively,
\[
 J(\theta)=\frac{1+q(\theta)^2}{4},\quad J'(\theta)=\frac{q(\theta)q'(\theta)}2,
 \qquad w^\star(\theta)=\left(\frac{1+q(\theta)^2}{2},\frac{1+q(\theta)}2,\frac{1-q(\theta)}2\right).
\]
Every training run starts from $\theta_0=1$ and $w_0=(1/2,1/2,1/2)$, with critic
coordinates ordered as the initial state, positive-action state, and negative-action state.

At each epoch we collect $B$ independent complete trajectories under the frozen behavior
policy and retain their realized Monte Carlo returns. At each of exactly $K$ joint updates,
we draw $m$ trajectories independently with replacement from that batch. Actor and critic
share these draws and use the same pre-update parameters. Raw GAE is recomputed from the
current critic using \eqref{eq:witness-gae}; the actor uses the sign-dependent clipping
selector, choosing the unclipped branch at ties, and the critic uses
\eqref{eq:witness-mc-gradient}. We use plain gradient updates without advantage normalization,
projection, or early stopping. Here $K$ counts actual joint updates, not full passes through
the batch. The critic cube and the population KL bound follow from the example's analytic
invariance and stepsize certificates.

The four trajectory types permit exact multinomial compression: sample their counts from
$\operatorname{Multinomial}(B,P_{\bar\theta})$, then sample each minibatch's counts from
$\operatorname{Multinomial}(m,\widehat P)$. Because paths of a given type have identical
rewards and gradients, this is exactly the distribution of $B$ independent rollouts followed
by replacement sampling of trajectory indices. This representation reduces computational
cost while preserving the sample counts: $T=NK$, $Q=NB$, and $HQ=2Q$.

\subsection{Joint training diagnostics: Figure~\ref{fig:joint-errors}}
We run 32 independent seeds, 10000--10031, with
\[
 T=8192,\quad B=256,\quad K=8,\quad m=32,\quad
 \eta=0.01,\quad\beta_c=0.1,\quad\lambda=0.5,\quad\delta=0.2.
\]
These settings satisfy the instance-specific restrictions:
$\beta_c\le1/8$, $\eta\le1/4$,
$\rho=37.5(\eta/\beta_c)^2=0.375$, and
$\kappa=K^2\eta^2/3\approx0.0021333$.
Each seed represents $Q=262{,}144$ fresh trajectories and $HQ=524{,}288$ transitions.
For each realized run we evaluate every pre-update iterate exactly and form
\[
 \overline G_t=\frac1t\sum_{u=0}^{t-1}|J'(\theta_u)|^2,
 \qquad
 \overline\Delta_t=\frac1t\sum_{u=0}^{t-1}\|w_u-w^\star(\theta_u)\|^2.
\]
The plot averages these quantities over seeds; the critic metric is Euclidean parameter
error, not a visitation-weighted value error. Its units differ from actor stationarity.
All iterates enter the averages even though the curve is drawn at a subset of update indices.
At $T=8192$, the mean actor statistic is $2.8413\times10^{-4}$
(95\% interval $[2.8345,2.8482]\times10^{-4}$), and the mean critic statistic is
$1.9790\times10^{-3}$ (interval $[1.9499,2.0063]\times10^{-3}$).
The mean final pre-update episode return is $0.61881$.
The largest observed within-epoch population KL, including post-update parameters,
is $4.08\times10^{-7}$, below the deterministic certificate.
No active clipping occurs in these training runs; the next diagnostic deliberately
introduces drift to examine the clipped population gradient.
Here active clipping means $\widehat A>0,r>1+\delta$ or $\widehat A<0,r<1-\delta$.
Ratios outside the interval with the opposite advantage sign retain the unclipped gradient.

Substituting these training settings into the complete bound with
\eqref{eq:witness-actor-bias}--\eqref{eq:witness-critic-bias} gives actor and critic
upper bounds of $7.5065$ and $35.1910$. These exceed the elementary envelopes $1/432$
and $3$, respectively, and are therefore vacuous here. The measured reductions do not
establish tightness of the bound or verify the separate vanishing-stepsize rate.
The nonvacuous analytic calibration in Appendix~\ref{appdx:finite-batch-witness}
uses a much larger budget than the experiments.

\subsection{Population cancellation and residual errors: Figure~\ref{fig:gae-cancellation}}
To isolate the effect of GAE, we hold
$\theta=1$, $w=(1/2,1/2,1/2)$, and $\delta=0.1$, and vary
$\lambda\in\{0,0.1,\ldots,1\}$. In the on-policy condition, $\bar\theta=\theta$.
We enumerate the four paths to evaluate the population raw-GAE actor direction exactly.
Writing $e=w-w^\star(\theta)$, its signed bias is
\[
 b_{\mathrm{GAE}}=\frac{(1-\lambda)q'(\theta)}4(e_+-e_-).
\]
The solid population curve displays its absolute value, which equals the RMS error of this
deterministic quantity. At $\lambda=1$ it is zero, despite the inaccurate critic.
For each $\lambda$, we also draw 1,024 independent batches of $B=64$ trajectories,
using seeds 20000--21023, and compute the empirical conditional mean actor direction
on each batch. Its plotted RMS error is
$\sqrt{1024^{-1}\sum_i|\widehat P_i g-J'(\theta)|^2}$.
Common batch seeds across $\lambda$ give paired comparisons; batches within each setting
are independent. At $\lambda=1$, the measured RMS is $0.008439$
(95\% interval $[0.008106,0.008786]$), while exact enumeration of
$b_{\mathrm{GAE}}^2+\operatorname{Var}_{P_\theta}(g)/B$ gives an RMS of $0.008704$.
Thus cancellation of the population critic contribution does not remove sampling error.

The dashed curve instead uses $\bar\theta=0.5$ at the same current actor and critic,
and evaluates the clipped population gradient under $P_{\bar\theta}$ by exact enumeration.
Clipping is active at this parameter setting. Its absolute bias remains
$0.08035$ at $\lambda=1$; the cancellation identity does not remove trajectory-distribution
or clipping effects. This comparison isolates those residuals at fixed parameters;
it does not use an iterate from the training runs above.

\subsection{Adaptive batch discrepancies: Figure~\ref{fig:adaptive-batch}}
For each $B\in\{16,32,64,128,256,512,1024\}$, we draw 256 independent batches
using seeds 30000--30255 and run a single epoch from the common initialization.
The update parameters match the joint training configuration, with $K=8$ in the main figure.
Let $\phi_{\bar\theta,\theta,w}$ denote the per-trajectory clipped actor direction and
$\psi_w$ the Monte Carlo critic gradient. At every pre-update iterate, we compute
\[
 z_k^a=\big|(\widehat P-P_{\bar\theta})\phi_{\bar\theta,\theta_k,w_k}\big|^2,
 \qquad z_k^c=\big\|(\widehat P-P_{\bar\theta})\psi_{w_k}\big\|^2.
\]
The plotted values are the means, over independent batches and minibatch draws, of
$\max_{0\le k<K}z_k^a$ and $\max_{0\le k<K}z_k^c$.
Later iterates depend on the same empirical batch used in the discrepancy, so this
diagnostic includes adaptive selection. The maxima concern visited iterates, not
suprema over the entire actor and critic classes. The example's pathwise envelopes
hold uniformly over those classes and yield expectation bounds $4/B$ for the actor
and $1/B$ for the critic, drawn as dotted lines.

For $B=16$, the measured means are $1.7915\times10^{-4}$ (actor) and $0.030325$
(critic); for $B=1024$, they are $3.1169\times10^{-6}$ and $0.00045487$.
Both decrease over the measured batch-size range. Every observed update sequence satisfies its
batch-dependent pathwise envelope. Additional comparisons use $K=1$ and $K=32$.
For a fixed seed and $B$, these runs have common prefixes;
the maximum over more iterates is necessarily nondecreasing in $K$ and is not evidence
that additional reuse harms learning. Clipping is inactive in all these adaptive runs.
The $H=2$ study illustrates adaptive empirical error but does not distinguish complete-path
counting from the structured complexity bound at growing horizons.

\subsection{Statistical analysis and reproducibility}
For the two-step study, all shaded bands and error bars are pointwise 95\% percentile bootstrap intervals
with 2,000 resamples and bootstrap seed 20260925. We resample independent training
seeds or independent batch replicates, never correlated inner iterates. For the RMS
statistic, each resample averages squared errors before taking a square root.
Population curves and analytic envelopes are deterministic and have no sampling intervals.
All experimental settings, including the $K=8$ condition shown in the figure, were specified
in advance. The analysis includes all 32 training seeds, 1,024 replicates per $\lambda$,
and 256 replicates per $(B,K)$ setting, with no exclusions or replacements.

At each pre-update iterate, we evaluate the objectives, gradients, population KL, and
clipping frequencies exactly. To distinguish the sources of actor bias, we decompose
the empirical conditional-mean error as
\begin{align*}
 \widehat P\phi-J'(\theta)
 ={}&(\widehat P-P_{\bar\theta})\phi
 +P_{\bar\theta}(\phi-\phi^{\mathrm{unclip}})\\
 &+\big(P_{\bar\theta}\phi^{\mathrm{unclip}}-P_\theta\phi^{\mathrm{on}}\big)
 +\big(P_\theta\phi^{\mathrm{on}}-J'(\theta)\big),
\end{align*}
where $\phi^{\mathrm{unclip}}$ retains behavior-policy ratios and
$\phi^{\mathrm{on}}$ uses current-policy scores and the same raw GAE.
The critic decomposition similarly separates empirical and old-policy errors.
Both decompositions concern conditional means, so minibatch sampling noise is treated
separately from these bias terms.

Code is available in the anonymous repository linked in the
reproducibility statement. The two-step experiments ran on an ARM64 macOS CPU using Python 3.12.14,
NumPy 2.5.3, and double-precision arithmetic; figures used Matplotlib 3.11.2.
The experiments took 52.3 seconds, excluding figure generation.
These controlled measurements do not test neural or LLM performance, asynchronous updates,
a necessary threshold at $\rho=1$, the $T^{-2/5}$ rate, or the polynomial horizon-resource exponents.

\subsection{Critic--clipping interaction during training}
\label{appdx:interaction-study}

We examine whether critic error changes clipping decisions and distortion during
training, while actor updates move the critic's population target. This complements
the two-step study, whose training updates have inactive clipping.

\paragraph{Environment and protocol.}
We use the previous-action chain in Appendix~\ref{appdx:horizon-witness} with
$H=8$, $\gamma=1$, and zero terminal value. There is one initial state and two
states per subsequent layer, recording the preceding binary action, for 15
nonterminal states. Rewards are $r_0=0$ and
$r_h=\mathbf1\{a_h=a_{h-1}\}$ for $h\ge1$.
Unlike the shared scalar actor in that appendix, the experimental actor has an
independent parameter at each state:
\[
 \pi_\theta(+1\mid s)=0.1+0.8\operatorname{sigmoid}(\theta_s).
\]
All actor logits and learned tabular critic coordinates start at zero.
The actor maximizes $J(\theta)=\mathbb E[G_0]/H$; reported returns are
$\mathbb E[G_0]$, without horizon normalization.

We cross three critic modes---slow ($\beta_c=0.02$), fast ($\beta_c=0.2$), and
an exact-current-value oracle---with $K\in\{16,64\}$ and clipping enabled or
disabled. Each of the 12 configurations has 16 replicates of $T=2048$ actual
actor updates. The actor stepsize is $\eta=0.25$, the clipping half-width is
$\delta=0.05$, and $\lambda=0.5$. Each epoch collects $B=256$ independent
trajectories under its frozen initial policy. Each update samples $m=64$
trajectories with replacement from that batch, sharing actor and critic indices.
Raw GAE is recomputed with the current pre-update critic and detached in the
actor derivative. The critic loss averages squared residuals over trajectories
and time positions, using stored, unnormalized Monte Carlo returns.
Both gradients are evaluated before either plain ascent/descent update is applied.
There is no advantage normalization, entropy bonus, gradient or value clipping,
adaptive optimizer, KL rejection, early stopping, or extra critic update.
Clipping ties select the unclipped derivative.

The oracle uses $V^{\pi_{\theta_t}}$ to recompute the same raw GAE on the stored
trajectories; it neither refreshes the data nor substitutes true advantages.
All $2^8=256$ action paths are enumerated for exact population diagnostics.
Training still uses sampled finite batches and minibatches.

\paragraph{Clipping and interaction diagnostics.}
Let $c\in\{0,1\}$ indicate whether clipping is enabled, and let
$g_{\rm clip}(w)$ and $g_{\rm unclip}(w)$ denote population directions with the
same current actor, behavior trajectory law, ratios, and raw-GAE convention.
The applied population direction is
\[
 g_c(w)=g_{\rm unclip}(w)+c\,[g_{\rm clip}(w)-g_{\rm unclip}(w)].
\]
For a trajectory label $\widehat A_h(w)$ and action ratio $r_h$, define
\[
 \mathcal C_h(w)=
 \{\widehat A_h(w)>0,\ r_h>1+\delta\}
 \cup\{\widehat A_h(w)<0,\ r_h<1-\delta\}.
\]
Applied clipping activity averages $c\mathbf1_{\mathcal C_h(w)}$ over behavior
trajectories and time positions; sampled activity averages it over the actual
minibatch. With $w_t^\star=V^{\pi_{\theta_t}}$, the interaction vector is
\[
 I_t=c\Big([g_{\rm clip}(w_t)-g_{\rm unclip}(w_t)]
          -[g_{\rm clip}(w_t^\star)-g_{\rm unclip}(w_t^\star)]\Big).
\]
Mask disagreement averages
$c\,|\mathbf1_{\mathcal C_h(w_t)}-\mathbf1_{\mathcal C_h(w_t^\star)}|$.
These counterfactuals change only the critic at the same checkpoint.
Disabled-clipping zeros follow from $c=0$; they do not imply that the hypothetical
event $\mathcal C_h$ is absent. Oracle interaction is zero by construction.

\paragraph{Target movement and error measures.}
We measure the reverse direction of coupling by
\[
 \begin{aligned}
 M_t&=\|V^{\pi_{\theta_{t+1}}}-V^{\pi_{\theta_t}}\|_2^2,\\
 E_t&=\|w_{t+1}-V^{\pi_{\theta_{t+1}}}\|_2^2
       -\|w_{t+1}-V^{\pi_{\theta_t}}\|_2^2.
 \end{aligned}
\]
Here $E_t$ isolates target movement at a fixed updated critic; it is not the total
one-step tracking-error change. Negative values mean target movement reduces
that error. For the oracle, resetting the critic to the new exact value gives
$E_t=-M_t$. Critic MSE divides squared Euclidean tracking error by 15 and is
unweighted over states. Stationarity is $\|\nabla J(\theta_t)\|_2^2$.

\paragraph{Results.}
Table~\ref{tab:interaction-results} reports all configurations. At $K=64$ with
clipping, the mean interaction norms are $0.00204463$ for slow and $0.000421091$
for fast critics. Their paired difference is $0.00162354$, with 95\% bootstrap
interval $[0.00159658,0.00165281]$. Mask disagreement is respectively
$1.59182\%$ and $0.136202\%$. Both learned critics have positive mean interaction
in every replicate. At $K=16$, clipping is inactive and paired clipped/unclipped
traces coincide.

\begin{table}[t]
\centering
\small
\caption{Configuration means over 16 replicates. Critic MSE and interaction norm
are averaged over all pre-update checkpoints; final return is measured after
2048 updates. Disabled-clipping interaction is zero by definition.}
\label{tab:interaction-results}
\begin{tabular}{@{}lrlrrr@{}}
\toprule
Critic & $K$ & Clipping & Final return & Critic MSE & $\operatorname{mean}\|I_t\|$\\
\midrule
Slow   & 16 & Enabled  & 6.03452 & 0.935176 & 0\\
Slow   & 16 & Disabled & 6.03452 & 0.935176 & 0\\
Slow   & 64 & Enabled  & 6.01705 & 0.923023 & 0.00204463\\
Slow   & 64 & Disabled & 6.03721 & 0.972968 & 0\\
Fast   & 16 & Enabled  & 6.02924 & 0.059509 & 0\\
Fast   & 16 & Disabled & 6.02924 & 0.059509 & 0\\
Fast   & 64 & Enabled  & 5.99900 & 0.066098 & 0.00042109\\
Fast   & 64 & Disabled & 6.02917 & 0.067160 & 0\\
Oracle & 16 & Enabled  & 6.02953 & 0 & 0\\
Oracle & 16 & Disabled & 6.02953 & 0 & 0\\
Oracle & 64 & Enabled  & 5.99662 & 0 & 0\\
Oracle & 64 & Disabled & 6.02991 & 0 & 0\\
\bottomrule
\end{tabular}
\end{table}

The performance ordering is mixed. At $K=64$, the slow critic tracks less
accurately but clips less ($3.56532\%$ versus $5.07150\%$ population activity)
and attains a slightly higher final return than the fast critic: a paired gain
of $0.0180456$ $[0.0148847,0.0212563]$, about $0.3\%$ of the fast return.
Clipping reduces final return relative to its disabled control by $0.0201616$,
$0.0301689$, and $0.0332968$ for slow, fast, and oracle modes, respectively.
The oracle has $5.11585\%$ clipping activity despite zero interaction.
These observations do not give a universal performance ranking of critic speeds.

Mean target movement is $2.01810\times10^{-5}$, $1.93706\times10^{-5}$, and
$1.94415\times10^{-5}$ for the three clipped $K=64$ modes. The learned critics'
mean signed effects are positive ($0.0303849$ and $0.00532343$), but negative
effects occur at $0.787354\%$ and $12.4725\%$ of checkpoints. The oracle's
negative effect follows from its exact reset.

\paragraph{Statistics, resources, and scope.}
Time averages first use all 2048 pre-update checkpoints within each replicate.
Intervals use 2000 percentile bootstrap resamples of the 16 replicate summaries
(seed 20260926, NumPy PCG64). Contrasts pair replicate IDs within a fixed $K$;
correlated updates are not independent bootstrap observations. Rollout and
minibatch seeds are $50000+2j$ and $50001+2j$, $j=0,\ldots,15$, with common
random streams across matched configurations. Intervals are marginal, and curve
bands are pointwise. Curves use nonoverlapping 64-update bins.

Each replicate processes $Tm=131072$ trajectories per learned component.
The $K=16$ and $K=64$ settings collect respectively 32768 and 8192 fresh
trajectories, or 262144 and 65536 transitions. Thus cross-$K$ comparisons match
updates but not fresh-data budgets. The oracle is a privileged diagnostic control.
Execution used CPU float64 with one PyTorch thread on an AMD EPYC 9B14 under
Windows 11, Python 3.12.14, and PyTorch 2.10.0; the main run took 183.79 seconds.
The modified actor and initialization require their own theoretical constants;
Appendix~\ref{appdx:horizon-witness}'s certificate is not transferred to these
settings. The study supports interacting mechanisms, without testing the
convergence rate, coupling condition, bound tightness, horizon scaling, or
generalization to neural PPO.

\section{Discussion}
\label{appdx:further_discussion}

This appendix discusses the roles and limitations of the assumptions in the finite-time analysis
and the interpretation of the coupling index
$\rho$.

\subsection{Discussion of assumptions}
\label{subsec:discussion-assumptions}

The assumptions retain several features of PPO-Clip training---mini-batch reuse, clipping, KL control, and a learned
critic---while permitting finite-time bounds with explicit dependence on hyperparameters.
The geometric assumptions concern the visited parameter region and the connecting segments used in the proofs.
KL control limits policy-distribution mismatch; it does not itself bound parameter distance or keep the iterates
near initialization. The statistical bridge requires finite trajectory support or the structured
tabular conditions of Corollary~\ref{cor:structured-main}; positive action coverage and the conditional
sampling protocol are separate premises. These restrictions make the result a finite-sample specialization, not a guarantee for
arbitrary neural PPO implementations.

\paragraph{Smoothness of the actor objective.}
Assumption~\ref{ass:smooth-J0} imposes $L_J$-smoothness on the underlying reinforcement learning objective $J(\theta)$.
This is the canonical regularity condition used in non-asymptotic analyses of stochastic gradient methods, enabling a
one-step descent/ascent inequality and a telescoping argument.
Smoothness is assumed for the \emph{true objective} $J(\theta)$ rather than for the PPO-Clip surrogate,
which is non-smooth by construction due to clipping.
In our analysis, this non-smoothness is handled explicitly via event localization and clipping-activity bounds
(Appendix~\ref{appdx:useful-lemmas}), allowing the smoothness of $J$ to be used without approximating or
smoothing the surrogate.

\paragraph{Variance control of stochastic actor updates.}
Assumption~\ref{ass:actor-variance0} bounds the conditional variance of the stochastic actor update direction.
In the smoothness argument, this variance bound is combined with a small actor stepsize to absorb the conditional mean
part of the update; in the closed-loop argument, it gives the variance--mean decomposition used to control the
averaged update magnitude.
Such bounds are standard in stochastic optimization and RL theory and formalize the requirement that minibatch
gradient estimators do not exhibit heavy-tailed behavior on the visited region.
In the analyzed protocol, bounded raw advantages, bounded scores, and the separate ratio envelope
$r\le R_{\max}$ bound each actor contribution by $R_{\max}G_\pi A_{\max}$.
The critic contribution is bounded by $Q_c=2(V_{\max}+G_{\max})G_V$.
Thus $R_{\max}^2G_\pi^2A_{\max}^2$ and $Q_c^2$ suffice as conditional variance bounds
$\sigma_a^2$ and $\sigma_c^2$ for single-sample draws.
Uniform sampling with replacement then gives the conditional variance bounds in
Lemma~\ref{lem:finite-batch-bridge}. Average KL control alone does not imply a bounded likelihood ratio.
Bounded neural-network logits alone also do not bound the parameter derivatives of those logits; the score bound
remains an explicit premise.

\paragraph{Bounded advantages.}
Assumption~\ref{ass:adv0} bounds the raw GAE labels used by the theorem. A finite horizon and bounded rewards and
critic values provide such an envelope. Practical advantage normalization and clipping are different transformations;
their effects on the actor mean are not included in the present raw-GAE result.
Analytically, bounded advantages serve two essential roles:
(i) they ensure that the PPO surrogate gradient has controlled magnitude, and
(ii) they allow clipping-induced distortion to be quantified explicitly through the clip-activation event and its
probability.
Without bounded advantages, the clipped surrogate could still produce arbitrarily large gradients even when
likelihood ratios are well controlled.

\paragraph{KL trust-region control.}
Assumption~\ref{ass:kl0} is a population KL premise motivated by the empirical KL checks used in PPO practice.
An empirical check or penalty alone does not establish this premise. Appendix~\ref{appdx:stopping} gives a
sufficient population certificate and a predictable stopping rule under which the finite-time bound remains valid.
In our analysis, the KL budget plays a central dual role:
(i) it controls within-epoch drift induced by multi-epoch reuse (placing PPO in a ``weakly off-policy'' regime), and
(ii) it provides quantitative control over clipping activity by bounding the probability that likelihood ratios exit
the clipping interval.
These two controls determine how the KL budget enters the non-asymptotic PPO-Clip bounds.

\paragraph{Bounded return / objective.}
When a uniform bound $J(\theta)\le J_{\max}$ is invoked, it is not implied by smoothness.
Rather, it is a standard RL assumption that follows from bounded rewards under a finite horizon, a condition that is
typically enforced in RLHF through reward clipping or normalization.
The corresponding return assumption requires a uniform bound on the region of parameter space visited by the
algorithm.

\paragraph{Value realization and coverage.}
\label{par:realization-coverage-discussion}
For the Monte Carlo loss, the conditional-variance decomposition shows that a realizable minimizer agrees with the
true value on every state of positive visitation probability. The coverage premise in
Section~\ref{par:batch-protocol} makes these all relevant states;
Appendix~\ref{appdx:finite-batch-witness} verifies this identification in an explicit example.

\paragraph{Why local critic convexity and quadratic growth are used.}
Our analysis relies on a local convexity and quadratic-growth condition on the critic loss
(Assumption~\ref{ass:critic0}).
This condition is used to obtain the parameter-space tracking contraction needed by the actor--critic feedback
argument.
In PPO-Clip, the critic is not optimized against a fixed objective: as the policy parameters evolve, the
population-optimal critic $w^\star(\theta)$ becomes a \emph{moving target}.
When $a>0$, controlling the tracking error limits its contribution to the
population actor-bias bound.

The quadratic-growth condition converts critic excess loss into distance to the selected critic optimum, while local
convexity implies the one-point inequality
\[
f_\theta(w)-f_\theta(w^\star(\theta))
\le
\langle \nabla_w f_\theta(w),\,w-w^\star(\theta)\rangle,
\qquad f_\theta(w):=\mathcal L_c(\theta,w).
\]
Together with smoothness and the stated stepsize restriction, these properties imply that one critic gradient step
contracts the expected tracking error up to conditional noise, the empirical-to-population critic bias, and
actor-induced target drift.
This is the precise stability property needed to close the finite-time actor--critic inequalities.

Without such a local contraction condition, a smooth nonconvex critic objective may not correct critic error quickly
enough relative to the movement of $w^\star(\theta)$ induced by policy updates under constant stepsizes.
Critic lag can then accumulate and propagate into the actor update through advantage estimation.
A different analysis would require additional structure or a different error measure; the present parameter-distance
proof does not establish tracking without its contraction premises.

\paragraph{Relaxing the critic-geometry assumption via loss-gap tracking.}
The local convexity and quadratic-growth assumption is not the only possible route to a closed-loop analysis.
One can relax the parameter-space contraction requirement by redefining the critic tracking error in terms of the
critic optimization loss rather than distance to the selected minimizer. Concretely, for
$f_\theta(w):=\mathcal L_c(\theta,w)$, define
\[
\Delta_t^{\mathrm{loss}}
:=
f_{\theta_t}(w_t)-f_{\theta_t}(w^\star(\theta_t)).
\]
For a fixed population objective, a Polyak--\L{}ojasiewicz-type condition and smoothness give a function-value
contraction when the critic step is decomposed into its population gradient, predictable bias, and centered noise.
Reused data retain a bias floor as well as the noise floor. A changing objective requires a further drift bound;
PL alone does not supply it.
Rather than a direct contraction of
$\|w_t-w^\star(\theta_t)\|^2$, this route seeks a recursion for
$\Delta_t^{\mathrm{loss}}$.

To close the actor--critic loop under this alternative metric, one then needs a calibration step connecting critic
loss gap to value-estimation error. For example, one may assume a local value-error bound of the form
\[
\|V_{w_t}-V^{\pi_{\theta_t}}\|_\infty^2
\le
C_{\mathrm{val}}\,
\Delta_t^{\mathrm{loss}},
\]
or an analogous $L_2(d^{\pi_{\theta_t}})$ bound together with a concentrability condition.
Such a condition replaces the parameter-space Lipschitz bound
$\|V_w-V_{w^\star(\theta)}\|_\infty\le L_V\|w-w^\star(\theta)\|$.
The resulting expected actor-bias bound replaces $a\mathbb E\Delta_t$ by
$4G_\pi^2\widetilde C_{\lambda,H}^2C_{\mathrm{val}}\mathbb E\Delta_t^{\mathrm{loss}}$, while retaining the finite-batch,
trajectory-mismatch, and clipping terms in $F$.

The remaining requirement is control of the moving-objective drift. In parameter-distance tracking, policy drift
enters through
\[
\|w^\star(\theta_{t+1})-w^\star(\theta_t)\|
\le
L_{w^\star}\|\theta_{t+1}-\theta_t\|.
\]
For loss-gap tracking, one instead has to compare
$f_{\theta_{t+1}}(w_{t+1})-f_{\theta_{t+1}}(w^\star(\theta_{t+1}))$ with
$f_{\theta_t}(w_{t+1})-f_{\theta_t}(w^\star(\theta_t))$.
This requires additional Lipschitz or smoothness assumptions on the dependence of the critic loss and its optimal
value on $\theta$, and it produces several extra drift terms proportional to policy movement, such as
$\eta\|g_t\|$ or $\eta^2\|g_t\|^2$ after standard Young-type inequalities.
Bounds on these drift terms would permit a corresponding closed-loop argument with a coupling condition adapted
to the loss-gap metric. The calibration and drift estimates are substantive additional assumptions, not consequences
of PL or of the main theorem. Appendix~\ref{appdx:critic-extensions} develops this conditional route.
The main result uses parameter-distance tracking and its explicit contraction premises.

\paragraph{Interpreting the condition in LLM PPO.}
Local convexity and quadratic growth are substantive restrictions on a neural critic. The following considerations
motivate examining local geometry, but do not establish these assumptions for an LLM:

\begin{itemize}
    \item \textbf{Policy-distribution locality.}
    The population KL budget controls mismatch from each batch's behavior policy. It does not prevent cumulative
    parameter drift across batches or imply critic convexity. The required critic region and connecting segments
    must be checked separately.

    \item \textbf{Post-training initialization.}
    PPO is typically applied during post-training, starting from a supervised fine-tuning (SFT) checkpoint.
    This motivates studying the region reached from that checkpoint, but an SFT initialization does not guarantee
    high reward, local convexity, or quadratic growth of the subsequent critic loss.

    \item \textbf{Over-parameterization and regression structure.}
    Modern LLM critics operate in a highly over-parameterized regime and are trained via squared-error regression.
    Linearized regression models can motivate restricted geometric assumptions, but width and squared-error loss
    alone do not imply the selected-minimizer quadratic-growth condition. Parameter redundancies and insufficient
    state coverage can violate it.
\end{itemize}

\paragraph{Finite-batch scope and structured specialization.}
\label{par:batch-scope-discussion}
In the autoregressive setting, a finite prompt distribution, finite actions and horizon, and a deterministic reward
model give finite trajectory support. The baseline cardinality-dependent bound can nevertheless scale exponentially
with $H$. The structured specialization in Corollary~\ref{cor:structured-main} instead controls the clipped-gradient
class through the numbers of critic coordinates and time--state--action cells. Its envelope is required on the
entire critic cube and its clipping-tie rule selects the unclipped derivative.
Appendices~\ref{appdx:finite-batch-witness} and~\ref{appdx:horizon-witness} verify the premises in explicit examples,
including a growing-horizon family with polynomial constants. This remains a structured guarantee; validity for a
particular neural PPO implementation requires separate verification.

\subsection{Stability lens: the coupling index \texorpdfstring{$\rho$}{rho}}
\label{subsec:discussion-rho}

When $aL_{w^\star}^2>0$, the sufficient synchronous condition $\rho<1$ limits the actor-to-critic stepsize ratio, with a threshold determined
by the problem constants. When $a=0$, including $\lambda=1$ or $H=1$ with matched terminal values, $\rho=0$ imposes no
time-scale restriction; critic tracking still depends on $U$ and $d$.

\paragraph{Where $\rho$ comes from.}
At a high level, the open-loop actor bound takes the form
\[
\bar G \le A_0+4F+4a\bar\Delta,
\]
while the open-loop critic tracking bound takes the form
\[
\bar\Delta\le U+d\bar S,\qquad
d=\frac{40L_{w^\star}^2\eta^2}{\mu_q^2\beta_c^2}.
\]
Closing the loop requires controlling $\bar S$ (the average squared actor update magnitude) in terms of
$\bar G$ and the actor bias, and the actor bias itself depends on $\bar\Delta$.
Here $U$ includes the critic finite-batch and trajectory-mismatch floor $40E_c/\mu_q^2$.
After substitution, $\bar\Delta$ appears on both sides with coefficient $\rho=10ad$.
Thus $\rho<1$ ensures the coupled inequalities can be solved with a finite amplification factor.

\paragraph{Interpretation.}
When $\rho\ll1$, feedback produces a small amplification factor in these inequalities; the numerical guarantee
also depends on the sizes of the other constants and error floors.
As $\rho\to1^{-}$, the factor $(1-\rho)^{-1}$ becomes large. This deterioration of a sufficient upper bound does
not identify an actual stability transition in the algorithm.
The divergence of the bound as $\rho\to 1^{-}$ should be interpreted as a loss of \emph{uniform} finite-time control,
rather than immediate divergence of the iterates.
When $\rho\ge1$, the scalar closure does not provide the stated guarantee. The theorem makes no conclusion about
convergence or divergence in that regime, or about which implementation changes would restore stability.

\paragraph{Practical implication.}
For fixed regularity constants, reducing $\eta/\beta_c$ reduces the synchronous coupling coefficient.
Increasing $\beta_c$ is subject to its contraction restriction and raises the critic noise floor; in the
asynchronous theorem its permitted size also decreases with critic delay. Tightening $\kappa$ reduces the explicit
population mismatch floors, while increasing $B$ reduces the finite-batch floor. These are implications for the
proved bound, not necessary tuning rules or guarantees for alternative update schedules.

\section{Extensions and Alternative Critic Error Measures}
\label{appdx:critic-extensions}

This appendix examines the critic assumptions underlying the parameter-distance tracking error
\[
\Delta_t := \|w_t-w^\star(\theta_t)\|^2
\]
and considers two extensions: tracking the critic loss gap under additional calibration and drift assumptions,
and including critic approximation error in the closed-loop bounds.

\subsection{Why the strongest assumptions concern the critic landscape}
\label{subsec:critic-landscape-strength}

Among the assumptions used in the paper, the strongest ones are the local regularity assumptions imposed on the
critic landscape.  In particular, Assumption~\ref{ass:critic0} requires that, on the visited critic region and for each
fixed policy parameter $\theta$, the critic loss
\[
f_\theta(w):=\mathcal L_c(\theta,w)
\]
is locally convex, has Lipschitz-continuous gradients in $w$, satisfies quadratic growth around a selected minimizer
$w^\star(\theta)$, and has a selected minimizer map $w^\star(\theta)$ that is Lipschitz in $\theta$.  These conditions
are stronger than the actor smoothness, bounded-variance, bounded-advantage, and KL-control assumptions, because they
impose geometric structure on a potentially nonconvex learned value-function objective.

This critic geometry yields a moving-target contraction for the parameter-distance
tracking error $\Delta_t=\|w_t-w^\star(\theta_t)\|^2$:
\[
\begin{aligned}
\mathbb E\Delta_{t+1}
\le{}&(1-\mu_q\beta_c/4)\mathbb E\Delta_t
+\tfrac54\beta_c^2\sigma_c^2
+\frac{10\beta_c}{\mu_q}\mathbb E\|e_t^c\|^2\\
&+\frac{10L_{w^\star}^2\eta^2}{\mu_q\beta_c}\mathbb E\|g_t\|^2,
\end{aligned}
\]
where $e_t^c=\mathbb E[h_t\mid\mathcal F_t]-\nabla_w\mathcal L_c(\theta_t,w_t)$ is the predictable critic bias.
Averaging this recursion and applying the finite-batch bridge gives
\[
\bar\Delta\le
\frac{4\mathbb E\Delta_0}{\mu_q\beta_c T}
+\frac{5\beta_c\sigma_c^2}{\mu_q}
+\frac{40E_c}{\mu_q^2}
+\frac{40L_{w^\star}^2\eta^2}{\mu_q^2\beta_c^2}\bar S
=U+d\bar S.
\]
When combined with the actor bias bound
\[
\mathbb E\|b_t\|^2\le a\mathbb E\Delta_t+F,
\]
this produces the closed-loop amplification factor
\[
\rho=10ad,\qquad
a=4G_\pi^2L_V^2\widetilde C_{\lambda,H}^2,\qquad
d=\frac{40L_{w^\star}^2\eta^2}{\mu_q^2\beta_c^2}.
\]
These parameter-space assumptions provide a sufficient route to controlling actor--critic coupling, the GAE/horizon factor
$\widetilde C_{\lambda,H}$, KL-controlled off-policy drift, and clipping distortion.

An alternative is critic-loss tracking under a Polyak--\L{}ojasiewicz (PL) condition.
PL and smoothness control a frozen objective; extending this control to the coupled process also requires
a moving-objective drift bound and calibration from loss gap to value error.
These are additional premises beyond the main assumptions.

\subsection{Tracking the critic loss gap instead of parameter distance}
\label{subsec:loss-gap-tracking}

Define the critic optimal value and loss-gap tracking error by
\[
f_\theta^\star := \inf_w f_\theta(w),
\qquad
\Delta_t^{\mathrm{loss}}
:=
f_{\theta_t}(w_t)-f_{\theta_t}^\star .
\]
This definition avoids requiring a locally unique or Lipschitz-selected minimizer in parameter space.  Instead, the
critic part of the analysis can be based on local smoothness and a PL inequality:
\[
\frac12\|\nabla_w f_\theta(w)\|^2
\ge
\mu_{\mathrm{PL}}\big(f_\theta(w)-f_\theta^\star\big)
\]
on the visited region. Let the critic update have conditional mean
$\nabla_w f_{\theta_t}(w_t)+e_t^c$ and centered conditional variance at most $\sigma_c^2$.
For $\beta_c\le\min\{1/(4L_c),1/\mu_{\mathrm{PL}}\}$, smoothness and Young's inequality give the
frozen-objective bound
\[
\mathbb E\!\left[
f_{\theta_t}(w_{t+1})-f_{\theta_t}^\star
\mid \mathcal F_t
\right]
\le
(1-\mu_{\mathrm{PL}}\beta_c)\Delta_t^{\mathrm{loss}}
+\tfrac54\beta_c\|e_t^c\|^2
+\tfrac12L_c\beta_c^2\sigma_c^2 ,
\]
provided smoothness holds along the critic step. Indeed the descent calculation leaves at most
$-\tfrac12\beta_c\|\nabla f_{\theta_t}(w_t)\|^2+\tfrac54\beta_c\|e_t^c\|^2$
before applying PL. The empirical-to-population bias does not vanish merely because the critic loss is PL.

The main difference is the moving-objective drift. In parameter-distance tracking, actor movement enters through
\[
\|w^\star(\theta_{t+1})-w^\star(\theta_t)\|
\le
L_{w^\star}\|\theta_{t+1}-\theta_t\|
=
L_{w^\star}\eta\|g_t\|.
\]
With loss-gap tracking, there is no need to compare minimizers directly.  Instead one must compare two different
critic objectives:
\[
\Delta_{t+1}^{\mathrm{loss}}
=
f_{\theta_{t+1}}(w_{t+1})-f_{\theta_{t+1}}^\star .
\]
Insert and subtract the old objective:
\[
\begin{aligned}
\Delta_{t+1}^{\mathrm{loss}}
&=
\big(f_{\theta_t}(w_{t+1})-f_{\theta_t}^\star\big)\\
&\quad+
\big(f_{\theta_{t+1}}(w_{t+1})-f_{\theta_t}(w_{t+1})\big)
+
\big(f_{\theta_t}^\star-f_{\theta_{t+1}}^\star\big).
\end{aligned}
\]
The conditional expectation of the first term satisfies the biased PL bound above. The second and third terms are drift terms caused by policy
movement.  To control them, one needs regularity of the critic objective with respect to $\theta$, for example local
Lipschitz or smooth dependence of $f_\theta(w)$ and $f_\theta^\star$ on $\theta$ along the visited region.  A typical
bound would have the form
\[
\begin{aligned}
\mathbb E[\Delta_{t+1}^{\mathrm{loss}}\mid\mathcal F_t]
\le{}&(1-\mu_{\mathrm{PL}}\beta_c)\Delta_t^{\mathrm{loss}}
+\tfrac54\beta_c\|e_t^c\|^2\\
&+\tfrac12L_c\beta_c^2\sigma_c^2+D_t^{\mathrm{move}},
\end{aligned}
\]
where $D_t^{\mathrm{move}}$ is a history-measurable upper bound on the conditional expectation of the sum of
the two objective-change terms. A uniform Lipschitz bound gives a term proportional to
$\eta\mathbb E[\|g_t\|\mid\mathcal F_t]$. A quadratic bound requires a separate estimate; it does not follow from
PL alone. If such linear or quadratic movement control is established, averaging produces respectively
a linear movement term of order
\[
\frac{\eta}{\mu_{\mathrm{PL}}\beta_c}
\frac1T\sum_{t=0}^{T-1}\mathbb E\|g_t\|
\]
or, in the quadratic-drift case, a term comparable to
\[
\frac{\eta^2}{\mu_{\mathrm{PL}}\beta_c}
\frac1T\sum_{t=0}^{T-1}\mathbb E\|g_t\|^2 .
\]
The predictable critic bias additionally contributes at most $5E_c/(4\mu_{\mathrm{PL}})$ after averaging.
Young's inequality can convert a linear movement term into squared-update terms plus a residual, but that residual
and the resulting coupling condition must be retained.

On the actor side, parameter-distance tracking uses
\[
\|V_w-V_{w^\star(\theta)}\|_\infty
\le
L_V\|w-w^\star(\theta)\|
\]
to turn parameter tracking into an advantage-estimation error.  If the critic is tracked by loss gap, this step must
be replaced by a calibration assumption such as
\[
\|V_{w_t}-V^{\pi_{\theta_t}}\|_\infty^2
\le
C_{\mathrm{val}}\Delta_t^{\mathrm{loss}}
\]
or an $L_2(d^{\pi_{\theta_t}})$ version plus a concentrability condition.  The actor bias bound then becomes
\[
\mathbb E\|b_t\|^2
\le a_{\mathrm{loss}}\mathbb E\Delta_t^{\mathrm{loss}}+F,
\qquad a_{\mathrm{loss}}:=4G_\pi^2\widetilde C_{\lambda,H}^2C_{\mathrm{val}}.
\]
If a separately proved drift estimate gives
$\bar\Delta^{\mathrm{loss}}\le U_{\mathrm{loss}}+d_{\mathrm{loss}}\bar S$,
the same scalar elimination as in the main proof uses
\[
\rho_{\mathrm{loss}}=10a_{\mathrm{loss}}d_{\mathrm{loss}}<1.
\]
For example, the additional premise
$D_t^{\mathrm{move}}\le C_{\mathrm{move}}\eta^2\mathbb E[\|g_t\|^2\mid\mathcal F_t]$
would give $d_{\mathrm{loss}}=C_{\mathrm{move}}\eta^2/(\mu_{\mathrm{PL}}\beta_c)$.
This quadratic drift estimate remains an additional premise. In particular, the $\beta_c^{-2}$ dependence of the
parameter-distance bound does not follow from PL alone. Linear movement control instead introduces the additional
residuals discussed above.

Critic-loss tracking therefore offers a possible relaxation of local parameter contraction, conditional on the
additional calibration and drift estimates. A closed-loop bound based on this metric would depend on calibration
constants, objective-drift constants, and possibly additional movement floors. The parameter-distance result follows
from the explicit contraction premises of the main theorem; the loss-gap route outlined here requires these further
estimates.

\subsection{Including critic approximation error}
\label{subsec:critic-approximation-error}

The main theorem assumes a realizable critic: the selected population critic satisfies
\[
V_{w^\star(\theta)} = V^{\pi_\theta}
\]
on the visited region. This removes irreducible value-approximation error, leaving tracking lag and the finite-batch,
clipping, and trajectory-mismatch floors. The following extension allows an imperfect critic class while retaining
the other geometry, coverage, sampling, terminal-boundary, and boundedness premises.

Let
\[
\varepsilon_{\mathrm{app}}^2(\theta)
:=
\|V_{w^\star(\theta)}-V^{\pi_\theta}\|_\infty^2,
\qquad
\bar\varepsilon_{\mathrm{app}}^2
:=
\sup_{\theta\in\mathcal R_\theta}\varepsilon_{\mathrm{app}}^2(\theta).
\]
Then the value-estimation error decomposes as
\[
\begin{aligned}
\|V_{w_t}-V^{\pi_{\theta_t}}\|_\infty^2
&\le
2\|V_{w_t}-V_{w^\star(\theta_t)}\|_\infty^2
+2\|V_{w^\star(\theta_t)}-V^{\pi_{\theta_t}}\|_\infty^2\\
&\le
2L_V^2\Delta_t+2\bar\varepsilon_{\mathrm{app}}^2 .
\end{aligned}
\]
Substituting this into the GAE term of the four-term actor-bias decomposition gives
\[
\mathbb E\|b_t\|^2\le a_{\mathrm{app}}\mathbb E\Delta_t+F_{\mathrm{app}},
\qquad a_{\mathrm{app}}:=2a,
\]
\[
F_{\mathrm{app}}:=F+8G_\pi^2\widetilde C_{\lambda,H}^2\bar\varepsilon_{\mathrm{app}}^2.
\]
The factor eight combines the factor two in the value-error decomposition and the factor four in the actor-bias
decomposition. The critic recursion still tracks the selected population minimizer, so $U$, $d$, and $E_c$
are unchanged. The increased tracking coefficient requires the updated condition
$\rho_{\mathrm{app}}:=10a_{\mathrm{app}}d<1$. Scalar elimination then gives
\[
\begin{aligned}
\bar G&\le\frac{A_0+4F_{\mathrm{app}}+4a_{\mathrm{app}}U
+4a_{\mathrm{app}}d\sigma_a^2}{1-\rho_{\mathrm{app}}},\\
\bar\Delta&\le\frac{U+d(\sigma_a^2+2A_0+10F_{\mathrm{app}})}{1-\rho_{\mathrm{app}}}.
\end{aligned}
\]
Thus the approximation floor also enters critic tracking through the actor update magnitude.
This extension retains the learned-critic dynamics while adding irreducible value-approximation error to the
closed-loop bounds.

An open-loop alternative assumes directly that
the critic used by the actor satisfies
\[
\|V_{w_t}-V^{\pi_{\theta_t}}\|_\infty^2
\le
\varepsilon_{\mathrm{crit}}^2
\qquad \text{for all } t .
\]
Then the finite-batch actor proof gives
$\bar G\le A_0+4F+16G_\pi^2\widetilde C_{\lambda,H}^2\varepsilon_{\mathrm{crit}}^2$,
and requires no critic tracking theorem. This formulation treats critic quality as an exogenous oracle property.
The closed-loop analysis instead quantifies its dependence on critic optimization, actor drift, and the
actor--critic timescale ratio.

\end{document}